\newtheorem{theorem}{Theorem}
\newcommand{\sha}{\Lambda}
\newcommand{\smallconst}{\kappa}
\newcommand{\sphere}{\mathbb{B}}
\newcommand{\bH}{\mathbb{H}}
\newcommand{\bx}{\mathbf{x}}
\newcommand{\bX}{\mathbf{X}}
\newcommand{\bY}{{\bm Y}}
\newcommand{\bM}{{\bm M}}
\newcommand{\bD}{{\bm D}}
\newcommand{\bF}{{\bm F}}
\newcommand{\bR}{{\bm R}}
\newcommand{\bA}{{\bm A}}
\newcommand{\bv}{{\bm v}}
\newcommand{\bW}{{\bm W}}
\newcommand{\be}{{\bm e}}
\newtheorem{Assumption}{Assumption}[section]
\newtheorem{proposition}{Proposition}[section]
\newtheorem{lemma}{Lemma}[section]
\newtheorem{Corollary}{Corollary}[section]
\newtheorem{observation}{Observation}[section]
\title{Analyzing Sharpness along GD Trajectory: Progressive Sharpening and Edge of Stability}
\author{%
  Zhouzi Li\thanks{Contributed equally, listed in alphabetical order.} \\
  IIIS, Tsinghua University\\
  \texttt{zhouzi188763@gmail.com}
  \\
  % examples of more authors
   \And
  Zixuan Wang$^{*}$\\
    IIIS, Tsinghua University\\
    \texttt{wangzx2019012326@gmail.com} \\
    \AND
    Jian Li\thanks{The authors are supported in part  by the National Natural Science Foundation of China Grant 62161146004, Turing AI Institute of Nanjing and Xi'an Institute for Interdisciplinary Information Core Technology.} \\
  IIIS, Tsinghua University \\
  \texttt{lapordge@gmail.com}
  % Coauthor \\
  % Affiliation \\
  % Address \\
  % \texttt{email} \\
  % \texttt{email} \\
  % \And
  % Coauthor \\
  % Affiliation \\
  % Address \\
  % \texttt{email} \\
  % \And
  % Coauthor \\
  % Affiliation \\
  % Address \\
  % \texttt{email} \\
}
\begin{document}

\maketitle

\begin{abstract}
Recent findings demonstrate that modern neural networks trained by full-batch gradient descent typically enter a regime called Edge of Stability (EOS). In this regime, the sharpness, i.e., the maximum Hessian eigenvalue, first increases to the value 2/(step size) (the progressive sharpening phase) and then oscillates around this value (the EOS phase). 
This paper aims to analyze the GD dynamics and the sharpness along the optimization trajectory.
Our analysis naturally divides the GD trajectory into four phases depending on the change in the sharpness value. We empirically identify the norm of output layer weight as an interesting indicator of the sharpness dynamics. Based on this empirical observation, we attempt to theoretically and empirically explain the dynamics of various key quantities that lead to the change of the sharpness in each phase of EOS. Moreover, based on certain assumptions, we provide a theoretical proof of the sharpness behavior in the EOS regime in two-layer fully-connected linear neural networks. We also discuss some other empirical findings and the limitation of our theoretical results.
\end{abstract}

\section{Introduction}

Deep learning has achieved great success in a variety of machine learning applications, and gradient-based algorithms are the prevailing optimization methods for training deep neural networks. However, mathematically understanding the behavior of the optimization methods for deep learning is highly challenging, due to non-convexity, over-parameterization, and complicated architectures. 
In particular, some recent empirical findings in deep networks contradict the traditional understandings of gradient methods. For example, \citet{wu2018sgd} observed that the solution found by gradient descent has sharpness approximately equal to $2/\eta$ instead of just being smaller than $2/\eta$. Also, \citet{jastrzebski2020break} observed that there is a break-even point in the SGD trajectory, and after this point, there is a regularization effect on the loss curvature.

One recent well-known example is the phenomenon called ``Edge of Stability" (EOS) (\citet{cohen2021gradient}). Based on the classical optimization theory, 
the learning rate $\eta$ of gradient-based method should be smaller than $2/\lambda$ so that the loss can decrease, where $\lambda$ is the largest eigenvalue of the Hessian of the objective, also called ``sharpness'' in the literature. Otherwise, the loss diverges (even for simple quadratic functions). However, the empirical findings in \citet{cohen2021gradient} show that under various network settings, the EOS phenomena typically occurs along the gradient descent trajectory: (1) the sharpness first increases until it reaches $2/\eta$ (called ``progressive sharpening'') (2) the sharpness starts hovering around $2/\eta$ (the EOS regime) and (3) the loss non-monotonically decreases without diverging.

Although (1) seems to be consistent with the traditional beliefs about optimization, a rigorous mathematical explanation for it is still open. Moreover, phenomena (2) and (3) are more mysterious because they violate the $\eta<2/\lambda$ ``rule'' in traditional optimization theory, yet the training loss does not completely diverge. Instead, the loss may oscillate but still decrease 
in the long run, while the sharpness seems to be restrained from further increasing. 

In this paper, we aim to provide a theoretical and empirical explanation for the mystery of EOS. Towards the goal, we focus on the dynamics of these key quantities when EOS happens and attempt to find out the main driving force to explain these phenomena along the gradient descent trajectory from both theoretical and empirical perspectives.
% We divides the GD trajectory into four phases depending on the changing of the sharpness and
% other important quantities.

% then we empirically identify the weight norm of the output layer as an indicator of the sharpness dynamics. Based on this representative of the sharpness dynamics and several assumptions, we conduct an analysis on key variables: sharpness, loss, output layer norm and the loss's decomposition components that we raised in this paper. Through the behavior of these variables, we can give a heuristic proof for phenomena (1) and (2), and give an explanation for loss's non-monotonic decrement.
% MOVE SOME SENTENCE TO OUR CONTRUBTION.

% Moreover, based on certain assumptions, we give a theoretical proof for the phenomenon (1) and (2) in two-layer fully-connected linear neural networks. The proof intuition coincides with that in our heuristic proof for general cases, but the assumptions in the heuristic proof are proved or reduced to more basic assumptions in our theoretical proof.

\subsection{Our Contributions}
Our contributions can be summarized as follows.

  (Section~\ref{Four-phase def}) We analyze the typical sharpness behavior along the gradient descent trajectory when EOS happens, and propose a four-phase division of GD trajectory, based on the dynamics of some key quantities such as the loss and the sharpness, for further understanding this phenomenon. 
    %We identify the dynamics of some key quantities, e.g. the loss and the sharpness in each phase of this training process.
    
    (Section~\ref{subsection A norm}) We empirically identify the weight norm of the output layer as an effective indicator of the sharpness dynamics. We show that analyzing the dynamics of this surrogate can qualitatively explain the dynamics of sharpness. 
    By assuming this relation, 
    together with some additional simplifying assumptions and approximations, we can explain the dynamics of the sharpness, the loss, and the output layer norm in each phase of EOS (Section~\ref{subsection 4-phase proof}). 
    In this context, we also offers an interesting explanation for the non-monotonic loss decrement (also observed in \citet{cohen2021gradient,xing2018walk}) (Section~\ref{subsection loss decrement}). 
    
 (Section~\ref{theoretical analysis}) Following similar ideas, we provide a more rigorous proof for the progressive sharpening and EOS phenomena in a two-layer fully-connected linear neural network setting based on certain assumptions. The assumptions made here are either weaker or arguably less restrictive.
    %The theoretical findings prove or at least relax the assumptions stated in the heuristic proof.

\subsection{Related work}
\textbf{The structure of Hessian} The Hessian matrix carries the second order information of the loss landscape. Several prior works have empirically found that the spectrum of Hessian has several ``outliers'' and a continuous ``bulk'' (\citet{sagun2016eigenvalues, sagun2017empirical, papyan2018full, papyan2019measurements}).
Typically, each outlier corresponds to one class in multi-class classification. As we consider the binary classification setting, there is typically one outlier (i.e., the largest eigenvalue) that is much larger than other eigenvalues. It is consistent with our Assumption~\ref{eigenspectrum assumption}.
The Gauss-Newton decomposition of the Hessian was used in several prior works
(\citet{martens2016second, bottou2018optimization, papyan2018full, papyan2019measurements}).
\citet{papyan2018full} empirically showed that the outliers of Hessian can be attributed to a ``G component'', which is also known as Fisher Information Matrix (FIM) in \citet{karakida2019normalization, karakida2019pathological}. Also, \citet{wu2020dissecting} analyzed the leading Hessian eigenspace by approximating the Hessian with Kronecker factorization and theoretically proved the outliers structure under some random setting assumption. 
%In this work we characterize the largest Hessian eigenvalue based on the outlier spectrum structure and the dominance of FIM in Hessian.

\noindent\textbf{Neural Tangent Kernel } A recent line of work studied the learning of over-parameterized neural networks in the so-called. “neural tangent kernel (NTK) regime or the lazy training regime (\citet{jacot2018neural,lee2019wide, du2018gradient,du2019gradient,arora2019fine,chizat2019lazy}). A main result in this regime is that if the neural network is wide enough, gradient flow can find the global optimal empirical minimizer very close to the initialization. Moreover, the Hessian does not change much in the NTK regime. Our findings go beyond NTK setting to analyze the change of sharpness.

\noindent\textbf{Edge of Stability regime } The Edge of Stability phenomena was first formalized by \citet{cohen2021gradient}.  Similar phenomena were also identified in \citet{jastrzebski2020break} as the existence of the ``break-even'' point on SGD trajectory after which loss curvature gets regularized. \citet{xing2018walk} observed that gradient descent eventually enters a regime where the iterates oscillate on the leading curvature direction and the loss drops non-monotonically. 
Recently \citet{ahn2022understanding} studied the non-monotonic decreasing behavior of GD which they called unstable convergence, and discussed the possible causes of this phenomenon. \citet{ma2022multiscale} 
proposed a special subquadratic landscape property and proved that EOS occurs based on this assumption. 
\citet{arora2022understanding} studied the implicit bias on the sharpness of deterministic gradient descent in the EOS regime. They proved in some specific settings with a varying learning rate (called normalized GD) or with a modified loss $\sqrt{L}$, gradient descent enters EOS and further reduces sharpness. 
They mainly focus on the analysis near the manifold of minimum
loss, but our analysis also applies to the early stage of the training when the loss is not close to the minimum. 
In particular, our analysis provides an explanation of non-monotonic loss decrease that cannot be explained by their theory.
Another difference is that they consider $\sqrt{L}$ (for constant learning rate) where $L$ is a fairly general MSE loss independent of
any neural network structure, while our analysis is strongly tied with 
the MSE loss of a neural network. 
Very recently, \citet{lyu2022understanding} explained how GD enters EOS
for normalized loss (e.g., neural networks with normalization layers), and analyzed the sharpness reduction effect along the training trajectory. The notion of sharpness in their work is somewhat different due to normalization. In particular, they consider the so-called {\em spherical sharpness}, that is the sharpness of the normalized weight vector. They also mainly studied the regime where the parameter is close to the manifold of minimum
loss as in \cite{arora2022understanding} and proved that GD approximately tracks a continuous sharpness-reduction flow.
\citet{lewkowycz2020large} proposed a similar regime called ``catapult phase'' where loss does not diverge even if the largest Hessian eigenvalue is larger than $2/\eta$. Our work mainly considers training in this regime and assumes that the training is not in the ``divergent phase'' in \citet{lewkowycz2020large}. 
Compared with \citet{lewkowycz2020large}, we provide a more detailed analysis in more general settings along gradient descent trajectory. 

\section{Preliminaries}\label{2}

\textbf{Notations:}
We denote the training dataset as $\{\mathbf{x}_i, y_i\}_{i=1}^n\subset \mathbb R^d \times \{1, -1\}$ and the neural network as $f: \mathbb R^d\times \mathbb R^p\rightarrow \mathbb R.$ The network $f(\bm\theta, \mathbf{x})$ maps the input $\mathbf{x} \in \mathbb R^d$ and parameter $\bm\theta \in \mathbb R^p$ to an output in $\mathbb R$. 
In this paper, we mainly consider the case of binary classification with mean square error (MSE) loss $ \ell(z, y) = (z-y)^2$.

Denote the input matrix as $\mathbf{X} = (\mathbf{x}_1, \mathbf{x}_2, ..., \mathbf{x}_n)\in \mathbb{R}^{d\times n}$ and the label vector as 
$\bm{Y} = (y_1, y_2,..., y_n)\in \mathbb R^n$. 
We let $\bm{F}(t) = (f(\bm \theta(t),\mathbf{x}_1), f(\bm \theta(t),\mathbf{x}_2),...,f(\bm \theta(t),\mathbf{x}_n))\in \mathbb{R}^n$ and $\bm{D}(t) = \bm{F}(t)-\bm{Y}$ be the (output) prediction vector, and the residual vector respectively at time $t$.  
%\jian{$f(x)$, $f(\theta,x)$ not consistent. also $F(t)=...$ RHS indepdent of $t$.}
The training objective is:
 $
 \mathcal L(f(\bm\theta)) = \frac{1}{n}\sum_{i=1}^n \ell(f(\bm\theta, \mathbf{x}_i),y_i)
 =\frac{1}{n}\sum_{i=1}^n (f(\bm\theta, \mathbf{x}_i),y_i)^2.
 $

\noindent\textbf{Hessian, Fisher information matrix and NTK:}
In this part, we apply previous works to show that the largest eigenvalue of Hessian is almost the same as the largest eigenvalue of NTK. We use the latter as the definition of \textbf{the sharpness} in this paper. Further details can be found in Appendix~\ref{appendix: prelim}.

As shown in \citet{papyan2019measurements, martens2016second, bottou2018optimization}, the Hessian can be
decomposed into two components, where the term known as ``Gauss-Newton matrix'', G-term or Fisher information matrix (FIM), dominates the second term in terms of the largest eigenvalue.
Meanwhile, \citet{karakida2019pathological} pointed out the duality between the FIM and a Gram matrix $\bM$, defined as  
$
    \bm{M} = \frac{2}{n}\frac{\partial \bF(\bm\theta)}{\partial {\bm\theta}} \frac{\partial \bF(\bm\theta)}{\partial {\bm\theta}}^\top
$.
It is also known as the neural tangent kernel NTK
 (\citet{karakida2019pathological,karakida2019normalization}), which has been studied extensively in recent years (see e.g., \cite{jacot2018neural},\cite{du2018gradient},\cite{arora2019fine},\cite{chizat2019lazy}).
Note that in this paper, we do not assume the training is in NTK regime,
in which the Hessian does not change much during training.
It is not hard to see that $\bM$ and FIM share the same non-zero eigenvalues: 
if $\bm G \bm u = \lambda \bm u$ for some eigenvector $\bm u\in \mathbb R^p$, $\bM\frac{\partial \bF(\bm\theta)}{\partial {\bm\theta}} \bm u = \frac{\partial \bF(\bm\theta)}{\partial {\bm\theta}}\bm G  \bm u = \lambda \frac{\partial \bF(\bm\theta)}{\partial {\bm\theta}}\bm u$,
i.e., $\lambda$ is also an eigenvalue of $\bM$.

% \noindent\textbf{Sharpness:}
% There are various definitions of sharpness in the literature (\cite{li2021happens,wu2018sgd, foret2020sharpness}).
% In particular, a popular definition of the sharpness is the largest eigenvalue of the Hessian (\cite{wu2018sgd}).
% Based on the above discussion, in this paper, we adopt the largest eigenvalue of $\bM$, $\sha(\bm\theta)=\lambda_{\max}(\bM)$ as the definition of the sharpness, which is a close
% approximation of the largest eigenvalue of the Hessian empirically. 

In this paper, 
we use $\bm\theta(t)$ to denote the parameter at iteration $t$ (or time $t$) and the sharpness at time $t$ as $\sha(t)=\sha(\bm\theta(t))$.
We similarly define $\bM(t), \bF(t), \bD(t),\mathcal{L}(t)$.

Here we show the gradient flow dynamics of the residual vector $\bm D(t)$:
\begin{equation}
    \frac{\mathrm d\bm{D}(t)}{\mathrm dt} = \frac{\partial \bD(t)}{\partial {\bm\theta}}\frac{\mathrm d\bm{\theta}(t)}{\mathrm d t}  = - \frac{\partial \bF(t)}{\partial {\bm\theta}}\frac{\partial \mathcal L(t)}{\partial{{\bm\theta}}} = -\frac{2}{n}  \frac{\partial \bF(t)}{\partial {\bm\theta}} \frac{\partial \bF(t)}{\partial {\bm\theta}}^\top\bm{D}(t) = - \bm{M}(t)\bm{D}(t)
    \label{residual dynamics}
\end{equation}

% \textbf{Eigenvalue outlier property} \answerTODO{Add this part after related works} Assume $\lambda_1(M)>\lambda_2(M)$.
% NOT HERE

\section{A Four-phase Analysis of GD Dynamics}
\label{Section 3}
%In this section, we empirically analyze the different behavior of gradient %descent training process when the Edge of Stability () phenomenon occurs %in detail. 
In this section, we study the dynamics of gradient descent
and the change of sharpness along the optimization trajectory.
We divide the whole training process into four phases,
occurring repeatedly in the EOS regime. In Section \ref{Four-phase def}, we introduce the four phases. In Section~\ref{subsection A norm}, we show empirically that 
the change of the norm of the output layer weight vector almost coincides with the change of the sharpness. In Section~\ref{subsection 4-phase proof}, using this observation, we attempt to explain the dynamics of each phase and provide a mathematical explanation for the changes in the sharpness. 
In Section~\ref{subsection loss decrement}, we explain why the loss decreases but non-monotonically. 
We admit that a completely rigorous theoretical explanation is still beyond our reach and much of our argument is based on various simplifying assumptions and is somewhat heuristic at some points. Due to space limits, we defer all the proofs in this section to Appendix \ref{appendix: proofs in sec3}.

\subsection{A Four-phase Division}
\label{Four-phase def}
To further understand the properties along the trajectory when EOS happens, we study the behaviors of the loss and the sharpness during the training process. 
As illustrated in Figure 1, we train a shallow neural network by gradient descent on a subset of 1,000 samples from CIFAR-10 (\citet{krizhevsky2009learning}), using the MSE loss as the objective. 
Notice that the sharpness keeps increasing while the loss decreases until the sharpness reaches $2/\eta$. Then the sharpness begins to oscillate around $2/\eta$ while the loss decreases non-monotonically. This is a typical sharpness behavior in the EOS regime, and consistent with the experiments in \cite{cohen2021gradient}.

We divide the training process into four phases according to the evolution of the loss, the sharpness, and their correlation, as shown in Figure 1.
The four phases happen cyclically along the training trajectory.
We first briefly describe the properties of each phase and explain the dynamics in more detail in Section~\ref{3.3}.

%, in which we specify the current state and dynamical behavior of loss and sharpness.

\textbf{Phase I:} Sharpness $\sha < 2/\eta$.
In this stage, all the eigenvalues of Gram matrix $\bm{M}$ are below the threshold $2/\eta$. 
In particular, using standard initialization, the training typically starts from this phase, and during this phase 
the loss keeps decreasing and the sharpness keeps growing along the trajectory.
This initial phase is called {\em progressive sharpening (PS)} in prior work \citet{cohen2021gradient}. 
Empirically, the behavior of GD trajectory (as well as the loss and the sharpness) is very similar to that of gradient flow, until the sharpness reaches $2/\eta$ 
(this phenomena is also observed in \citet{cohen2021gradient}. See Figure 5 or Appendix J.1 in their paper).
We note that GD may come back to this phase from Phase IV later.

\textbf{Phase II:} Sharpness $\sha > 2/\eta$. 
In this phase, the sharpness exceeds $2/\eta$ and may keep increasing.
We will show shortly that the fact that $\sha > 2/\eta$ 
causes $|\bm{D}^\top\bm{v}_1|$ (where $\bm{v}_1$ the first eigenvector of $\bM$) to increase exponentially (Lemma~\ref{divergence proposition}).
This would quickly lead $\|\bm{D}\|$ to exceed $\|\bY\|$
in a few iterations,
which leads the sharpness to start decreasing by Proposition \ref{prop:DlargerthanY}, hence the training process enters Phase III.

\textbf{Phase III:} Sharpness $\sha > 2/\eta$ yet begins to 
% oscillate and 
gradually drop. 
Before $\sha$ drops below $2/\eta$,
Lemma~\ref{divergence proposition} still holds, so
$| \bm{D}^\top \bm{v}_1|$ keeps increasing.
Proposition~\ref{prop:DlargerthanY} still holds and thus
the sharpness keeps decreasing until it is below
$2/\eta$, at which point we enter Phase IV.
A distinctive feature of this phase is that the loss may increase
due to the exponential increase of 
$| \bm{D}^\top \bm{v}_1|$.

\textbf{Phase IV:} Sharpness $\sha < 2/\eta$. 
When the sharpness is below $2/\eta$, $| \bm{D}^\top \bm{v}_1|$ begins to decrease quickly, leading the loss to decrease quickly.
At the same time, the sharpness keeps oscillating and gradually decreasing for some iterations. This lasts until the loss decrease to a level that is around its value right before Phase III. The sharpness is still below $2/\eta$ and 
our training process gets back to Phase I. 

\begin{figure}[ht]
    \vspace{-0.5cm}
    \centering
    \subfigure[Sharpness and Loss]{
    \includegraphics[width=0.44\textwidth]{./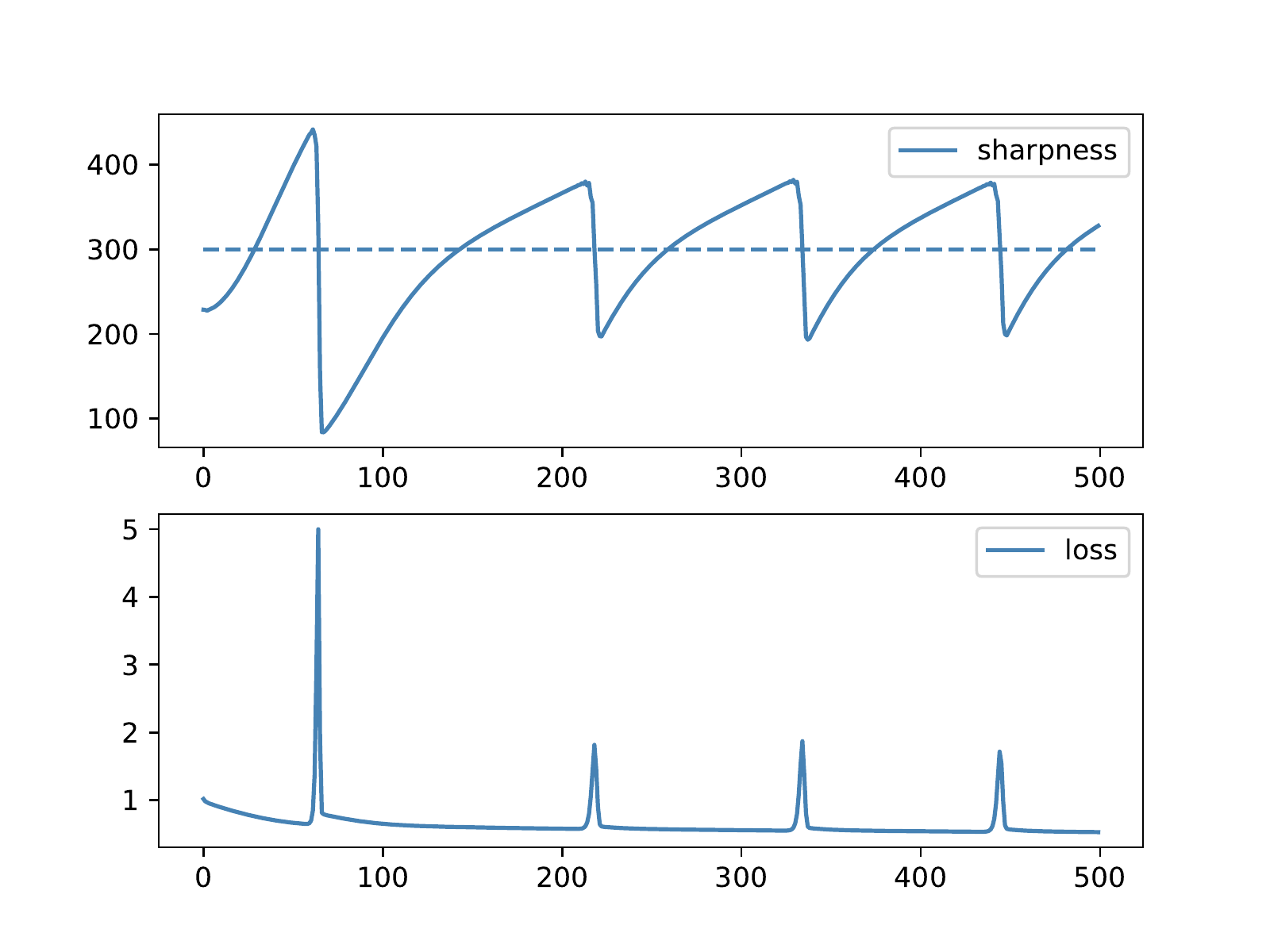}
    }
    \subfigure[Phase Division Illustration]{
    \label{Illustration}
    \includegraphics[width=0.44\textwidth]{./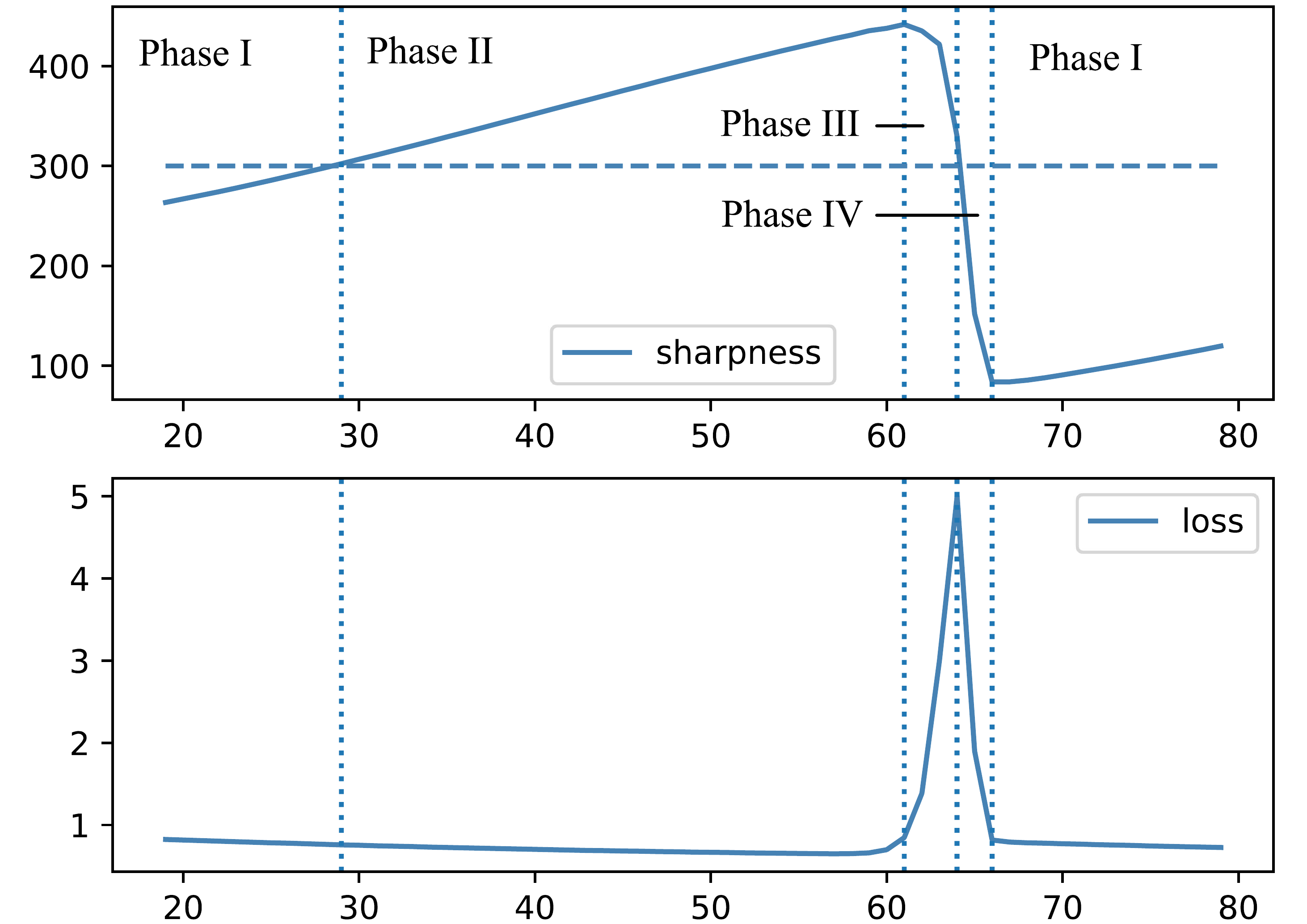}
    }
    \caption{In Figure (a), we show the sharpness and the loss when training on a two-hidden-layer linear activated network by gradient descent. Shortly after the sharpness crosses $2/\eta$, it drops quickly back to some value below $2/\eta$.  In Figure (b) we illustrate how the four phases are divided according to the evolution of the sharpness.}
    \label{Diagram & illustration}
    \vspace{-0.3cm}
\end{figure}

\subsection{The Norm of the Output Layer Weight}\label{3.2}
\label{subsection A norm}
It is difficult to rigorously analyze the dynamics of the sharpness $\sha(t)$.
In this subsection, we make an interesting observation, that 
the change of the norm of
the output layer of the network (usually a fully-connected linear layer)
is consistent with the change of the sharpness most of the time.

% (REWRITE THE FOLLOWING)
In particular, for a general neural network $f(\bx) = \bm A^\top \bm h(\bm W, \bx)$, where $\bm A \in \mathbb R^{m}$ is the output layer weight and the feature extractor $\bm h:\mathbb{R}^p\times \mathbb{R}^d\rightarrow \mathbb R^m$ outputs a $m$-dimensional feature vector
($\bm h$ corresponds to all but the last layers). 
$\bm W\in \mathbb{R}^p$ is the parameter vector of the extractor $\bm h$.

Note that
$\bm M = (\frac{\partial \bm{F}}{\partial \bm{\theta}})^\top (\frac{\partial \bm{F}}{\partial \bm{\theta}})$ can be decomposed as follows:
$$
\bm{M} = \left(\frac{\partial \bm{F}}{\partial \bm{\theta}}\right) \left(\frac{\partial \bm{F}}{\partial \bm{\theta}}\right)^\top 
= \left(\frac{\partial \bm{F}}{\partial \bm{A}}\right) \left(\frac{\partial \bm{F}}{\partial \bm{A}}\right)^\top + \left(\frac{\partial \bm{F}}{\partial \bm{W}}\right) \left(\frac{\partial \bm{F}}{\partial \bm{W}}\right)^\top := \bm {M_A} +\bm {M_W}. 
$$ 
where the $(i,j)-$entry of $\bm{M_W}$ is $ (\bm{M_W})_{ij} = \left\langle \frac{\partial f(\bx_i)}{\partial \bm W},\frac{\partial f(\bx_j)}{\partial \bm W}\right\rangle = \bm A^\top \frac{\partial \bm h(\bm W,\bx_i)}{\partial \bm W}\frac{\partial \bm h(\bm W,\bx_j)}{\partial \bm W}^\top\bm A.$

In this expression, intuitively $\|\bm A\|$ should be positively related to $\|\bM_{\bm W}\|$. We empirically observe that the part $\bm M_{\bm A} = (\frac{\partial \bm{F}}{\partial \bm{A}}) (\frac{\partial \bm{F}}{\partial \bm{A}})^\top$ has a much smaller spectral norm compared to the whole Gram matrix $\bm M$ (see Figure~\ref{sha assumption fig} and Appendix~\ref{appendixD}), which means $\|\bm M_{\bm W}\|$ dominates $\|\bm M_{\bm A}\|$.
Therefore, $\|\bm{A}\|$
% (ROW AND COL VECTORS ARE STILL A MESS)
should be positively correlated with $\|\bm{M}\|$. 
%Since $\bm{A}^\top \bm{A}$ is a rank one matrix with eigenvalue %$\|\bm{A}\|^2$, $\|\bm{A}\|^2$ is closely related to the sharpness. 

The benefit of analyzing $\|\bm{A}\|^2$ is that 
the gradient flow of $\|\bm{A}\|^2$ enjoys the following clean formula: 
\begin{equation}
    \frac{\mathrm {d}\|\bm{A}\|^2}{\mathrm{d}t} = - 2 \left(\frac{\partial \mathcal L}{\partial \bm{A}}\right)^\top \bm{A} = -\frac{4}{n} \bm{D}^\top\left(\frac{\partial\bm{F}}{\partial \bm{A}}\right) \bm{A} = -\frac{4}{n}\bm{D}^\top\bm{F}.
    \label{A norm dynamics}
\end{equation} 
% we denoted by $f(x) = \sum_{q = 1}^m \bm{A}_q x^{\feature}_q$, here $\bfx^{\feature}=(x^{\feature}_1,\ldots, x^{\feature}_m)$ 
% is the output of the second to last layer
% and  $\bm{A} \in \mathbb{R}^{1\times m}$ is the weight vector of the last layer.
% Note that $\bfx^{\feature}$ can be taken as the feature vector of $\bfx$ 
% learned by the neural network.
% Recall that the Gram matrix $\bm{M} = (\frac{\partial \bm{F}}{\partial \bm\theta})^\top (\frac{\partial \bm{F}}{\partial \bm\theta})$ where $\bm\theta$ is the parameter vector of the neural network. Hence for a fully-connected neural network
% $$
% f(\mathbf{x},\theta) = \bm A\sigma(\bm W^{(H)}\sigma(\bm W^{(H-1)} ...\sigma(\bm W^{(1)}\mathbf x)...))
% $$
% (NOT CONSISTNET WITH ABOVE AND VERY STRANGE)
In this work, we do experiments on two-layer linear networks, fully connected deep neural networks, and Resnet18, and all of them have such output layer structures. From Figure~\ref{sha assumption fig}, we can observe that \textbf{the output layer norm
$\|\bm{A}\|^2$ and the sharpness $\sha$ change in the same direction} most of the time along the gradient descent trajectory, i.e., they both increase or decrease at the same time. %The same tendency means when $\|\bm A\|^{2}$ increases, $s$ increases; when $\|\bm A\|^{2}$ drops, $s$ also decreases. By ``most of the time'', 
We note that they may change in different directions very occasionally 
around the time when $\|\bm{A}(t+1)\|^2-\|\bm A(t)\|^2$ changes its sign
(see the experiments in Figure~\ref{fig:Anorm and sharpness}).

\subsection{Detailed Analysis of Each Phase}\label{3.3}
\label{subsection 4-phase proof}

In this section, we explain the dynamics of each phase in more detail.
For clarity, we first list the assumptions we need in this section. For different phases, we may need some different assumptions to simplify the arguments. Most of the assumptions are consistent with the experiments or the findings in the literature. Some of them are somewhat stronger, and we also discuss how to relax them.

\subsubsection{Assumptions Used in Section~\ref{3.3}}

\begin{Assumption}
\label{Ass: A norm assumption}
(A-norm and sharpness) 
Along the gradient descent training trajectory, for all time 
$t$, the norm $\|\bm{A}(t)\|$ of the output layer and the sharpness 
$\sha(t)$ moves in the same directions, i.e.,
$\mathrm{sign}(\sha(t+1) - \sha(t)) = \mathrm{sign}(\|\bm{A}(t+1)\| - \|\bm{A}(t)\|)$.
\end{Assumption}

It is the key observation that we have discussed in Section~\ref{3.2}. 
The following are two assumptions about the gradient descent trajectory. 
The first one assumes that $\bm{D}(t)$ and $\|\bm{A}\|^2$ are updated
according their first order approximations. Empirical justification of this approximation 
can be found in Appendix~\ref{Appen: order one apprx}.
% \jian{fill}
\begin{figure}[ht]
    \vspace{-0.5cm}
    \centering
    \subfigure[Two-layer Linear Network]{
    \label{linear fig}
    \includegraphics[width=0.31\textwidth]{./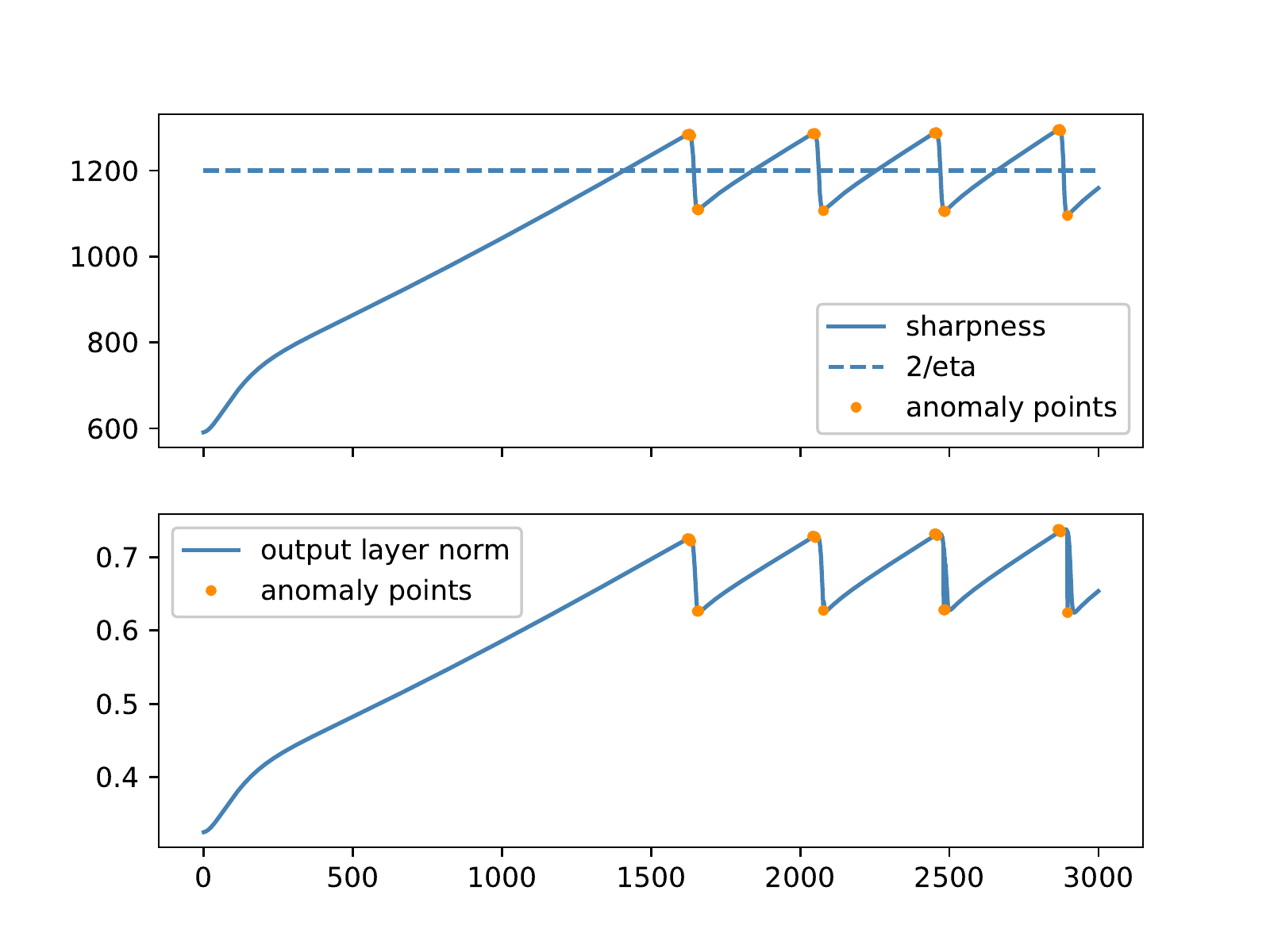}
    }
    \subfigure[Deep 5-layer tanh network]{
    \label{tanh fig}
    \includegraphics[width=0.31\textwidth]{./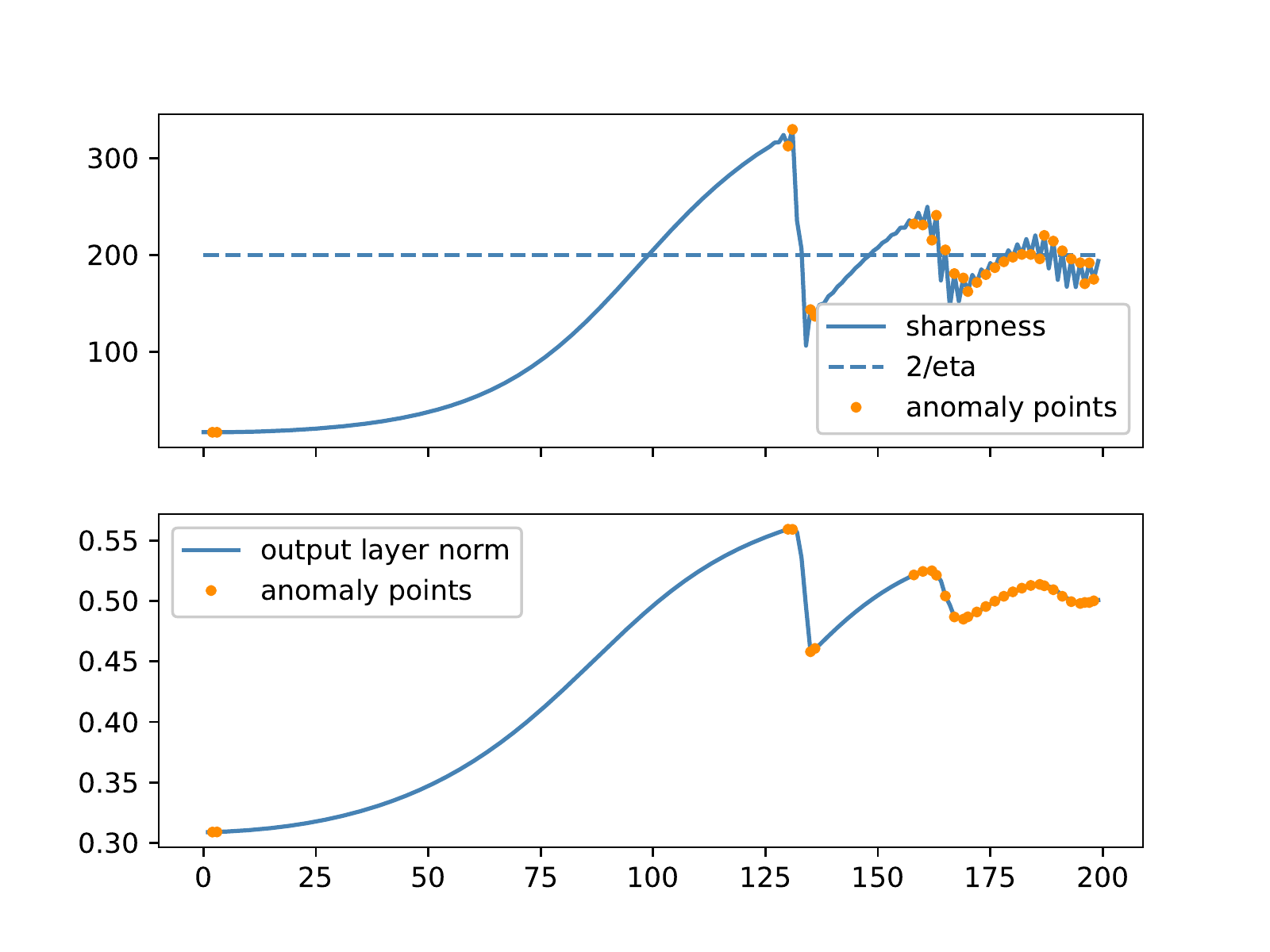}
    }
    \subfigure[Resnet18]{
    \label{Resnet fig}
    \includegraphics[width=0.31\textwidth]{./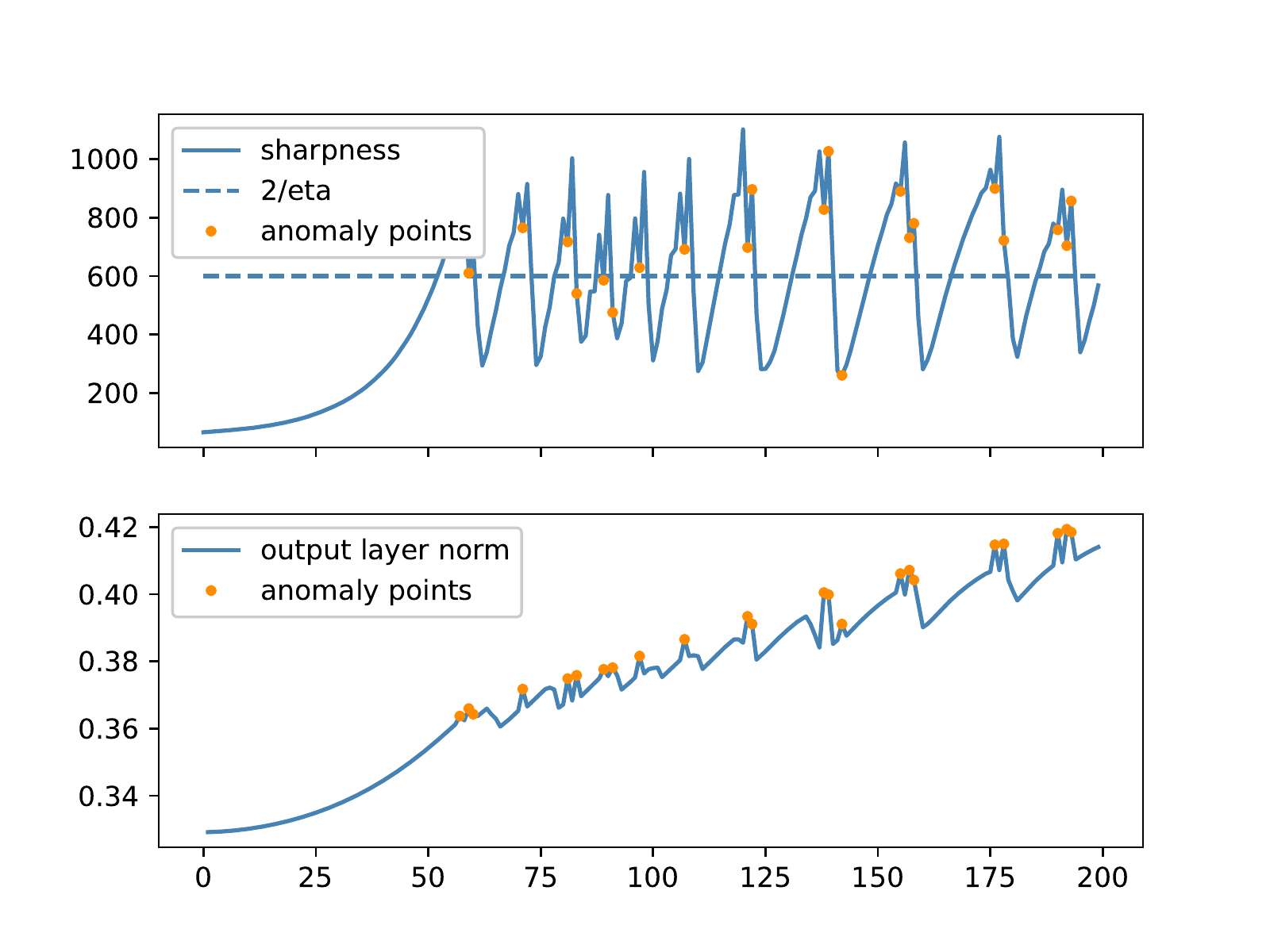}
    }
    \caption{Sharpness and A norm are highly correlated; {\em Anomaly points} (orange points) in the figure correspond to the iterations when $\mathrm{sign}(\sha(t+1) - \sha(t))\ne \mathrm{sign}(\|\bm{A}(t+1)\| - \|\bm{A}(t)\|)$. More discussions 
    can be found in Section \ref{5} and Appendix~\ref{appendixB}.}
    \label{fig:Anorm and sharpness}
    \vspace{-0.3cm}
\end{figure}

\begin{Assumption}
\label{Ass: Order one app}
(First Order Approximation of GD) Along the gradient descent trajectory, the update rule is assumed as the first order approximation
\begin{equation}
    \bm{D}(t+1) - \bm{D}(t) = - \eta \bm{M}(t)\bm{D}(t),\ 
    \|\bm{A}(t+1)\|^2 - \|\bm{A}(t)\|^2 = -\frac{4\eta}{n}\bm{D}(t)^\top\bm{F}(t)
    \label{Order one dynamics}
\end{equation}
\end{Assumption}

\begin{Assumption} (Gradient flow for the PS phase)
When $\sha(t) < 2/\eta$, $\bD(t)$ follows the gradient flow trajectory:
$\frac{\mathrm d\bD(t)}{\mathrm d t} = - \bM(t) \bD(t)$.
    \label{gf assumption}
\end{Assumption}
Assumption~\ref{gf assumption} holds empirically, especially in the progressive sharpening phase (see Figure 5 or Appendix J.1 in \citet{cohen2021gradient}) when the networks are continuously differentiable. We
include these experimental details in Appendix~\ref{appendixD}.
See also  (Theorems 4.3 and 4.5) in \citet{arora2022understanding} for further theoretical justification. We need this assumption for the proof in the progressive sharpening phase.

Then we state an assumption on the upper bound of the sharpness to restrict the regime we discuss:
\begin{Assumption}
(Sharpness upper bound) If the training does not diverge, there exists some constant $B_\sha$, such that  
$
0 < \sha(t) \leq \frac{B_\sha}{\eta}
$
for all $t$.
\label{Ass: sha upper bound section 3} 
\end{Assumption}

This assumption states that there is an upper bound of the sharpness throughout the optimization process. 
Actually, in \citet{lewkowycz2020large}, they proved that $4/\eta$ is an upper bound of the sharpness in a two-layer linear network with one datapoint, otherwise the training process (loss) would diverge.  They empirically found that similar upper bounds exist also for nonlinear activations, albeit with somewhat larger constant $B_\sha$.
In the work, We focus on the case when the loss does not diverge and hence we make Assumption~\ref{Ass: sha upper bound section 3}.

The main set of assumptions we need is about the change of $\bM$'s eigendirections.

\begin{Assumption}
\label{Ass: eigspace assumptions}
Denote $\{\bm{v}_i\}_{i = 1}^n$ to be the set of eigenvectors of $\bM (t)$. 
We have three levels of assumptions on $\bM$'s eigenspace.
\begin{enumerate}[label=(\roman*)]
    \item (fixed eigendirections) the set $\{\bm{v}_i\}_{i = 1}^n$
     is fixed throughout the phase under consideration;
    \item (eigendirections move slowly) at all time $t$ and for any $i$,  
    $\bF(t)^\top \frac{\mathrm d \bv_i(t)}{\mathrm d t} < \lambda_i(t)\bD(t)^\top \bv_i(t)$; 
    \item (principal directions moves slowly) at all time $t$, there is a small constant $\epsilon_2\geq 0$ such that
$\langle \bv_1(t), \bv_1(t+1) \rangle \geq 1-\epsilon_2$.
\end{enumerate}
\end{Assumption}

Clearly, these three assumptions are increasingly weaker from (i) to (iii).
Assumption~\ref{Ass: eigspace assumptions} (i) 
on the eigenvectors is somewhat strong, and the eigenvectors corresponding to 
small eigenvalues may change notably in our experiments. We use it to illustrate a basic proof idea of the progressive sharpening phase, but later we relax this assumption to Assumption~\ref{Ass: eigspace assumptions} (ii). 
 
Moreover, for the proof in Phase II and III, Assumption~\ref{Ass: eigspace assumptions} (iii), which only assume that the main direction changes slightly, is sufficient for our proof. Actually, we note that $\bv_1(t)$ (the eigenvector corresponding to the largest eigenvalue) changes slowly and the inner product of its initial direction and its direction at the end of the phase is also large (see Appendix~\ref{appendixD} for the empirical verification). 

For the proof in Phase II, we need another small technical assumption:
\begin{Assumption}
Assume $\bm{D}(t)^\top\bm{v}_1(t) \geq c \epsilon_2 \|\bm{D}(t)\|$ for some $c > 1$ 
for some $t=t_0$ at the beginning of this phase. Here $\epsilon_2$ is defined in Assumption~\ref{Ass: eigspace assumptions} (iii).
\label{noise assumption}
\end{Assumption}

Assumption \ref{noise assumption} says that
$\bm{D}(t)$ has a non-negligible component in the direction of $\bv_1$.
Since $\epsilon_2>0$ is a small constant, 
this is not a strong assumption as some small perturbation
(due to discrete updates) would make the assumption hold for some $c>1$.

\subsubsection{Detailed Analysis}
In each phase, we attempt to explain the main driving force of the change of the sharpness and the loss. 

\textbf{Phase I:} In this phase, we show that $\bD(t)^\top\bF(t) < 0$ under certain assumptions (detailed shortly) on the spectral properties of $\bM(t)$ (see Lemma~\ref{ps lemma} below). By Assumption~\ref{Ass: Order one app}, we have $\|\bm A(t+1)\|^2 - \|\bm A(t)\|^2 > 0$, implying that the sharpness $\sha(t)$ also increases based on Assumption~\ref{Ass: A norm assumption}. This phase stops if $\sha(t)$ grows larger than $2/\eta$.

We assume the output vector $\bm{F}(t)$ is initialized to be small (this is true if we use very small initial
weights). For simplicity, we assume $\bm{F}(0) = 0$ in the following argument.

\begin{lemma}
For all $t$ in Phase I, 
under Assumption~\ref{Ass: eigspace assumptions} (i) and \ref{gf assumption},
it holds that $\bD(t)^\top\bF(t) < 0$.\label{ps lemma}
\end{lemma}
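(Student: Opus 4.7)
The plan is to diagonalize the dynamics in the (fixed) eigenbasis of $\bM$ and then compute $\bD(t)^\top\bF(t)$ componentwise. By Assumption~\ref{Ass: eigspace assumptions}(i), the eigenvectors $\{\bv_i\}_{i=1}^n$ of $\bM(t)$ do not depend on $t$ throughout Phase~I, so we may write $\bM(t)=\sum_i \lambda_i(t)\,\bv_i\bv_i^\top$ with $\lambda_i(t)\ge 0$ (since $\bM$ is PSD). Decomposing $\bD(t)=\sum_i d_i(t)\bv_i$ and $\bY=\sum_i y_i\bv_i$, the initialization $\bF(0)=0$ gives $\bD(0)=-\bY$, i.e.\ $d_i(0)=-y_i$.

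Next, I would plug the decomposition into the gradient-flow ODE of Assumption~\ref{gf assumption}: since the $\bv_i$'s are $t$-independent, this reduces to the decoupled scalar ODEs $\dot d_i(t)=-\lambda_i(t)\,d_i(t)$, which integrate to $d_i(t)=-y_i\,\alpha_i(t)$ where $\alpha_i(t):=\exp\!\bigl(-\int_0^t \lambda_i(s)\,ds\bigr)\in(0,1]$. The upper bound $\alpha_i(t)\le 1$ crucially uses $\lambda_i(s)\ge 0$, which holds because $\bM$ is a Gram matrix.

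The remaining step is a short algebraic calculation. Writing $\bF(t)=\bD(t)+\bY$, I would expand
\begin{equation*}
\bD(t)^\top \bF(t)=\|\bD(t)\|^2+\bD(t)^\top\bY=\sum_i d_i(t)^2+\sum_i d_i(t)y_i=\sum_i y_i^2\,\alpha_i(t)\bigl(\alpha_i(t)-1\bigr).
\end{equation*}
Each summand is $\le 0$ because $\alpha_i(t)\in(0,1]$, so $\bD(t)^\top\bF(t)\le 0$. Strict negativity follows provided there exists at least one index $i$ with $\lambda_i(s)>0$ on a set of positive measure and $y_i\neq 0$; this is a mild nondegeneracy that holds whenever $\bY$ has a nontrivial component in the range of $\bM$, which is the interesting case (otherwise the gradient flow is stationary from the start).

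I do not expect a serious obstacle: the argument is essentially a diagonalization in a stationary eigenbasis, and the sign of each modal contribution is forced by $\alpha_i\in(0,1]$. The only subtle point is whether one insists on strict inequality at $t=0$; but since $\bF(0)=0$ makes $\bD(0)^\top\bF(0)=0$, the lemma should be read as holding for $t>0$ in Phase~I, which is consistent with the use made of it later (the sharpness starts to increase immediately once the flow moves off initialization).
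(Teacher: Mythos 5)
Your argument is essentially identical to the paper's proof: both diagonalize the gradient-flow dynamics in the fixed eigenbasis of $\bM(t)$, integrate each mode to get $\bD(t)^\top\bv_i = -(\bY^\top\bv_i)\,e^{-\int_0^t\lambda_i(s)\,ds}$, and conclude $\bD(t)^\top\bF(t)=-\sum_i(\bY^\top\bv_i)^2\,\alpha_i(t)\bigl(1-\alpha_i(t)\bigr)\le 0$. Your added remark about strictness (needing $t>0$, $y_i\neq 0$, and $\lambda_i>0$ on a set of positive measure) is a reasonable refinement of a point the paper glosses over, but the route is the same.
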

\vspace{-0.1cm}
% \begin{proof}
% Consider the inner product $\bD(t)^\top\bm v_i$. 
% Recall $\bm v_i$ is the $i$-th eigenvector of $\bM$.
% By Assumption~\ref{Ass: eigspace assumptions}(i), $\bm v_i$ does not change with $t$.
% By Assumption~\ref{gf assumption}, we have
% $$
% \frac{\mathrm d\bD(t)^\top\bm v_i}{\mathrm d t} = - \bD(t)^\top \bM(t)\bm v_i = - \lambda_i(t) \bD(t)^\top\bm v_i
% $$
% where $\lambda_i(t)\geq 0$ is the corresponding eigenvalue of $\bm v_i$.

% Since $\bF(0) =0$, $\bD(0)=\bF(0)-\bY = -\bY.$ Then the differential equation has the solution
% $$\bD(t)^\top\bv_i = e^{- \int_0^{t}\lambda_i(t)\mathrm dt}\cdot \bD(0)^\top\bv_i = -e^{- \int_0^{t}\lambda_i(t)\mathrm dt}\cdot \bY^\top\bv_i$$

% Plug it into the expression $\bD(t)^\top\bF(t) = \bD(t)^\top(\bD(t)+\bY)$ and we have:
% $$
% \bD(t)^\top\bF(t) = \sum_{i=1}^n \bD(t)^\top\bv_i(\bD(t)^\top\bv_i + \bY^\top \bv_i) 
% = -\sum_{i=1}^n (\bY^\top\bv_i)^2e^{- \int_0^{t}\lambda_i(t)\mathrm dt}(1-e^{- \int_0^{t}\lambda_i(t)\mathrm dt})
% $$
% since $(\bY^\top\bv_i)^2e^{- \int_0^{t}\lambda_i(t)\mathrm dt}(1-e^{- \int_0^{t}\lambda_i(t)\mathrm dt})>0$ for all $i\in [n]$, $\bD(t)^\top\bF(t)<0$.
% \end{proof}

From this lemma, $\|\bA\|$ keeps increasing by Assumption~\ref{Ass: Order one app}; hence the sharpness keeps increasing by Assumption~\ref{Ass: A norm assumption} until it reaches $2/\eta$ or the loss converges to 0. In the former case, the training process enters Phase II, while the latter case is also possible when $\eta$ is very small (e.g., even the largest possible sharpness value is less than $2/\eta$). We admit that Assumption~\ref{Ass: eigspace assumptions} (i) is somewhat strong. In fact, the assumption can be relaxed significantly to Assumption~\ref{Ass: eigspace assumptions} (ii).
We show in Appendix~\ref{Subsection: relaxed PS} that under Assumption~\ref{Ass: eigspace assumptions} (ii) and Assumption~\ref{gf assumption}, we can still guarantee $\bD(t)^\top\bF(t)<0$. Moreover, we provide a dynamical system view of the dynamics of $\bD(t)^\top\bF(t)$ in that Appendix.

\textbf{Phase II:} When the training process just enters Phase II, the sharpness keeps increasing. We show shortly that $\bm{D}(t)^\top\bm v_1(t)$ starts to increase geometrically, and this causes the sharpness to stop increasing at some point, thus entering Phase III.

In this phase, we adopt a weaker assumption on the sharpness direction $\bv_1$:  Assumption~\ref{Ass: eigspace assumptions} (iii). This assumption holds in our experiments 
(See Figure~\ref{Largest direction}). Also, Assumption~\ref{noise assumption} is necessary.

\begin{lemma}
\label{lm:Dincrease_exponentially}
Suppose Assumption~\ref{Ass: eigspace assumptions} (iii) and \ref{noise assumption}
hold during this phase (with constants $\epsilon_2>0$ and $c>1$).
If $\sha(t) = (2 + \tau)/\eta$ and $\tau > \frac{1}{1-\epsilon_2 - 1/c}-1$, then $\bm{D}(t)^\top\bm{v}_1(t)$ increases geometrically with factor $(1+\tau)(1-\epsilon_2 - 1/c) > 1$
for $t\geq t_0$ in this phase.
\label{divergence proposition}
\end{lemma}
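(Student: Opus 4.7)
The plan is to project the first-order update $\bD(t+1)=(\bm{I}-\eta\bM(t))\bD(t)$ from Assumption~\ref{Ass: Order one app} onto the principal direction $\bv_1(t+1)$ at the next step, exploit the fact that $\bv_1(t)$ is an eigenvector of $\bm{I}-\eta\bM(t)$ with eigenvalue $1-\eta\sha(t)=-(1+\tau)$, and then transfer this amplification to the slightly rotated direction $\bv_1(t+1)$ using the closeness hypothesis from Assumption~\ref{Ass: eigspace assumptions}(iii).

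Concretely, I would decompose $\bv_1(t+1) = \alpha\bv_1(t) + \bm{r}$ with $\alpha = \langle \bv_1(t),\bv_1(t+1)\rangle \geq 1-\epsilon_2$ and $\bm{r}\perp \bv_1(t)$. Applying $\bm{I}-\eta\bM(t)$ and taking the inner product with $\bD(t)$ yields the identity
\begin{equation*}
\bD(t+1)^\top \bv_1(t+1) = -\alpha(1+\tau)\,\bD(t)^\top \bv_1(t) + \bD(t)^\top(\bm{I}-\eta\bM(t))\bm{r}.
\end{equation*}
The first term already has magnitude at least $(1-\epsilon_2)(1+\tau)\,\bD(t)^\top\bv_1(t)$, so everything hinges on controlling the cross term.

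For the cross term, the key observation is that $\bm{r}$ lies in the orthogonal complement of $\bv_1(t)$, on which the operator norm of $\bm{I}-\eta\bM(t)$ is at most $\max_{i\geq 2}|1-\eta\lambda_i|\leq 1+\tau$ (since $0\leq\lambda_i\leq \sha(t)=(2+\tau)/\eta$). Cauchy--Schwarz together with $\|\bm{r}\|\leq \epsilon_2$ (the tight consequence of the principal-direction assumption to first order in $\epsilon_2$) then bounds the cross term by $(1+\tau)\epsilon_2\|\bD(t)\|$. Invoking Assumption~\ref{noise assumption} in the form $\|\bD(t)\|\leq \bD(t)^\top\bv_1(t)/(c\epsilon_2)$ collapses this to $(1+\tau)/c\cdot \bD(t)^\top\bv_1(t)$. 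Combining by the triangle inequality,
\begin{equation*}
|\bD(t+1)^\top\bv_1(t+1)|\geq (1+\tau)\left(1-\epsilon_2-\frac{1}{c}\right)\bD(t)^\top\bv_1(t),
\end{equation*}
and the factor exceeds $1$ precisely when $\tau > 1/(1-\epsilon_2-1/c)-1$, matching the hypothesis.

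To extend this single-step amplification to geometric growth for all $t\geq t_0$, I would check that Assumption~\ref{noise assumption} propagates. Since $\|\bD(t+1)\|\leq (1+\tau)\|\bD(t)\|$ in the worst case while the $\bv_1$-projection grows by the slightly smaller factor $(1+\tau)(1-\epsilon_2-1/c)$, the noise ratio $\bD(t)^\top\bv_1(t)/\|\bD(t)\|$ could in principle erode; however, the eigencomponents with $\lambda_i<2/\eta$ are multiplied by factors $\leq 1$, so the rapidly growing $\bv_1$-component quickly dominates $\|\bD(t)\|$ and the ratio stays above $c\epsilon_2$. I expect the main obstacle to be precisely this bookkeeping --- showing that Assumption~\ref{noise assumption} is preserved across many successive steps --- together with reconciling the convenient bound $\|\bm{r}\|\leq\epsilon_2$ with the mathematically sharper $\|\bm{r}\|\leq\sqrt{2\epsilon_2}$ that follows directly from the inner-product form of the assumption. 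Both issues are technical rather than conceptual, and the argument above should capture the essence.
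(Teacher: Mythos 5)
Your proposal follows essentially the same route as the paper's proof: project the first-order update onto $\bv_1(t)$ to pick up the factor $1-\eta\sha(t) = -(1+\tau)$, transfer to $\bv_1(t+1)$ via Assumption~\ref{Ass: eigspace assumptions}~(iii), bound the cross term by the operator norm $(1+\tau)$ times $\epsilon_2\|\bD(t)\|$, and convert that to $(1+\tau)|\bD(t)^\top\bv_1(t)|/c$ via Assumption~\ref{noise assumption}, yielding the factor $(1+\tau)(1-\epsilon_2-1/c)$. The two caveats you flag --- that the exact consequence of the inner-product assumption is $\|\bm r\|\leq\sqrt{2\epsilon_2}$ rather than $\epsilon_2$, and that Assumption~\ref{noise assumption} must be propagated across iterations to get geometric growth for all $t\geq t_0$ --- are present (and glossed over) in the paper's own one-step argument as well, so your write-up is, if anything, slightly more candid about the same proof.
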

\vspace{-0.05cm}
Since $\bm{D}(t)^\top\bm{v}_1(t)$ increases geometrically, 
$\|\bD\|\geq \bm{D}(t)^\top\bm{v}_1(t)$ will exceed $\|\bY\|$ eventually. 
Next, the following proposition states that when this happens, 
$\bm{D}(t) ^\top\bm{F}(t)>0$. Consequently, $\|\bm{A}\|$ decreases by Assumption~\ref{Ass: Order one app},
leading to the decrement of the sharpness based on our Assumption~\ref{Ass: A norm assumption}. 

\begin{proposition}
\label{prop:DlargerthanY}
If $\|\bm{D}(t)\| > \|\bm{Y}\|$, then $\bm{D}(t) ^\top\bm{F}(t)>0$.
\end{proposition}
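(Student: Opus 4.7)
The plan is to exploit the definition $\bm{D}(t) = \bm{F}(t) - \bm{Y}$ to rewrite $\bm{F}(t)$ as $\bm{D}(t) + \bm{Y}$, and then apply Cauchy--Schwarz. Concretely, I would expand
\[
\bm{D}(t)^\top \bm{F}(t) = \bm{D}(t)^\top (\bm{D}(t) + \bm{Y}) = \|\bm{D}(t)\|^2 + \bm{D}(t)^\top \bm{Y}.
\]
By Cauchy--Schwarz, $\bm{D}(t)^\top \bm{Y} \geq -\|\bm{D}(t)\|\,\|\bm{Y}\|$, so we can lower bound
\[
\bm{D}(t)^\top \bm{F}(t) \geq \|\bm{D}(t)\|^2 - \|\bm{D}(t)\|\,\|\bm{Y}\| = \|\bm{D}(t)\|\bigl(\|\bm{D}(t)\| - \|\bm{Y}\|\bigr).
\]
The hypothesis $\|\bm{D}(t)\| > \|\bm{Y}\|$ then makes the right-hand side strictly positive (assuming $\bm{D}(t) \neq 0$, which follows from $\|\bm{D}(t)\| > \|\bm{Y}\| \geq 0$), yielding $\bm{D}(t)^\top \bm{F}(t) > 0$.

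There is essentially no obstacle here; the proposition is a one-line identity plus Cauchy--Schwarz. The only thing worth flagging is the degenerate case $\|\bm{Y}\| = 0$: then the hypothesis simply says $\bm{D}(t) \neq 0$, and the bound above still gives $\bm{D}(t)^\top \bm{F}(t) = \|\bm{D}(t)\|^2 > 0$, so nothing breaks. I would present the argument as a two-line computation without additional scaffolding.
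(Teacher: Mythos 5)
Your proof is correct and matches the paper's own argument: the paper likewise expands $\bm{D}(t)^\top\bm{F}(t)$ using $\bm{F}(t)=\bm{D}(t)+\bm{Y}$ (modulo a sign typo in the paper's displayed expression) and lower bounds it by $\|\bm{D}(t)\|^2-\|\bm{D}(t)\|\|\bm{Y}\|$ via Cauchy--Schwarz. Your handling of the degenerate case $\|\bm{Y}\|=0$ is a small extra nicety but changes nothing substantive.
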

% \begin{proof}
% This proposition can be proved by simply noticing that
% $\bm{D}(t) ^\top\bm{F}(t)=\bm{D}(t) ^\top(\bm{D}(t)-\bm{Y}(t)) \geq \|\bm{D}\|^2 - \|\bm{D}\|\|\bm{Y}\| >0$
% when $\|\bm{D}(t)\| \geq \|\bm{Y}\|$.
% \end{proof}

\textbf{Phase III:} The sharpness is still larger than $2/\eta$, but it starts decreasing. Meanwhile, the loss continues to increase rapidly due to Lemma~\ref{lm:Dincrease_exponentially}. 
Eventually, the sharpness will fall below $2/\eta$ and then the training process enters phase IV.

By Lemma~\ref{lm:Dincrease_exponentially}, if the sharpness stays above $2/\eta$, then we can have an arbitrarily large loss. According to Proposition~\ref{prop:DlargerthanY}, if the loss is large enough, the sharpness keeps decreasing.

Now we show that if the sharpness stays above $2/\eta$, 
$\|\bA(t)\|^2$ will decrease by a significant amount. 
This partially explains that the sharpness should also decrease significantly until it drops below $2/\eta$ (instead of decreasingly converging to a value above $2/\eta$ without ever entering the next phase).

\begin{proposition}
\label{prop:Adrop}
Under Assumption~\ref{Ass: Order one app},
if $\|\bD(t)\| > \|\bY\|$, then $\|\bm A(t+1)\|^2 - \|\bm A(t)\|^2 < -\frac{4\eta}{n}(\|\bD(t)\| - \|\bY\|)^2$.
\end{proposition}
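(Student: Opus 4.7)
The plan is to unpack $\|\bm A(t+1)\|^2 - \|\bm A(t)\|^2$ via Assumption \ref{Ass: Order one app} and then lower-bound $\bm D(t)^\top \bm F(t)$ by a quantity strictly greater than $(\|\bm D(t)\| - \|\bm Y\|)^2$. Concretely, from the second equation of \eqref{Order one dynamics}, the claim is equivalent to showing
\[
\bm D(t)^\top \bm F(t) \;>\; (\|\bm D(t)\| - \|\bm Y\|)^2
\]
whenever $\|\bm D(t)\| > \|\bm Y\|$, so the whole task reduces to a clean inequality about the inner product $\bm D^\top \bm F$.

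The key algebraic step I would carry out is to rewrite $\bm F(t) = \bm D(t) + \bm Y$, using the definition $\bm D(t) = \bm F(t) - \bm Y$. This gives
\[
\bm D(t)^\top \bm F(t) \;=\; \|\bm D(t)\|^2 + \bm D(t)^\top \bm Y.
\]
Cauchy--Schwarz then yields the lower bound $\bm D(t)^\top \bm Y \geq -\|\bm D(t)\|\|\bm Y\|$, so
\[
\bm D(t)^\top \bm F(t) \;\geq\; \|\bm D(t)\|^2 - \|\bm D(t)\|\|\bm Y\| \;=\; \|\bm D(t)\|\bigl(\|\bm D(t)\| - \|\bm Y\|\bigr).
\]
Comparing with the target $(\|\bm D(t)\| - \|\bm Y\|)^2$, the difference is
\[
\|\bm D(t)\|\bigl(\|\bm D(t)\| - \|\bm Y\|\bigr) - \bigl(\|\bm D(t)\| - \|\bm Y\|\bigr)^2 \;=\; \|\bm Y\|\bigl(\|\bm D(t)\| - \|\bm Y\|\bigr),
\]
which is strictly positive under the hypothesis $\|\bm D(t)\| > \|\bm Y\|$ (and assuming $\|\bm Y\|>0$, i.e., not all labels vanish, which holds in the binary classification setup where $y_i \in \{-1,1\}$). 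Substituting back into the update rule for $\|\bm A\|^2$ finishes the proof.

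There is no real obstacle: the only mildly subtle point is that we must keep the strict inequality throughout, which works precisely because $\|\bm Y\|>0$ and $\|\bm D(t)\|-\|\bm Y\|>0$, so the slack $\|\bm Y\|(\|\bm D(t)\|-\|\bm Y\|)$ is strictly positive regardless of the angle between $\bm D(t)$ and $\bm Y$. I would also briefly remark that we did not need to invoke the Cauchy--Schwarz bound tightly: as $\bm D(t)^\top \bm Y$ moves toward alignment with $\bm Y$, the bound only becomes stronger, confirming the intuition that once $\|\bm D(t)\|$ overshoots $\|\bm Y\|$ the residual points in a direction that forces $\bm D^\top \bm F$ to be positive and in fact large enough to drive $\|\bm A\|^2$ down at a controlled rate.
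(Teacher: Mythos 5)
Your proof is correct and follows essentially the same route as the paper's: both reduce the claim via Assumption~\ref{Ass: Order one app} to the inequality $\bD(t)^\top\bF(t) \geq \|\bD(t)\|^2 - \|\bD(t)\|\|\bY\| > (\|\bD(t)\|-\|\bY\|)^2$, which you simply spell out in more detail (including the harmless observation that $\|\bY\|>0$ makes the inequality strict).
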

% \begin{proof}
% Use Assumption~\ref{Ass: Order one app} and notice that $\bD(t)^\top\bF(t) \geq \|\bD(t)\|^2 - \|\bD(t)\|\|\bY\| > (\|\bD(t)\| - \|\bY\|)^2$. 
% \end{proof}

From the above argument, we can see that if
$\bm D(t)^\top\bm v_1(t)$ is larger than $\|\bY\|$, then $\bm D(t)^\top\bm v_1(t)$ does not decrease in Phase III, and according to Proposition~\ref{prop:Adrop}, $\|\bA(t)\|^2$ decreases significantly, implying the sharpness drops below $2/\eta$ eventually. 

\textbf{Remark:} The fact that the sharpness can provably drop below $2/\eta$ in this phase can be proved more rigorously in Section~\ref{theoretical analysis} for the two-layer linear setting. See Theorem~\ref{EOS main thm}.

\begin{figure}[ht]
    \vspace{-0.8cm}
    \centering
    \subfigure[Sharpness, $\lambda_{\max}(\bm{M_A})$, and 
    $\lambda_2(\bM)$.]{
    \label{sha assumption fig}
    \includegraphics[width=0.43\textwidth]{./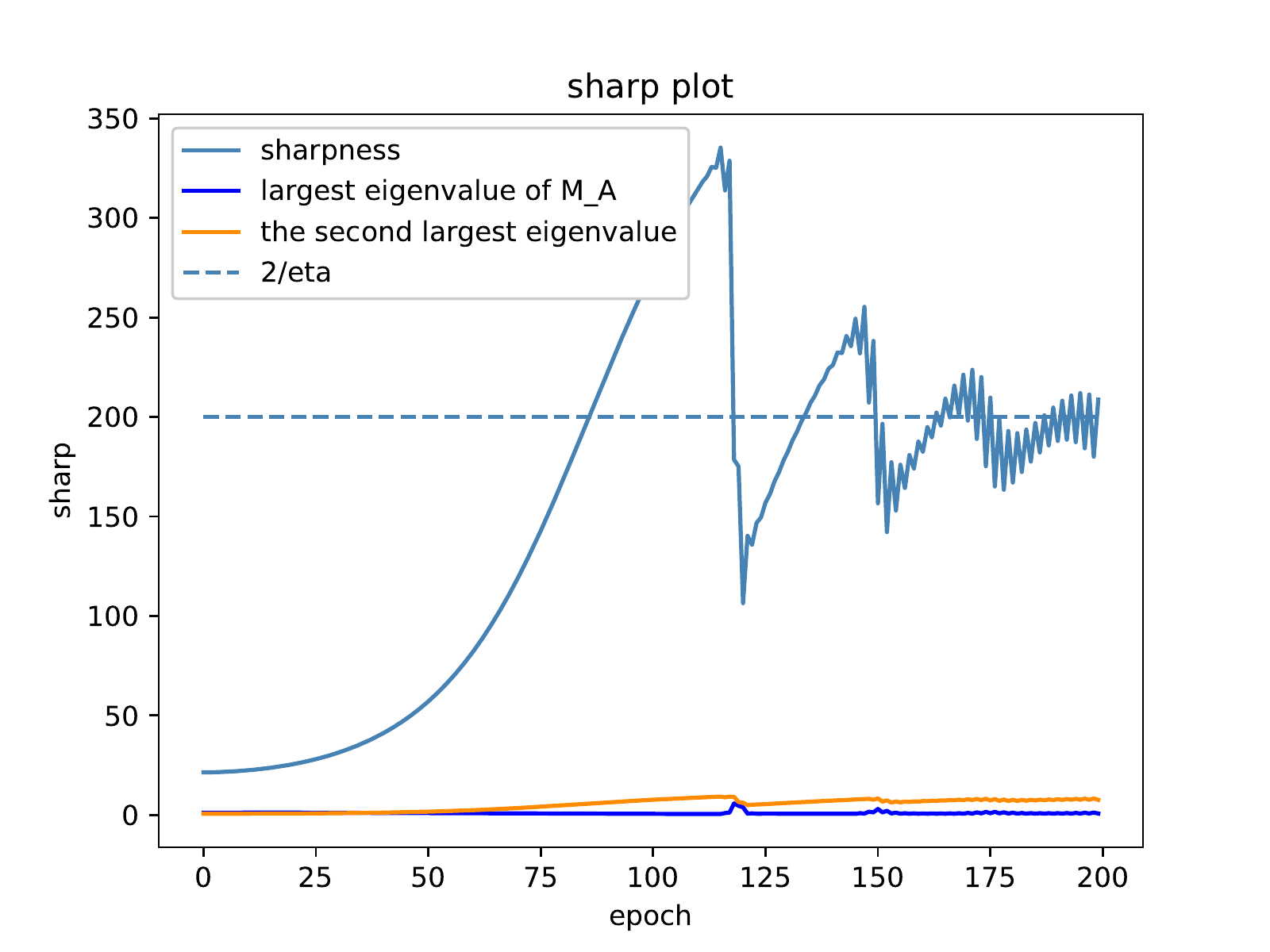}
    }
    \subfigure[The loss ($\|\bD(t)\|^2/n$) and $\|\bR(t)\|^2/n$]{
    \label{loss and R fig}
    \includegraphics[width=0.43\textwidth]{./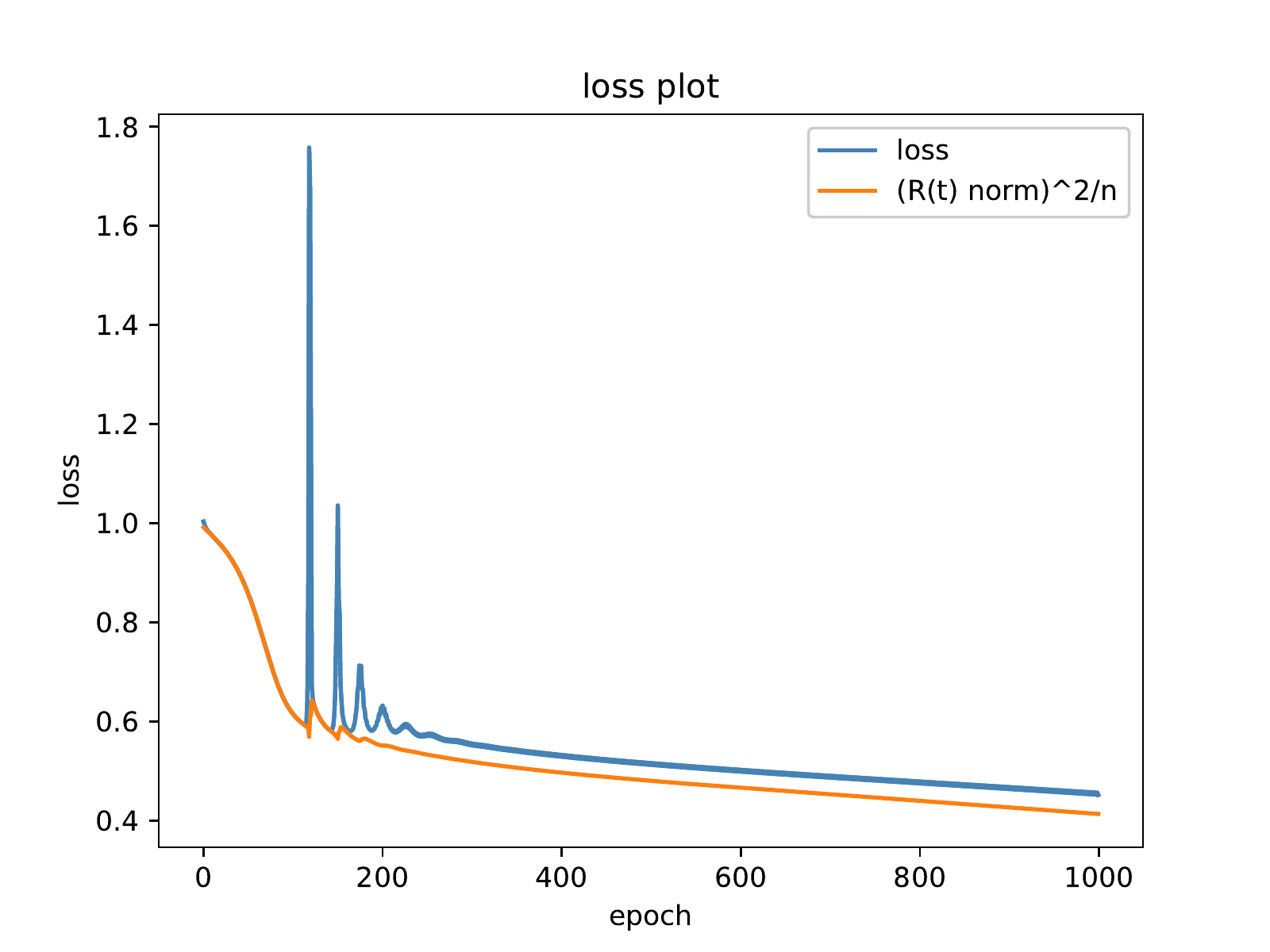}
    }
    \caption{In Figure (a), we verify the assumptions in Section~\ref{3.2} and \ref{3.3} empirically. The steel blue line represents the largest eigenvalue of $\bM$, i.e. the sharpness $\lambda_{\max}(\bM)$. The orange line: the second largest eigenvalue is far below $1/\eta$, illustrating Assumption~\ref{outlier assp}. The blue line: the largest eigenvalue of $\bm {M_A}$ is negligible compared to the sharpness. In Figure (b), we show that even though the total loss $\mathcal{L}(t) = \|\bD(t)\|^2/n$ 
    oscillates considerably, $\|\bm R(t)\|^2/n$ decreases more steadily. }
    \label{Verifying Exp1}
    \vspace{-0.3cm}
\end{figure}
\textbf{Phase IV:} First, 
since the training process has just left phase III, 
$\bm{D}(t)^\top\bm{F}(t)$ is still positive and large, hence $\|\bm A(t)\|^2$ keeps decreasing and the sharpness decreases as well.
Since the sharpness stays below $2/\eta$, the loss decreases 
due to the following descent lemma (with $\bm u$ replaced by $\bD(t)$).

\begin{lemma}
If $\sha(t)<2/\eta$, then for any vector $\bm u\in\mathbb R^n$, $\|\bm u^\top (\bm I - \eta\bM(t))\|\leq (1-\eta\alpha)\|\bm u\|$, where $\alpha = \min\{2/\eta - \sha(t), \lambda_{\min}(\bM(t))\}$.
In particular, replacing $\bm u$ with $\bD(t)$, we can see 
$\|\bD(t+1)\|\leq (1-\eta \alpha)^2\|\bD(t)\|$.
\label{Convergence lemma}
\end{lemma}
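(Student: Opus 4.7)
The plan is to reduce the claim to a standard spectral radius bound for the symmetric operator $\bm I - \eta\bM(t)$. Since $\bM(t) = \tfrac{2}{n}\tfrac{\partial \bF}{\partial\bm\theta}\tfrac{\partial\bF}{\partial\bm\theta}^\top$ is a Gram matrix, it is symmetric and positive semidefinite, so it has an orthonormal eigenbasis $\{\bm v_i\}$ with real eigenvalues satisfying $0 \le \lambda_{\min}(\bM(t)) \le \lambda_i \le \sha(t) < 2/\eta$. Then $\bm I - \eta\bM(t)$ shares those eigenvectors with eigenvalues $1-\eta\lambda_i$, and its operator norm equals $\max_i |1-\eta\lambda_i|$, so the whole statement boils down to bounding this quantity.

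Next I would carry out the two-sided case analysis on the sign of $1-\eta\lambda_i$. Because $\lambda_i \in [\lambda_{\min}(\bM(t)),\, \sha(t)]$ and $\sha(t) < 2/\eta$, one has $-1 < 1 - \eta\sha(t) \le 1 - \eta\lambda_i \le 1 - \eta\lambda_{\min}(\bM(t)) \le 1$, so $|1-\eta\lambda_i| \le \max\{1 - \eta\lambda_{\min}(\bM(t)),\, \eta\sha(t)-1\}$. Rewriting each candidate in the form $1-\eta(\cdot)$ yields $\max\{1 - \eta\lambda_{\min}(\bM(t)),\, 1 - \eta(2/\eta - \sha(t))\} = 1 - \eta\min\{\lambda_{\min}(\bM(t)),\, 2/\eta - \sha(t)\} = 1-\eta\alpha$. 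Since $\bm I - \eta\bM(t)$ is symmetric, $\|\bm u^\top(\bm I - \eta\bM(t))\| = \|(\bm I - \eta\bM(t))\bm u\| \le (1-\eta\alpha)\|\bm u\|$, which is the first inequality. The key structural point is that $\alpha$ must simultaneously control the slack to the upper spectral bound $2/\eta$ and to zero, so that the bound is tight at both ends of the spectrum; writing $\alpha$ as a minimum is exactly what achieves this.

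For the second assertion, I would invoke Assumption~\ref{Ass: Order one app}, which gives $\bD(t+1) = (\bm I - \eta\bM(t))\bD(t)$, and then apply the inequality just proved with $\bm u = \bD(t)$, obtaining $\|\bD(t+1)\| \le (1-\eta\alpha)\|\bD(t)\|$. The $(1-\eta\alpha)^2$ factor in the statement is then naturally read as the contraction on $\|\bD(t+1)\|^2$ (equivalently, on the loss $\mathcal L(t+1) = \|\bD(t+1)\|^2/n$), since a single GD step contracts $\|\bD\|$ by one factor of $(1-\eta\alpha)$. There is no genuine obstacle here; the only thing to be careful about is the two-sided spectral control, which is cleanly handled by the $\min$-definition of $\alpha$.
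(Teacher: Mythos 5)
Your proof is correct and follows essentially the same route as the paper's: diagonalize the symmetric matrix $\bM(t)$, bound each eigenvalue factor by $|1-\eta\lambda_i|\leq 1-\eta\alpha$ via the two-sided argument encoded in the $\min$, and invoke Assumption~\ref{Ass: Order one app} for the step on $\bD(t)$. Your reading of the $(1-\eta\alpha)^2$ factor as the contraction of the squared norm is also the natural way to reconcile that exponent, since a single GD step only yields one factor of $(1-\eta\alpha)$ on $\|\bD\|$.
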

% \begin{proof} For any vector $\bm u\in\mathbb R^n$, 
% \begin{align*}
% \|\bm u^\top(\bm I- \eta\bM(t))\|^2 &= \sum_{i=1}^n \|(\bm u^\top \bv_i(t))\bv_i(t)^\top (\bm I- \eta\bM(t))\|^2 \\&= 
% \sum_{i=1}^n (1-\eta\lambda_i)^2 (\bm u^\top \bv_i(t))^2 \leq (1-\eta \alpha)^2\sum_{i=1}^n(\bm u^\top \bv_i(t))^2 = (1-\eta \alpha)^2\|\bm u\|^2.
% \end{align*}
% Then the second statement is due to Assumption~\ref{Ass: Order one app}.
% \end{proof}

Next we argue that $\bm{D}(t)^\top\bm{F}(t)$ will become negative eventually, which indicates that $\|\bA(t)\|^2$ and hence the sharpness will grow again. 
%By definition that means the training process enters phase I again.
Since the sharpness is below $2/\eta$, $\bm D(t)^\top \bm v_1(t)$ decreases geometrically due to Lemma~\ref{Convergence lemma} (replacing $\bm u$ with $\bD(t)\bv_1(t)\bv_1(t)^\top$).

In fact, $\bD$ can be decomposed into the $\bv_1$-component $\bv_1 \bv_1^\top \bD $ and the remaining part $\bR$ defined as
$\bm R(t) := (\bm I - \bm v_1(t)\bm v_1(t)^\top) \bm D(t)$.
Then we have 
\begin{equation}
    \bD(t)^\top \bF(t) = (\bm v_1(t)\bm v_1(t)^\top \bD(t))^\top(\bm v_1(t)\bm v_1(t)^\top \bD(t)+\bm Y) + \bR(t)^\top (\bR(t) + \bm Y).
    \label{eqn: DF decomposition}
\end{equation}

As shown in the next subsection, $\bR(t)$ almost follows a similar gradient descent trajectory $\bR'(t)$ (Lemma~\ref{RT pert prop}). More precisely, $\bm R'(t)$ is defined as $\bm R'(t+1) = (\bm I-\eta\bm M(t)(\bm I - \bv_1(t)\bv_1(t)^\top))\bm R'(t)$ (Lemma~\ref{lm:Rt_update}). While $\bD$'s dynamics is $\bD(t+1) = (\bm I - \eta \bm M(t))\bD(t)$, $\bR'$ follows a similar dynamics $\bR(t+1) = (\bm I - \eta \bm M'(t))\bR'(t)$, where $\bm M'(t) = \bm M(t)(\bm I - \bv_1(t)\bv_1(t)^\top)$. Note that $\bm M'(t)$ has eigenvalues smaller than $1/\eta$ for any time $t$ (by Assumption~\ref{outlier assp}), hence with an assumption similar to Assumption~\ref{Ass: eigspace assumptions} (i) (or a similar version of our relaxed assumption in Appendix~\ref{Subsection: relaxed PS} for $\bm M'$), we can prove that $\bR'(t)^\top (\bR'(t) + \bm Y)< 0$ for any time $t$ (See Appendix~\ref{appendix: proofs in sec3} for the rigorous proof).

Since $\bR(t)\approx\bR'(t)$, the second term in the decomposition~\eqref{eqn: DF decomposition} is always negative and the first term ($\bv_1$ direction term) is decreasing geometrically. Therefore, there are only two possible cases. The first possibility is that the first term decreases to a small value near 0 and the second term remains largely negative. Then their sum will be negative, which is $\bD(t)^\top\bF(t)< 0$, thus implying the training enters Phase I. The second possibility is that when the first term decreases to a small value near 0, the second term is also a small negative value. In this case both $\bR(t)$ and $\bD(t)^\top \bv_1(t)$ are small, implying the loss is almost 0, which is indeed the end of the training.

\subsection{Explaining Non-monotonic Loss Decrement}\label{3.4}
\label{subsection loss decrement}
In this subsection, we attempt to explain the non-monotonic decrement of the loss during the entire GD trajectory.
See Figure~\ref{loss and R fig}.
As defined in the last section, 
we decompose $\bD$ into the $\bv_1$-component $\bv_1 \bv_1^\top \bD $ and the remaining part $\bR$.
Below, we prove that $\bm R(t)$ is not affected much by the exponential growth of the loss (Proposition~\ref{RT pert prop}) in Phase II and III, and almost follows a converging trajectory (which is defined as $\bR'(t)$ later in this section). 

The arguments in this subsection need Assumption~\ref{Ass: sha upper bound section 3} and Assumption~\ref{Ass: eigspace assumptions} (iii), both very consistent with the experiments. 
We need an additional assumption on the spectrum of $\bM$.
\begin{Assumption}
%We assume that the sharpness is the outlier among $M$'s eigenvalues. 
%Precisely speaking, assume all $M$'s eigenvalues except the sharpness will not be larger than $1/\eta$.
All $\bM(t)$'s eigenvalues except $\sha(t)=\lambda_{\max}(\bM(t))$ are smaller than $1/\eta$ for all $t$.
\label{outlier assp}
\end{Assumption}
\vspace{-0.5cm}
Recall that the largest eigenvalue is at most $\frac{B_\sha}{\eta}$ by Assumption~\ref{Ass: sha upper bound section 3}.
Empirically, the largest eigenvalue is an outlier in the spectrum, i.e., it is much larger than the other eigenvalues.
Hence, we make Assumption~\ref{outlier assp} which states that all other eigenvalues
are at most $1/\eta$, which is consistent with our experiments.
See Figure \ref{Verifying Exp1}(b).
Similar fact is also mentioned in \cite{sagun2016eigenvalues,sagun2017empirical}.

First, we let $B_D$ be an upper bound of $\bm D(t)$, i.e.,
for all $t$, $\|\bm D(t)\|\leq B_D$. 
%Note that we have not proved the upper bound exists, 
In the two-layer linear network case, we can have an explicit form of $B_D$. 
(see Lemma~\ref{D upper bound} in Appendix~\ref{appen c}.)
Recall that in Assumption \ref{Ass: sha upper bound section 3}, $B_\sha$ is the upper bound of $\eta \sha$. 

%Here we also need , and then we get the following proposition.
\begin{lemma}
\label{lm:Rt_update}
Suppose Assumption~\ref{Ass: eigspace assumptions} (iii) holds.
$\bm R(t)$ satisfies the following:
$$
 \bm{R}(t+1) = (I - \eta \bm{M}(t))\bm{R}(t) + \bm e_1(t), \quad \text{where}\quad
 \|\bm e_1(t)\|\leq 6\sqrt{\epsilon_2}\|\bm D(t)\|(B_\sha -1)
$$
\end{lemma}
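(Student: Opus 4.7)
The plan is to expand $\bm R(t+1)=(\bm I-\bm v_1(t{+}1)\bm v_1(t{+}1)^\top)\bm D(t+1)$ using the first-order update $\bm D(t+1)=(\bm I-\eta\bm M(t))\bm D(t)$ from Assumption~\ref{Ass: Order one app}, and then identify the error term $\bm e_1(t)$ with something only involving the difference between the rank-one projectors $\bm v_1(t)\bm v_1(t)^\top$ and $\bm v_1(t{+}1)\bm v_1(t{+}1)^\top$. Concretely, I first compute
\begin{equation*}
(\bm I-\eta\bm M(t))\bm R(t)=\bm D(t+1)-(\bm I-\eta\bm M(t))\bm v_1(t)\bm v_1(t)^\top\bm D(t).
\end{equation*}
Since $\bm v_1(t)$ is the top eigenvector of $\bm M(t)$ with eigenvalue $\sha(t)$, I can use symmetry of $\bm M(t)$ to rewrite $(\bm I-\eta\bm M(t))\bm v_1(t)\bm v_1(t)^\top\bm D(t)=(1-\eta\sha(t))\bm v_1(t)\bm v_1(t)^\top\bm D(t)=\bm v_1(t)\bm v_1(t)^\top\bm D(t+1)$. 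Subtracting, this gives the clean identity
\begin{equation*}
\bm e_1(t)=\bm R(t+1)-(\bm I-\eta\bm M(t))\bm R(t)=\bigl(\bm v_1(t)\bm v_1(t)^\top-\bm v_1(t{+}1)\bm v_1(t{+}1)^\top\bigr)\bm D(t+1).
\end{equation*}

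The remaining two bounds are a projector-perturbation estimate and an operator-norm bound on $\bm I-\eta\bm M(t)$. For the first, given $\langle\bm v_1(t),\bm v_1(t{+}1)\rangle\geq 1-\epsilon_2$ from Assumption~\ref{Ass: eigspace assumptions}(iii), I would write $\bm v_1(t{+}1)=\alpha\bm v_1(t)+\beta\bm w$ with $\bm w\perp\bm v_1(t)$, $\alpha\geq 1-\epsilon_2$ and $\alpha^2+\beta^2=1$. A direct $2\times 2$ computation in the span of $\{\bm v_1(t),\bm w\}$ shows the difference of the two rank-one projectors has spectral norm exactly $|\beta|=\sqrt{1-\alpha^2}\leq\sqrt{2\epsilon_2-\epsilon_2^2}\leq\sqrt{2\epsilon_2}$, which is comfortably below the $6\sqrt{\epsilon_2}$ target.

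For the operator-norm piece, since $\bm M(t)$ is positive semidefinite with spectral norm $\sha(t)\leq B_\sha/\eta$ by Assumption~\ref{Ass: sha upper bound section 3}, the eigenvalues of $\bm I-\eta\bm M(t)$ lie in $[1-B_\sha,\,1]$, so $\|\bm I-\eta\bm M(t)\|\leq \max(1,B_\sha-1)\leq B_\sha-1$ (using $B_\sha\geq 2$, which is the interesting EOS regime). Hence $\|\bm D(t+1)\|\leq(B_\sha-1)\|\bm D(t)\|$, and combining with the projector bound gives $\|\bm e_1(t)\|\leq\sqrt{2\epsilon_2}(B_\sha-1)\|\bm D(t)\|\leq 6\sqrt{\epsilon_2}(B_\sha-1)\|\bm D(t)\|$, as desired.

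I do not anticipate a serious obstacle here: the algebraic identity for $\bm e_1(t)$ is the one genuine observation, and once it is in hand the two norm bounds are both standard. The only mild subtlety is ensuring that $\bm M(t)$'s symmetry is correctly used when moving $(\bm I-\eta\bm M(t))$ past $\bm v_1(t)\bm v_1(t)^\top$, and making sure the case $B_\sha<2$ is either subsumed by the statement or flagged; in the EOS regime we always have $\eta\sha\approx 2$, so $B_\sha\geq 2$ and the bound is in force.
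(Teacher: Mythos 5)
Your proposal is correct, and it reaches the stated bound by a cleaner route than the paper. Both arguments start from the same place — plug the first-order update $\bm D(t+1)=(\bm I-\eta\bm M(t))\bm D(t)$ (Assumption~\ref{Ass: Order one app}, which the paper also uses silently even though the lemma statement only cites Assumption~\ref{Ass: eigspace assumptions}~(iii)) into $\bm R(t+1)=(\bm I-\bv_1(t{+}1)\bv_1(t{+}1)^\top)\bm D(t+1)$ and attribute the error entirely to the mismatch of the two rank-one projectors. But the decompositions differ: the paper splits the mismatch into two cross terms, $(\bv_1(t{+}1)\bv_1(t{+}1)^\top)(\bm I-\eta\bm M(t))(\bm I-\bv_1(t)\bv_1(t)^\top)\bm D(t)$ and $(\bm I-\bv_1(t{+}1)\bv_1(t{+}1)^\top)(\bm I-\eta\bm M(t))\bv_1(t)\bv_1(t)^\top\bm D(t)$, pulls out a factor $(1-\eta\sha(t))$ (which is exact only on the term where $\bv_1(t)$ is the eigenvector, and a further approximation on the other), and bounds each piece by $3\sqrt{\epsilon_2}(B_\sha-1)B_D$ via an explicit expansion of $\bv_1(t{+}1)$ in the basis $\{\bv_1(t),\bv_\perp\}$; summing gives the constant $6$. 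You instead observe the exact identity $\bm e_1(t)=\bigl(\bv_1(t)\bv_1(t)^\top-\bv_1(t{+}1)\bv_1(t{+}1)^\top\bigr)\bm D(t+1)$ (using $\bM(t)\bv_1(t)=\sha(t)\bv_1(t)$ and symmetry of $\bM(t)$), then apply the standard fact that the spectral norm of a difference of unit-vector projectors is $\sqrt{1-\alpha^2}\le\sqrt{2\epsilon_2}$, together with $\|\bm I-\eta\bM(t)\|\le B_\sha-1$ from Assumption~\ref{Ass: sha upper bound section 3}. This yields the tighter constant $\sqrt{2\epsilon_2}$ in place of $6\sqrt{\epsilon_2}$ and sidesteps the paper's slightly delicate scalar-replacement step. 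The one caveat you correctly flag — needing $B_\sha\ge 2$ so that $\max\{1,B_\sha-1\}=B_\sha-1$ — is also implicitly required by the paper's own bound $|1-\eta\sha(t)|\le B_\sha-1$, and is harmless in the EOS regime where $B_\sha\ge 2$ by construction.
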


\begin{lemma}
Define an auxiliary sequence $\bm R'(t)$ by $\bm R'(0) = \bm R(0)$, and $\bm R'(t+1) = (\bm I-\eta\bm M(t)(\bm I - \bv_1(t)\bv_1(t)^\top))\bm R'(t)$.
If Assumption~\ref{Ass: sha upper bound section 3},
Assumption~\ref{Ass: eigspace assumptions} (iii),
Assumption~\ref{outlier assp} hold, and for any time $t$ there exists a quantity $\lambda_r>0$, such that 
the smallest eigenvalue of $\bM(t)$, i.e. $\lambda_{\min}(\bM(t)) > \lambda_r$, 
then there exists a constant $c_r>0$ such that $\|\bm R(t) - \bm R'(t)\| \leq c_r \frac{B_D(B_\sha -1)\sqrt{\epsilon_2}}{\eta \lambda_r}$.
\label{RT pert prop}
\end{lemma}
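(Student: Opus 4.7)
The plan is to view the lemma as a perturbation estimate for a linear nonautonomous recursion and to bound the perturbed orbit $\delta(t):=\bR(t)-\bR'(t)$ by a geometric-series / Duhamel-type argument.

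\textbf{Step 1: Derive a clean recursion for $\delta$.} Start from Lemma~\ref{lm:Rt_update}, which gives $\bR(t+1) = (\bm I - \eta\bM(t))\bR(t) + \bm e_1(t)$ with $\|\bm e_1(t)\|\leq 6\sqrt{\epsilon_2}\,B_D(B_\sha-1)$. Write $\bM'(t)=\bM(t)-\sha(t)\bv_1(t)\bv_1(t)^\top$. Since $\bR(t)$ is orthogonal to $\bv_1(t)$ by construction, $\sha(t)\bv_1(t)\bv_1(t)^\top\bR(t)=0$, so $(\bm I-\eta\bM(t))\bR(t)=(\bm I-\eta\bM'(t))\bR(t)$. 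Subtracting the definition $\bR'(t+1)=(\bm I-\eta\bM'(t))\bR'(t)$ yields the key identity
$$
\delta(t+1) \;=\; (\bm I-\eta\bM'(t))\,\delta(t) \;+\; \bm e_1(t), \qquad \delta(0)=0.
$$

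\textbf{Step 2: Spectral picture of $\bm I-\eta\bM'(t)$.} By Assumption~\ref{outlier assp} and the hypothesis $\lambda_{\min}(\bM(t))>\lambda_r$, the nonzero eigenvalues of $\bM'(t)$ (those on $\bv_1(t)^\perp$, which coincide with $\lambda_2,\ldots,\lambda_n$ of $\bM(t)$) lie in $[\lambda_r,1/\eta]$, while $\bM'(t)\bv_1(t)=0$. Hence $\bm I-\eta\bM'(t)$ is the identity on $\mathrm{span}(\bv_1(t))$ and a $(1-\eta\lambda_r)$-contraction on $\bv_1(t)^\perp$.

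\textbf{Step 3: Bound the contracting and non-contracting components separately.} Split $\delta(t)=a(t)\bv_1(t)+\delta^\perp(t)$ with $\delta^\perp(t)\in\bv_1(t)^\perp$. For the orthogonal piece, the contraction plus the inhomogeneous bound gives the standard geometric-sum estimate
$$
\|\delta^\perp(t)\| \;\le\; \sum_{s=0}^{t-1}(1-\eta\lambda_r)^{t-1-s}\|\bm e_1(s)\| \;\le\; \frac{6\sqrt{\epsilon_2}\,B_D(B_\sha-1)}{\eta\lambda_r},
$$
up to an $O(\sqrt{\epsilon_2})$ cross-term coming from the change of basis between $\bv_1(t)^\perp$ and $\bv_1(t+1)^\perp$ (controlled by Assumption~\ref{Ass: eigspace assumptions}(iii)). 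For the parallel piece, use that $\bR(t)\perp\bv_1(t)$ to write $a(t)=-\bv_1(t)^\top\bR'(t)$, and observe that $\bv_1(t)^\top\bM'(t)=0$ gives $\bv_1(t)^\top\bR'(t+1)=\bv_1(t)^\top\bR'(t)$. Telescoping yields
$$
a(t) \;=\; -\sum_{s=0}^{t-1}\bigl(\bv_1(s+1)-\bv_1(s)\bigr)^\top \bR'(s+1),
$$
each term of size $\sqrt{2\epsilon_2}\,\|\bR'(s+1)\|$. The key is that $\|\bR'(s)\|$ decays on the contracting time-scale $1/(\eta\lambda_r)$ (its component along $\bv_1(s)$ remains small, while the orthogonal part contracts by $1-\eta\lambda_r$), so the telescoped sum is controlled by $O(\sqrt{\epsilon_2}\,B_D/(\eta\lambda_r))$. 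Together with Step 2 this gives $\|\delta(t)\|\le c_r\,\sqrt{\epsilon_2}\,B_D(B_\sha-1)/(\eta\lambda_r)$ for a universal $c_r$.

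\textbf{Main obstacle.} The hard part is the non-contracting direction $\bv_1(t)$: the operator $\bm I-\eta\bM'(t)$ is not a strict contraction, so a naive iteration bound grows linearly in $t$. The only mechanism for preventing $|a(t)|$ from accumulating is the slow rotation of $\bv_1$ (Assumption~\ref{Ass: eigspace assumptions}(iii)), but this must be combined self-consistently with a bound on $\|\bR'\|$, which itself depends on $|a|$. Closing this bootstrap---essentially showing that the $\sqrt{\epsilon_2}$-per-step rotation error is absorbed by the $(1-\eta\lambda_r)$-contraction on the complementary subspace before it can blow up---is the delicate step; once done, the two bounds combine to give the stated estimate independent of $t$.
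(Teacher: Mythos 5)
Your Steps 1 and 2 are exactly the skeleton of the paper's own proof: the paper also starts from Lemma~\ref{lm:Rt_update}, uses $\bR(t)\perp\bv_1(t)$ to replace $(\bm I-\eta\bM(t))\bR(t)$ by $(\bm I-\eta\bM(t)(\bm I-\bv_1(t)\bv_1(t)^\top))\bR(t)$, and arrives at the same recursion $\delta(t+1)=(\bm I-\eta\bM(t)(\bm I-\bv_1(t)\bv_1(t)^\top))\delta(t)+\bm e_1(t)$. The paper then finishes in one line by asserting $\|\bm I-\eta\bM(t)(\bm I-\bv_1(t)\bv_1(t)^\top)\|\le 1-\eta\lambda_r$, iterating $\|\delta(t+1)\|\le(1-\eta\lambda_r)\|\delta(t)\|+\|\bm e_1(t)\|$, and summing the geometric series to get $c_r=6$. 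You are right that this matrix fixes $\bv_1(t)$ (the eigenvalue $0$ of $\bM(t)(\bm I-\bv_1\bv_1^\top)$ becomes eigenvalue $1$ of the update map), so the asserted operator-norm contraction is only valid on $\bv_1(t)^\perp$; your decision to split off that direction targets a genuine subtlety that the paper's argument glosses over by implicitly treating $\delta(t)$ as living in $\bv_1(t)^\perp$.

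The problem is that your proposal does not close the argument at exactly that point, and you concede as much in your "main obstacle" paragraph. Your bound on $a(t)=-\bv_1(t)^\top\bR'(t)$ is circular as written: the telescoped sum $\sum_s(\bv_1(s+1)-\bv_1(s))^\top\bR'(s+1)$ is controlled by assuming that $\|\bR'(s)\|$ decays on the scale $1/(\eta\lambda_r)$ because "its component along $\bv_1(s)$ remains small" --- but that component is precisely $a(s)$, the quantity you are trying to bound. Closing the loop requires a joint induction on the pair $\left(|a(t)|,\ \|w(t)\|\right)$ with $w(t)=(\bm I-\bv_1(t)\bv_1(t)^\top)\bR'(t)$, coupled roughly as $\|w(t+1)\|\le(1-\eta\lambda_r)\|w(t)\|+O(\sqrt{\epsilon_2})\,|a(t)|$ and $|a(t+1)|\le|a(t)|+O(\sqrt{\epsilon_2})\,\|w(t)\|$; the feedback term is only subordinate under an additional quantitative condition (essentially $\sqrt{\epsilon_2}$ small compared with $\eta\lambda_r$, or a restriction on the time horizon), which neither your sketch nor the lemma statement supplies, and without which $|a(t)|$ can in principle keep accumulating. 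So your Step 3 names the obstacle rather than overcoming it: to have a complete proof you must either carry out this two-variable induction explicitly with the needed smallness hypothesis, or follow the paper's blunter route of applying the $(1-\eta\lambda_r)$ contraction directly to $\delta(t)$ (which tacitly assumes the difference stays in $\bv_1(t)^\perp$) and accept the resulting bound with $c_r=6$.
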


Now, in light of Lemma~\ref{lm:Dincrease_exponentially} and Lemma~\ref{RT pert prop}, 
we arrive at an interesting explanation of the phenomena of non-monotonic decrease of the loss. Basically, $\bD$ can be decomposed into the $\bv_1$-component $\bv_1 \bv_1^\top \bD $ and the remaining part $\bR=(\bm I - \bm v_1\bm v_1^\top) \bm D$. The $\bv_1$-component may increase geometrically during the EOS (Lemma~\ref{lm:Dincrease_exponentially}), but the behavior of the 
remaining part $\bR(t)$ is close to $\bm R'(t)$, which 
follows the simple updating rule $\bm R'(t+1) = (\bm I-\eta\bm M(t))\bm R'(t)$,
so Lemma~\ref{Convergence lemma} implies that the $\bR$ part almost keeps
decreasing during the entire trajectory
(here Lemma~\ref{Convergence lemma} applies with $\bm u$ replaced by $\bm R'(t)$, 
noticing that the eigenvalues except the first are well below $2/\eta$). 
Hence, the non-monotonicity  of the loss is mainly due to 
the $\bv_1$-component of $\bD$, and the rest part $\bR$ is optimized 
in the classical regime
(step size well below 2/(the operator norm)) and hence steadily decreases.
See Figure~\ref{loss and R fig}.

\section{A Theoretical Analysis for 2-Layer Linear NN}
\label{theoretical analysis}
In this section, we aim to provide a more rigorous explanation of the EOS phenomenon in two-layer linear networks. The proof ideas follow similar high-level intuition as the proofs in Section~\ref{subsection 4-phase proof}. In particular, we can remove or replace the assumptions in Section~\ref{subsection 4-phase proof} with arguably weaker assumptions.
Due to space limit, we state our main theoretical results and elaborate their relation with the proofs in Section~\ref{subsection 4-phase proof}. 
The detailed settings and proof are more tedious and can be found in Appendix~\ref{appen c}.

\subsection{Setting and basic notations}
\textbf{Model:} In this section, we study a two-layer neural network with linear activation, i.e. 
    $
    f(x) = \sum_{q=1}^m\frac{1}{\sqrt m}a_q\bm w_qx = \frac{1}{\sqrt m} \bm A^\top \bm Wx
    $
    where $\bm W = [\bm w_1, ...,\bm w_m]^\top\in \mathbb R^{m\times d}$, $\bm A = [a_1, ..., a_m]\in \mathbb{R}^{m}$.
    
\noindent  \textbf{Dataset:} For simplicity, we assume $y_i = \pm 1$ for all $i\in [n]$, and $\|\mathbf X^\top \mathbf X\|_2 = \Theta(n)$. 
    % Here,  $\|\mathbf X^\top \mathbf X\|_2$ denotes the spectral norm of 
    % matrix $\mathbf X^\top \mathbf X$, which is its largest eigenvalue
    % $\lambda_1(\mathbf X^\top \mathbf X)=\lambda_{\max }(\mathbf X^\top \mathbf X)$.  
    We assume $\mathbf{\bX^\top \bX}$ has rank $r$, and we decompose $\mathbf{\bX^\top \bX}$ and $\bY$ according to the
    orthonormal basis $\{\bv_i\}$, the eigenvectors of $\mathbf{\bX^\top \bX}$:
    $
    \mathbf{\bX^\top \bX} = \sum_{i=1}^r \lambda_i\bm v_i\bm v_i^\top, \ \bY = \sum_{i=1}^r (\bY^\top \bm v_i) \bm v_i :=\sum_{i=1}^r z_i \bm v_i
    $
    where $\bv_i$ is the eigenvector corresponding to the $i$-th largest eigenvalue $\lambda_i$ of $\mathbf{\bX^\top \bX}$. $z_i = \bY^\top \bm v_i$ is the projection of $\bY$
    onto the direction $\bv_i$. Here we suppose $n\gg r$ and the global minimum $(\bm A^*, \bm W^*)$ exists.
    % , s.t. $\bY^\top = \bm {A^{*\top}W^*}\mathbf X$. Note when $n\gg d$, the matrix $\mathbf{\bX^\top \bX}$ cannot be full rank. This condition guarantees that GD only travels in the column space of $\mathbf {\bX^\top \bX}$ according to the following lemma. 
        
    % \begin{lemma} (Lemma~\ref{colspace})
    % For all iteration $t$, if any vector $\bm u\in \mathbb{R}^n$ satisfies $\mathbf{\bX^\top \bX}\bm u =0$, then $\bm{Y}^\top\bm u=\bm D(t)^\top\bm u=0$.
    % \end{lemma}
    
    % We run GD on the loss and start from the following \textit{symmetric initialization} of the weights, which guarantees $\bm F(0) = 0$.
    % $$
    % a_q\sim\text{Unif}(\{-1, 1\}), \ a_{q+m/2} = - a_{q}, \ q = 1,2,...,m/2
    % $$
    % $$
    % \sum_{q=1}^{m/2} \bm w_q\bm w_q^\top = \frac{m}{2d} \bm I_d,\  \bm w_{q+m/2} = \bm w_q, \ q = 1,2,...,m/2
    % $$
    
\noindent    \textbf{Update rule:} We write explicitly the GD dynamics of $\bm D(t)$: 
    % \begin{equation}
    % \|\bm A(t+1)\|^2-\|\bm A(t)\|^2 = -2\eta\frac{\partial \mathcal{L}}{\partial \bm A}\bm A(t) + \eta^2\|\frac{\partial \mathcal{L}}{\partial \bm A}\|^2 = -\frac{4\eta}{n}\bm D(t)^\top \bF(t) +\eta^2 \|\frac{\partial \mathcal{L}}{\partial \bm A}\|^2
    % \label{A.2}
    % \end{equation}
    % \begin{equation}
        $\bm D(t+1)= (\bm I-\eta \bm M^*(t))\bm D(t)$, 
        % \label{4.1}
    % \end{equation}
    where $\bm M^*(t)=\frac{2}{mn}(\|\bm A(t)\|^2\mathbf X^\top\mathbf X +\mathbf X^\top \bm W^\top (t) \bm W(t)\mathbf X)- \frac{4\eta}{n^2m}(\bm D(t)^\top\bm F (t))\mathbf X^\top \mathbf X$ is the Gram matrix combined with second order terms.
    % is defined as 
    % \begin{equation}
    %     \bm M(t) = \frac{2}{mn}(\|\bm A(t)\|^2\mathbf X^\top\mathbf X +\mathbf X^\top \bm W^\top (t) \bm W(t)\mathbf X)
    %     \label{A.3}
    % \end{equation}
    % We do not ignore the second order term $\frac{4\eta^2}{n^2m}\bm D(t)(\bm F(t)\bm D^\top(t))\mathbf X^\top \mathbf X$ in \eqref{4.1}. We can write
    % \begin{equation}
    %     \bm D(t+1)=(\bm I_n-\eta \bm M^*(t)) \bm D(t)
    %     \label{A.6}
    % \end{equation}
    % where %$\bm M^*(t)$ is a combination of $\bm M(t)$ and the second order term:
    % $
    % \bm M^*(t) = \bm M(t) - \frac{4\eta}{n^2m}(\bm D(t)^\top\bm F (t))\mathbf X^\top \mathbf X.  
    % $
  
\subsection{Main Theorem and The Proof Sketch}

\noindent\textbf{Phase I and Progressive Sharpening:}

\begin{Assumption}
There exists some constant $\chi>1$, s.t. for all $i\in[r-1]$,
$ \lambda_i(\bX^\top \bX) \leq \chi\lambda_{i+1}(\bX^\top \bX)$. Moreover, $\lambda_1(\bX^\top \bX)\geq 2\lambda_2(\bX^\top \bX).$
\label{eigenspectrum assumption}
\end{Assumption}

\begin{Assumption}
There exists $\smallconst = \Omega(r^{-1})$ 
such that $\min_{i\in [r]}\{z_i/\sqrt n\}\geq \smallconst$.
\label{y assumption}
\end{Assumption}\vspace{-0.25cm}

The first assumption is about the eigenvalue spectrum of $\bX^\top \bX$. \footnote{It guarantees the gap between two adjacent eigenvalues is not very large, and there is a gap between the largest and the second largest eigenvalue. 
Note the second part of the assumption is a relaxed version of Assumption~\ref{outlier assp}. In our CIFAR-10 1k-subset with samples' mean subtracted, $\lambda_1/\lambda_2 = \chi \approx 3$ (See Figure~\ref{eigenspec assumption fig}). }
%\jian{needs pointer to experiments or reference}
The second assumes that all component $z_i=\bY^\top\bv_i$ are not too small. 

% (not necessary to be mentioned here) Here we proposed an approximation of sharpness $s$ by the eigenvector $\bm v_1$: $ \sha^*(t) = \bm v_1^\top \bm M(t)\bm v_1$. The following corollary rationalizes this approximation. The proof is deferred to Appendix C.

% \textbf{Corollary 1. (Partial)}
% \textit{There exists $c_4= \Theta(1)$, s.t.} $$ \sha^*(t) \geq \sha(t)-\frac{c_4}{m} $$

\begin{theorem}[Informal]
Suppose Assumption~\ref{eigenspectrum assumption}, Assumption~\ref{y assumption} hold, 
the smallest nonzero eigenvalue $\lambda_r=\lambda_{r}(\bX^\top \bX)>0$
and $\lambda_1 = \lambda_{\max}(\bX^\top \bX) = c_1n$. Then for any $\epsilon>0$, if $m= \Omega(\frac{c_1n^2}{\lambda_r^2})$, and $n =\Omega(\frac{\lambda_r^2}{\smallconst^4\epsilon^2})$,
we have the progressive sharpening property: $\sha(t+1)-\sha(t)>0$ for $t= 1,2, ...,  t_0-1$ where $t_0$ is the time when $\|\bD(t)\|^2\leq O(\epsilon^2)$ or $\lambda_{\max}(\bM^*(t))>1/\eta$ for the first time.
% \begin{itemize}
%     \item (Progressive Sharpening) $\sha(t+1)-\sha(t)>0$.
%     \item $\|\bm\Gamma(t)\|= O(1/m)$, where $\bm\Gamma(t) = \frac{2}{mn}(\bX^\top \bm W^\top (t) \bm W(t)\bX - \bX^\top \bm W^\top (0) \bm W(0)\bX)$.
% \end{itemize}
\label{PS theorem}
\end{theorem}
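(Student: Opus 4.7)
The plan is to specialize the Phase~I analysis behind Lemma~\ref{ps lemma} to the two-layer linear setting, promoting the qualitative Assumption~\ref{Ass: A norm assumption} and the fixed-eigendirection Assumption~\ref{Ass: eigspace assumptions}(i) to quantitative estimates that hold automatically once the width $m$ and the sample size $n$ are large enough. Throughout I work under Assumption~\ref{gf assumption}, which lets me treat the relevant dynamics as those of gradient flow on $\bm D$, $\bm A$, and $\bm W$, and compare back to the discrete iteration at the end.

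First I would pin down an approximately fixed eigenbasis for $\bm M^*(t)$. Because $\bm M^*(t)=\tfrac{2}{mn}\bigl(\|\bm A(t)\|^2\bX^\top\bX+\bX^\top\bm W(t)^\top\bm W(t)\bX\bigr)-\tfrac{4\eta}{n^2m}(\bm D^\top\bm F)\bX^\top\bX$, it suffices to show $\bm W(t)^\top\bm W(t)\approx\bar c(t)\,\bm I_d$ up to a small perturbation, so that $\bm M^*(t)$ is close to a positive scalar multiple of $\bX^\top\bX$ and thus approximately diagonalized by $\{\bv_i\}$. With standard scaled Gaussian initialization, a matrix-concentration argument combined with the width bound $m=\Omega(c_1n^2/\lambda_r^2)$ yields $\|\bm W(0)^\top\bm W(0)-\bar c(0)\,\bm I_d\|=o(\bar c(0))$, and a standard NTK-style movement bound on $\|\bm W(t)-\bm W(0)\|$ obtained from the gradient-flow energy identity propagates the estimate through Phase~I. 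Projecting the flow $\dot{\bm D}=-\bm M^*(t)\bm D$ onto $\bv_i$ then gives $\dot d_i(t)=-\mu_i(t)d_i(t)$ with $\mu_i(t)>0$, so each $d_i(t)=\bm D(t)^\top\bv_i$ decays monotonically from $d_i(0)\approx -z_i$ without changing sign, with $|d_i(t)|\leq|z_i|$. Writing $\bm F=\bm D+\bm Y$ yields $\bm D(t)^\top\bm F(t)=\sum_i d_i(t)\bigl(d_i(t)+z_i\bigr)$, a sum of nonpositive terms, and Assumption~\ref{y assumption} lets me lower-bound $|\bm D^\top\bm F|$ by an explicit positive quantity whenever $\|\bm D(t)\|^2=\Omega(\epsilon^2)$.

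From here I would turn $\bm D^\top\bm F<0$ into strict sharpness growth. The discrete update $\|\bm A(t+1)\|^2-\|\bm A(t)\|^2=-\tfrac{4\eta}{n}\bm D^\top\bm F+O(\eta^2)$ shows $\|\bm A(t)\|^2$ strictly increases once the condition $n=\Omega(\lambda_r^2/(\smallconst^4\epsilon^2))$ makes the first-order term dominate the discretization error; the width bound gives the same conclusion for the scalar part of $\bm W(t)^\top\bm W(t)$. Writing $\sha(t)=\bv_1(t)^\top\bm M^*(t)\bv_1(t)$ and expanding $\sha(t+1)-\sha(t)$ into three pieces, namely (i)~the $\|\bm A\|^2$ contribution, (ii)~the $\bX^\top\bm W^\top\bm W\bX$ contribution, and (iii)~the $O(\eta/n^2)$ correction driven by $\Delta(\bm D^\top\bm F)$, the first two are positive and of order $\tfrac{\eta\lambda_1|\bm D^\top\bm F|}{mn^2}$, while the third is smaller by a factor controlled by $m$ and $n$. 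The main obstacle I anticipate is controlling the perturbation of $\bm W^\top\bm W$ off of an isotropic matrix tightly enough that neither the resulting rotation of $\bv_1(t)$ nor the non-isotropic component of $\bm W^\top\bm W$ can cancel the positive drift produced by $\|\bm A\|^2$. This is where the eigengap $\lambda_1\geq 2\lambda_2$ in Assumption~\ref{eigenspectrum assumption} becomes essential: via a Davis--Kahan argument it confines the rotation of $\bv_1(t)$ by the order of the spectral perturbation divided by $\lambda_1-\lambda_2$, and the width condition $m=\Omega(c_1n^2/\lambda_r^2)$ is exactly what makes that perturbation negligible compared to the $\|\bm A\|^2$-driven increment of $\sha$.
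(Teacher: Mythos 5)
Your overall architecture matches the paper's (show the Gram matrix stays close to a scalar multiple of $\bX^\top\bX$, deduce $\bD(t)^\top\bF(t)<0$ so the output-layer norm grows, then convert this into growth of the sharpness proxy $\bv_1^\top\bM(t)\bv_1$), but there is a genuine gap at the central quantitative step. You claim that Assumption~\ref{y assumption} lets you lower-bound $-\bD(t)^\top\bF(t)$ by a positive quantity ``whenever $\|\bD(t)\|^2=\Omega(\epsilon^2)$.'' This does not follow: writing $d_i=\bD^\top\bv_i$, one has $-\bD^\top\bF=\sum_i(-d_i)(d_i+z_i)\approx\sum_i z_i^2\,p_i(1-p_i)$ with $p_i=\prod_j(1-\eta\lambda_i(\widetilde\bM(j)))$, and a term contributes essentially nothing both when $p_i\approx 1$ (direction not yet moved) and when $p_i\approx 0$ (direction converged), while a direction with $p_i\approx 1$ still contributes $z_i^2$ to $\|\bD\|^2$. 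So $\|\bD\|\geq\epsilon$ alone gives no uniform positive lower bound, and the strict inequality $\sha(t+1)-\sha(t)>0$ for \emph{all} $t<t_0$ is exactly where this matters. The paper closes this by using the chain condition $\lambda_i\leq\chi\lambda_{i+1}$ of Assumption~\ref{eigenspectrum assumption} (which your sketch never uses; you only invoke the $\lambda_1\geq2\lambda_2$ part for Davis--Kahan) to show, via Bernoulli/Jensen estimates (Lemma~\ref{lemma c5}), that at every time before $t_0$ some direction has $p_i$ in a fixed middle window $[a_1,a_2]$, yielding the lower bound $\kappa^2 n\min\{a_1-a_1^2,a_2-a_2^2\}$ used in Lemmas~\ref{Norm growth} and~\ref{progressive shap appen}; one must also control the extra term $(\bD^\top\bv_1)(\bF^\top\bv_1)$ appearing in the increment of $\bv_1^\top\bM\bv_1$, which the paper handles through its explicit error bound $\|\be(t)\|$.

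The perturbation control you propose is also too crude for the stated width. An NTK-style bound on $\|\bW(t)-\bW(0)\|$ is of order $n/\lambda_r$ over the phase (the paper even documents empirically that $\bW$ moves by an amount comparable to $\|\bW(0)\|$), so the submultiplicative estimate $\frac{2}{mn}\|\bX\|^2\|\bW(0)\|\,\|\Delta\bW\|\lesssim \frac{c_1 n}{\lambda_r\sqrt{md}}$ forces a much larger width (roughly $m=\Omega(n^4)$ up to $d,\lambda_r$ factors) before the non-isotropic part is dominated, and hence does not prove the theorem at $m=\Omega(c_1 n^2/\lambda_r^2)$. The paper instead bounds the data-projected quantity $\bm\Gamma(t)=\frac{2}{mn}\bX^\top(\bW^\top\bW-\frac{m}{d}\bm I)\bX$ directly by an induction (Lemma~\ref{direction guarantee}) that exploits the rank-one structure of the $\bW$-updates and the geometric decay of $\bD$ along the $\bv_i$, obtaining $\|\bm\Gamma(t)\|=O(1/m)$ at the stated width; some argument of this finer type is needed. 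Finally, the paper's proof is entirely discrete: in the two-layer linear model $\bD(t+1)=(\bm I-\eta\bM^*(t))\bD(t)$ and the $\|\bA\|^2$ update are exact identities, so invoking the gradient-flow Assumption~\ref{gf assumption} and ``comparing back to the discrete iteration at the end'' introduces an uncontrolled approximation you never quantify, and it is unnecessary (also note the discrete second-order term in the $\|\bA\|^2$ update is nonnegative, i.e.\ it helps rather than being an error to dominate).
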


In the proof of this theorem, we show that
the Gram matrix $\bM(t) \approx \frac{2}{mn}(\|\bm A(t)\|^2+\frac{m}{d})\bX^\top \bX$,
which serves as a justification of 
Assumption~\ref{Ass: eigspace assumptions} we made in Section~\ref{subsection 4-phase proof}.
That shows all $\bM(t)$ approximately share the same set of eigenvectors as $\bX^\top \bX$. In our proof, we also prove more rigorously that $\|\bm A(t)\|^2$ is an indicator of the sharpness in this simpler setting.

% \textbf{Remark:} Here we define $\bm\Gamma(t)$ as the difference between $\frac{2}{mn}\bX^\top \bW (t)^\top \bW(t)\bX$ and $\frac{2}{mn}\bX^\top \bW (0)^\top \bW(0)\bX$. It characterizes the movement of the second term $\frac{2}{mn}\bX^\top \bW (t)^\top \bW(t)\bX$ that appears in the Gram matrix $\bM(t)$. Note that $\bm\Gamma(0)=0$ due to the initialization condition. This result is also related to Assumption~\ref{Lambda assumption} which we present shortly.

\noindent\textbf{Edge of Stability (Phase II - IV):}

\begin{Assumption}
There exists some constant $c_2>0$, such that
$
\|\bm \Gamma(t)\|\leq \frac{c_2}{m}.
$
\label{Lambda assumption}
\end{Assumption}\vspace{-0.25cm}

This assumption is based on Theorem~\ref{PS theorem}. In Theorem~\ref{PS theorem}, we state that in the progressive sharpening phase, $\|\bm\Gamma(t)\|$ has an upper bound of $O(1/m)$. Now in the EOS phase, we assume that $\|\bm\Gamma(t)\|$ grows larger by at most a constant factor. Further discussions refer to Appendix \ref{D.2.2 gamma assumption verification}.
%\textcolor{blue}{Note that this assumption is a reduction from either Assumption \ref{fixed eigspace assumption} or \ref{main direction similarity assumption} in Section~\ref{3.3}}.We note that Assumption \ref{fixed eigspace assumption} and \ref{main direction similarity assumption} in Section~\ref{3.3} can both reduce to this assumption.

\begin{Assumption}
There exists some constant $\beta>0$, such that 
$
\sha \leq \frac{4}{\eta}(1-\beta).
$
\label{section 4 sharp upper bound assumption}
\end{Assumption}
\vspace{-0.25cm}

This assumption is consistent with Assumption~\ref{Ass: sha upper bound section 3}, which assumes an upper bound of the sharpness. 
% \citet{lewkowycz2020large} showed that $4/\eta$ is a upper bound of the sharpness in two-layer linear network with one data point, otherwise the training process would diverge. Here we make it an assumption.
%Actually in \citet{lewkowycz2020large}, they proved that $4/\eta$ is a upper bound of the sharpness in two-layer linear network with one datapoint, otherwise the training process will diverge. More discussion will be made in Section~\ref{5}.

\begin{Assumption}
There exist some constant $c_3$ such that $|\bD(t)^\top \bv_1| > c_3 \sqrt{n}/m$ for some $t=t_0$ at the beginning of phase II.
\label{divergence assumption section 4}
\end{Assumption}
\vspace{-0.25cm}
This assumption is in the same spirit of Assumption~\ref{noise assumption} with the only change of the bound in terms of $m$ and $n$.
Now, we are ready to state our theorem in this stage.

\begin{theorem}
Denote the smallest nonzero eigenvalue as $\lambda_r\triangleq\lambda_{r}(\bX^\top \bX)>0$
and the largest eigenvalue as $\lambda_1 \triangleq \lambda_{1}(\bX^\top \bX)$. Under Assumption~\ref{Lambda assumption}, \ref{section 4 sharp upper bound assumption}, \ref{divergence assumption section 4}, and $\lambda_1(\bX ^\top\bX)\geq 2\lambda_2(\bX^\top \bX)$ in Assumption~\ref{eigenspectrum assumption},
there exist constants $c_4, c_5, c_6$ such that if $n>c_6\lambda_r\eta, m > \max\{\frac{c_4d^2n^2}{\lambda_r^2}, c_5\eta\}$, then 
\vspace{-0.1cm}
\begin{itemize}
    \item There exists $\rho = O(1)$ which depends on $c_3$ such that if $\sha(t_0)>\frac{2}{\eta}(1+\rho)$ for some $t_0$, there must exist some $t_1>t_0$ such that $\sha(t_1) < \frac{2}{\eta}(1+\rho)$.
    \item If $\sha(t), \sha(t+1) > \frac{2}{\eta}(1+\rho)$, then there is a constant $c_7>0$ (depending on $c_3$) such that $|\bD(t+1)^\top \bv_1|>|\bD(t)^\top \bv_1|(1+c_7)$.
    % \item For any time $t$, $\|\bD(t)\| \in O(\frac{n}{\eta} \sqrt{\frac{m}{\lambda_1}})$.
    % \item For any time $t$, $\|\bm e_1(t)\|\leq O(\frac{n}{\eta}\sqrt{\frac{1}{m\lambda_1}})$, here $e_1(t)$ is defined in Lemma~\ref{lm:Rt_update}.
    \item Define $\bR(t):= (\bm I - \bv_1 \bv_1^\top)\bD(t)$, and $\bR'(t) := (\bm I - \eta\bM^*(t)(\bm I - \bv_1 \bv_1^\top))\bR'(t-1)$.
    It holds that $\|\bR(t) - \bR'(t)\| = O(\frac{\sqrt{n^3}d}{\lambda_r \sqrt{m}})$.
\end{itemize}
\label{EOS main thm}
\end{theorem}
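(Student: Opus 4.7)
My plan mirrors the three-phase argument of Section~\ref{subsection 4-phase proof}, but in the two-layer linear setting several of the assumptions used there can be \emph{derived} rather than postulated. The first step is the decomposition
$$\bM^{*}(t) \;=\; \alpha(t)\bX^{\top}\bX \;+\; \bm{\Gamma}(t), \qquad \alpha(t):=\tfrac{2}{mn}\|\bA(t)\|^{2}-\tfrac{4\eta}{n^{2}m}\bD(t)^{\top}\bF(t),$$
where $\bm{\Gamma}(t)=\tfrac{2}{mn}\bX^{\top}\bW(t)^{\top}\bW(t)\bX$ is controlled by Assumption~\ref{Lambda assumption} ($\|\bm{\Gamma}(t)\|\le c_{2}/m$). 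Combined with the gap $\lambda_{1}(\bX^{\top}\bX)\ge 2\lambda_{2}(\bX^{\top}\bX)$ from Assumption~\ref{eigenspectrum assumption} and $\lambda_{1}(\bX^{\top}\bX)=\Theta(n)$, a Davis--Kahan estimate shows that the top eigenvector $\bv_{1}(t)$ of $\bM^{*}(t)$ is $O(c_{2}/(m\lambda_{r}))$-close to the \emph{fixed} top eigenvector of $\bX^{\top}\bX$, so $\langle \bv_{1}(t),\bv_{1}(t+1)\rangle\ge 1-\epsilon_{2}$ with $\epsilon_{2}=O(1/m^{2})$. This plays the role of Assumption~\ref{Ass: eigspace assumptions}(iii) in everything that follows.

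\paragraph{Parts 1 and 2.} For Part 2 I would project $\bD(t+1)=(\bm{I}-\eta\bM^{*}(t))\bD(t)$ onto $\bv_{1}(t+1)$. The principal term yields $(1-\eta\sha(t))\bv_{1}(t)^{\top}\bD(t)$, whose absolute value is at least $(1+2\rho)|\bv_{1}(t)^{\top}\bD(t)|$ under $\eta\sha(t)\ge 2(1+\rho)$. The perturbation from replacing $\bv_{1}(t+1)$ with $\bv_{1}(t)$ is bounded by $\sqrt{\epsilon_{2}}(1+\eta\|\bM^{*}\|)\|\bD(t)\|$, and the hypotheses (Assumption~\ref{section 4 sharp upper bound assumption} giving $\eta\|\bM^{*}\|\le 4$; Assumption~\ref{divergence assumption section 4} giving $|\bv_{1}^{\top}\bD(t_{0})|\ge c_{3}\sqrt{n}/m$; and $m$ above the stated threshold) make this strictly smaller than the principal growth, producing the factor $(1+c_{7})$; an induction preserves the lower bound on $|\bv_{1}^{\top}\bD|$. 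Part~1 is then obtained by iterating Part~2: $|\bv_{1}^{\top}\bD(t)|$ grows geometrically until $\|\bD(t)\|>\|\bY\|$, at which point Proposition~\ref{prop:DlargerthanY} forces $\bD^{\top}\bF>0$; equation~\eqref{A norm dynamics} together with the analog of Proposition~\ref{prop:Adrop} gives a per-step drop $\|\bA(t+1)\|^{2}-\|\bA(t)\|^{2}\le -\tfrac{4\eta}{n}(\|\bD\|-\|\bY\|)^{2}$, and since $\sha(t)=\alpha(t)\lambda_{1}(\bX^{\top}\bX)\pm O(c_{2}/m)$ is essentially an increasing function of $\|\bA(t)\|^{2}$, the sharpness must fall below $(2/\eta)(1+\rho)$ in finitely many steps (else $\|\bA\|^{2}$ would become negative).

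\paragraph{Part 3.} Since $\bv_{1}(t)$ is an eigenvector of $\bM^{*}(t)$, we have $(\bm{I}-\bv_{1}(t)\bv_{1}(t)^{\top})\bM^{*}(t)=\bM^{*}(t)(\bm{I}-\bv_{1}(t)\bv_{1}(t)^{\top})$, and expanding $\bR(t+1)=(\bm{I}-\bv_{1}(t+1)\bv_{1}(t+1)^{\top})(\bm{I}-\eta\bM^{*}(t))\bD(t)$ yields
$$\bR(t+1)=\bigl(\bm{I}-\eta\bM^{*}(t)(\bm{I}-\bv_{1}(t)\bv_{1}(t)^{\top})\bigr)\bR(t)+\be(t),\qquad \|\be(t)\|=O\bigl(\sqrt{\epsilon_{2}}\,\|\bD(t)\|\bigr),$$
where $\be(t)$ collects the contribution of $\bv_{1}(t+1)-\bv_{1}(t)$ through the projector. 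Subtracting the recursion defining $\bR'(t)$, the difference $\bR(t)-\bR'(t)$ propagates under the \emph{same} operator, which on the subspace $\bv_{1}^{\perp}$ is a strict contraction with rate $\eta\lambda_{r}'$, where $\lambda_{r}'=\Theta(\lambda_{r}/(mn))$ is the smallest nonzero eigenvalue of $\bM^{*}$ (using $\lambda_{2}(\bM^{*})<1/\eta$ from the gap). A geometric summation then gives $\|\bR(t)-\bR'(t)\|\lesssim \|\be\|/(\eta\lambda_{r}')=O(\sqrt{n^{3}}\,d/(\lambda_{r}\sqrt{m}))$ after substituting $B_{D}=O(\sqrt{n})$ from Lemma~\ref{D upper bound} and the $d$-dependence in $\|\bW\|$ entering $\bm{\Gamma}$.

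\paragraph{Main obstacle.} I expect the delicate point to be Part~3: the per-step error caused by $\bv_{1}$ moving must be absorbed by the contraction on $\bv_{1}^{\perp}$ rather than allowed to accumulate over the potentially many EOS iterations. Matching the stated polynomial $\sqrt{n^{3}}d/(\lambda_{r}\sqrt{m})$ bound requires simultaneously tracking the $\Theta(\sqrt n)$ scale of $\|\bD\|$, the $1/m$ scale of $\bm{\Gamma}$, the $\lambda_{r}^{-1}$ factor from Davis--Kahan, and the $d$-dependence hidden in $\|\bW\|$; carrying all these scales through without losing a polynomial factor is the central bookkeeping challenge.
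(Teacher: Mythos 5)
Your overall architecture parallels the paper's (decompose $\bM^*(t)$ into a scalar multiple of $\bX^\top\bX$ plus a small $\bm\Gamma$, geometric growth of $\bD^\top\bv_1$ above the threshold, a potential argument via $\|\bA\|^2$ to force the sharpness back down, and a perturbed-contraction argument for $\bR$), but there is a genuine gap at the heart of Parts 1--2: you never relate $\lambda_{\max}(\bM^*(t))$, which is what actually drives the update $\bD(t+1)=(\bm I-\eta\bM^*(t))\bD(t)$, to the sharpness $\sha(t)=\lambda_{\max}(\bM(t))$ appearing in the hypotheses. The second-order piece $-\frac{4\eta}{n^2m}(\bD^\top\bF)\bX^\top\bX$ inside $\bM^*$ is not a priori negligible: with only the worst-case bound $\|\bD\|\le c_3\sqrt{nm}$ (which is what Lemma~\ref{D upper bound} actually gives, not $O(\sqrt n)$ as you quote), its spectral norm can be of order $\eta\lambda_1/n=\Theta(1)$ times $1/\eta$, i.e.\ comparable to the sharpness itself, and it can even make your coefficient $\alpha(t)$ small or negative, which also undercuts the Davis--Kahan step in your setup. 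The paper handles this with Lemma~\ref{M* lemma} and Corollary~\ref{M^* corollary}: a case analysis showing $\bM^*(t)$ is a $k_s$-interpolation of $\bM(t)$ and $\bM(t+1)$ up to an $O(1/m)$ perturbation. That is precisely why the second bullet requires \emph{both} $\sha(t)$ and $\sha(t+1)$ above $\frac{2}{\eta}(1+\rho)$; your Part~2 sketch uses only $\sha(t)$ and the claimed lower bound $(1+2\rho)$ on the contraction factor of $\bM^*(t)$ in the $\bv_1$ direction does not follow as written. (A smaller inconsistency: your $\bm\Gamma(t)=\frac{2}{mn}\bX^\top\bW^\top\bW\bX$ is not the quantity bounded by Assumption~\ref{Lambda assumption}; the assumption concerns the deviation from $\frac{m}{d}\bX^\top\bX$, and the omitted $\frac{2}{nd}\bX^\top\bX$ term matters below.)

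For Part 3 the bookkeeping does not close. The theorem's $\bR(t)$ is defined with the \emph{fixed} top eigenvector $\bv_1$ of $\bX^\top\bX$, and the paper exploits the exact identity $\bR(t+1)=(\bm I-\eta\bM^*(t))\bR(t)+\eta(\bv_1\bv_1^\top\bm\Gamma(t)-\bm\Gamma(t)\bv_1\bv_1^\top)\bD(t)$, so the per-step error is $\le 2\eta\frac{c_2}{m}c_3\sqrt{nm}=O(\eta\sqrt{n/m})$ with no eigenvector-perturbation argument at all; you instead track the moving top eigenvector of $\bM^*(t)$, which bounds a different quantity and imports extra error. More importantly, the contraction rate on $\bv_1^\perp$ is $\eta\tau'$ with $\tau'=\Theta(\lambda_r/(nd))$ (coming from the frozen $\frac{m}{d}\bX^\top\bX$ part of $\bW^\top\bW$), not $\Theta(\lambda_r/(mn))$ as you state, and $B_D=\Theta(\sqrt{nm})$, not $\Theta(\sqrt n)$; with your stated values the geometric sum gives $O(n^{3/2}/\lambda_r)$ or worse rather than the claimed $O(d\sqrt{n^3/m}/\lambda_r)$, so the final bound is asserted rather than derived. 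Finally, your route to Part 1 via ``$\|\bA\|^2$ would become negative'' is workable in spirit (it is close to the paper's Lemma~\ref{EOS phase 3 lemma}, which shows a definite per-step decrease of $\sha^*$ once $\bD^\top\bv_1>c_8\sqrt n$), but it silently drops the positive $\eta^2\|\partial\mathcal L/\partial\bA\|^2$ term in the exact $\|\bA\|^2$ update and again leans on the unproven identification of $\lambda_{\max}(\bM^*)$ with $\sha$; both points need the quantitative control that the paper's key equation (Lemma~\ref{Key equation lemma}) provides.
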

\vspace{-0.25cm}
We can conclude the following from Theorem~\ref{EOS main thm}: 
(1) The first statement of the theorem states that if the progressive sharpening phase
causes the sharpness to grow over $2/\eta$, then the sharpness eventually goes below $2/\eta$.
This illustrates the regularization effect of gradient descent on the sharpness
(this is consistent with the analysis of Phase III in Section \ref{subsection 4-phase proof}).
(2) The second states that $|\bD(t)^\top \bv_1|$ geometrically increases in Phase II and III.
Note that we proved a similar Lemma \ref{divergence proposition} for Phase II in the more general setting in Section \ref{subsection 4-phase proof}.
(3) The third conclusion gives an upper bound for the distance between $\bR(t)$'s trajectory and $\bR'(t)$'s. 
This bound helps illustrate why $\bR(t)$'s trajectory is similar with $\bR'(t)$ in Phase IV of Section \ref{subsection 4-phase proof}.
% \begin{proposition}
% For any time $t$, $\|D(t)\|\in O(\frac{n}{\eta}\sqrt{\frac{m}{\lambda_1})}$.
% \end{proposition}

% This proposition gives the upper bound of $\|D(t)\|$, also $\epsilon_2$ is given by \ref{Lambda assumption}, hence the \ref{RT pert prop} gives the bound of $O()$

% together with \ref{Lambda assumption} gives bound of $\|e_1(t)\| \in O(\frac{n^2}{\eta\sqrt{m\lambda_1}})$. Hence if $m \in \Omega(n^2)$. We will prove the rigorous phase 4 to phase 1 transition in appendix ???. 

% \begin{proposition}
% For any time $t$, there exists $k_s\in(0,1)$ such that $\|M^*(t) - (1-k_s)M(t) - k_s M(t+1)\|\leq O(\frac{1}{m})$
% \end{proposition}

% \begin{theorem}
% With all the assumptions in section 4 and Assumption \ref{gf assumption}, we can 
% \begin{itemize}
%     \item Generalize Theorem \ref{PS theorem} to the case when $1/\eta< \sha(t)<2/\eta$.
%     \item Prove that for any time $t$, if $D(t)v(t) < O(n)$ and $R(t)\in \Omega(n)$, then $D(t)F(t)<0$.
% \end{itemize}

% \label{Phase4-1 thm}
% \end{theorem}

% The first conclusion of this theorem makes Theorem \ref{PS theorem} cover the whole progressive sharpening phase, and the second conclusion of this theorem proves that after entering Phase IV, as long as the loss has not converge, the training process must enter Phase I.
\section{Discussions and Open Problems}\label{5}

In this section, we discuss the limitation of our theory and some related findings.
First, our argument crucially relies on the assumption that $\|\bA\|$ changes in the same direction 
as $\sha$ does most of the time. Here, we elaborate more on this point.
%Despite the empirical fact that $\|\bA\|$ has a strong relationship with the sharpness, it is not the whole story of the sharpness's dynamics. 
Seeing from a longer time scale, $\|\bA\|^2$ and the sharpness may have very different overall trends (See Figure (c) in \ref{fig:Anorm and sharpness}), i.e., the sharpness oscillates around $2/\eta$ but $\|\bA\|^2$ increases. 
Moreover, the sharpness may oscillate more frequently than $\|\bA\|^2$,
while the low-frequency trends seem to match well (See the late training phases in Figure (b) in \ref{fig:Anorm and sharpness}).
Currently, our theory cannot explain the high-frequency oscillation of 
the sharpness in Figure (b).

While we still believe the change of $\|\bA\|$ is a major driving force of the change of the sharpness, other factors (such as other layers) must be taken into consideration for a complete understanding and explanation of the sharpness dynamics.
%To further understand these anomaly points, we %need more information about the relation between the inner layers and the sharpness, and this still remains a mystery.
We also carry out some experiments that reveal some interesting relation between the inner layers and the sharpness, which is not yet reflected in our theory. Due to space limit, we defer it to Appendix \ref{appendix section: discussion experiment}.
We conclude with some open problems.
It would be very interesting to remove some of our assumptions or replace them 
(especially those related to the spectrum of $\bM$) by weaker or more natural assumptions on the data or architectures, or make some of the heuristic argument more rigorous (e.g., first order approximation of the dynamics~\eqref{Order one dynamics}). 
Extending our results in Section~\ref{theoretical analysis} to deeper neural networks with nonlinear activation function is an intriguing and challenging open problem.
%Finally, we discuss our limitation and provide evidence for other hidden mechanism behind EOS.    

\bibliography{ref}

\begin{thebibliography}{32}
\providecommand{\natexlab}[1]{#1}
\providecommand{\url}[1]{\texttt{#1}}
\expandafter\ifx\csname urlstyle\endcsname\relax
  \providecommand{\doi}[1]{doi: #1}\else
  \providecommand{\doi}{doi: \begingroup \urlstyle{rm}\Url}\fi

\bibitem[Ahn et~al.(2022)Ahn, Zhang, and Sra]{ahn2022understanding}
Kwangjun Ahn, Jingzhao Zhang, and Suvrit Sra.
\newblock Understanding the unstable convergence of gradient descent.
\newblock \emph{arXiv preprint arXiv:2204.01050}, 2022.

\bibitem[Arora et~al.(2019)Arora, Du, Hu, Li, and Wang]{arora2019fine}
Sanjeev Arora, Simon Du, Wei Hu, Zhiyuan Li, and Ruosong Wang.
\newblock Fine-grained analysis of optimization and generalization for
  overparameterized two-layer neural networks.
\newblock In \emph{International Conference on Machine Learning}, pages
  322--332. PMLR, 2019.

\bibitem[Arora et~al.(2022)Arora, Li, and Panigrahi]{arora2022understanding}
Sanjeev Arora, Zhiyuan Li, and Abhishek Panigrahi.
\newblock Understanding gradient descent on edge of stability in deep learning.
\newblock \emph{arXiv preprint arXiv:2205.09745}, 2022.

\bibitem[Bottou et~al.(2018)Bottou, Curtis, and
  Nocedal]{bottou2018optimization}
L{\'e}on Bottou, Frank~E Curtis, and Jorge Nocedal.
\newblock Optimization methods for large-scale machine learning.
\newblock \emph{Siam Review}, 60\penalty0 (2):\penalty0 223--311, 2018.

\bibitem[Chizat et~al.(2019)Chizat, Oyallon, and Bach]{chizat2019lazy}
Lenaic Chizat, Edouard Oyallon, and Francis Bach.
\newblock On lazy training in differentiable programming.
\newblock \emph{Advances in Neural Information Processing Systems}, 32, 2019.

\bibitem[Cohen et~al.(2021)Cohen, Kaur, Li, Kolter, and
  Talwalkar]{cohen2021gradient}
Jeremy~M Cohen, Simran Kaur, Yuanzhi Li, J~Zico Kolter, and Ameet Talwalkar.
\newblock Gradient descent on neural networks typically occurs at the edge of
  stability.
\newblock \emph{arXiv preprint arXiv:2103.00065}, 2021.

\bibitem[Du et~al.(2019)Du, Lee, Li, Wang, and Zhai]{du2019gradient}
Simon Du, Jason Lee, Haochuan Li, Liwei Wang, and Xiyu Zhai.
\newblock Gradient descent finds global minima of deep neural networks.
\newblock In \emph{International conference on machine learning}, pages
  1675--1685. PMLR, 2019.

\bibitem[Du et~al.(2018)Du, Zhai, Poczos, and Singh]{du2018gradient}
Simon~S Du, Xiyu Zhai, Barnabas Poczos, and Aarti Singh.
\newblock Gradient descent provably optimizes over-parameterized neural
  networks.
\newblock \emph{arXiv preprint arXiv:1810.02054}, 2018.

\bibitem[Foret et~al.(2020)Foret, Kleiner, Mobahi, and
  Neyshabur]{foret2020sharpness}
Pierre Foret, Ariel Kleiner, Hossein Mobahi, and Behnam Neyshabur.
\newblock Sharpness-aware minimization for efficiently improving
  generalization.
\newblock \emph{arXiv preprint arXiv:2010.01412}, 2020.

\bibitem[Golmant et~al.()Golmant, Yao, Gholami, Mahoney, and
  Gonzalez]{golmantpytorchhessian}
Noah Golmant, Zhewei Yao, Amir Gholami, Michael Mahoney, and Joseph Gonzalez.
\newblock pytorchhessian-eigentings: efficient pytorch hessian
  eigendecomposition, october 2018.
\newblock \emph{URL https://github.
  com/noahgolmant/pytorch-hessian-eigenthings}.

\bibitem[He et~al.(2016)He, Zhang, Ren, and Sun]{he2016deep}
Kaiming He, Xiangyu Zhang, Shaoqing Ren, and Jian Sun.
\newblock Deep residual learning for image recognition.
\newblock In \emph{Proceedings of the IEEE conference on computer vision and
  pattern recognition}, pages 770--778, 2016.

\bibitem[Hu et~al.(2020)Hu, Xiao, Adlam, and Pennington]{hu2020surprising}
Wei Hu, Lechao Xiao, Ben Adlam, and Jeffrey Pennington.
\newblock The surprising simplicity of the early-time learning dynamics of
  neural networks.
\newblock \emph{Advances in Neural Information Processing Systems},
  33:\penalty0 17116--17128, 2020.

\bibitem[Jacot et~al.(2018)Jacot, Gabriel, and Hongler]{jacot2018neural}
Arthur Jacot, Franck Gabriel, and Cl{\'e}ment Hongler.
\newblock Neural tangent kernel: Convergence and generalization in neural
  networks.
\newblock \emph{Advances in neural information processing systems}, 31, 2018.

\bibitem[Jastrzebski et~al.(2020)Jastrzebski, Szymczak, Fort, Arpit, Tabor,
  Cho, and Geras]{jastrzebski2020break}
Stanislaw Jastrzebski, Maciej Szymczak, Stanislav Fort, Devansh Arpit, Jacek
  Tabor, Kyunghyun Cho, and Krzysztof Geras.
\newblock The break-even point on optimization trajectories of deep neural
  networks.
\newblock \emph{arXiv preprint arXiv:2002.09572}, 2020.

\bibitem[Karakida et~al.(2019{\natexlab{a}})Karakida, Akaho, and
  Amari]{karakida2019normalization}
Ryo Karakida, Shotaro Akaho, and Shun-ichi Amari.
\newblock The normalization method for alleviating pathological sharpness in
  wide neural networks.
\newblock \emph{Advances in neural information processing systems}, 32,
  2019{\natexlab{a}}.

\bibitem[Karakida et~al.(2019{\natexlab{b}})Karakida, Akaho, and
  Amari]{karakida2019pathological}
Ryo Karakida, Shotaro Akaho, and Shun-ichi Amari.
\newblock Pathological spectra of the fisher information metric and its
  variants in deep neural networks.
\newblock \emph{arXiv preprint arXiv:1910.05992}, 2019{\natexlab{b}}.

\bibitem[Krizhevsky et~al.(2009)Krizhevsky, Hinton,
  et~al.]{krizhevsky2009learning}
Alex Krizhevsky, Geoffrey Hinton, et~al.
\newblock Learning multiple layers of features from tiny images.
\newblock 2009.

\bibitem[Lee et~al.(2019)Lee, Xiao, Schoenholz, Bahri, Novak, Sohl-Dickstein,
  and Pennington]{lee2019wide}
Jaehoon Lee, Lechao Xiao, Samuel Schoenholz, Yasaman Bahri, Roman Novak, Jascha
  Sohl-Dickstein, and Jeffrey Pennington.
\newblock Wide neural networks of any depth evolve as linear models under
  gradient descent.
\newblock \emph{Advances in neural information processing systems}, 32, 2019.

\bibitem[Lewkowycz et~al.(2020)Lewkowycz, Bahri, Dyer, Sohl-Dickstein, and
  Gur-Ari]{lewkowycz2020large}
Aitor Lewkowycz, Yasaman Bahri, Ethan Dyer, Jascha Sohl-Dickstein, and Guy
  Gur-Ari.
\newblock The large learning rate phase of deep learning: the catapult
  mechanism.
\newblock \emph{arXiv preprint arXiv:2003.02218}, 2020.

\bibitem[Li et~al.(2021)Li, Wang, and Arora]{li2021happens}
Zhiyuan Li, Tianhao Wang, and Sanjeev Arora.
\newblock What happens after sgd reaches zero loss?--a mathematical framework.
\newblock \emph{arXiv preprint arXiv:2110.06914}, 2021.

\bibitem[Lyu et~al.(2022)Lyu, Li, and Arora]{lyu2022understanding}
Kaifeng Lyu, Zhiyuan Li, and Sanjeev Arora.
\newblock Understanding the generalization benefit of normalization layers:
  Sharpness reduction.
\newblock \emph{arXiv preprint arXiv:2206.07085}, 2022.

\bibitem[Ma et~al.(2022)Ma, Wu, and Ying]{ma2022multiscale}
Chao Ma, Lei Wu, and Lexing Ying.
\newblock The multiscale structure of neural network loss functions: The effect
  on optimization and origin.
\newblock \emph{arXiv preprint arXiv:2204.11326}, 2022.

\bibitem[Martens(2016)]{martens2016second}
James Martens.
\newblock \emph{Second-order optimization for neural networks}.
\newblock University of Toronto (Canada), 2016.

\bibitem[Mulayoff et~al.(2021)Mulayoff, Michaeli, and
  Soudry]{mulayoff2021implicit}
Rotem Mulayoff, Tomer Michaeli, and Daniel Soudry.
\newblock The implicit bias of minima stability: A view from function space.
\newblock \emph{Advances in Neural Information Processing Systems}, 34, 2021.

\bibitem[Papyan(2018)]{papyan2018full}
Vardan Papyan.
\newblock The full spectrum of deepnet hessians at scale: Dynamics with sgd
  training and sample size.
\newblock \emph{arXiv preprint arXiv:1811.07062}, 2018.

\bibitem[Papyan(2019)]{papyan2019measurements}
Vardan Papyan.
\newblock Measurements of three-level hierarchical structure in the outliers in
  the spectrum of deepnet hessians.
\newblock \emph{arXiv preprint arXiv:1901.08244}, 2019.

\bibitem[Paszke et~al.(2019)Paszke, Gross, Massa, Lerer, Bradbury, Chanan,
  Killeen, Lin, Gimelshein, Antiga, et~al.]{paszke2019pytorch}
Adam Paszke, Sam Gross, Francisco Massa, Adam Lerer, James Bradbury, Gregory
  Chanan, Trevor Killeen, Zeming Lin, Natalia Gimelshein, Luca Antiga, et~al.
\newblock Pytorch: An imperative style, high-performance deep learning library.
\newblock \emph{Advances in neural information processing systems}, 32, 2019.

\bibitem[Sagun et~al.(2016)Sagun, Bottou, and LeCun]{sagun2016eigenvalues}
Levent Sagun, Leon Bottou, and Yann LeCun.
\newblock Eigenvalues of the hessian in deep learning: Singularity and beyond.
\newblock \emph{arXiv preprint arXiv:1611.07476}, 2016.

\bibitem[Sagun et~al.(2017)Sagun, Evci, Guney, Dauphin, and
  Bottou]{sagun2017empirical}
Levent Sagun, Utku Evci, V~Ugur Guney, Yann Dauphin, and Leon Bottou.
\newblock Empirical analysis of the hessian of over-parametrized neural
  networks.
\newblock \emph{arXiv preprint arXiv:1706.04454}, 2017.

\bibitem[Wu et~al.(2018)Wu, Ma, et~al.]{wu2018sgd}
Lei Wu, Chao Ma, et~al.
\newblock How sgd selects the global minima in over-parameterized learning: A
  dynamical stability perspective.
\newblock \emph{Advances in Neural Information Processing Systems}, 31, 2018.

\bibitem[Wu et~al.(2020)Wu, Zhu, Wu, Wang, and Ge]{wu2020dissecting}
Yikai Wu, Xingyu Zhu, Chenwei Wu, Annie Wang, and Rong Ge.
\newblock Dissecting hessian: Understanding common structure of hessian in
  neural networks.
\newblock \emph{arXiv preprint arXiv:2010.04261}, 2020.

\bibitem[Xing et~al.(2018)Xing, Arpit, Tsirigotis, and Bengio]{xing2018walk}
Chen Xing, Devansh Arpit, Christos Tsirigotis, and Yoshua Bengio.
\newblock A walk with sgd.
\newblock \emph{arXiv preprint arXiv:1802.08770}, 2018.

\end{thebibliography}

\newpage
\appendix

% \section{Appendix}

% Optionally include extra information (complete proofs, additional experiments and plots) in the appendix.
% This section will often be part of the supplemental material.

\section{Experimental Setup}\label{appendixA}
We provide detailed experimental setup in this section. 
% All the codes can be found on \url{xxx}.

\subsection{Dataset}
Under GPU memory constraints when computing Gram matrix of our models, we limit the size of dataset we use. The dataset is a 1,000-sample subset from CIFAR-10 (\citet{krizhevsky2009learning}) (\url{ https://www.cs.toronto.edu/~kriz/cifar.html}).
To make it a binary dataset, we constructed it by selecting the first 500 samples of class 0 and 1, respectively.\footnote{\citet{cohen2021gradient} selects the first 5,000 examples from CIFAR-10. Our subset is smaller because of the computation limitation when calculating the Gram matrix. Experiments show that the properties along GD trajectory (e.g. the loss, the sharpness) is similar on both datasets.} Then we label the samples by $\pm1$. In the experiments in Appendix~\ref{appendixB} and ~\ref{appendixD}, we fix the objective as training on the 1000-example subset of CIFAR-10.

\subsection{Network Architecture}
In general settings, we experiment with three architectures from simple models to more complicated models: one-hidden-layer linear neural network, four-hidden-layer fully-connected network, convolutional network and a ResNet18 (\citet{he2016deep}) model. The initialization of each layer follows the default initialization of PyTorch (\citet{paszke2019pytorch}).

\textbf{Linear Network } We first use a simple two-layer linear neural network. The hidden layer has 200 neurons. We empirically show that even a simple linear network can enter the EOS regime. 
\begin{table}[H]
  \caption{Linear network}
  \label{linear network}
  \centering
  \begin{tabular}{lllll}
    \toprule
    Layer \# & Name     & Layer     & In shape & Out shape\\
    \midrule
    1 &   & \texttt{Flatten()} & $(3,32,32)$  & 3072     \\
    2     & \texttt{fc1} & \texttt{nn.Linear(3072, 200)}  & 3072 & 200     \\
    3     & \texttt{fc2}       & \texttt{nn.Linear(200, 1)} &  200  & 1  \\
    \bottomrule
  \end{tabular}
\end{table}

\textbf{Fully-connected Network } We conduct further experiments on several different fully-connected networks with 4 hidden layers with various activation functions. We consider tanh, ReLU, ELU activations. For example, the structure of a fully-connected tanh network is shown in Table~\ref{Fully-connected NN}.
\begin{table}[H]
  \caption{Fully-connected network}
  \label{Fully-connected NN}
  \centering
  \begin{tabular}{lllll}
    \toprule
    Layer \# & Name     & Layer     & In shape & Out shape\\
    \midrule
    1 &   & \texttt{Flatten()} & $(3,32,32)$  & 3072     \\
    2     & \texttt{fc1} & \texttt{nn.Linear(3072,200,bias=False)}  & 3072 & 200     \\
    3     &        & \texttt{nn.tanh()} &  200  & 200  \\
    4     & \texttt{fc2}       & \texttt{nn.Linear(200,200,bias=False)} &  200  & 200  \\
    5     &        & \texttt{nn.tanh()} &  200  & 200  \\
    6     & \texttt{fc3}       & \texttt{nn.Linear(200,200,bias=False)} &  200  & 200  \\
    7     &        & \texttt{nn.tanh()} &  200  & 200  \\
    8     & \texttt{fc4}       & \texttt{nn.Linear(200,200,bias=False)} &  200  & 200  \\
    9     &        & \texttt{nn.tanh()} &  200  & 200  \\
    10     & \texttt{fc5}       & \texttt{nn.Linear(200,1,bias=False)} &  200  & 1  \\
    \bottomrule
  \end{tabular}
\end{table}

\textbf{Convolutional Network } We also conduct  experiments on several different convolutional networks with two convolutional layers and two max-pooling layers. Like the fully-connected network experiments, we consider tanh, ReLU and ELU activations. For example, the structure of a convolutional tanh network is shown in Table~\ref{Conv NN}.
\begin{table}[H]
  \caption{Convolutional network}
  \label{Conv NN}
  \centering
  \begin{tabular}{lllll}
    \toprule
    \# & Name     & Layer     & In shape & Out shape\\
    \midrule
    1 &   \texttt{conv1}& \texttt{nn.Conv2d(3,32,kernel\_size=3,padding=1)} & $(3,32,32)$  & (32,32,32)     \\
    2     &  & \texttt{nn.tanh()}  & (32,32,32) & (32,32,32)     \\
    3     &        & \texttt{nn.MaxPool2d(2)} &  (32,32,32)  & (32,16,16)  \\
    4     & \texttt{conv2}       & \texttt{nn.Conv2d(32,32,kernel\_size=3,padding=1)} &  (32,16,16)  & (32,16,16)  \\
    5     &        & \texttt{nn.tanh()} &  (32,16,16)  & (32,16,16)  \\
    6     &        & \texttt{nn.MaxPool2d(2)}  &  (32,16,16)  & (32,8,8)  \\
    7     &        & \texttt{Flatten()} &  (32,8,8)  & 2048  \\
    8     & \texttt{fc1}       & \texttt{nn.Linear(2048,1,bias=False)} &  2048  & 1  \\
    \bottomrule
  \end{tabular}
\end{table}
\textbf{ResNet18 }  We also conduct experiment on the ResNet18 architecture proposed by \citet{he2016deep}. We use the default architecture implemented in PyTorch (\citet{paszke2019pytorch}). When calculating the sharpness, we use the numerical methods in the package (\citet{golmantpytorchhessian}) to calculate the top eigenvalue of the Hessian matrix.

\section{Further Experiments and Discussions}\label{appendixB}

% conclusions: 1. 4-phase (这一块要说degenerate) 画sharpness, 说4phase的故事+degeneration
% 2. A norm (anomaly: delay, sharpness oscillation (non-monotone->trend the same; oscillate around 2/eta))
% 3. Loss and R(t). 

In this appendix, we use the 1000-sized subset of CIFAR-10 introduced in Appendix~\ref{appendixA} to conduct further experiments on various architectures. We verify our main observation about the correlation between the sharpness and the weight norm of the output layer (A-norm) of the neural network through the following experiments. 

We consider simple linear networks, fully-connected networks, convolutional networks in this appendix. For nonlinear networks, we choose tanh, ReLU, ELU as activation functions. We train the networks with MSE loss. Here we exclude ResNet18 experiment since it is not feasible to compute the Gram matrix or the leading Gram matrix eigenvector due to GPU memory limitation. The sharpness and A-norm correlation of ResNet18 are included in Figure~\ref{fig:Anorm and sharpness} in Section~\ref{3.2}.

Here we run full-batch gradient descent with a selected step size $\eta$ such that the sharpness at initialization is smaller than $2/\eta$.
% \vspace{-0.25cm}
\subsection{Further Experiments}
% \vspace{-0.25cm}
\subsubsection{Linear Networks}
We first verify our four-phase division of EOS phenomena in a simple linear network. This experiment shows that even linear networks can also enter EOS regime and the four-phase division of the gradient descent trajectory is quite apparent in this setting. The following Figure~\ref{fully-connected linear network} illustrates the positive correlation between the sharpness and the A-norm, and the relationship between the loss $\|\bD(t)\|^2$ and $\|\bR(t)\|^2$ along the trajectory.
\begin{figure}[H]
    \vspace{-0.5cm}
    \centering
    \subfigure[Sharpness and A-norm]{
    \label{linear sha assumption fig}
    \includegraphics[width=0.4\textwidth]{./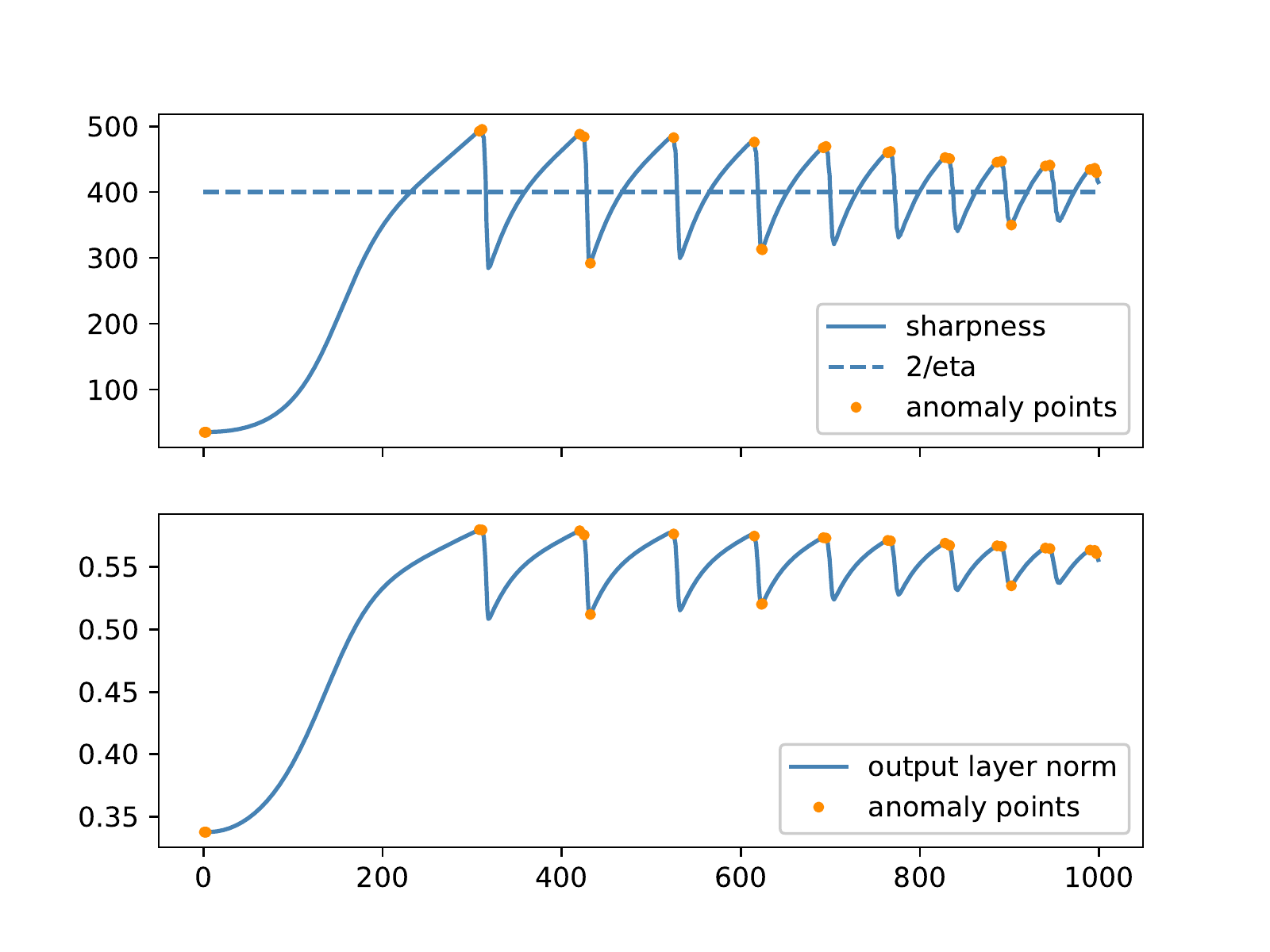}
    }
    \subfigure[The loss ($\|\bD(t)\|^2/n$) and $\|\bR(t)\|^2/n$]{
    \label{linear loss and R fig}
    \includegraphics[width=0.4\textwidth]{./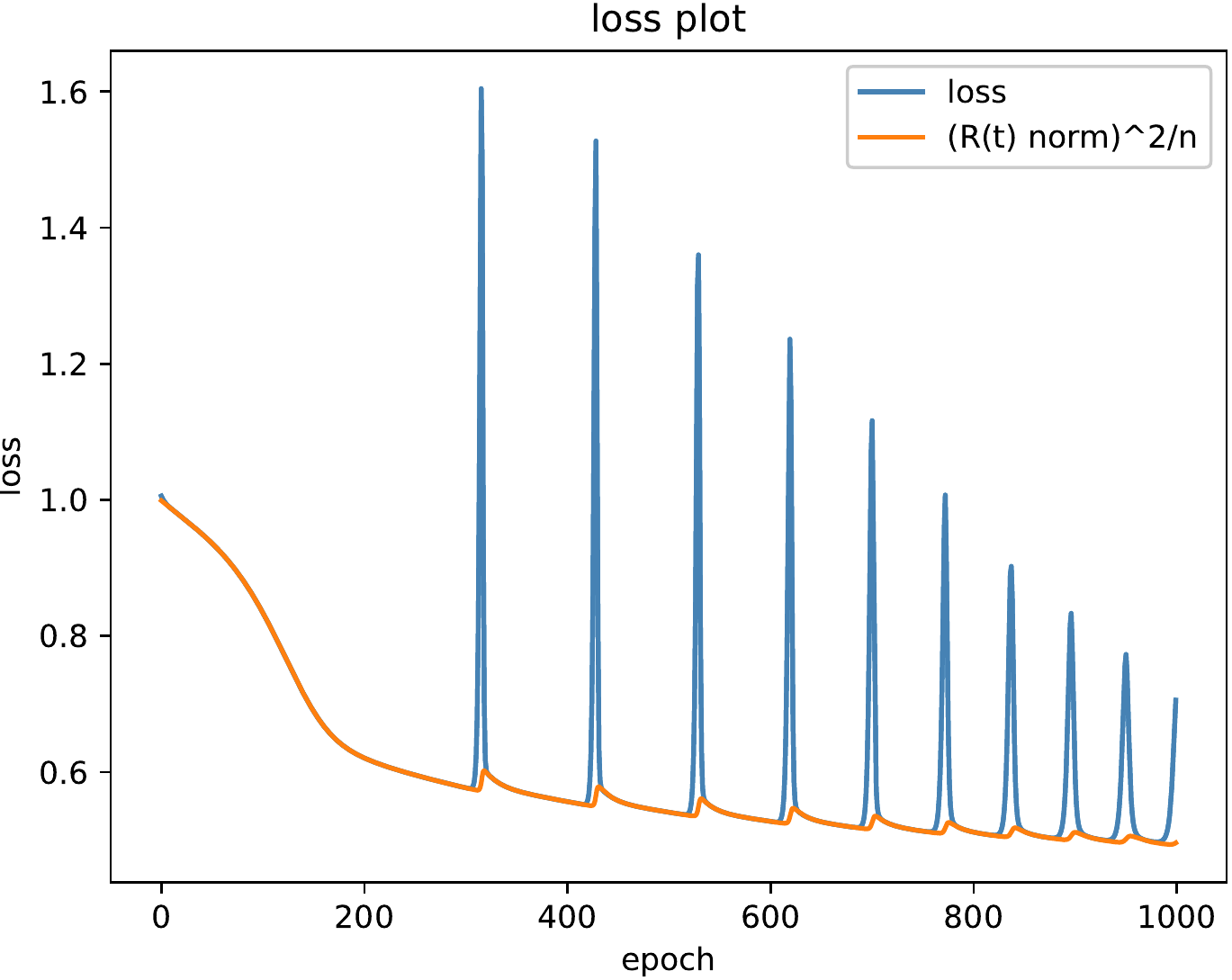}
    }
    \caption{Fully-connected linear network. In Figure (a), we verify the correlation between sharpness and A-norm in convolutional network setting. In Figure (b), we show that in this case, even though the total loss $\mathcal{L}(t) = \|\bD(t)\|^2/n$ 
    oscillates considerably, $\|\bm R(t)\|^2/n$ decreases more steadily. }
    \label{fully-connected linear network}
\end{figure}
\subsubsection{Fully-connected networks}
We train three 5-layer fully-connected networks with different activation functions: tanh, ReLU, and ELU activations. In Figure~\ref{fully-connected tanh network}, ~\ref{fully-connected relu network}, and ~\ref{fully-connected elu network}, we verify that in these fully-connected networks, the sharpness is positively correlated to the dynamics output layer norm (A-norm) most of the time in the progressive sharpening stage and the first few oscillations. 

Note that anomaly points appear much more frequently after a few oscillations. Meanwhile, the sharpness oscillates more frequently around $2/\eta$ in every few iterations.
%, and the four-phase division seems to degenerate to a two-phase one. \textcolor{blue}{That is, in one iteration, gradient descent enters Phase III from Phase I directly; in the next iteration, sharpness drop below $2/\eta$ and gradient descent goes into Phase I. }
%\jian{not very clear. which two phase?}
We further discuss the phenomenon in Appendix~\ref{discussion on anomaly}, in which we elaborate the complicated relationship of the sharpness, A-norm and parameters of other layers.

\begin{figure}[H]
    \vspace{-0.5cm}
    \centering
    \subfigure[Sharpness and A-norm]{
    \label{fc tanh sha assumption fig}
    \includegraphics[width=0.38\textwidth]{./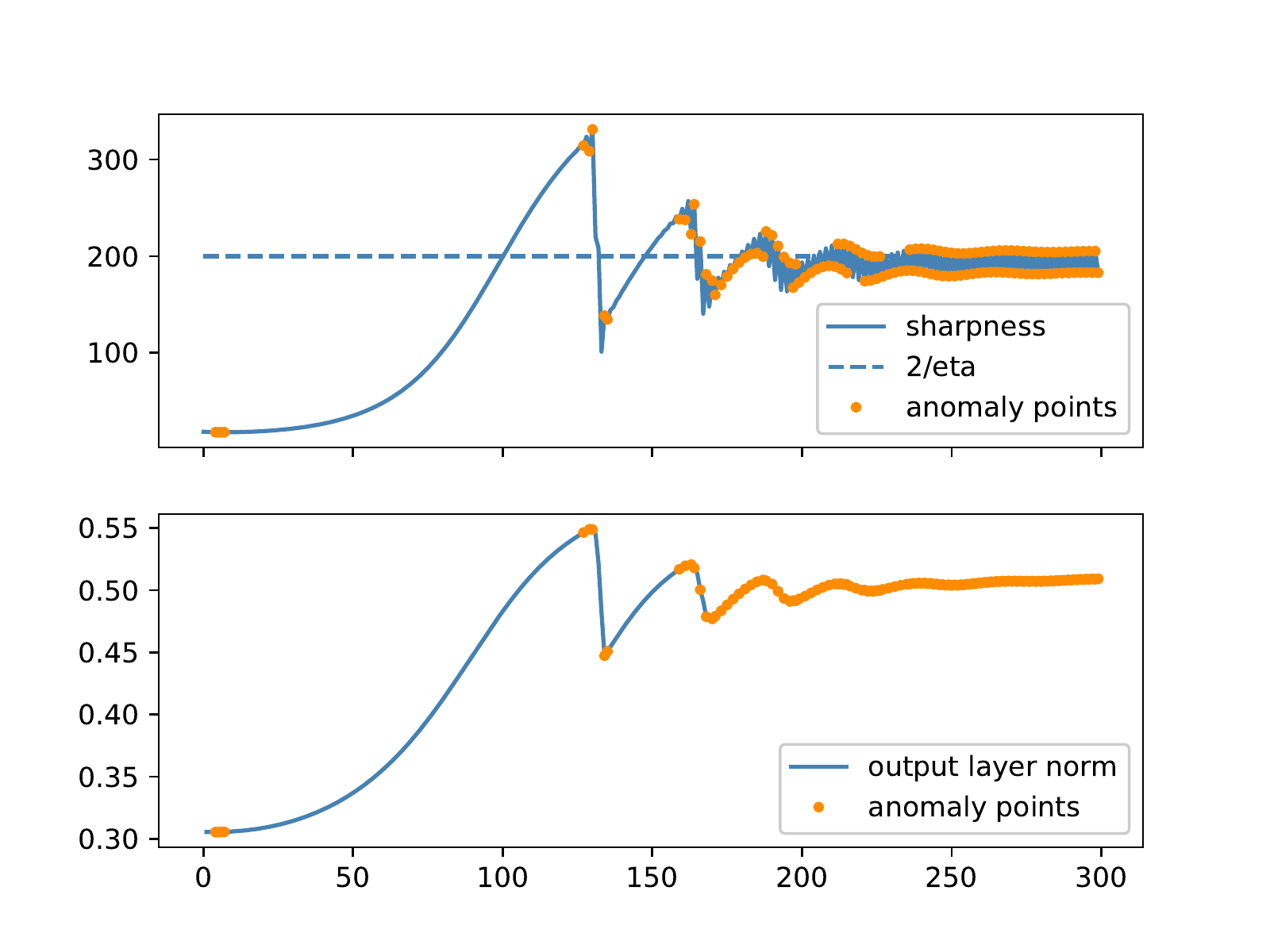}
    }
    \subfigure[The loss ($\|\bD(t)\|^2/n$) and $\|\bR(t)\|^2/n$]{
    \label{fc tanh loss and R fig}
    \includegraphics[width=0.38\textwidth]{./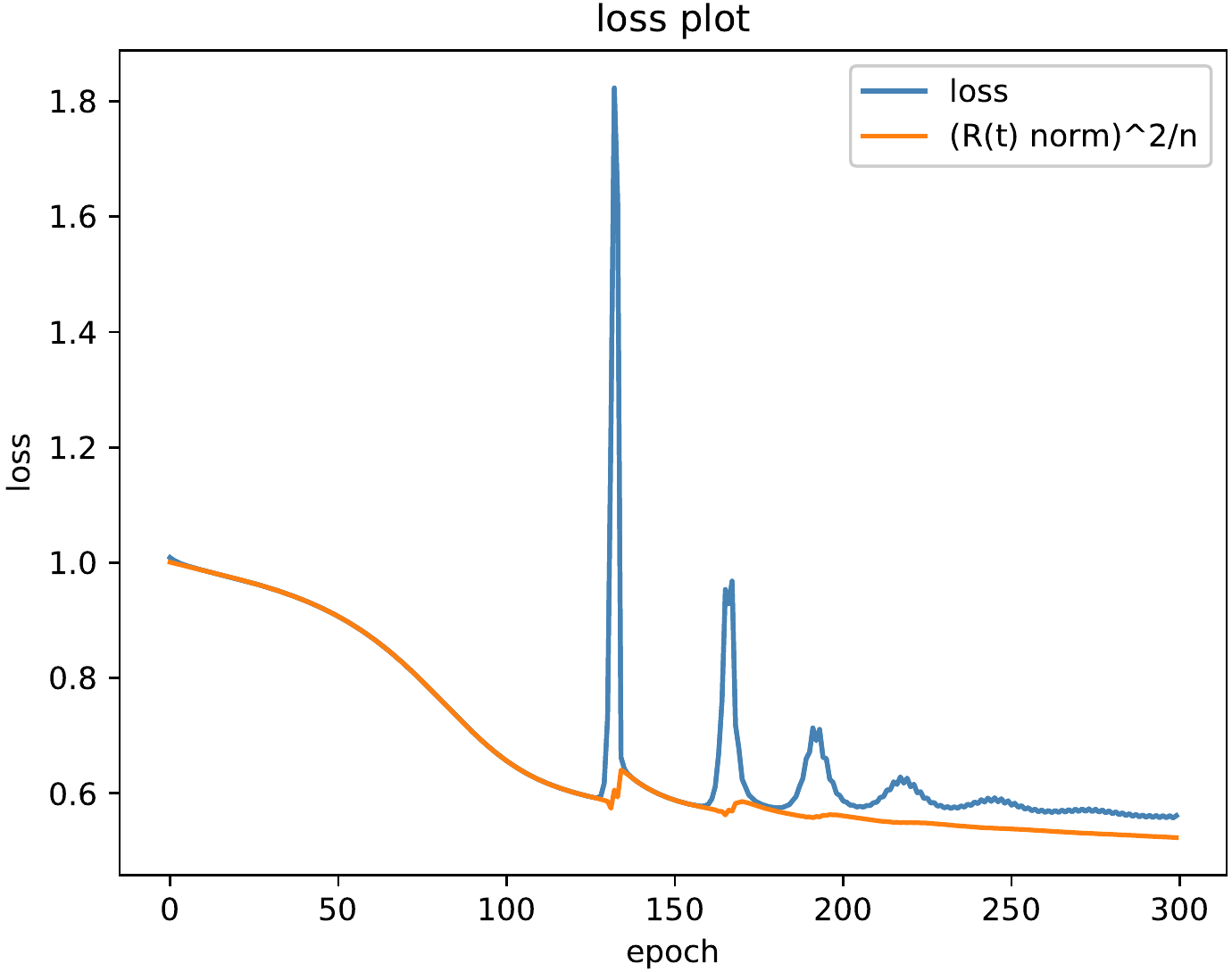}
    }
    \caption{Fully-connected tanh Network. Refer to Figure \ref{fully-connected linear network} for more information. }
    \label{fully-connected tanh network}
\end{figure}
\begin{figure}[H]
    \vspace{-0.5cm}
    \centering
    \subfigure[Sharpness and A-norm]{
    \label{fc relu sha assumption fig}
    \includegraphics[width=0.38\textwidth]{./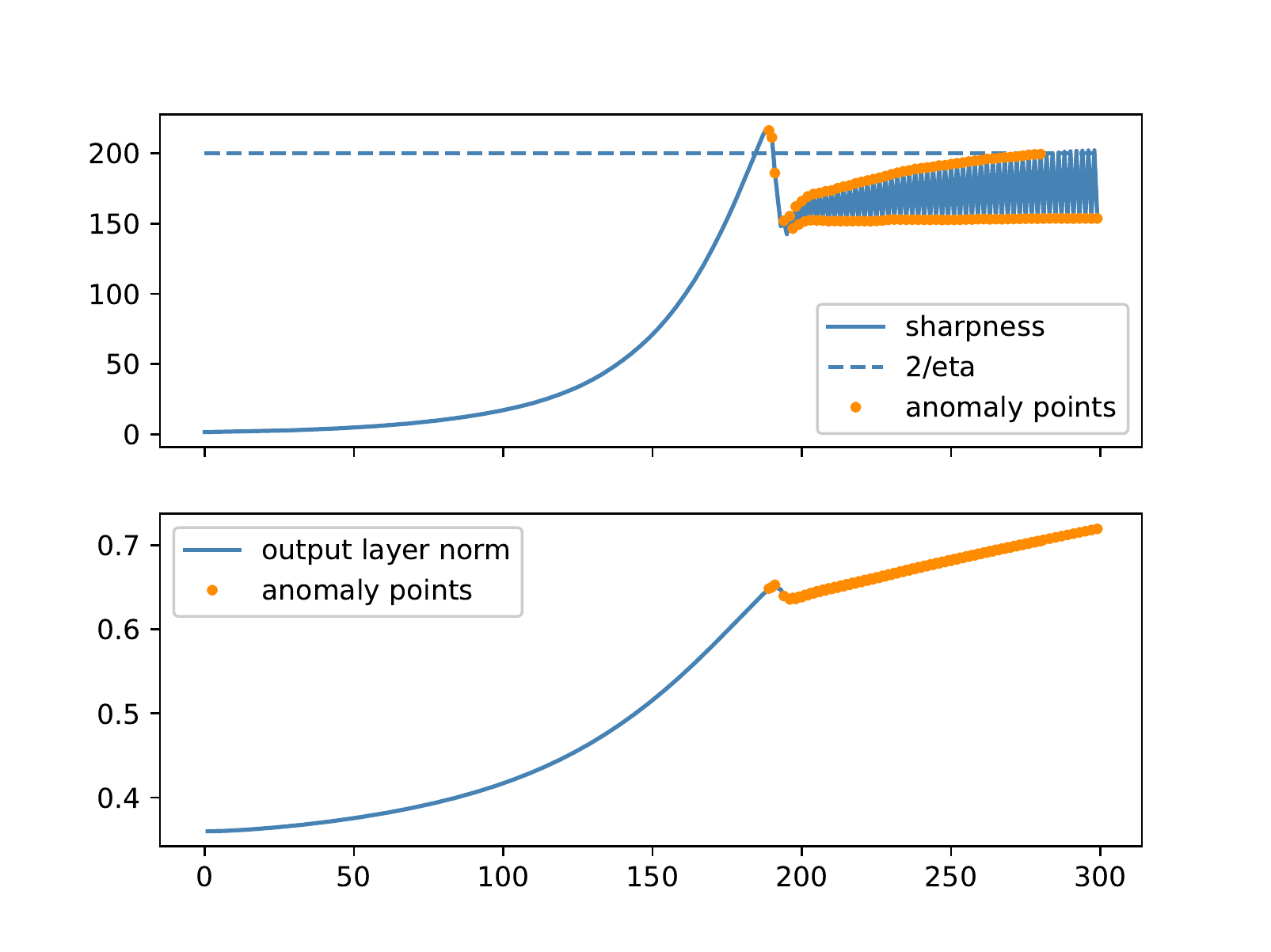}
    }
    \subfigure[The loss ($\|\bD(t)\|^2/n$) and $\|\bR(t)\|^2/n$]{
    \label{fc relu loss and R fig}
    \includegraphics[width=0.38\textwidth]{./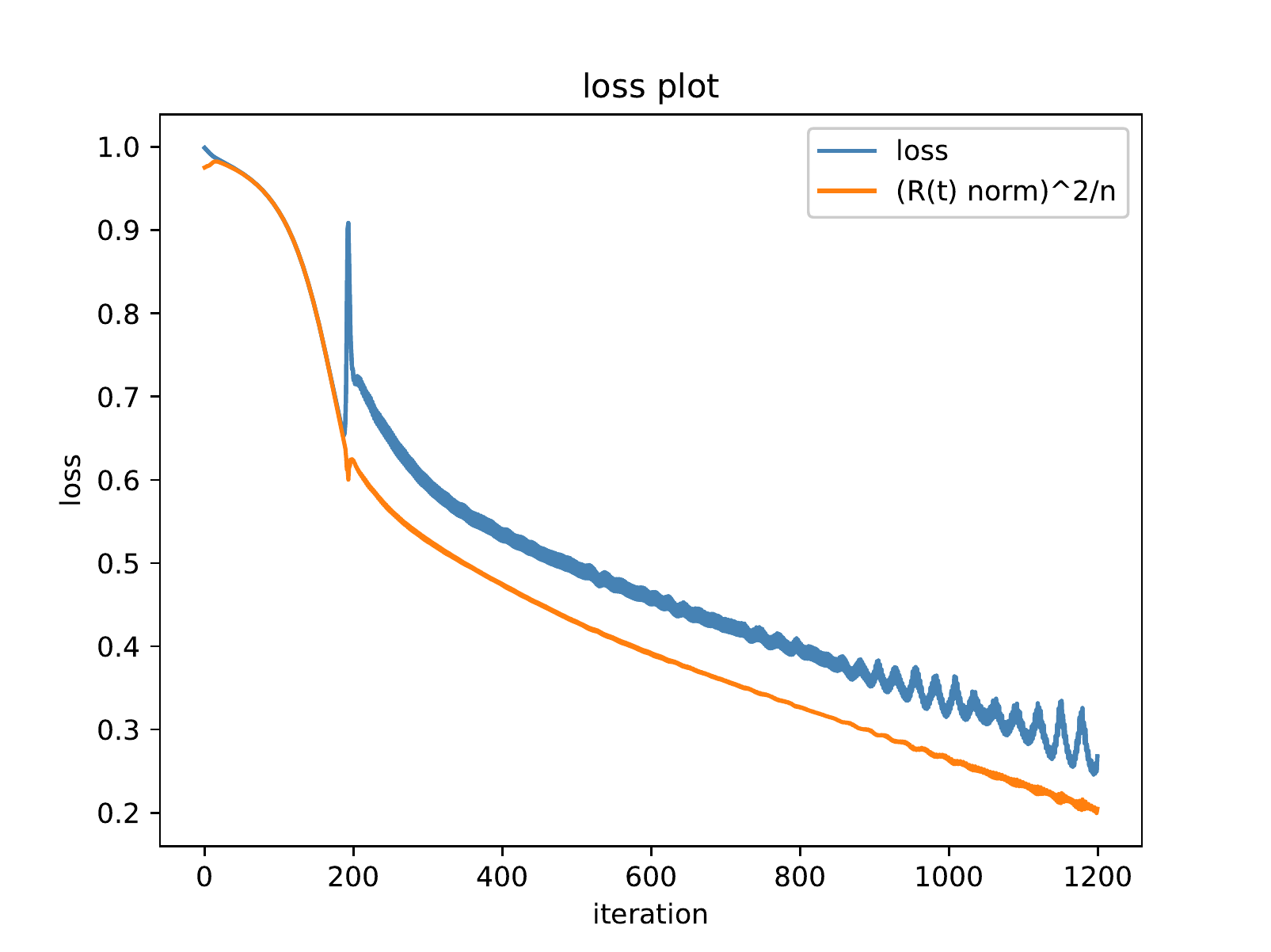}
    }
    \caption{Fully-connected ReLU Network. Refer to Figure \ref{fully-connected linear network} for more information. }
    \label{fully-connected relu network}
\end{figure}
\begin{figure}[H]
    \vspace{-0.5cm}
    \centering
    \subfigure[Sharpness and A-norm]{
    \label{fc elu sha assumption fig}
    \includegraphics[width=0.38\textwidth]{./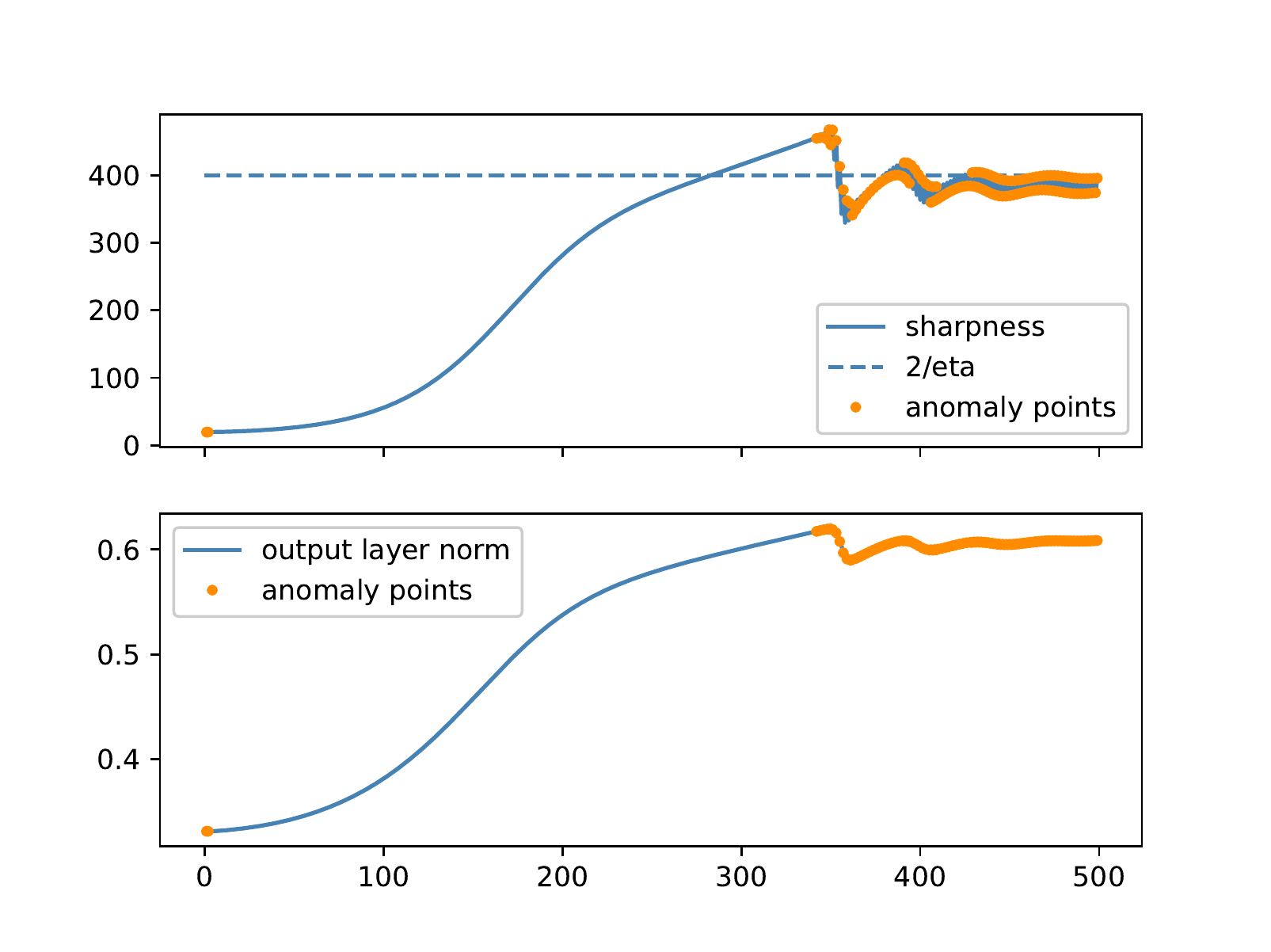}
    }
    \subfigure[The loss ($\|\bD(t)\|^2/n$) and $\|\bR(t)\|^2/n$]{
    \label{fc elu loss and R fig}
    \includegraphics[width=0.38\textwidth]{./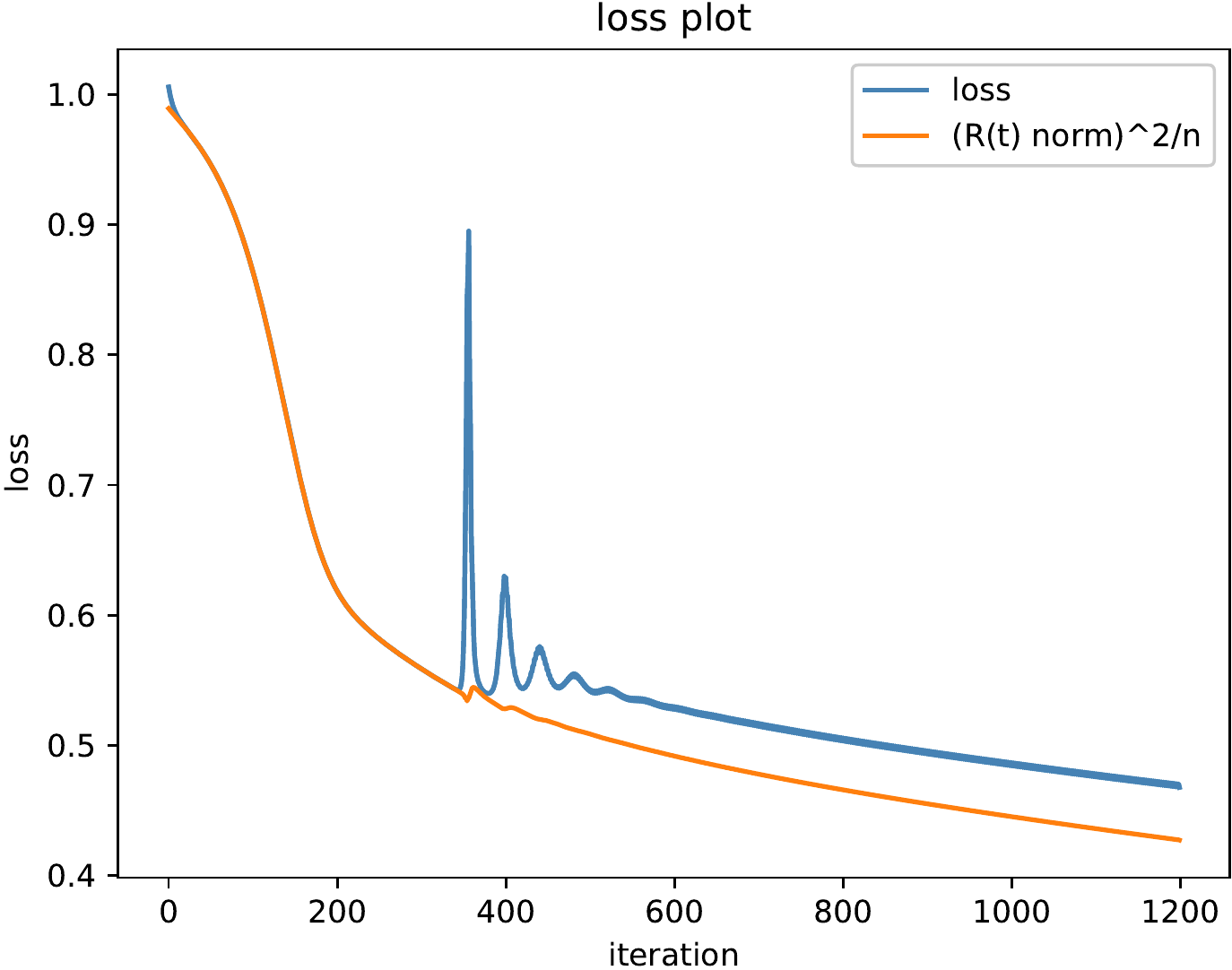}
    }
    \caption{Fully-connected ELU Network. Refer to Figure \ref{fully-connected linear network} for more information.  }
    \label{fully-connected elu network}
\end{figure}
\subsubsection{Convolutional Networks}
We train three convolutional networks with different activation functions: ReLU, tanh, and ELU activations. In Figure~\ref{Convolutional tanh network}, ~\ref{Convolutional ReLU network}, and ~\ref{Convolutional ELU network}, we verify that in convolutional networks, the positive correlation between the sharpness and the output layer norm (A norm) is still correct in the training process. In the first few oscillations of the sharpness, the four-phase division is also valid. On the other hand, we notice that the same anomaly appears in this convolutional setting as in the fully-connected examples. We defer the discussion to Section~\ref{discussion on anomaly}.

\begin{figure}[H]
    \vspace{-0.5cm}
    \centering
    \subfigure[Sharpness and A-norm]{
    \label{conv tanh sha assumption fig}
    \includegraphics[width=0.4\textwidth]{./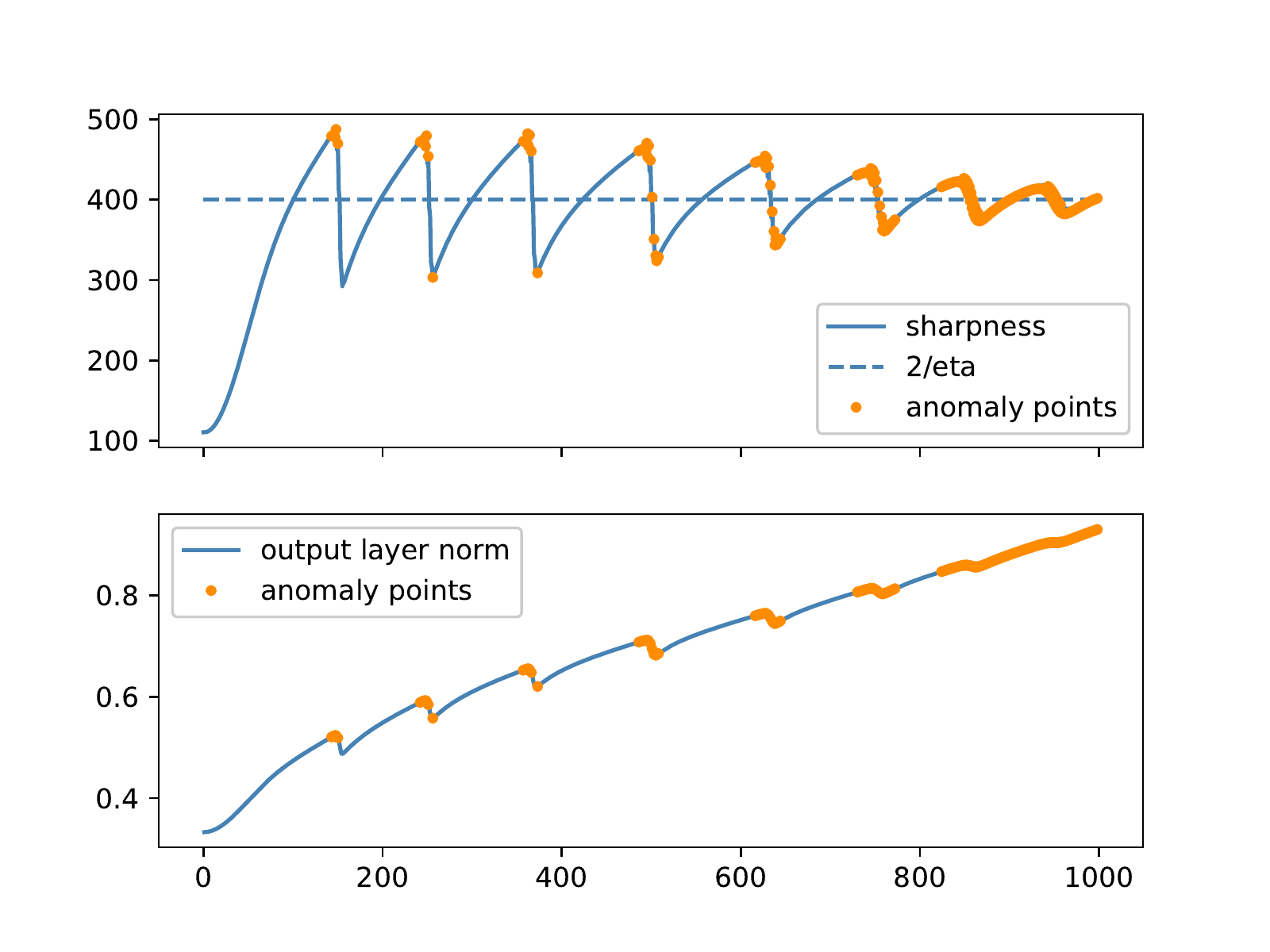}
    }
    \subfigure[The loss ($\|\bD(t)\|^2/n$) and $\|\bR(t)\|^2/n$]{
    \label{conv tanh loss and R fig}
    \includegraphics[width=0.4\textwidth]{./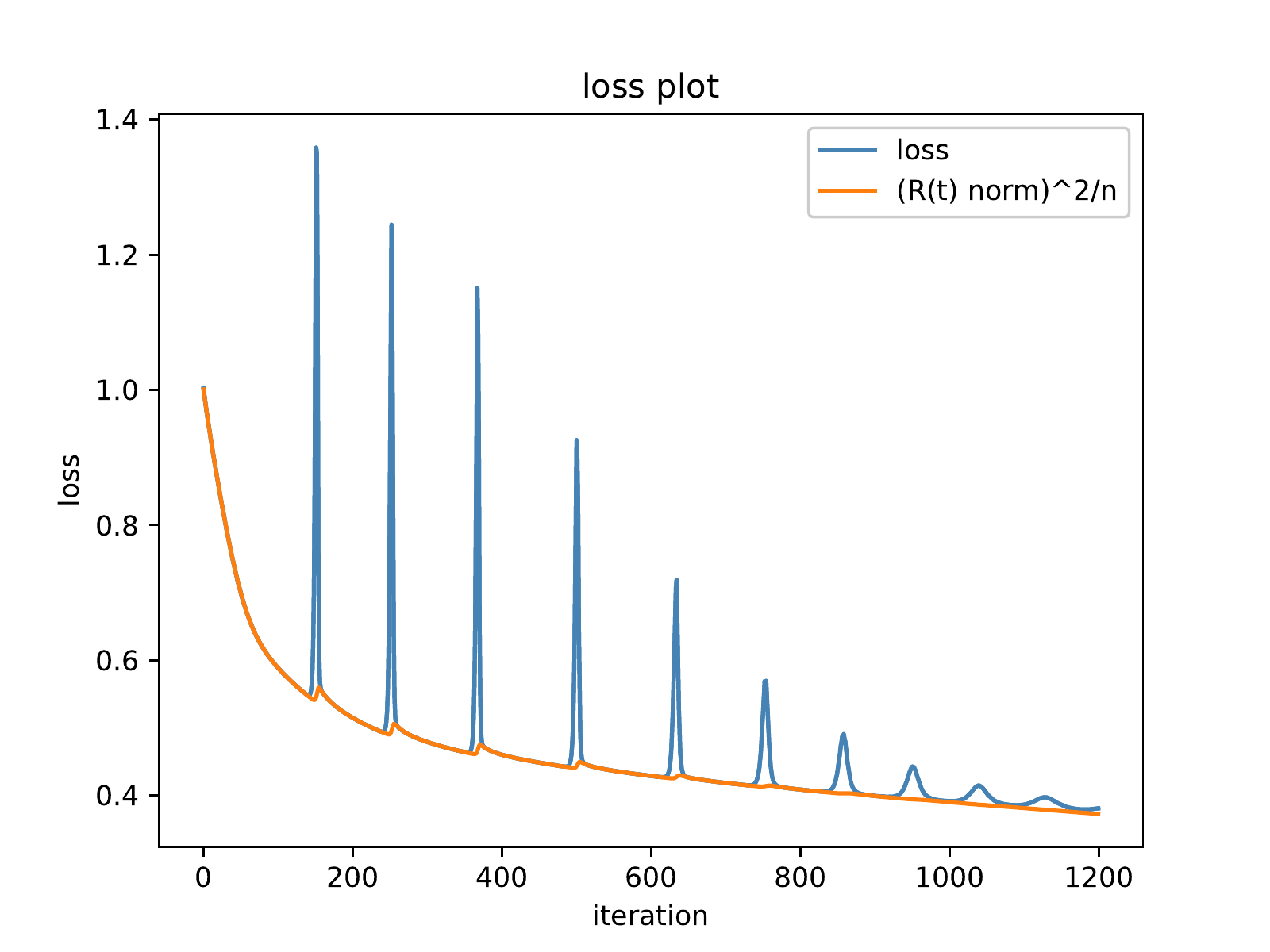}
    }
    \caption{Conv. tanh Network. In Figure (a), we verify the correlation between sharpness and A-norm in convolutional network setting. In Figure (b), we show that in this case, even though the total loss $\mathcal{L}(t) = \|\bD(t)\|^2/n$ 
    oscillates considerably, $\|\bm R(t)\|^2/n$ decreases more steadily. }
    \label{Convolutional tanh network}
\end{figure}

\begin{figure}[H]
    \vspace{-0.5cm}
    \centering
    \subfigure[Sharpness and A-norm]{
    \label{conv relu sha assumption fig}
    \includegraphics[width=0.4\textwidth]{./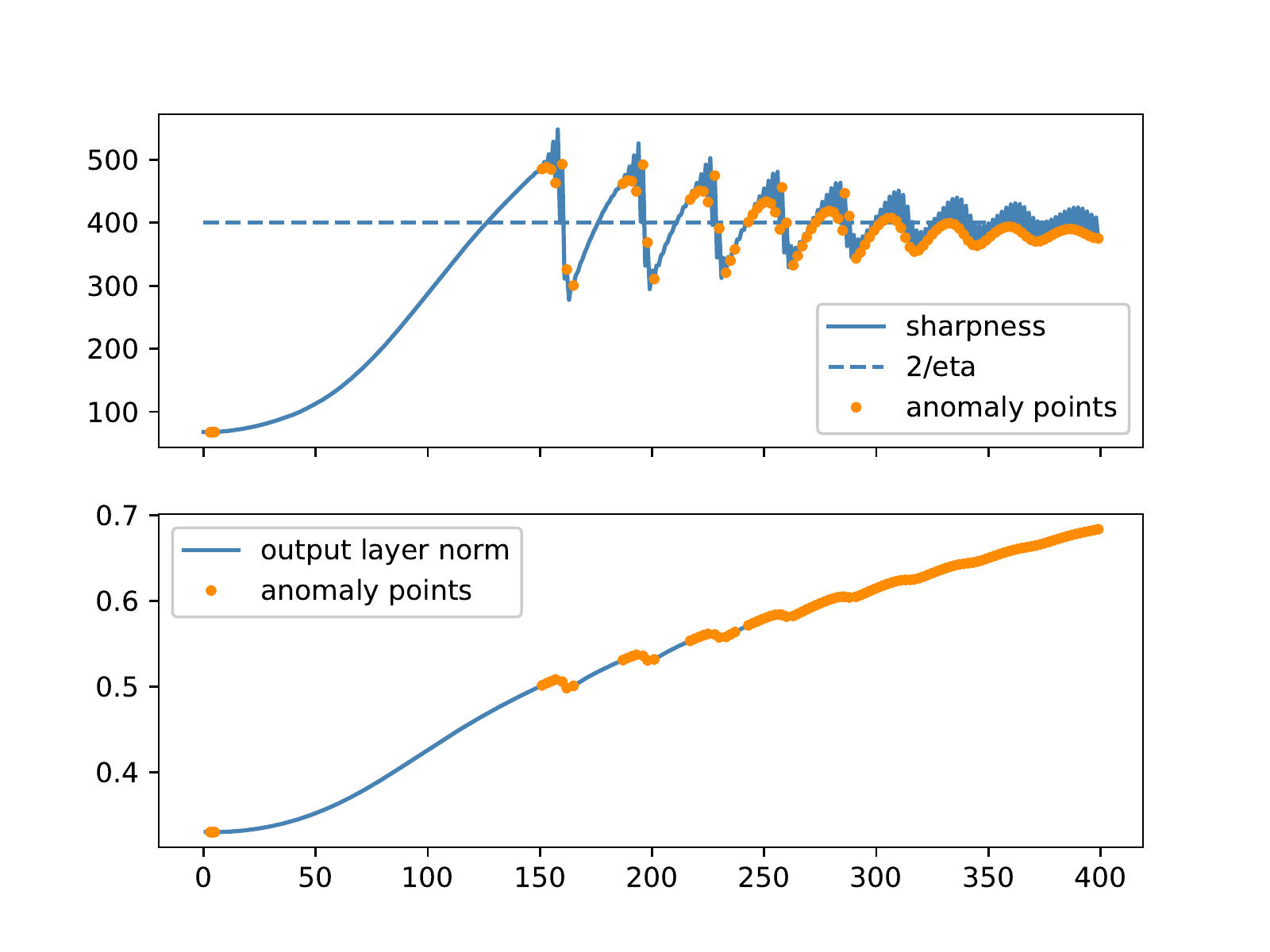}
    }
    \subfigure[The loss ($\|\bD(t)\|^2/n$) and $\|\bR(t)\|^2/n$]{
    \label{conv relu loss and R fig}
    \includegraphics[width=0.4\textwidth]{./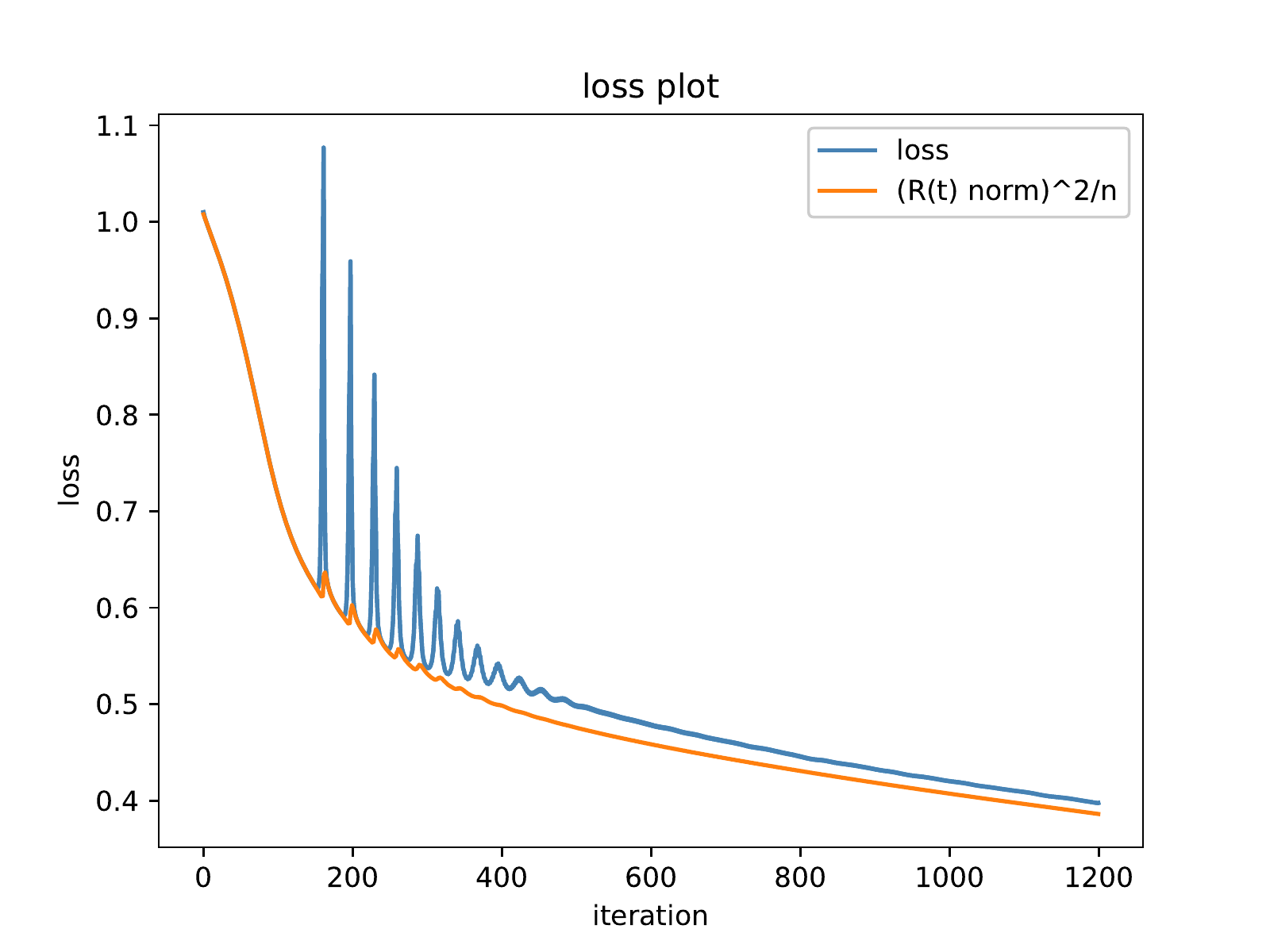}
    }
    \caption{Conv. ReLU Network. Refer to Figure \ref{Convolutional tanh network} for more information. }
    \label{Convolutional ReLU network}
\end{figure}

\begin{figure}[H]
    \vspace{-0.5cm}
    \centering
    \subfigure[Sharpness and A-norm]{
    \label{conv elu sha assumption fig}
    \includegraphics[width=0.4\textwidth]{./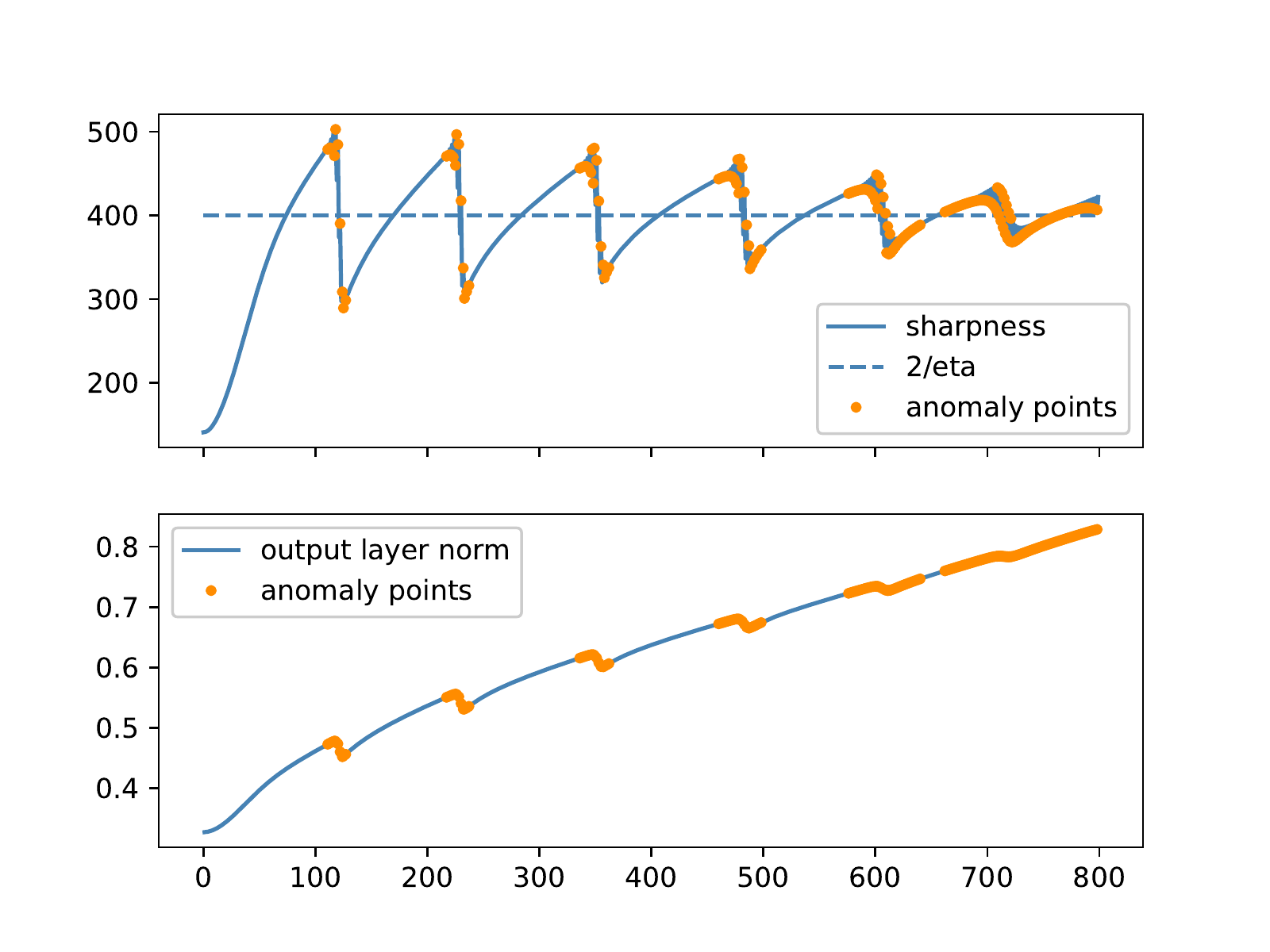}
    }
    \subfigure[The loss ($\|\bD(t)\|^2/n$) and $\|\bR(t)\|^2/n$]{
    \label{conv elu loss and R fig}
    \includegraphics[width=0.4\textwidth]{./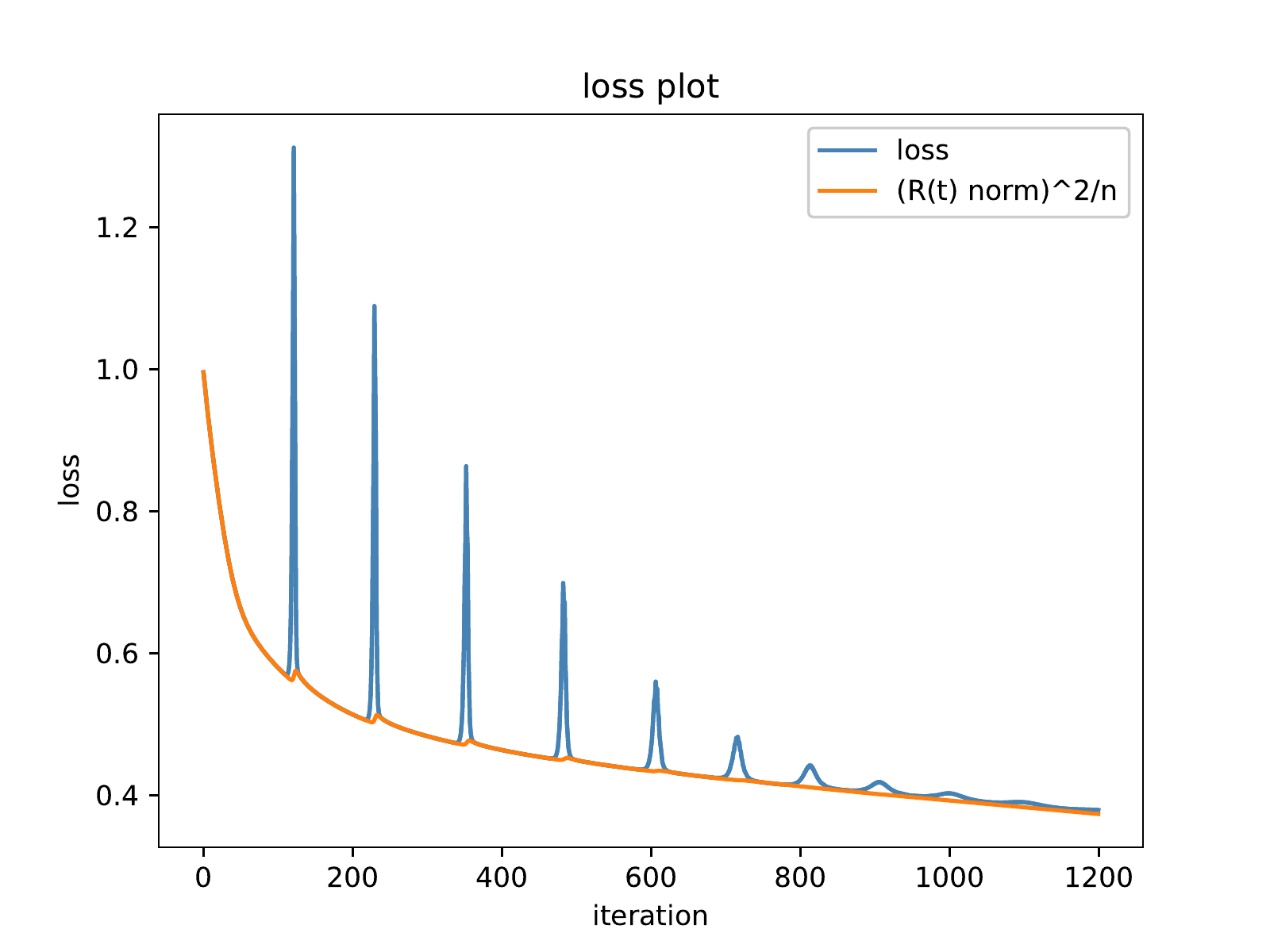}
    }
    \caption{Conv. ELU Network. Refer to Figure \ref{Convolutional tanh network} for more information. }
    \label{Convolutional ELU network}
\end{figure}

\subsubsection{Gaussian Data on Linear Networks}
\label{exp: Guassian}
In this section, we train two-layer fully-connected linear networks with datapoints sampled from Gaussian distribution and different label vectors. 

In Figure~\ref{fig:Gaussian-data}, we train two-layer fully-connected linear networks with datapoints sampled from Gaussian distribution. In particular, the width of the network is 200. The data $\bX$ is 1000 datapoints sampled from $\mathcal{N}(0,\mathbb{I}_{3072})$, where $\mathbb{I}_{3072}\in \mathbb{R}^{3072\times 3072}$ is the identity matrix. The label $\bY$ is uniformly sampled from $\text{Unif}\{-1,1\}^n$. 

In Figure~\ref{fig:Gaussian-data}, we verify that even with simple two-layer linear network and Gaussian data, progressive sharpening and EOS can still be observed. However, because Gaussian data is easy for the network to learn, the convergence is so fast that within tens of epochs the training loss converges to zero. In this case, the EOS phenomenon is not quite typical.

% \begin{figure}[H]
%     \vspace{-0.5cm}
%     \centering
% \includegraphics[width=0.48\textwidth]{./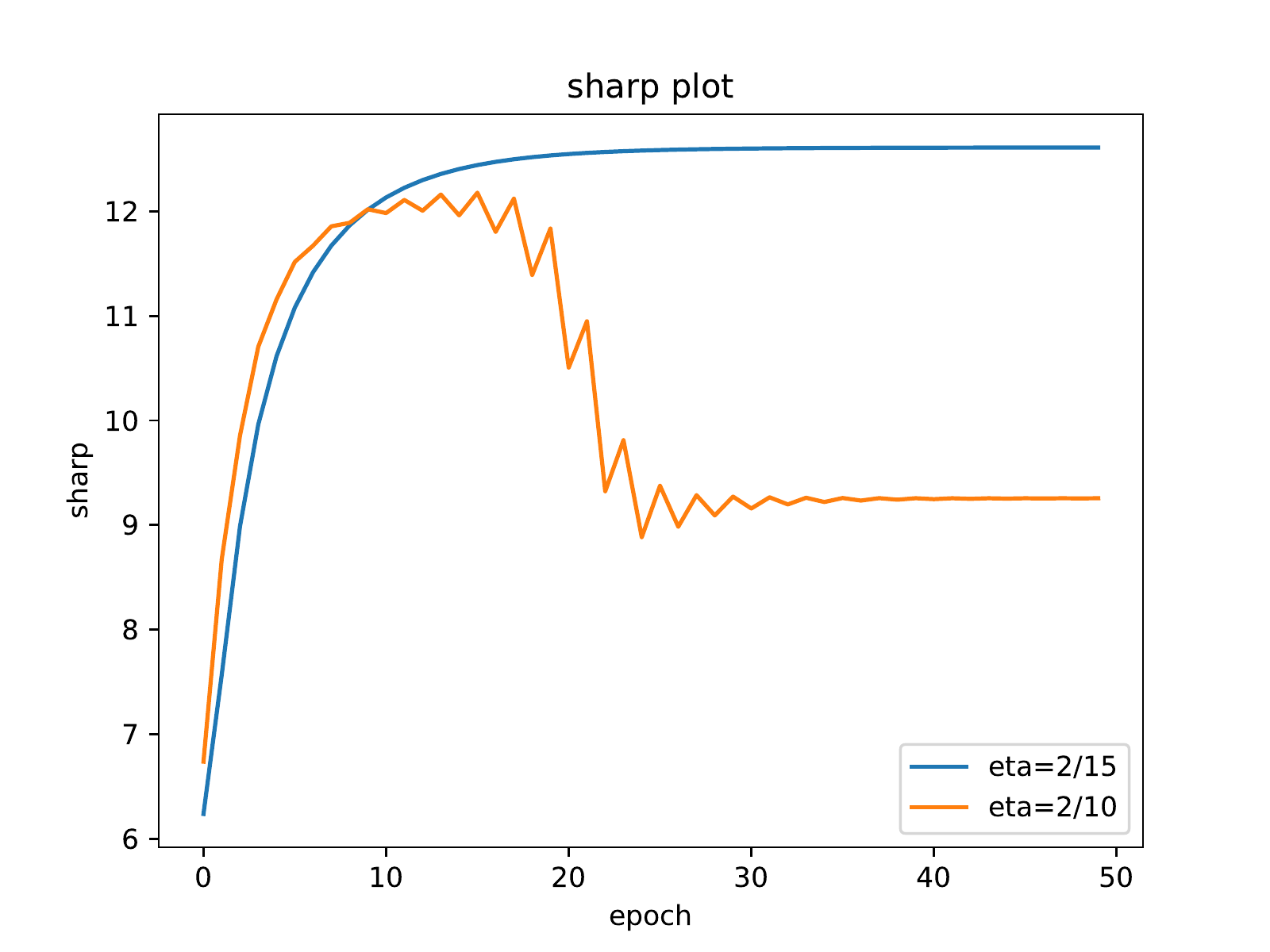}
%     \caption{Two-layer Linear Network. Trained with data sampled from the standard Gaussian distribution. Both progressive sharpening and EOS are observed.}
%     \label{fig:Gaussian-data}
% \end{figure}

\begin{wrapfigure}{r}{0.5\textwidth}
  \centering
    \includegraphics[width=0.48\textwidth]{}
    \caption{Two-layer Linear Network. Trained with data sampled from the standard Gaussian distribution. Both progressive sharpening and EOS are observed.}
    \label{fig:Gaussian-data}
\end{wrapfigure}

To further explore the effect of different factors on the degree of progressive sharpening, we train the network with data points sampled from Gaussian distribution and different label vectors. In particular, the width of the network is 400. The  input $\bX$ consists of 500 data points sampled from $\mathcal{N}(\mu, \Sigma)$, where the mean $\mu\in \mathbb{R}^{3072}$ and the covariance $\Sigma\in \mathbb{R}^{3072\times 3072}$ are the mean vector and the covariance matrix of 5000 CIFAR-10 data points, respectively. We note that to illustrate the degree of progressive sharpening, we choose a very small learning rate $\eta= 2/20000$
(so that the training converges before EOS can happen).

In Figure ~\ref{fig: Gaussian and CIFAR}, the label $\bY$ is uniformly sampled from $\text{Unif}\{-1,1\}^n$. 
Comparing the experimental results with those obtained using the same network trained 
with 500 CIFAR-10 data points, we find out that both have a similar degree of progressive sharpening.  
% This suggests the label vector in CIFAR-10 dataset is distributed evenly.
In Figure ~\ref{fig: Gaussian y along v1} and Figure ~\ref{fig: Gaussian y along v400}, we let the label $\bY$ be $\sqrt{n}v_1$ and $\sqrt{n}v_{400}$ respectively. The result shows that when the label vector is aligned with the top eigenvector $v_1$, the degree of progressive sharpening is relatively small,
and when the label vector is aligned with a bottom eigenvector, the degree of progressive sharpening is relatively large. 

This phenomenon is consistent with our intuition. 
Empirically we found that a dataset that is easier to learn leads to faster convergence rate, which then leads to a smaller degree of progressive sharpening. For example, training standard Gaussian distributed dataset converges faster than that with CIFAR-10 dataset, hence the degree of progressive sharpening of the former is much smaller; as another example
(Figure ~\ref{fig: Gaussian y along v1} and Figure ~\ref{fig: Gaussian y along v400}), 
if the label vector is aligned with the top eigenvector, the convergence in the first phase (the PS phase) is faster, which leads to a shorter first phase and thus a smaller degree of progressive sharpening compared to the other case.
% The experiments suggest that the degree of progressive sharpening is somehow negatively related with the convergence rate. 

\begin{figure}[ht]
    \vspace{-0.5cm}
    \centering
    \subfigure[Gaussian data with random label vector, compared with CIFAR]{
    \label{fig: Gaussian and CIFAR}
    \includegraphics[width=0.42\textwidth]{./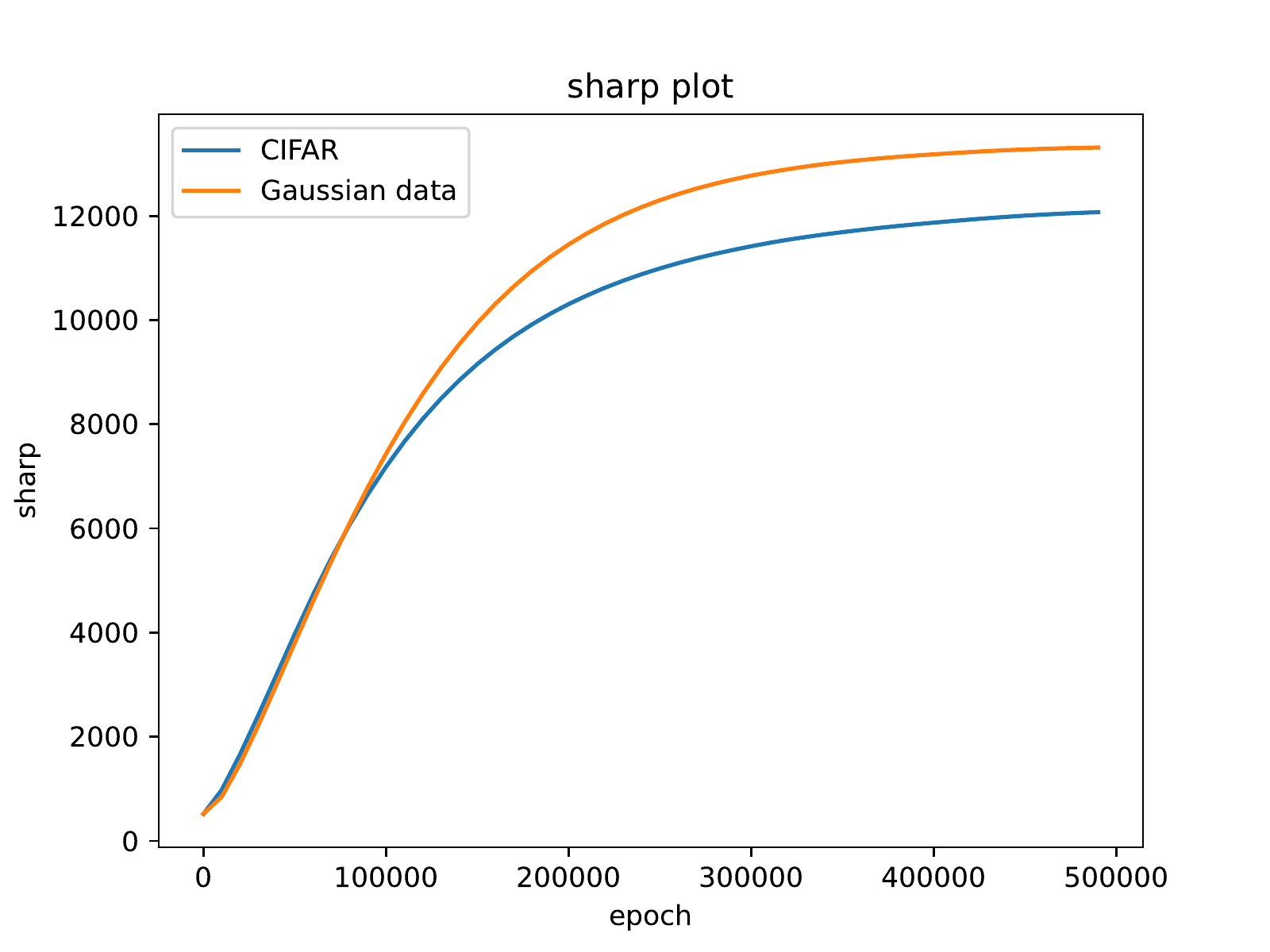}
    }
    \subfigure[Gaussian data with label vector align with $v_1$]{
    \label{fig: Gaussian y along v1}
    \includegraphics[width=0.42\textwidth]{./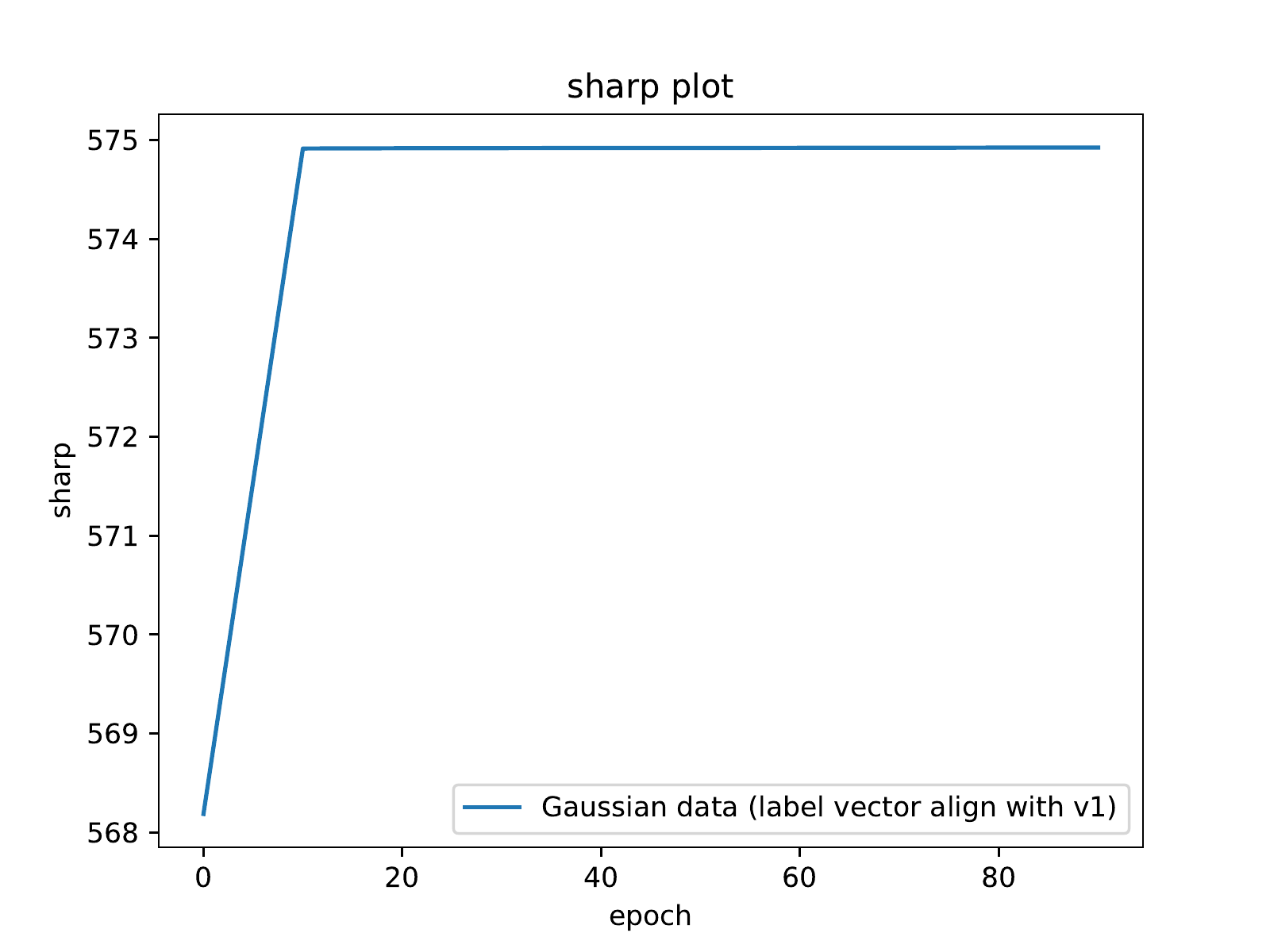}
    }
    \subfigure[Gaussian data with label vector align with $v_{400}$]{
    \label{fig: Gaussian y along v400}
    \includegraphics[width=0.42\textwidth]{./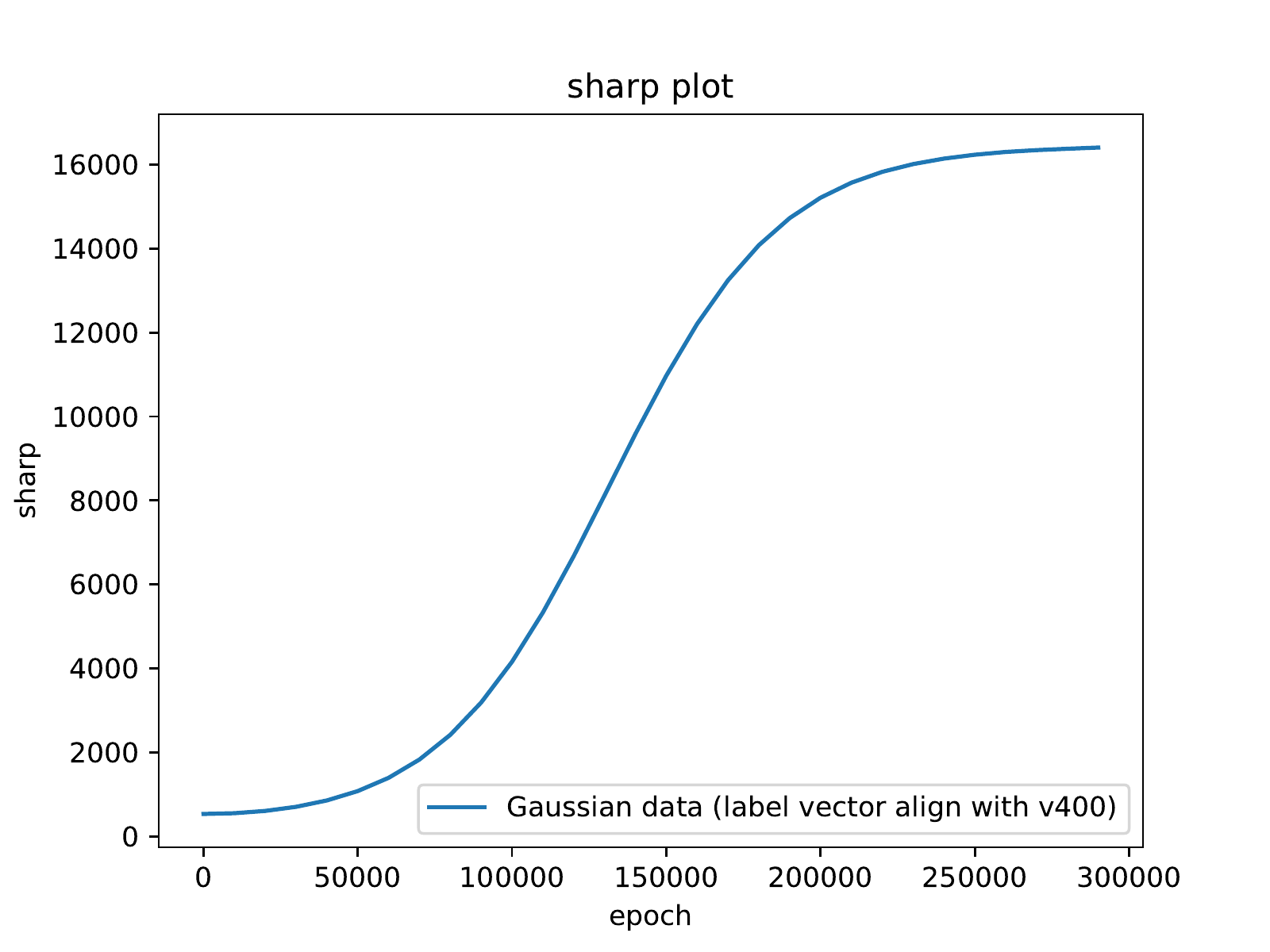}
    }
    \caption{progressive sharpening of two-layer fully-connected linear networks trained with data points sampled from anisotropic Gaussian distribution. In Figure (a), the label $\bY$ is uniformly sampled from $\text{Unif}\{-1,1\}^n$ and the result is compared with that trained with CIFAR data points. In Figure (b) and (c), the label $\bY$ takes $\sqrt{n}v_1$ and $\sqrt{n}v_{400}$ respectively.}
    \label{Fig: Anisotropic Gaussian}
\end{figure}

\subsection{Further Discussions on the Relation between A-norm and Sharpness}\label{discussion on anomaly}

In our paper, we use the observation that $\|\bm A\|^2$ shares the same trend as the sharpness to explain the dynamics of the sharpness. 
However, as shown in the experiments in this section (see Figure~\ref{fully-connected tanh network}, \ref{fully-connected relu network}, \ref{fully-connected elu network}, \ref{Convolutional tanh network}, \ref{Convolutional ReLU network}, \ref{Convolutional ELU network}),
%\jian{which experiments? needs pointer}
anomaly points exist during the training process. Here we briefly discussion these anomaly points further.
We divide them into three kinds.

\textbf{At the time that $\|\bm A\|^2$ changes its trend:}
This kind of anomaly points appear when $\|\bm A\|^2$ changes its trend. 
%We give an interpretation as follows. 
When $\|\bm A\|^2$ changes its trend, its changing rate (or differential, the $\bm{D}^\top\bm{F}$ term) changes its sign, hence the changing rate's absolute value is small.
Now because the dynamics of the sharpness is effected by both $\|\bm A\|^2$ and the inner layers, when changing rate of $\|\bm A\|^2$ is small, the inner layers may play a larger role in the direction of sharpness, which may cause the anomaly.

\textbf{When the sharpness oscillates more frequently:} We notice that in some cases, $\|\bm A(t)\|^2$ and the sharpness $\sha(t)$ have
very similar overall trend, 
%\textcolor{red}{The following sentence to: the sharpness oscillates
%while $\|\bm A\|^2$ does not}
%$\|\bm A\|^2$ goes monotonically 
%\jian{monotone? really?}
but the sharpness oscillates more frequently but the magnitude is  small (i.e., the sharpness curve has higher frequency oscillations. See Figure~\ref{fully-connected relu network} or Figure~\ref{Convolutional ELU network}). In this case, while we believe the change of $\|\bm A\|^2$ is a major driving force of the change of the sharpness, other layers must be taken into consideration for understanding such small and frequent oscillations of sharpness.

\textbf{In late training phases: } We notice that in the late training phases in most settings, sharpness oscillates and crosses $2/\eta$ more frequently
(it changes directions in a few iterations). 
In this case, our four-phase division does not strictly apply.
%degenerates to a quick two-phase transition. \textcolor{blue}{In this regime, gradient descent transits between Phase I and Phase III every iteration, and sharpness oscillates around $2/\eta$ with a period of 2 iterations.}
%\jian{same problem}
At the same time, $\|\bm A\|^2$ may also change direction more frequently, resulting more anomaly points during this period of time.
Hence, understanding the behavior of the late stages is beyond our current analysis and requires new insights.

\section{Proof for Section 4}\label{appen c}
\subsection{Detailed Settings}

\textbf{Model:} In this section, we study a two-layer neural network with linear activation, i.e. 
    $$
    f(\bx) = \sum_{q=1}^m\frac{1}{\sqrt m}a_q\bm w_q\bx = \frac{1}{\sqrt m} \bm A^\top \bm W\bx
    $$
    where $\bm W = [\bm w_1,\bm w_2, ...,\bm w_m]^\top\in \mathbb R^{m\times d}$ is the hidden layer's weight matrix, and $\bm A = [a_1, a_2, ..., a_m]\in \mathbb{R}^{m}$ is the weight vector of the output layer.
    $\bx\in \mathbb{R}^d$ is the input vector. 
    
    \textbf{Input distribution: } Denote by $\bX = [\bx_1, \bx_2, ..., \bx_n] \in \mathbb{R}^{d\times n}$ the training data matrix and by $\bY = (y_1, y_2, ..., y_n)\in \mathbb{R}^n$ the label vector. We assume $y_i = \pm 1$ for all $i\in [n]$, and $\|\bX^\top \bX\| = \Theta(n)$.\footnote{This property is  mentioned in \citet{hu2020surprising}. Empirically, for a randomly selected $k$-sample subset from CIFAR-10, $\|\bX^\top\bX\|/k$ is nearly constant.} We assume $\mathbf{\bX^\top \bX}$ has rank $r$, and we decompose $\mathbf{\bX^\top \bX}$ and $\bY$ according to the
    orthonormal basis $\{\bv_i\}$, the eigenvectors of $\mathbf{\bX^\top \bX}$:
    $
    \mathbf{\bX^\top \bX} = \sum_{i=1}^r \lambda_i\bm v_i\bm v_i^\top, \ \bY = \sum_{i=1}^r (\bY^\top \bm v_i) \bm v_i :=\sum_{i=1}^r z_i \bm v_i
    $
    where $\bv_i$ is the eigenvector corresponding to the $i$-th largest eigenvalue $\lambda_i$ of $\mathbf{\bX^\top \bX}$. $z_i = \bY^\top \bm v_i$ is the projection of $\bY$
    onto the direction $\bv_i$. Here we assume that $n\gg r$.
    
    Also, we suppose there exists some parameter $\bA^*$, $\bm W^*$, s.t. $$ \bY^\top = \bA^{*\top}\bm W^* \bX$$
    Note when $n\gg d$, the matrix $\bX^\top \bX$ cannot be full rank. This condition guarantees that Gradient Descent (GD) only travels in the column space of $\bX^\top \bX$ according to Lemma~\ref{colspace} at Appendix~\ref{appendixE}.

    Define the output vector as $\bF$ and the residual vector as $\bD$ in the training process. 
    
    $$\bF = (f(\bx_1), f(\bx_2), ..., f(\bx_n))\in \mathbb R^n, \bD = \bF - \bY$$ 
    (Note that they are functions of time $t$. So we use $\bD(t),\bF(t),\bA(t),\bW(t)$ to denote $\bD,\bF,\bA,\bW$ at time $t$.)
    
    Consider the mean square error MSE loss during the training process:
    $$
    \mathcal{L}(\bA,\bm W) = \frac{1}{n}\sum_{i=1}^n(f(\bx_i)-y_i)^2 = \frac{1}{n}\|\bD\|^2
    $$
    
    \textbf{Initialization: }We run GD on the loss and start from \textit{symmetric initialization} for the weights, which guarantees $\bF(0) = 0.$
    
    $$
    a_q\sim\text{Unif}(\{-1, 1\}), \ a_{q+m/2} = - a_{q}, \ q = 1,2,...,m/2
    $$
    $$
    \sum_{q=1}^{m/2} \bm w_q\bm w_q^\top = \frac{m}{2d} \bm I_d, \bm w_{q+m/2} =\bm w_q, \ q = 1,2,...,m/2
    $$
    where $\bm I_d \in \mathbb R^{d\times d}$ is the identity matrix of $d$-dimension.
    
    \textbf{Sharpness:} Recall the definition of the sharpness: $\sha(t) = \lambda_{\max}(\bM(t))$, where $\bM(t)$ is the Gram matrix \eqref{C3}.
    
    \textbf{Learning rate selection: } We select a learning rate $\eta$ such that $\sha(0)<2/\eta$.
    Specially, based on our method of initialization, $\sha(0) =\frac{m\lambda_1 (d+1)}{d}\cdot \frac{2}{mn}$, hence we have
    \begin{equation}
    \label{initialization inequality}
        \eta < \frac{nd}{(d+1)\lambda_1}
    \end{equation}
    
    \textbf{Gradient Descent Update Rule: } Following GD, the training dynamics is as follows:
    $$
    a_q(t+1)-a_q(t)=-\eta \frac{\partial \mathcal L}{\partial a_q} = -\frac{2\eta}{n\sqrt m}\sum_{i=1}^n(f(\bx_i)-y_i)\bm w_q\bx_i, \ 
    \forall q\in [m]$$
    Similarly, we have
    $$
    \bm w_q(t+1)-\bm w_q(t)=-\frac{2\eta}{n\sqrt m}\sum_{i=1}^n(f(\bx_i)-y_i)a_q\bx_i , \ 
    \forall q\in [m]
    $$
    Write them into matrix forms and we have:
    \begin{equation}
    \begin{aligned}
    \bA(t+1)-\bA(t)&=-\frac{2\eta}{n\sqrt m}\bm W (t)\bX \bD(t)\\
        \bW(t+1)-\bW(t)&=-\frac{2\eta}{n\sqrt m}\bA(t) \bD(t)^\top\bX^\top
    \end{aligned}
    \label{C1}
    \end{equation}
    
    Then we have the $\|\bA(t)\|$'s dynamics according to $\eqref{C1}$:
    \begin{equation}
    \|\bA(t+1)\|^2-\|\bA(t)\|^2 = -2\eta\bA(t)^\top\frac{\partial \mathcal{L}}{\partial \bA} + \eta^2\|\frac{\partial \mathcal{L}}{\partial \bA}\|^2 = -\frac{4\eta}{n}\bF(t)^\top\bD(t) +\eta^2 \|\frac{\partial \mathcal{L}}{\partial \bA}\|^2
    \label{C2}
    \end{equation}
    where $\bF(t)^\top \bD(t) = \bD(t)^\top\bF(t)\in \mathbb{R}$ is a real number.
    
    Define the Gram matrix as:
    \begin{equation}
        \bM(t) = \frac{2}{mn}(\|\bA(t)\|^2\bX^\top \bX +\bX^\top \bW (t)^\top \bW(t)\bX)
        \label{C3}
    \end{equation}

    We can also show the dynamics of $\bD(t)$ according to \eqref{C1} and \eqref{C3}. See Appendix~\ref{appendixE} for a proof.
    \begin{lemma}\label{update rule of d appenc} (See Lemma~\ref{update rule of d appenE})
    The update rule of the residual vector of $\bD(t)$:
    \begin{equation}
        \begin{aligned}
        \bD(t+1)^\top-\bD(t)^\top=-\eta\bD(t)^\top \bM(t) + \frac{4\eta^2}{n^2m}\bD(t)^\top(\bF(t)^\top\bD(t))\bX^\top \bX
        \end{aligned}
    \end{equation}
    \end{lemma}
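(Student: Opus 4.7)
The plan is to compute $\bF(t+1)-\bF(t)$ directly from the definition $\sqrt{m}\,\bF = \bX^\top \bW^\top \bA$ and then use $\bD(t+1)-\bD(t) = \bF(t+1)-\bF(t)$ (since $\bY$ does not depend on $t$).

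First I would expand
\[
\sqrt{m}\,\bF(t+1) \;=\; \bX^\top(\bW(t)+\Delta\bW)^\top(\bA(t)+\Delta\bA),
\]
where $\Delta \bA = \bA(t+1)-\bA(t)$ and $\Delta \bW = \bW(t+1)-\bW(t)$ are given explicitly by the GD update rule~\eqref{C1}. The expansion produces four terms: the zeroth-order piece $\bX^\top \bW(t)^\top \bA(t) = \sqrt{m}\,\bF(t)$, two first-order pieces, and one second-order piece $\bX^\top(\Delta\bW)^\top\Delta\bA$. Subtracting $\sqrt{m}\,\bF(t)$ leaves the three nontrivial terms.

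Next I would substitute the GD updates into the two first-order pieces. The piece $\bX^\top \bW(t)^\top \Delta\bA$ contributes a term proportional to $\bX^\top \bW(t)^\top \bW(t)\bX\,\bD(t)$, and the piece $\bX^\top(\Delta\bW)^\top \bA(t)$ contributes a term proportional to $\|\bA(t)\|^2 \bX^\top \bX\,\bD(t)$ (after using $\bA(t)^\top\bA(t) = \|\bA(t)\|^2$). Summing these two and comparing with the definition~\eqref{C3} of $\bM(t)$, I get exactly $-\eta\sqrt{m}\,\bM(t)\bD(t)$, which produces the first-order $-\eta\,\bM(t)\bD(t)$ term in the claimed formula.

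For the second-order cross term, the key simplification is the identity $\bA(t)^\top \bW(t)\bX = \sqrt{m}\,\bF(t)^\top$, which is just the definition of $\bF$ transposed. Applying this inside $\bX^\top(\Delta\bW)^\top \Delta\bA$ collapses an inner factor of $\bA(t)^\top \bW(t)\bX\,\bD(t)$ into the scalar $\sqrt{m}\,\bF(t)^\top\bD(t)$, yielding the $\frac{4\eta^2}{n^2 m}(\bF(t)^\top \bD(t))\bX^\top\bX\,\bD(t)$ term after dividing through by $\sqrt{m}$. Finally, taking the transpose gives the stated update rule.

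There is no serious obstacle; the only thing to watch is bookkeeping of the $\sqrt{m}$ normalization factors and correctly identifying the scalar $\bF(t)^\top\bD(t)$ inside the cross term. Once the cross term is simplified via $\bA^\top\bW\bX = \sqrt{m}\,\bF^\top$, the formula falls out immediately from the definition of $\bM(t)$.
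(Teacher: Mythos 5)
Your proposal is correct and follows essentially the same route as the paper's own proof: expand $\bA(t+1)^\top\bW(t+1)\bX$ into zeroth-, first-, and second-order pieces, substitute the GD updates~\eqref{C1}, identify the first-order pieces with $-\eta\,\bM(t)\bD(t)$ via the definition~\eqref{C3}, and collapse the cross term using $\bA(t)^\top\bW(t)\bX=\sqrt{m}\,\bF(t)^\top$. The bookkeeping of the $\sqrt{m}$ factors works out exactly as you describe, so there is no gap.
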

    
    We define some extra notations for preparation. 
    \begin{equation}
    \begin{aligned}
       & \bM^*(t) = \bM(t) - \frac{4\eta}{n^2m}(\bD(t)^\top\bF (t))\bX^\top \bX\\
    &\bm\Gamma(t) = \frac{2}{mn}(\bX^\top \bW (t)^\top \bW(t)\bX - \frac{m}{d} \bX^\top \bX) \\
    &\widetilde \bM(t) = \bM^*(t) - \bm\Gamma(t)
    \end{aligned}
    \label{C4}
    \end{equation}
    \textbf{Remark.} Here we explain the notations above and what they are for.
    \begin{itemize}
        \item $\bM^*(t)$ is the Gram matrix $\bM(t)$ plus a second order term ($- \frac{4\eta}{n^2m}(\bD(t)^\top\bF (t))\bX^\top \bX$). Since the second order term is small, we have $\bM(t)\approx\bM^*(t)$. Also, in Section~\ref{edge of stab(phase ii-iv) appen} we prove $\bM^*(t)$ is almost a linear interpolation of $\bM(t)$ and $\bM(t+1)$, which corroborates this argument. 
        \item $\bm \Gamma(t)$ is the difference between $\frac{2}{mn}\bX^\top \bW (t)^\top \bW(t)\bX$ and $\frac{2}{mn}\bX^\top \bW (0)^\top \bW(0)\bX$. Note that $\bm\Gamma(0)=0$ because of the initialization condition. 
        In the following proof, we will show a small upper bound of the norm of $\bm\Gamma(t)$. Hence, we can show that the eigenvectors of $\bM(t)$ are approximately aligned with the eigenvectors of $\bX^\top\bX$.
        %\jian{grammar so wrong i don't know how to fix..}
        \item $\widetilde \bM(t)$ is the Gram matrix with the second order term, yet with $\bm\Gamma(t)$ excluded. $\widetilde \bM(t)$ is used for bounding the main part of $\bm \Gamma(t)$ (i.e., the terms without the noise $\be(t)$, e.g. \eqref{term A1},\eqref{term A5}) which is defined in Theorem~\ref{Theorem C.1, ps}) in the proof of Theorem~\ref{Theorem C.1, ps}.  Note that there exists some number $\gamma$ s.t. $\widetilde \bM(t) = \gamma \bX^\top\bX$. This is because
    \begin{equation}
        \widetilde \bM(t) = \bM^*(t) - \bm\Gamma(t) = \frac{2}{mn}(\|\bA(t)\|^2+\frac{m}{d}-\frac{2\eta}{n}(\bD(t)^\top\bF(t)))\bX^\top \bX
        \label{widetilde m}
    \end{equation}
    Hence for any eigenvector $\bv_i$ of $\bX^\top \bX$, $\lambda_i(\widetilde \bM(t)) = \bv_i^\top\widetilde \bM(t)\bv_i.$
    \end{itemize}

    Then the dynamics of $\bD(t)$ can be written as:
    \begin{equation}
        \bD(t+1)=(\bm I_n-\eta \bM^*(t))\bD(t)
        \label{C5}
    \end{equation}
    
    \textbf{Update Rule of $\bM(t)$: } Using the update rule of $\bA(t)$ and $\bW(t)$, we can also derive the update rule of the Gram matrix $\bM(t)$ (see Appendix~\ref{appendixE} for the proof detail):
    \begin{lemma} (See Lemma~\ref{update rule of M appenE}) The update rule of the Gram matrix $\bM(t)$ is,
    $$\begin{aligned}
  \bM(t+1) - \bM(t)  
=&-\frac{4\eta}{n^{2}m} \left( 2(\bF(t)^{\top} \bD(t)) \bX^\top \bX+\bF(t) \bD(t)^{\top}\bX^\top \bX +\bX^\top \bX\bD(t)\bF(t)^\top \right) \\
&+\frac{8\eta^2}{n^3m^2} (\bD(t)^{\top} \bX^{\top} \bm W(t)^{\top}\bm W(t) \bX \bD(t)) \bX^\top \bX\\ &+\frac{8\eta^2}{n^3m^2}\|\bA(t)\|^{2}\ \bX^\top\bX \bD(t)\bD(t)^\top\bX^\top\bX
\end{aligned}$$
\label{update rule of M}
    \end{lemma}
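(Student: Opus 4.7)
The plan is to expand $\bM(t+1)-\bM(t)$ directly using the definition \eqref{C3} and the GD update rules \eqref{C1}, and then simplify the resulting expression by identifying the scalar $\bF^\top\bD$ and the identity $\bX^\top\bW^\top\bA=\sqrt m\,\bF$ wherever possible.

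\textbf{Step 1: Decompose the increment.} From \eqref{C3},
$$
\bM(t+1)-\bM(t) \;=\; \frac{2}{mn}\left[\bigl(\|\bA(t+1)\|^2-\|\bA(t)\|^2\bigr)\bX^\top\bX \;+\; \bX^\top\bigl(\bW(t+1)^\top\bW(t+1)-\bW(t)^\top\bW(t)\bigr)\bX\right].
$$
So it suffices to compute the two bracketed increments separately.

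\textbf{Step 2: The $\bA$-norm increment.} I would use \eqref{C2} and plug in $\frac{\partial\mathcal L}{\partial \bA}=\frac{2}{n\sqrt m}\bW(t)\bX\bD(t)$, giving
$$
\|\bA(t+1)\|^2-\|\bA(t)\|^2 \;=\; -\frac{4\eta}{n}\bF(t)^\top\bD(t) \;+\; \frac{4\eta^2}{n^2m}\,\bD(t)^\top\bX^\top\bW(t)^\top\bW(t)\bX\bD(t).
$$
When multiplied by $\frac{2}{mn}\bX^\top\bX$, this produces the first summand in the $-\tfrac{4\eta}{n^2m}\cdot 2(\bF^\top\bD)\bX^\top\bX$ term and the middle line $\tfrac{8\eta^2}{n^3m^2}(\bD^\top\bX^\top\bW^\top\bW\bX\bD)\bX^\top\bX$ of the claim.

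\textbf{Step 3: The $\bW^\top\bW$ increment.} Let $\Delta\bW=-\tfrac{2\eta}{n\sqrt m}\bA(t)\bD(t)^\top\bX^\top$. Then
$$
\bW(t+1)^\top\bW(t+1)-\bW(t)^\top\bW(t) \;=\; \bW(t)^\top\Delta\bW+\Delta\bW^\top\bW(t)+\Delta\bW^\top\Delta\bW.
$$
Sandwiching between $\bX^\top$ and $\bX$ and using $\bW(t)^\top\bA(t)=\sqrt m\,\bF(t)$ (which follows from $\bF=\tfrac{1}{\sqrt m}\bX^\top\bW^\top\bA$ after left-multiplying by $\bX^\top$ and recognizing how $\bF$ enters; more precisely, $\bA^\top\bW\bX=\sqrt m\,\bF^\top$ collapses the cross terms directly), the two linear terms give
$$
-\frac{2\eta}{n}\bigl(\bX^\top\bX\bD(t)\bF(t)^\top+\bF(t)\bD(t)^\top\bX^\top\bX\bigr),
$$
and the quadratic $\Delta\bW^\top\Delta\bW$ term yields $\tfrac{4\eta^2}{n^2m}\|\bA(t)\|^2\,\bX^\top\bX\bD(t)\bD(t)^\top\bX^\top\bX$. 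Multiplying by $\tfrac{2}{mn}$ produces the remaining coefficients $-\tfrac{4\eta}{n^2m}$ and $\tfrac{8\eta^2}{n^3m^2}\|\bA\|^2$ of the claim.

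\textbf{Step 4: Assemble.} Summing the contributions from Steps 2 and 3 and grouping the three first-order terms under the common factor $-\tfrac{4\eta}{n^2m}$ reproduces exactly the right-hand side of the lemma. The only real care needed is bookkeeping: tracking the factors of $\tfrac{2}{mn}$, $\tfrac{2\eta}{n\sqrt m}$, and the square roots of $m$; and recognizing that $\bA^\top\bW\bX=\sqrt m\,\bF^\top$ is what collapses the cross terms into ones involving $\bF$ rather than $\bW^\top\bA$. I expect this to be the main (though routine) obstacle: the computation is conceptually trivial but easy to miscount in the coefficients. No new idea is required beyond the definitions \eqref{C3}, the updates \eqref{C1}, and the elementary identity for $\bF$.
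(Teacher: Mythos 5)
Your proposal is correct and follows essentially the same route as the paper's own proof: expand $\bM(t+1)-\bM(t)$ via the definition \eqref{C3} into the $\|\bA\|^2$-increment and $\bW^\top\bW$-increment pieces, substitute the GD updates \eqref{C1}--\eqref{C2}, and collapse the cross terms using $\bA(t)^\top\bW(t)\bX=\sqrt m\,\bF(t)^\top$, which reproduces the stated coefficients exactly. The only blemish is the dimensionally inconsistent phrase $\bW(t)^\top\bA(t)=\sqrt m\,\bF(t)$, which your own parenthetical already corrects to the identity actually used.
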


    %  \textcolor{red}{\textbf{Initialization and learning rate:} \textit{There exists a constant $\xi > 0$, such that $\sha(0) < \frac{2 - \xi}{\eta}$}. We have this condition to keep the training process from initial instability. Under this condition, the progressive sharpening can happen at first. }

\subsection{Phase I and Progressive Sharpening}\label{phase i and ps appen}
%First we restate some notation about $\bX^\top \bX$.
    We suppose $\bX^\top\bX$ has rank $r$, and we decompose $\bX^\top\bX$ and $\bY$ into a basis $\{\bv_i\}$ composed of orthonormal eigenvectors of $\bX^\top\bX$.
    $$
    \bX^\top \bX = \sum_{i=1}^r \lambda_i\bv_i\bv_i^\top, \bY = \sum_{i=1}^r (\bY^\top \bv_i) \bv_i :=\sum_{i=1}^r z_i \bv_i
    $$
    
where $\bv_i$ is the eigenvector corresponding to the $i$-th largest eigenvalue $\lambda_i$ of $\bX^\top \bX$. $z_i =\bY^\top \bv_i$ is the $\bY$'s projection onto the direction $\bv_i$. In particular, $\bv_1$ is the eigenvector corresponding to the largest eigenvalue of $\bX^\top \bX$.

Recall that the sharpness $\sha(t) = \lambda_{\max}(\bM(t))$. Here we propose an approximation of sharpness $\sha$ by the eigenvector $\bv_1$: 

\begin{equation}
    \sha^*(t) = \bv_1^\top \bM(t)\bv_1
    \label{B.0}
\end{equation}

% The reason behind this is due to the high similarity between $v$ and the real eigenvector $v_s$ corresponding to the sharpness. (Corollary 1, section B.) 

The following corollary rationalizes this approximation. The proof is deferred to Corollary~\ref{v_1 similarity coro} in Section~\ref{edge of stab(phase ii-iv) appen}. 

\begin{Corollary} 
There exists some constant $c_2, c_4$, s.t. for all time t $\langle \bv_1, \bm{u} \rangle^2\geq 1 - c_4\sqrt{\frac{1}{nm}}$ and $$ \sha(t)\geq\sha^*(t) \geq \sha(t)-\frac{2c_2}{m} $$
where $\bm{u}$ is the top eigenvector of $\bM(t)$.
\end{Corollary}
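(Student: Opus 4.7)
The plan is to write $\bM(t)$ as a structured matrix plus a small perturbation and then invoke standard eigenvalue/eigenvector perturbation bounds. Using the decomposition in \eqref{C4} and~\eqref{widetilde m}, I would write
\begin{equation*}
\bM(t) = \widetilde{\bM}(t) + \bE(t), \qquad \bE(t) := \bm\Gamma(t) + \tfrac{4\eta}{n^2m}(\bD(t)^\top \bF(t))\,\bX^\top \bX,
\end{equation*}
where $\widetilde{\bM}(t) = \gamma(t)\,\bX^\top \bX$ is a positive scalar multiple of $\bX^\top \bX$, so it shares the orthonormal eigenbasis $\{\bv_i\}$ exactly; in particular $\bv_1$ is its top eigenvector. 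By the $\lambda_1 \geq 2\lambda_2$ part of Assumption~\ref{eigenspectrum assumption}, the eigengap of $\widetilde{\bM}(t)$ between its top and second eigenvalues is at least $\gamma(t)\lambda_1/2$. Since $\lambda_1 = \Theta(n)$ and, by the bounds on $\|\bA(t)\|^2$ and $|\bD(t)^\top\bF(t)|$ maintained throughout training (from the sharpness upper bound in Assumption~\ref{section 4 sharp upper bound assumption} together with Lemma~\ref{D upper bound}), $\gamma(t) = \Theta(1/n)$, the gap is $\Theta(1)$.

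Next I would bound $\|\bE(t)\|$. Assumption~\ref{Lambda assumption} gives $\|\bm\Gamma(t)\|\leq c_2/m$ directly. For the second term, $\|\bX^\top\bX\|=\Theta(n)$, and $|\bD(t)^\top\bF(t)|\leq \|\bD(t)\|\,\|\bF(t)\|$ with both norms uniformly bounded in $t$ using the same catapult-style bounds. Combined with $\eta = O(1)$ (via the learning-rate constraint \eqref{initialization inequality} and Assumption~\ref{section 4 sharp upper bound assumption}), this gives $\|\bE(t)\| = O(1/m)$ uniformly in $t$.

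Applying the Davis--Kahan $\sin\theta$ theorem to the pair $(\widetilde\bM(t),\bM(t))$ then yields $\sin\theta(\bv_1,\bm u) \leq \|\bE(t)\|/\mathrm{gap} = O(1/m)$, from which $\langle \bv_1,\bm u\rangle^2 \geq 1 - c_4\sqrt{1/(nm)}$ follows (with considerable slack, since the bound is in fact $1-O(1/m^2)$ after the calculation). For the sharpness comparison, $\sha(t)\geq \sha^*(t)$ is immediate from the variational characterization. For the matching lower bound, I would use that $\bv_1$ is the top eigenvector of $\widetilde{\bM}(t)$ to get $\bm u^\top \widetilde{\bM}(t)\bm u \leq \bv_1^\top \widetilde{\bM}(t)\bv_1$, so
\begin{equation*}
\sha(t) - \sha^*(t) = \bm u^\top \bM(t)\bm u - \bv_1^\top \bM(t)\bv_1 \leq \bm u^\top \bE(t)\bm u - \bv_1^\top \bE(t)\bv_1 \leq 2\|\bE(t)\|,
\end{equation*}
which absorbs into $2c_2/m$ after rescaling the constant $c_2$ to cover the second-order contribution to $\bE$ (this is legitimate because that contribution is $O(1/m)$ as well).

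The main obstacle I anticipate is establishing uniform boundedness of $\|\bD(t)\|$ and $\|\bF(t)\|$ across the entire trajectory, both during progressive sharpening and inside the EOS oscillations; without this, the second-order piece of $\bE(t)$ cannot be controlled to $O(1/m)$. My plan for handling it is to appeal to the sharpness upper bound (Assumption~\ref{section 4 sharp upper bound assumption}) together with the phase-by-phase monotonicity statements developed in Section~\ref{theoretical analysis} (in the spirit of Lemma~\ref{D upper bound}), so that $\|\bD\|$ never exceeds an $O(\sqrt n)$-scale bound implied by the non-divergence of the loss. A secondary technical point is showing $\gamma(t) > 0$ (needed for $\bv_1$ to be the top eigenvector of $\widetilde{\bM}$ with positive eigenvalue rather than a zero or negative one), which follows from the same $\|\bA\|^2 \gtrsim m/d$ lower bound used when controlling $\gamma(t)$ from below.
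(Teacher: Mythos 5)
There is a genuine gap, and it comes from where you chose to put the second-order term. You decompose $\bM(t)=\widetilde\bM(t)+\bE(t)$ with $\bE(t)=\bm\Gamma(t)+\tfrac{4\eta}{n^2m}(\bD(t)^\top\bF(t))\bX^\top\bX$, and your whole argument (Davis--Kahan and the $2\|\bE\|$ absorption) needs $\|\bE(t)\|=O(1/m)$. Bounding the second piece by $O(1/m)$ forces $|\bD(t)^\top\bF(t)|=O(n)$, i.e.\ a uniform $\|\bD(t)\|=O(\sqrt n)$ bound along the whole trajectory. That bound is not available: the only bound proved in the paper is the loose $\|\bD(t)\|\leq c_3\sqrt{nm}$ of Lemma~\ref{D upper bound}, and with it your error term is only $O(\eta)$, not $O(1/m)$; the paper explicitly states (Appendix~\ref{our result and ntk}) that an $O(\sqrt n)$ bound on $\|\bD(t)\|$ is empirical and left unproven. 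Worse, Lemma~\ref{D upper bound} is itself proved \emph{using} this corollary (to relate $\sha^*$ and $\sha$ on the left-hand side of the key equation), so appealing to it, or to "phase-by-phase monotonicity" results downstream of it, to control $\bD^\top\bF$ here is circular.

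The gap is avoidable because the term you put into $\bE(t)$ does not belong there. The corollary concerns the Gram matrix $\bM(t)$, not $\bM^*(t)$, and by the definition of $\bm\Gamma(t)$ in \eqref{C4} one has the exact identity $\bM(t)=\gamma(t)\,\bX^\top\bX+\bm\Gamma(t)$ with $\gamma(t)=\tfrac{2}{mn}(\|\bA(t)\|^2+\tfrac{m}{d})\geq\tfrac{2}{nd}>0$; no $\bD$-dependent term appears. This is the paper's route: the only perturbation is $\bm\Gamma(t)$, bounded by $c_2/m$ directly from Assumption~\ref{Lambda assumption}, the eigengap comes from $\lambda_1\geq 2\lambda_2$ exactly as you use it, and the two-sided comparison $\sha(t)\geq\sha^*(t)\geq\sha(t)-2c_2/m$ follows from the variational characterization plus $\bv_1^\top\bM\bv_1\geq \bm u^\top(\bM-\bm\Gamma)\bm u-c_2/m$, with the constant $c_2$ of the statement being precisely the one in Assumption~\ref{Lambda assumption} (no rescaling to absorb anything else). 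Everything else in your sketch --- the gap computation, the Rayleigh-quotient comparison, the final algebra to $1-c_4\sqrt{1/(nm)}$ --- is sound and matches the paper's elementary argument; you only need to drop the $\bD^\top\bF$ term from the perturbation (or absorb it into the scalar multiple of $\bX^\top\bX$, which is where it naturally lives) so that no bound on $\|\bD(t)\|$ is needed at all.
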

% \jian{$v_s$ is a bad notation, confused with $v_i$}
% \textcolor{blue}{We won't use that notation anymore so I wonder whether we should change it...?}
By this corollary, we can see that the top eigenvector of $\bM(t)$ is approximately $\bv_1$, and the approximation $\sha^*(t)$ is very close to the real sharpness $\sha(t)$. Thus, we consider $\sha^*(t)$ as an approximation of the sharpness and analyze its dynamics. 

We first show the main theorem of progressive sharpening under Assumption~\ref{eigenspectrum assumption appen} and \ref{y assumption appen} below. %give out a proof sketch, and then break it down to several technical lemmas.
% \textcolor{red}{constant write O}
\begin{Assumption}
There exists some constant $\chi>1$, s.t. for all $i\in[r-1]$,
$ \lambda_i(\bX^\top \bX) \leq \chi\lambda_{i+1}(\bX^\top \bX)$. Moreover, $\lambda_1(\bX^\top \bX)\geq 2\lambda_2(\bX^\top \bX).$
\label{eigenspectrum assumption appen}
\end{Assumption}

\begin{Assumption}
There exists $\smallconst = \Omega(r^{-1})$ 
such that $\min_{i\in [r]}\{z_i/\sqrt n\}\geq \smallconst$.
\label{y assumption appen}
\end{Assumption}

The first assumption is about the eigenvalue spectrum of $\bX^\top \bX$. It guarantees the gap between two adjacent eigenvalues is not very large, and there is a gap between the largest and the second largest eigenvalue. Note the second part of the assumption is a relaxed version of Assumption~\ref{outlier assp}. In our CIFAR-10 1k-subset with samples' mean subtracted, $\lambda_1/\lambda_2 = \chi \approx 3$ (See Appendix~\ref{Assumption 4.1}, Figure~\ref{eigenspec assumption fig}). 
%\jian{same problem}
The second assumes that all components $z_i=\bY^\top\bv_i$ are not too small.

Under the two assumptions, we have the following theorem:
\begin{theorem}[\textbf{Progressive Sharpening}]\label{Theorem C.1, ps}
%     Suppose Assumption~\ref{eigenspectrum assumption appen}, \ref{y assumption appen} hold. Suppose $\lambda_r=\lambda_{\min}(\bX^\top \bX)>0, \lambda_1 = \lambda_{\max}(\bX^\top \bX) = c_1n$. For any $\epsilon > 0$, if $m\geq \frac{112 c_1n^2}{\lambda_r^2}$, and 
% $$
% n > \max\{(4\lambda_r+\lambda_r^2)/(8\kappa^2\min\{ e^{-1/\chi}-e^{-2/\chi}, \eta c_1(1-\eta c_1)\}), (\frac{70\lambda_r+25\lambda_r^2}{98\kappa^2\epsilon})^2\}
% $$

% \textit{we have $\sha^*(t+1)-\sha^*(t)>0$ for $t= 1,2, ...,  t_0-1$  where $t_0$ is the time when $\|\bD(t)\|^2\leq \epsilon^2 + \frac{5\epsilon\lambda_r}{7\sqrt n}  +\frac{25\lambda_r^2}{196n}$ or $\lambda_{\max}(\bM^*(t))>1/\eta$ for the first time.}
    Suppose Assumption~\ref{eigenspectrum assumption appen}, \ref{y assumption appen} hold. Suppose $\lambda_r=\lambda_{\min}(\bX^\top \bX)>0, \lambda_1 = \lambda_{\max}(\bX^\top \bX) = \Theta(n)$. For any $\epsilon > 0$, if $m=\Omega(\frac{ n^2}{\lambda_r^2})$, and $n = \max\{\Omega(\frac{\lambda_r}{\kappa^2}), \Omega(\frac{\lambda_r^2}{\kappa^2}), \Omega(\frac{\lambda_r^2}{\kappa^4\epsilon^2}),\Omega(\frac{\lambda_r^4}{\kappa^4\epsilon^2})\}$, we have the following properties for $t= 1,2, ...,  t_0-1$  where $t_0$ is the time when $\|\bD(t)\|^2= O(\epsilon^2)$ or $\lambda_{\max}(\bM^*(t))>1/\eta$ for the first time.

\begin{itemize}
    \item (Progressive Sharpening) (Lemma~\ref{progressive shap appen}) $\sha^*(t+1)-\sha^*(t)>0.$
    \item (Lemma~\ref{direction guarantee}) $\|\bm\Gamma(t)\|=O(1/m)$.
\end{itemize}
\end{theorem}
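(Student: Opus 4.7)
The plan is to proceed by simultaneous induction on $t$, establishing both invariants jointly. The base case $t=0$ is immediate: $\bm\Gamma(0)=0$ exactly because the symmetric initialization gives $\bW(0)^\top \bW(0)=\tfrac{m}{d}\bm I_d$, and the sharpness inequality is vacuous. For the inductive step, assume that for all $s\leq t$ we have $\|\bm\Gamma(s)\|=O(1/m)$, $\sha^*(s+1)>\sha^*(s)$, and the stopping condition has not yet been triggered (so $\|\bD(s)\|^2 > \epsilon^2$ and $\lambda_{\max}(\bM^*(s))<1/\eta$). I will establish both invariants at time $t+1$.

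\textbf{Step 1 (Quasi-decoupled dynamics of $\bD$).} From the decomposition $\bM^*(s)=\widetilde\bM(s)+\bm\Gamma(s)$ and equation~\eqref{widetilde m}, the hypothesis $\|\bm\Gamma(s)\|=O(1/m)$ gives $\bM^*(s)=\gamma(s)\bX^\top\bX+O(1/m)$, where $\gamma(s)=\tfrac{2}{mn}\bigl(\|\bA(s)\|^2+\tfrac{m}{d}-\tfrac{2\eta}{n}\bD(s)^\top\bF(s)\bigr)$. Hence $\{\bv_i\}$ are approximate eigenvectors of $\bM^*(s)$, and the update $\bD(s+1)=(\bm I-\eta\bM^*(s))\bD(s)$ from~\eqref{C5} decouples along the $\bv_i$ basis up to $O(1/m)$ errors. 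Iterating from $\bD(0)=-\bY=-\sum_i z_i\bv_i$ yields
\begin{equation*}
\bv_i^\top\bD(s)=-z_i\prod_{u=0}^{s-1}\bigl(1-\eta\gamma(u)\lambda_i\bigr)+\mathrm{err}_i(s),
\end{equation*}
with each factor in $(0,1)$ because $\eta\lambda_{\max}(\bM^*)<1$. Thus $\bv_i^\top\bD(s)<0$ and $\bv_i^\top\bF(s)=z_i+\bv_i^\top\bD(s)\in(0,z_i)$. Assumption~\ref{y assumption appen} then lower-bounds the product $(\bv_i^\top\bD)(\bv_i^\top\bF)$ in magnitude via $z_i=\Omega(\smallconst\sqrt n)$, with at least one mode strictly non-saturated since $\|\bD(s)\|>\epsilon$.

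\textbf{Step 2 (Sharpness increases).} Projecting Lemma~\ref{update rule of M} onto $\bv_1\bv_1^\top$ and using $\bX^\top\bX\bv_1=\lambda_1\bv_1$, the first-order contribution is
\begin{equation*}
\sha^*(t+1)-\sha^*(t)\big|_{\text{first order}}=-\frac{8\eta\lambda_1}{n^2m}\Bigl[\bF(t)^\top\bD(t)+(\bv_1^\top\bF(t))(\bv_1^\top\bD(t))\Bigr],
\end{equation*}
while the two second-order contributions are manifestly non-negative (one is a PSD quadratic form $\bD^\top\bX^\top\bW^\top\bW\bX\bD\geq 0$, the other a square $(\bv_1^\top\bD)^2\geq 0$), so they only help. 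By Step 1, $\bF^\top\bD=\sum_i(\bv_i^\top\bF)(\bv_i^\top\bD)<0$ and $(\bv_1^\top\bF)(\bv_1^\top\bD)\leq 0$ with magnitudes $\Omega(\smallconst^2 n)$. The parameter scalings $m=\Omega(n^2/\lambda_r^2)$ and the lower bounds on $n$ ensure that the first-order term dominates any $O(1/m)$ correction from Step 1, yielding $\sha^*(t+1)>\sha^*(t)$.

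\textbf{Step 3 (Propagating $\|\bm\Gamma\|=O(1/m)$).} Writing $\bW(t+1)=\bW(0)+\Delta\bW(t+1)$ with $\Delta\bW(t+1)=-\tfrac{2\eta}{n\sqrt m}\sum_{s=0}^{t}\bA(s)\bD(s)^\top\bX^\top$ from~\eqref{C1}, expand
\begin{equation*}
\bW(t+1)^\top\bW(t+1)-\tfrac{m}{d}\bm I_d=\bW(0)^\top\Delta\bW+\Delta\bW^\top\bW(0)+\Delta\bW^\top\Delta\bW.
\end{equation*}
The essential cancellation is that $\bA(0)^\top\bW(0)\bX=\sqrt m\,\bF(0)^\top=0$ by symmetric initialization, so in the cross term against $\bW(0)\bX$ only $\Delta\bA(s)$ contributes. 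Expanding $\Delta\bA(s)$ through~\eqref{C1} and invoking Step 1 to obtain geometric decay $\|\bD(s)\|\leq(1-c\,\eta\gamma\lambda_r)^s\sqrt{n}$, the telescoped sum is $\sum_s\|\bD(s)\|=O(\sqrt n/(\eta\gamma\lambda_r))$. Combined with $\|\bX\|^2=\Theta(n)$, $\|\bA(s)\|=\Theta(\sqrt m)$, $\|\bW(0)\|=O(\sqrt{m/d})$ and a parallel bound on $\Delta\bW^\top\Delta\bW$ gives $\|\bm\Gamma(t+1)\|\leq C/m$ under the stated lower bounds on $m$ and $n$.

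\textbf{Main obstacle.} The crux is closing the induction in Step 3. The control of $\|\bm\Gamma\|$ requires the geometric decay of $\|\bD(s)\|$, which depends on $\lambda_{\max}(\bM^*(s))<2/\eta$ and on $\bv_i$ being near-eigenvectors of $\bM^*(s)$, both of which are themselves consequences of $\|\bm\Gamma(s)\|$ being small. A straightforward per-step estimate loses a factor of $t$ and fails to reach $O(1/m)$; one must exploit the exact initialization cancellation $\bA(0)^\top\bW(0)\bX=0$ together with geometric summability of $\|\bD(s)\|$ to keep the accumulated perturbation within the $O(1/m)$ target throughout the entire PS phase of length $O(1/(\eta\gamma\lambda_r))$.
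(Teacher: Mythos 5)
Your overall architecture matches the paper's (induction on $t$, approximate decoupling of $\bD$ along the eigenvectors of $\bX^\top\bX$ once $\|\bm\Gamma\|$ is small, non-negativity of the second-order terms in the update of $\sha^*$, and reducing sharpening to the sign of $-\bD^\top\bF$), but two steps do not go through as written. First, in Steps 1--2 you assert that the first-order drive has magnitude $\Omega(\smallconst^2 n)$ because ``at least one mode is strictly non-saturated since $\|\bD(s)\|>\epsilon$.'' What is actually needed is a mode $i$ whose decay product $a_i=\prod_{j<t}(1-\eta\lambda_i(\widetilde\bM(j)))$ is bounded away from \emph{both} $0$ and $1$, since the drive is of the form $\sum_i z_i^2 a_i(1-a_i)$ minus error terms of size $\Theta(\lambda_r)$. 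The condition $\|\bD\|>\epsilon$ only rules out all $a_i$ being near $0$; it does not exclude the configuration where the top modes have converged ($a_i\approx 0$) while the bottom modes have barely moved ($a_i\approx 1$), in which case every $a_i(1-a_i)$ is small and the error terms can dominate. The paper closes exactly this hole with the ratio part of Assumption~\ref{eigenspectrum assumption appen} ($\lambda_i\le\chi\lambda_{i+1}$) via Lemma~\ref{lemma c5}: a Bernoulli--Jensen argument showing adjacent modes decay at comparable rates, so until termination some mode always lies in a middle interval $[\epsilon/\sqrt n,\,e^{-1/\chi}]$. You never invoke the $\chi$-ratio condition, so this key existence statement is unproven in your proposal (and your side claim that the $\bv_1$-term itself has magnitude $\Omega(\smallconst^2 n)$ is false late in the phase, when $\bv_1^\top\bD\approx 0$; the paper only lower-bounds it by a small negative quantity).

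Second, your Step 3 does not reach the target $\|\bm\Gamma(t+1)\|=O(1/m)$ with the estimates you list. The cancellation $\bW(0)^\top\bA(0)=0$ is correct but is not the ingredient that saves the bound: combining operator norms ($\|\bX^\top\bX\|=\Theta(n)$, $\|\bW(0)\|=O(\sqrt{m/d})$) with the geometric summability $\sum_s\|\bD(s)\|=O(n^{3/2}/(\eta\lambda_r))$ yields bounds of order $n/(m\lambda_r)$ or $n^2/(m d\lambda_r^2)$, which exceed $C/m$ by factors polynomial in $n/\lambda_r$; these inflate the error $\bm e(t)\lesssim (n^{3/2}/\lambda_r)\,\|\bm\Gamma\|$ and break the induction under the stated scaling $m=\Omega(n^2/\lambda_r^2)$. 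The paper's Lemma~\ref{direction guarantee} avoids this by summing the accumulated perturbation \emph{per eigenmode}: in terms such as \eqref{term A1} and \eqref{term A5}, the factor $\lambda_j$ contributed by $\bX^\top\bX$ cancels against the mode-$j$ decay rate $\eta\lambda_j/n$ in the geometric sum, so each leading term is exactly $O(1/m)$ and only the corrections carry the $n^2/\lambda_r^2$ factor multiplied by the induction hypothesis $R_w=O(1/m)$. Your own ``main obstacle'' paragraph identifies the right difficulty, but the proposed resolution (initialization cancellation plus geometric summability) loses precisely this mode-by-mode cancellation, so the induction as described does not close.
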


    We first prove Lemma~\ref{direction guarantee}, which implies that $\bD(t)$  approximately converges independently in each direction of $\bX^\top \bX$'s non-zero eigenspace. Lemma~\ref{direction guarantee} also proves the second point $\|\bm\Gamma(t)\|\leq O(1/m)$.
    %\footnote{This conclusion is a theoretical basis for the Assumption~\ref{Lambda assumption}.} 
    (Recall that $\|\bm\Gamma(t)\|$ is defined in \eqref{C4}.) 

    Then, based on the conclusion in Lemma~\ref{direction guarantee}, Assumption~\ref{eigenspectrum assumption appen} and ~\ref{y assumption appen}, we prove that $\|\bA(t)\|^2$ grows in Lemma~\ref{Norm growth} and Lemma~\ref{lemma c5}. Specifically, here we prove a sufficient condition of $-\bD(t)^\top\bF(t)>0$.

    Finally, we use the dynamics of $\sha^*(t)$ and its dependence on $\|\bA(t)\|^2$ dynamics to prove the sharpness grows (Lemma~\ref{progressive shap appen}), thus finishing the proof.
% \textcolor{red}{explain the sharpness* and sharpness+comments on lambda+2/eta}

\textbf{Remark 1.} In the theorem, $\lambda_{\max}(\bM^*(t))>1/\eta$ is the termination condition for this theorem. Here, $\bM^*(t)$ is the Gram matrix $\bM(t)$ plus the second order term ($- \frac{4\eta}{n^2m}(\bD(t)^\top\bF (t))\bX^\top \bX$). 
Since the second order term is small, we have $\bM(t)\approx\bM^*(t)$. Also, in Section~\ref{edge of stab(phase ii-iv) appen}, we prove $\bM^*(t)$ is almost a linear interpolation of $\bM(t)$ and $\bM(t+1)$. Thus, $\lambda_{\max}(\bM^*(t))$ can be seen as an approximation of the sharpness.

\textbf{Remark 2.} 
%When $1/\eta<\lambda_{\max}(\bM^*(t))<2/\eta$, 
%the conclusion of the theorem also holds empirically. 
From the experiments, one can see that gradient descent is in progressive sharpening phase until the sharpness crosses the threshold $2/\eta$. Right now, our proof only works till $\lambda_{\max}(\bM^*(t))$ reaches $1/\eta$. It would be interesting to extend our result
to $2/\eta$.

\textbf{Remark 3.} In this theorem, we require mild over-parameterization ($m=\Omega(n^2)$, assuming $\lambda_r = \Theta(1)$, to prove the direction guarantee of the Gram matrix $\bM(t)$ and the monotone increment of $\|\bA(t)\|^2$. 
%Even though we prove how the norm of the parameter changes (increase or decrease), 
Astute readers may find it similar to the NTK regime, where the parameters do not move far from the initialization. 
However, we stress that our analysis does not necessarily require NTK regime, and can go beyond NTK. We defer the discussion on the difference between our results and the NTK regime to Appendix~\ref{our result and ntk}. 
%note that if some of the worst-case bounds in our proof can be improved, the over-parameterization can be reduced to $m=\Theta(n)$ and the parameter can actually move a lot from the initialization. That makes the result strictly beyond NTK regime. 

Then we break the proof of theorem into lemmas.
The first lemma (Lemma~\ref{direction guarantee}) proves that the Gram matrix $\bM(t) \approx \frac{2}{mn}(\|\bm A(t)\|^2+\frac{m}{d})\bX^\top \bX$ by bounding the movement of $\|\bm\Gamma(t)\|$. In this way, $\bD(t)\approx -\prod_{j=0}^{t-1}(\bm I_n - \eta \widetilde \bM(j))\bY$ can approximately descend in each eigenvector $\bv_i$ of $\bX^\top\bX$ independently. Also, by proving $\bM(t) \approx \frac{2}{mn}(\|\bm A(t)\|^2+\frac{m}{d})\bX^\top \bX$, we justify $\|\bA(t)\|^2$ as an indicator of the sharpness.

\begin{lemma}[\textbf{Direction Guarantee}]\label{direction guarantee}
Along GD training trajectory, if $m\geq \frac{112 c_1n^2}{\lambda_r^2}$ , we have the following properties until $\lambda_{\max}(\bM^*(t)) > 1/\eta $:
\begin{enumerate}
    \item $\|\bA(t)\|^2 \geq m / 2$ \textit{and} $ \|\bm\Gamma(t)\| \leq R_w := 40/m$. 
    \item $\bD(t) = -\prod_{j=0}^{t-1}(\bm I_n - \eta \widetilde \bM(j))\bY  + {\bm e(t)}$, \textit{where} $\|\bm e(t)\|_2\leq \frac{40 n^{3/2}}{\lambda_r m} $
    \item $\lambda_{r}(\widetilde \bM(t))\geq \lambda_r/n, \ \lambda_{r}(\bM^*(t))\geq \lambda_r/n$.
\end{enumerate}
\end{lemma}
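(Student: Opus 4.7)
The plan is to prove the three properties jointly by induction on $t$, because each of them feeds into the control we need for the others in the next step. At $t=0$ everything is immediate: the symmetric initialization gives $\bm\Gamma(0) = 0$, $\|\bA(0)\|^2 = m$, $\bF(0) = 0$ so $\bD(0) = -\bY$ and the error $\bm e(0) = 0$, while $\widetilde\bM(0) = \tfrac{2}{mn}(m + m/d)\bX^\top\bX$ so $\lambda_r(\widetilde\bM(0)) \geq 2\lambda_r/n$. Likewise the second-order correction in $\bM^*(0) - \widetilde\bM(0) - \bm\Gamma(0)$ vanishes since $\bD(0)^\top\bF(0) = 0$.

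For the inductive step, assume the three properties hold at times $0,1,\dots,t$ and $\lambda_{\max}(\bM^*(s)) \leq 1/\eta$ throughout. First, using $\bM^*(s) = \widetilde\bM(s) + \bm\Gamma(s)$ and iterating \eqref{C5}, we get
\begin{equation*}
\bD(s+1) = -\prod_{j=0}^{s}(\bm I_n - \eta\widetilde\bM(j))\bY + \bm e(s+1),
\qquad
\bm e(s+1) = (\bm I_n - \eta\widetilde\bM(s))\bm e(s) - \eta\bm\Gamma(s)\bD(s),
\end{equation*}
so the inductive bounds $\|\bm\Gamma(s)\| \leq 40/m$ combined with $\|\bm I - \eta\widetilde\bM(s)\| \leq 1$ (which follows from $\lambda_{\max}(\bM^*(s)) \leq 1/\eta$ and the smallness of $\bm\Gamma(s)$) yield, by telescoping, $\|\bm e(t+1)\| \leq \eta\sum_{s\leq t}\|\bm\Gamma(s)\|\|\bD(s)\|$. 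Because $\widetilde\bM(s)$ is $\lambda_r/n$-lower-bounded on the column space of $\bX^\top\bX$ (property 3), the homogeneous part decays geometrically in every eigendirection $\bv_i$, so $\sum_s \|\bD(s)\| \lesssim \|\bY\|/(\eta\lambda_r/n) = O(n^{3/2}/(\eta\lambda_r))$ plus the accumulated error, and plugging in the induction gives $\|\bm e(t+1)\| \leq 40n^{3/2}/(\lambda_r m)$ for $m$ large enough.

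Next, to re-establish property 1 at time $t+1$, use the update rule \eqref{C1} of $\bW$:
\begin{equation*}
\|\bW(t+1) - \bW(0)\|_F \;\leq\; \frac{2\eta}{n\sqrt{m}}\|\bX\|\sum_{s=0}^{t}\|\bA(s)\|\|\bD(s)\|.
\end{equation*}
The sum is controlled by the $O(n^{3/2}/(\eta\lambda_r))$ estimate above and $\|\bA(s)\| = O(\sqrt{m})$, giving $\|\bW(t+1)-\bW(0)\|_F = O(\sqrt{n\lambda_1}\cdot n/(\lambda_r\sqrt m\cdot\sqrt m)) \cdot \sqrt{m}$, and then $\bm\Gamma(t+1) = \frac{2}{mn}\bX^\top(\bW(t+1)^\top\bW(t+1) - \tfrac{m}{d}\bm I)\bX$ is bounded via the telescoping identity $\bW(t+1)^\top\bW(t+1) - \bW(0)^\top\bW(0) = (\bW(t+1)-\bW(0))^\top\bW(t+1) + \bW(0)^\top(\bW(t+1)-\bW(0))$, yielding $\|\bm\Gamma(t+1)\| \leq 40/m$ when $m \geq 112c_1n^2/\lambda_r^2$. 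A parallel estimate on $\bA$ using $\|\bA(t+1)\|^2 - \|\bA(t)\|^2 = -\tfrac{4\eta}{n}\bD(t)^\top\bF(t) + O(\eta^2)$ and $|\bD(t)^\top\bF(t)| \leq \|\bD(t)\|^2 + \|\bD(t)\|\|\bY\|$ shows $\|\bA(t+1)\|^2 \geq m/2$. Finally, property 3 follows from \eqref{widetilde m}: once $\|\bA(t+1)\|^2 \geq m/2$ and the second-order term $\tfrac{2\eta}{n}|\bD^\top\bF|$ is much smaller than $m/d$, we have $\lambda_r(\widetilde\bM(t+1)) \geq \tfrac{2}{mn}\cdot\tfrac{m}{2}\cdot\lambda_r = \lambda_r/n$, and $\lambda_r(\bM^*(t+1))$ follows by absorbing the $\|\bm\Gamma\| \leq 40/m$ perturbation (again using $m$ large).

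The main obstacle is the circular dependence: bounding $\|\bm\Gamma\|$ requires a summable bound on $\|\bD\|$, which requires $\lambda_r(\widetilde\bM) \geq \lambda_r/n$, which requires $\|\bA\|^2 \geq m/2$, which requires control of $\bD^\top\bF$, which in turn uses the approximation $\bD \approx -\prod(\bm I - \eta\widetilde\bM)\bY$. The careful sequencing of the induction (closing all three properties together, with constants tuned so that the bound 40/m is strictly improved inside the step before being relaxed back) and the bookkeeping of the $\eta$-powers and $n,m,\lambda_r$ dependencies is where the real work lies; this is what drives the quantitative requirement $m = \Omega(n^2/\lambda_r^2)$.
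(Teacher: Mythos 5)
Your overall scaffolding matches the paper's: a joint induction over the three properties, the worst-case decay $\|\bD(k)\|\leq\sqrt n(1-\eta\lambda_r/n)^k$ obtained from property 3 at earlier times, the error recursion $\bm e(t+1)=(\bm I-\eta\bM^*(t))\bm e(t)-\eta\bm\Gamma(t)(\cdots)$ yielding $\|\bm e(t+1)\|\leq 40n^{3/2}/(\lambda_r m)$, the lower bound on $\|\bA(t+1)\|^2$ by summing $-\tfrac{4\eta}{n}\bD^\top\bF$ against the geometric series, and the eigenvalue bound for $\widetilde\bM$ and $\bM^*$ from $\|\bA\|^2\geq m/2$. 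Those steps are essentially the paper's and are sound.

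The genuine gap is in how you re-establish $\|\bm\Gamma(t+1)\|\leq 40/m$, which is the heart of the lemma. You route the bound through $\|\bW(t+1)-\bW(0)\|_F\leq\frac{2\eta}{n\sqrt m}\|\bX\|\sum_s\|\bA(s)\|\|\bD(s)\|$ and the telescoping identity for $\bW^\top\bW-\bW(0)^\top\bW(0)$. But plugging in your own estimates ($\|\bA(s)\|=O(\sqrt m)$, $\sum_s\|\bD(s)\|\lesssim n^{3/2}/(\eta\lambda_r)$, $\|\bX\|=\sqrt{c_1n}$) gives $\|\bW(t+1)-\bW(0)\|_F=O(n/\lambda_r)$, independent of $m$ — and the paper itself emphasizes (Appendix~\ref{D.2.2 gamma assumption verification}) that $\bW$ moves by an amount comparable to $\|\bW(0)\|$ even while $\bm\Gamma$ stays $O(1/m)$, so smallness of $\bm\Gamma$ cannot be inherited from smallness of the $\bW$-displacement. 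Feeding your displacement bound into $\frac{2}{mn}\|\bX\|^2\bigl(\|\Delta\bW\|\,\|\bW(0)\|+\|\Delta\bW\|^2\bigr)$ produces roughly $\frac{c_1n}{\lambda_r\sqrt{md}}+\frac{c_1n^2}{m\lambda_r^2}$: the first term is not $O(1/m)$ (it even worsens relative to $40/m$ as $m$ grows), and the second is below $40/m$ only if $c_1n^2/\lambda_r^2=O(1)$, which is not implied by $m\geq 112c_1n^2/\lambda_r^2$. So the induction does not close. The paper avoids this loss by never passing through $\|\Delta\bW\|$: it writes the increment $\bm\Gamma(t+1)-\bm\Gamma(t)$ explicitly from the GD update (terms of the form $\frac{4\eta}{n^2m}(\bF\bD^\top\bX^\top\bX+\bX^\top\bX\bD\bF^\top)$ plus a second-order term), sums over time, substitutes $\bD(k)=-\prod_j(\bm I-\eta\widetilde\bM(j))\bY+\bm e(k)$, and evaluates the sums in the eigenbasis of $\bX^\top\bX$. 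There the geometric series contributes a factor of order $n/\bigl(\eta(\lambda_i+\lambda_{j'})\bigr)$ that cancels the $\lambda_{j'}$ coming from $\bX^\top\bX$, so the main terms are $O(1/m)$ with no $1/\lambda_r$ loss, while only the $\bm e(k)$-contributions carry $n^2/\lambda_r^2$ and are proportional to $R_w=40/m$; the condition $m\geq 112c_1n^2/\lambda_r^2$ then makes those at most half the budget, and the induction closes at $20/m+20/m=40/m$. Without this eigendirection-wise cancellation your argument loses a factor of order $n/\lambda_r$ (or worse, a $\sqrt m$) exactly at the step the lemma is about.
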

\begin{proof}
We prove the theorem by induction. 

We first consider the base case. For property 1, $\|\bA(0)\|^2=m$ and $\|\bm\Gamma(0)\|=0$.  For property 2, with \textit{symmetric initialization}, $\bF(0) = 0$, hence $\bD(0) = -\bY$, $\|\bm e(0)\|=0$. For property 3, $$
\bM^*(0)=\widetilde \bM(0) = \bM(0)\succeq \frac{2}{nm}\|\bA(0)\|^2\bX^\top \bX
$$
The minimal eigenvalue of $\frac{2}{nm}\|\bA(0)\|^2\bX^\top \bX$ is $2\lambda_r/n>\lambda_r/n$.
Thus all the properties hold at iteration $t=0$.

Suppose for all $k\leq t$, these properties hold. Then we consider the case in iteration $t+1$. 

We first show a worst-case upper bound for $\|\bD(t)\|$.
\begin{align*}
    \|\bD(t)\| &=\left\|\prod_{k=0}^{t-1}(\bm I - \eta \bM^*(k))\bD(0)\right\|\\
    &\leq \left\|\prod_{k=0}^{t-1}(\bm I - \eta \bM^*(k))\right\|\left\|\bD(0)\right\|\\
    &\leq (1-\frac{\eta\lambda_r}{n})^t \|\bY\|= \sqrt n(1-\frac{\eta\lambda_r}{n})^t
    \tag{D}\label{worst case D}
\end{align*}
The second inequality uses property 3 in $k\leq t$. Note $\|\bY\|=\sqrt n$ in our setting, so the last equality holds. That means $\|\bD(t)\|\leq \sqrt n$ and $\|\bF(t)\|=\|\bD(t)+\bY\|\leq 2\sqrt n.$

Now, we show the error $\bm e(t)$ is bounded. We have 

$\begin{aligned} \bD(t+1) &=(\bm I-\eta \widetilde{\bM}(t))\bD(t)-\eta   \bm\Gamma(t)\bD(t) \\ &=- \prod_{j=0}^{t}(\bm I-\eta \widetilde{\bM}(j))\bY+ (\bm I-\eta \widetilde{\bM}(t))\bm e(t)-\eta   \bm\Gamma(t)\bD(t) \\ &=- \prod_{j=0}^{t}(\bm I-\eta \widetilde{\bM}(j))\bY+ (\bm I-\eta \bM^*(t))\bm e(t)-\eta  \bm \Gamma(t)\prod_{j=0}^{t-1}(\bm I-\eta \widetilde{\bM}({j}))\bD(0) . \end{aligned}$

Hence we have $\bm e(t+1)=\bm e(t) (\bm I-\eta \bM^*(t))-\eta  \bm\Gamma(t)\prod_{j=0}^{t-1}(\bm I-\eta \widetilde{\bM}(j))\bD(0)$. 

Then we can have the following bound by this recursion:
\begin{align*}\|\bm e(t+1)\|_{2} & \leq \|\bm e(t)\|_{2}\|\bm I-\eta \bM^*(t)\|_{2}+\eta\|\bD(0)\|_{2}\left\|\prod_{j=0}^{t-1}(\bm I-\eta \widetilde{\bM}(j))\right\|_{2}\|\bm\Gamma(t)\|_{2} \\ & \leq \|\bm e(t)\|_{2}\left(1-\frac{\eta\lambda_r}{n}\right)+\eta \sqrt{n} \left(1-\frac{\eta\lambda_r}{n}\right)^{t} R_{w} \\ 
&\leq \sum_{k=0}^{t} \eta \sqrt{n}\left(1-\frac{\eta\lambda_r}{n}\right)^{t} R_{w} \tag{E1}\label{e bound}\\&\leq \frac{\sqrt{n}}{\frac{1}{n}\lambda_{r}} \cdot R_{w}\\&\leq \frac{40n^{3 / 2}}{m\lambda_{r}} \end{align*}

Here the third inequality holds because $\|\bm e(0)\| = 0$. Thus the second property holds for $t+1$.

Then we consider the lower bound of $\|\bA(t)\|^2$.

Consider the dynamics of $\|\bA(t)\|^2$ in \eqref{C2} and sum the difference up from 0 to $t$. Recall that we proved $\|\bD(t)\|\leq \sqrt n$ by \eqref{worst case D} above.

\begin{align*}
    \|\bA(t+1)\|^2-\|\bA(0)\|^2 &= \frac{4\eta} {n} \sum_{k=0}^t(-\bD(k)^\top\bF(k)) + \eta^2 \sum_{k=0}^t\|\frac{\partial \mathcal{L}(k)}{\partial \bA(k)}\|^2\\
    &\geq -\frac{4\eta}{n}\sum_{k=0}^{t} \bD(k)^\top(\bD(k)+\bY)&&\text{ [$\|\frac{\partial \mathcal{L}(k)}{\partial \bA(k)}\|^2>0$]}\\
    &\geq -\frac{4\eta}{n}\sum_{k=0}^{t} (\|\bD(k)\|^2 + \|\bD(k)\| \|\bY\|)&&\text{ [$\ell_2$-norm inequality]} \\
    &\geq -\frac{4\eta}{n}\sum_{k=0}^{t} \|\bD(k)\|(\sqrt n + \sqrt n)&&\text{ [$\|\bD(t)\|\leq \sqrt n$]}\\
    &\geq -\frac{8\eta}{n} \sum_{k=0}^{t} n (1- \frac{\eta\lambda_r}{n})^k &&\text{[ by \eqref{worst case D}]}\\
    &\geq -\frac{8n}{\lambda_r}\geq -\frac{m}{2}\tag{\#}\label{N1}
\end{align*}

Since $\|\bA(0)\|^2=m$, we have the lower bound $\|\bA(t)\|^2\geq m/2$.

Next, we lower bound the minimal non-zero eigenvalue (i.e. the $r$-th largest eigenvalue since $\bX^\top\bX$ is rank $r$) of $\bM^*(t+1)$ and $\widetilde \bM(t+1)$.

For $\bM^*(t+1)$ and $\widetilde \bM(t+1)$, since the $\bX^\top\bW(t)^\top\bW(t)\bX$ part is PSD, we have

$$
\bM^*(t+1) \succeq \frac{2}{mn}\|\bA(t+1)\|^2\bX^\top \bX  - \frac{4\eta}{n^2m}(\bD(t+1)^\top\bF(t+1))\bX^\top \bX 
$$
$$
\widetilde\bM(t+1) \succeq \frac{2}{mn}\|\bA(t+1)\|^2\bX^\top \bX  - \frac{4\eta}{n^2m}(\bD(t+1)^\top\bF(t+1))\bX^\top \bX 
$$
From the inequality in \eqref{N1}, and from \eqref{worst case D} $\|\bD(t)\|\leq \sqrt n,\|\bF(t)\|\leq 2\sqrt n$, we know 

\begin{align*}
    \|\bA(t+1)\|^2-\|\bA(0)\|^2 - \frac{2\eta}{n}(\bD(t+1)^\top\bF(t+1))&\geq -\frac{8n}{\lambda_r}- \frac{2\eta}{n} \sqrt n\cdot 2\sqrt n\geq -m/2 \\\
\end{align*}
$$\bM^*(t+1)\succeq \frac{2}{mn}(\|\bA(t+1)\|^2 - \frac{2\eta}{n}(\bD(t+1)^\top\bF(t+1)))\bX^\top\bX\succeq\frac{1}{n}\bX^\top\bX$$
Thus $\bM^*(t+1)$ (and also $\widetilde\bM(t+1)$) has its smallest eigenvalue larger than $\lambda_r/n$. Property 3 holds. To extend this conclusion, we actually have $\lambda_i(\widetilde\bM(t+1))\geq \lambda_i /n$ for all $i\in[r]$ from the argument above. 

Finally we bound $\|\bm\Gamma(t+1)\|$. We first write down the dynamics of $\bm\Gamma(t)$ according to Lemma~\ref{update rule of M}.

\begin{align*}
    \bm\Gamma(t+1)-\bm\Gamma(t)={}&\frac{2}{nm}\left(\bX^\top (\bm W(t+1)^{\top}\bm W(t+1)-\bm W(t)^{\top}\bm W(t))
        \bX\right)\\
    ={}& \frac{2}{nm}\left(\bX^\top ((\bm W(t+1)^{\top}-\bm W(t)^{\top} )\bm W(t) +\bW(t)^\top (\bm W(t+1)-\bm W(t)))
        \bX\right.\\
        &\left.+\ \bX^\top (\bm W(t+1)^{\top}-\bm W(t)^{\top} )(\bm W(t+1)-\bm W(t))\bX\right)\tag{Use Equation~\eqref{C1}}\\
    ={}&-\frac{4}{n^{2} m} \eta\left(\bF (t) \bD(t)^\top \bX^{\top} \bX 
    +\bX^{\top} \bX \bD{(t)} \bF(t)^{\top}\right)\\&{}+\frac{8 \eta^{2}}{n^{3} m^{2}}\|\bA(t)\|^{2} \cdot \bX^{\top} \bX \bD(t) \bD(t)^\top \bX^{\top} \bX
\end{align*}

Hence we sum it up and get:
\begin{align*}
\left\|\bm\Gamma(t+1)-\bm\Gamma(0)\right\|_{2}
=&\left\| \frac{4}{n^{2} m} \eta \sum_{k=0}^{t}\left(\bF(t) \bD(t)^{\top} \bX^{\top} \bX+\bX^{\top} \bX \bD(t) \bF(t)^{\top}\right)\right.\\
&\left.+\frac{8 \eta^{2}}{n^{3} m^{2}} \sum_{k=0}^{t}\|\bA(t)\|^{2} \cdot \bX^{\top} \bX \bD(t) \bD(t)^{\top} \bX^{\top} \bX \right\|_{2}\tag{Triangle Inequality}\\
\leq{}& \frac{4}{n^{2} m} \eta \left\| \sum_{k=0}^{t} \left( \bD (t) \bD(t)^\top \bX^{\top} \bX+\bX^{\top} \bX \bD(t) \bD(t)^{\top}\right) \right\|_{2} \quad \tag{G1}\label{Gamma (1)}\\
&+\frac{4}{n^{2} m} \eta\left\|\sum_{k=0}^{t}\left(\bY \bD(t)^{\top} \bX^{\top} \bX+\bX^{\top} \bX \bD(t)\bY^\top\right)\right\|_{2} \quad \tag{G2}\label{Gamma (2)}\\
&+\frac{8 \eta^{2}}{n^{3} m^{2}}\left\|\sum_{k=0}^{t}\left\| \bA(t)\right\|^{2} \cdot \bX^{\top} \bX \bD(t) \bD(t)^\top \bX^{\top} \bX\right\|_{2}\quad  \tag{G3}
\label{Gamma (3)}
\end{align*}

Then we bound these three terms one by one. 

Term \eqref{Gamma (1)}: $\frac{4}{n^{2} m} \eta \left\| \sum_{k=0}^{t} \left( \bD (t)\bD(t)^\top \bX^{\top} \bX+\bX^{\top} \bX\bD(t) \bD(t)^{\top}\right) \right\|_{2}$

By symmetry, we just consider the first term $\bD^\top (t) \bD(t) \bX^{\top} \bX.$
\begin{align*}
    \bD (t) \bD(t)^\top \bX^{\top}\bX =&\prod_{j=0}^{t-1}(\bm I_n - \eta \widetilde \bM(j))\bY\bY^\top\prod_{j=0}^{t-1}(\bm I_n - \eta \widetilde \bM(j)) \bX^\top \bX \tag{A1}\label{term A1} \\
    &+\bm e(t)\bY^\top\prod_{j=0}^{t-1}(\bm I_n - \eta \widetilde \bM(j))\bX^\top \bX\tag{A2}\label{term A2}\\
    &+\prod_{j=0}^{t-1}(\bm I_n - \eta \widetilde \bM(j))\bY \be(t)^\top\bX^\top \bX\tag{A3}\label{term A31}\\
    &+\be(t)\be(t)^\top\bX^\top \bX\tag{A4}\label{term A4} 
\end{align*}

We bound each term in $\ell_2$-norm and then add them up.

Term~\eqref{term A1} 
\begin{align*}
&\frac{4\eta}{n^{2} m} \left\|\sum_{k=0}^t\prod_{j=0}^{k-1}(\bm I_n - \eta \widetilde \bM(j))\bY\bY^\top\prod_{s=0}^{k-1}(\bm I_n - \eta \widetilde \bM(s)) \bX^\top \bX\right\|_2\\
=&\frac{4\eta}{n^{2} m} \left\|\sum_{k=0}^t\prod_{j=0}^{k-1}(\bm I_n - \eta \widetilde \bM(j))\sum_{i=1}^r z_i\bv_i\sum_{j'=1}^rz_{j'}\bv_{j'}^\top\prod_{s=0}^{k-1}(\bm I_n - \eta \widetilde \bM(s)) \bX^\top \bX\right\|_2\\
=&\frac{4\eta}{n^{2} m} \left\|\sum_{k=0}^t\left(\sum_{i=1}^r\prod_{j=0}^{k-1}(1 - \eta  \lambda_i(\widetilde \bM(j)))z_i\bv_i\right)\left(\sum_{j'=1}^r\bv_{j'}^\top z_{j'}\prod_{s=0}^{k-1}(1 - \eta \lambda_{j'}(\widetilde \bM(s))) \lambda_{j'}\right)\right\|_2\\
&\text{where $\lambda_j(\widetilde\bM) =  \bv_j^\top  \widetilde\bM\bv_j$ (See \eqref{widetilde m}). We know since $\|\bA(t)\|^2\geq m/2$, $\lambda_j(\widetilde \bM(t))\geq \frac{1}{n}\lambda_j$, $\forall t, j$}\\
= & \frac{4\eta}{n^{2} m}\left\|\sum_{i=1}^r\sum_{j'=1}^r\sum_{k=0}^t\left(\prod_{j=0}^{k-1}(1 - \eta  \lambda_i(\widetilde \bM(j)))z_iz_{j'} (1 - \eta \lambda_{j'}(\widetilde \bM(j))) \lambda_{j'}\right)\bv_i\bv_{j'}^\top\right\|_2\\
\leq & \frac{4\eta}{n^{2} m}\left\|\sum_{i=1}^r\sum_{j'=1}^r\sum_{k=0}^t\left(\prod_{j=0}^{k-1}(1 - \eta  \lambda_i(\widetilde \bM(j)))z_iz_{j'} (1 - \eta \lambda_{j'}(\widetilde \bM(j))) \lambda_{j'}\right)\bv_i\bv_{j'}^\top\right\|_F \\ 
= & \frac{4\eta}{n^{2}m}\sqrt{\sum_{i=1}^r\sum_{j'=1}^r\left(\sum_{k=0}^t\prod_{j=0}^{k-1}(1 - \eta  \lambda_i(\widetilde \bM(j)))z_iz_{j'} (1 - \eta \lambda_{j'}(\widetilde \bM(j))) \lambda_{j'}\right)^2}\\
\leq&\frac{4\eta}{n^{2}m}\sqrt{\sum_{i=1}^r\sum_{j'=1}^r\left(\sum_{k=0}^t(1 - \eta \frac{1}{n}\lambda_{j'} )^kz_iz_{j'} (1 - \eta\frac{1}{n} \lambda_{i})^k \lambda_{j'}\right)^2} \\
\leq & \frac{4}{n^{2}m}\sqrt{\sum_{i=1}^r\sum_{j'=1}^rz_i^2z_{j'}^2 \lambda_{j'}^2\frac{\eta^2}{(\frac{1}{n}\eta (\lambda_{i}+\lambda_{j'})-\frac{\eta^2}{n^2}\lambda_i\lambda_{j'})^2}}  \\ 
\leq & \frac{4}{n^2 m}\sqrt{\|\bY\|_2^4\cdot n^2}\\
= & \ \frac{4}{m}
\end{align*}

The first inequality comes from $\|\cdot\|_2\leq \|\cdot\|_F$. The last three inequalities require that $\eta\lambda_{\max{}}(\bM^*)<1.$ 

Combined with its symmetric counterpart in $\bX^\top \bX \bD(t) \bD(t)^\top$, the sum of \eqref{term A1} is smaller than $\frac{8}{m}$. 

Term~(\ref{term A2}) and (\ref{term A31}):
\begin{align*}
&\frac{4\eta}{n^{2} m} \left\|\sum_{k=0}^t \be(k)\bY^\top\prod_{j=0}^{k-1}(\bm I_n - \eta \widetilde \bM(j))\bX^\top \bX
    +\prod_{j=0}^{k-1}(\bm I_n - \eta \widetilde \bM(j))\bY \be(k)^\top\bX^\top \bX\right\|_2\\
 \leq & \frac{8\eta}{n^{2} m}\left\|\sum_{k=0}^t \be(k)\bY^\top\prod_{j=0}^{k-1}(\bm I_n - \eta \widetilde \bM(j))\bX^\top \bX\right\|_2 \tag{Triangle Inequality}\\
 \leq & \frac{8\eta}{n^{2} m}\sum_{k=0}^t\left\|\be(k)\right\|_2\left\|\bY^\top\prod_{j=0}^{k-1}(\bm I_n - \eta \widetilde \bM(j))\right\|_2\left\|\bX^\top \bX\right\|_2\tag{Cauchy-Schwarz}\\
 \leq & \frac{8}{n^2m}\left(\sum_{k=0}^t\eta(1- \frac{\eta}{n}\lambda_r)^k\sqrt n\right) \cdot \frac{n^{3/2}}{\lambda_r}R_w \cdot c_1n \tag{\eqref{e bound} and Algebra}\\
 \leq & \frac{8c_1n^2}{m\lambda_r^2}R_w
\end{align*}

Similarly, we have their symmetric counterpart added and get a bound of $\frac{16c_1n^2}{m\lambda_r^2}R_w$.

Term~(\ref{term A4}):
\begin{align*}
    &\frac{4\eta}{n^{2} m} \left\|\sum_{k=0}^t \be(k) \be(k)^\top\bX^\top \bX\right\|_2\\
    \leq& \frac{4}{n^2m}\eta\sum_{k=0}^t\|\be(k)\|^2\|\bX^\top \bX\|_2\tag{Cauchy-Schwarz}\\
    (&\|\be(k)\|_2 \leq \|\bD(k)\|+\|\prod_{j=0}^{k-1}(I-\eta \widetilde{\bM}(j))\bY\|\leq 2\sqrt n (1-\frac{\eta}{n}\lambda_r)^k)\\
     \leq& \frac{4\eta}{N^2m}\sum_{k = 0}^t 2(1-\frac{\eta }{n}\lambda_r)^k\sqrt n\cdot\frac{n^{3/2}}{\lambda_r}R_w\cdot c_1n\tag{\eqref{e bound} and Algebra}\\
     \leq& \frac{8n^2c_1}{m\lambda_r^2}R_w
\end{align*}

Similarly, we can combine its symmetric counterpart and get a bound of $\frac{16c_1n^2}{m\lambda_r^2}R_w$.

Add them up, and we can get the bound of the first part. $$\|(1)\|_2\leq \frac{8}{m} + \frac{32c_1n^2}{m\lambda_r^2}R_w$$

Term~(\ref{Gamma (2)}): 
\begin{align*}
&\frac{4}{n^{2} m} \eta\left\|\sum_{k=0}^{t}\left(\bY \bD(t)^{\top} \bX^{\top}\bX+\bX^{\top}\bX \bD(t)\bY^\top\right)\right\|_{2}\tag{Triangle Inequality}\\
\leq &\frac{8}{N^{2} m} \eta\left\|\sum_{k=0}^{t}\bY\bY^\top\prod_{j=0}^{k-1}(\bm I_n - \eta \widetilde \bM(j)) \bX^{\top}\bX\right\|_{2}\tag{A5}\label{term A5}\\
 &+\frac{8}{N^{2} m}\left\|\sum_{k=0}^t\bY \be(k)^\top\bX^\top \bX\right\|_2\tag{A6}\label{term A6}
\end{align*}

We bound the two parts separately.

Term~(\ref{term A5}):
\begin{align*}
   & \frac{8}{N^{2} m} \eta\left\|\sum_{k=0}^{t}\bY\bY^\top\prod_{j=0}^{k-1}(\bm I_n - \eta \widetilde \bM(j)) \bX^{\top}\bX\right\|_{2}\tag{Cauchy-Schwarz}\\
   \leq& \frac{8\eta}{n^2m}\|\bY\bY^\top\|_2 \left\|\sum_{k=0}^t\sum_{i=1}^r(1-\eta \frac1n \lambda_i)^k\lambda_i\bv_i\bv_i^\top\right\|_2
   \leq  \frac{8}{m} \tag{Algebra}
\end{align*}

Term~(\ref{term A6}):
\begin{align*}
    \frac{8}{n^{2} m}\left\|\sum_{k=0}^t\bY \be(k)^\top\bX^\top \bX\right\|_2
    \leq &\frac{8}{n^{2} m}\|\bY\|_2\left\|\sum_{k=0}^t \be(k)\right\|_2\cdot c_1n\tag{Cauchy-Schwarz}\\
    \leq & \frac{8}{n^2m}\sqrt n \eta \sum_{k=0}^t\eta \sqrt n k(1-\eta\frac{1}{n}\lambda_r)^{k-1}R_w c_1n\tag{\eqref{e bound} and $\|\bY\|=\sqrt n$}\\
    \leq &\frac{8c_1}{m}\eta^2\sum_{k=0}^t\sum_{j=k}^{t-1}(1-\eta \frac{1}{n}\lambda_r)^jR_w\tag{Abel's lemma}\\
    \leq &\frac{8c_1}{m}\eta^2\sum_{k=0}^t \frac{n}{\eta \lambda_r} (1-\eta\frac{1}{n}\lambda_r)^k  R_w\tag{Algebra}\\
    \leq&\frac{8c_1}{m}\frac{n^2}{\lambda_r^2}R_w = \frac{8c_1n^2}{m\lambda_r^2}R_w
\end{align*}

Sum up and we get 
$$
\|(2)\|_2 \leq \frac{8}{m} + \frac{8c_1n^2}{m\lambda_r^2}R_w
$$
Term~(\ref{Gamma (3)}): 
\begin{align*}
    &\frac{8 \eta^{2}}{n^{3} m^{2}}\left\|\sum_{k=0}^{t}\left\| \bA(t)\right\|^{2}  \bX^{\top}\bX \bD(t) \bD(t)^{\top} \bX^{\top}\bX\right\|_{2}\\&\text{(Since $\bX^{\top}\bX \bD(t) \bD(t)^{\top} \bX^{\top}\bX$ is PSD)}\\
    \leq&\frac{4\eta}{n^2m}\eta \frac{2}{mn}\max_{t}\left\|\|\bA(t)\|^2\bX^\top \bX\right\|_2\cdot\left\|\sum_{k=0}^t\bD(t) \bD(t)^\top\bX^\top \bX\right\|_2\\&(\text{Since } \lambda_{\max}(\bM^*)<1/\eta)\\
    \leq&\frac{4\eta}{n^2m}\left\|\sum_{k=0}^t\bD(t) \bD(t)^\top\bX^\top \bX\right\|_2\leq \frac{4}{m}+\frac{16c_1n^2}{m\lambda_r^2}R_w\\
\end{align*}
The last inequality is the same one as in the term~(\ref{Gamma (1)}) bound.

Adding the three terms together and using the induction hypothesis, we have 
$$
\|\eqref{Gamma (1)}\|_2+\|\eqref{Gamma (2)}\|_2 +\|\eqref{Gamma (3)}\|_2\leq\frac{40}{m}.
$$

Property 1 holds. Therefore the proof is completed. 
\end{proof}

Lemma~\ref{direction guarantee} tells us that $\bD(t)$ decreases along GD trajectory in some fixed directions independently depending on $\bX^\top\bX$. After we have this GD trajectory, we can have the following Lemma~\ref{Norm growth} about the dynamics of $\|\bA(t)\|^2$ under the condition in Lemma~\ref{direction guarantee}. It shows a sufficient condition for $\|\bA(t)\|^2$ to grow.

\begin{lemma}\label{Norm growth}
Under the Assumption~\ref{eigenspectrum assumption appen} and Assumption~\ref{y assumption appen}, if
$$n>(70\lambda_r+25\lambda_r^2)/(196\kappa^2\min\{a_1-a_1^2, a_2-a_2^2\})$$ and as long as there exist two number $a_1, a_2$ (to be determined) and some $i\in[r]$ at time t s.t.
\begin{equation}
    0 < a_1 \leq \prod_{j=0}^{t-1}(1-\eta \lambda_{i}(\widetilde{\bM}(j)))\leq a_2<1
    \label{B.1}
\end{equation}
we have $-\bD(t)^\top\bF(t)>0$.

\end{lemma}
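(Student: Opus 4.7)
The plan is to use the explicit decomposition of $\bD(t)$ from Lemma~\ref{direction guarantee} to split $\bD(t)^\top\bF(t)$ into a dominant deterministic piece and small error terms, then verify that the former is strictly negative and large enough to absorb the latter. First, since $\bF(t)=\bD(t)+\bY$, I would write $\bD(t)^\top\bF(t)=\|\bD(t)\|^2+\bD(t)^\top\bY$. Letting $P(t):=\prod_{j=0}^{t-1}(\bm I_n-\eta\widetilde\bM(j))$, formula~\eqref{widetilde m} shows that $\widetilde\bM(j)$ is a nonnegative scalar multiple of $\bX^\top\bX$, so $P(t)$ is simultaneously diagonalized in the basis $\{\bv_i\}$ with eigenvalues $p_i(t):=\prod_{j=0}^{t-1}\bigl(1-\eta\lambda_i(\widetilde\bM(j))\bigr)$. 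Substituting $\bD(t)=-P(t)\bY+\be(t)$ from property~2 of Lemma~\ref{direction guarantee} yields
\[
\bD(t)^\top\bF(t) \;=\; \sum_{i=1}^r z_i^2\, p_i(t)\bigl(p_i(t)-1\bigr) \;+\; \Bigl(\|\be(t)\|^2 - 2(P(t)\bY)^\top\be(t) + \be(t)^\top\bY\Bigr).
\]

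Next, I would argue that $p_j(t)\in[0,1]$ for every $j\in[r]$: the lower bound follows from $\lambda_j(\widetilde\bM)\ge 0$, and the upper bound follows because the phase's termination condition forces $\eta\lambda_{\max}(\bM^*(t))\le 1$, while the bound $\|\bm\Gamma(t)\|=O(1/m)$ from Lemma~\ref{direction guarantee} implies $\eta\lambda_j(\widetilde\bM(t))\le 1$ as well. Consequently every summand in the main term is nonpositive, and for the specific index $i$ guaranteed by the hypothesis, $p_i(t)(p_i(t)-1)\le -\min\{a_1-a_1^2,\,a_2-a_2^2\}$. Combined with Assumption~\ref{y assumption appen} (which gives $z_i^2\ge\kappa^2 n$), this forces
\[
\sum_{i=1}^r z_i^2\, p_i(t)(p_i(t)-1) \;\le\; -\kappa^2 n\,\min\{a_1-a_1^2,\,a_2-a_2^2\}.
\]

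Finally, for the error I would apply Cauchy--Schwarz together with $\|P(t)\|\le 1$, $\|\bY\|=\sqrt n$, and $\|\be(t)\|\le 40n^{3/2}/(\lambda_r m)$ (Lemma~\ref{direction guarantee}, property~2), obtaining an error bound of order $n^2/(\lambda_r m)+n^3/(\lambda_r^2 m^2)$. Plugging in $m\ge 112 c_1 n^2/\lambda_r^2$ from the same lemma collapses this to an expression of order $\lambda_r+\lambda_r^2/n$, and the lower bound $n>(70\lambda_r+25\lambda_r^2)/\bigl(196\kappa^2\min\{a_1-a_1^2,\,a_2-a_2^2\}\bigr)$ stated in the hypothesis is calibrated precisely so that the dominant negative term $-\kappa^2 n\min\{a_1-a_1^2,a_2-a_2^2\}$ strictly exceeds the error in magnitude; combining these yields $-\bD(t)^\top\bF(t)>0$. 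The main obstacle will be pinning down the exact numerical constants in the error estimate so that they reproduce the specific threshold $(70\lambda_r+25\lambda_r^2)/(196\kappa^2\min\{\cdot\})$; a secondary but more conceptual subtlety is verifying $p_j(t)\in[0,1]$ uniformly across all $j$ and all $t$ in the phase, which crucially relies on the already-established bound $\|\bm\Gamma(t)\|=O(1/m)$ to pass from $\bM^*$ to $\widetilde\bM$.
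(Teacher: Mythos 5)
Your proposal is correct and follows essentially the same route as the paper: substitute $\bD(t)=-\prod_{j=0}^{t-1}(\bm I_n-\eta\widetilde\bM(j))\bY+\be(t)$ from Lemma~\ref{direction guarantee}, reduce the main term to $\sum_i z_i^2 p_i(t)(1-p_i(t))$, lower-bound the distinguished index via the hypothesis \eqref{B.1} and Assumption~\ref{y assumption appen}, and absorb the $\be(t)$ errors using $\|\be(t)\|\leq 40n^{3/2}/(\lambda_r m)$ and $m\gtrsim n^2/\lambda_r^2$. The only detail worth noting is that to recover the exact stated threshold $(70\lambda_r+25\lambda_r^2)/196$ you should group the linear error terms as $\bY^\top\bigl(2P(t)-\bm I\bigr)\be(t)$ with $\|2P(t)-\bm I\|\leq 1$ (as the paper does), since bounding $-2(P(t)\bY)^\top\be(t)$ and $\be(t)^\top\bY$ separately by Cauchy--Schwarz triples that error contribution; also, your two justifications for $p_j(t)\in[0,1]$ are swapped ($\eta\lambda_j(\widetilde\bM)\leq 1$ gives nonnegativity of each factor, $\lambda_j(\widetilde\bM)\geq 0$ gives the upper bound), though both facts are present.
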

\begin{proof} We use property 2 of Lemma~\ref{direction guarantee} to obtain the following expression:
\begin{align*}
    -\bD(t)^\top\bF(t) ={}&\bY^\top \prod_{j=0}^{t}(\bm I_n-\eta \widetilde{\bM}(j))(\bm I_n-\prod_{j=0}^{t}(\bm I_n-\eta \widetilde{\bM}(j)))\bY\\
    & + \bY^\top (2\prod_{j=0}^{t}(\bm I_n-\eta \widetilde{\bM}(j))-\bm I_n)\be(t) - \|\be(t)\|^2\tag{Use \eqref{e bound} and $m\geq \frac{112c_1n^2}{\lambda_r^2}$}\\
    \geq{}&\sum_{i=1}^r z_i^2\prod_{j=0}^{t-1}(1-\eta \lambda_{i}(\widetilde{\bM}(j))(1-\prod_{j=0}^{t-1}(1-\eta \lambda_{i}(\widetilde{\bM}(j))) - \frac{40n^2}{\lambda_r m} -\frac{1600n^3}{\lambda_r^2m^2}\\
    \geq{}& \kappa^2n\max_i\left\{\prod_{j=0}^{t-1}(1-\eta \lambda_{i}(\widetilde{\bM}(j))(1-\prod_{j=0}^{t-1}(1-\eta \lambda_{i}(\widetilde{\bM}(j)))\right\}- \frac{5\lambda_r}{14}-\frac{25\lambda_r^2}{196} \tag{C.4}\label{lemma C.4 expression}
\end{align*}

Notice that all inequalities hold since $\lambda_i(\bM^*(t))<1/\eta$ for all $i$. In the first inequality, we use the $\|\be(t)\|$ bound in \eqref{e bound}, and in the second we just replace $m$ with its lower bound $\frac{112c_1n^2}{\lambda_r^2}.$ Then we use $n>(70\lambda_r+25\lambda_r^2)/(196\kappa^2\min\{a_1-a_1^2, a_2-a_2^2\})$
and complete the proof.
\begin{align*}
    -\bD(t)^\top\bF(t)\geq \kappa^2n\min\{a_1-a_1^2,a_2-a_2^2\}-\frac{5\lambda_r}{14}-\frac{25\lambda_r^2}{196}>0
\end{align*}
\end{proof}

Then, we use Assumption~\ref{eigenspectrum assumption appen} to prove the next lemma. It tells that condition~\eqref{B.1} is satisfied under this assumption in a time interval. That means $-\bD(t)^\top\bF(t)>0$ during this period.

\begin{lemma}\label{lemma c5}
Under Assumption~\ref{eigenspectrum assumption appen}, we have condition \eqref{B.1} satisfied with $a_2 = e^{(a_1-1)/\chi}$ in the time interval $[t_1, t_2)$. Here, $t_1$ is the iteration that $\prod_{j=0}^{t-1}(1-\eta \lambda_{1}(\widetilde{\bM}(j)))< a_2$ for the first time, and $t_2$ is the iteration when $\prod_{j=0}^{t-1}(1-\eta \lambda_{r}(\widetilde{\bM}(j)))< a_1$ for the first time.
\end{lemma}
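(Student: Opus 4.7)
The plan is to reduce everything to a one-dimensional statement about the product sequence along each eigendirection, and then invoke the eigenvalue-gap hypothesis $\lambda_i \leq \chi\lambda_{i+1}$ to bound how fast the products can move between consecutive indices.

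The starting point is the identity $\widetilde{\bM}(j) = \gamma(j)\,\bX^\top\bX$ from \eqref{widetilde m}, where $\gamma(j) = \frac{2}{mn}(\|\bA(j)\|^2+\tfrac{m}{d}-\tfrac{2\eta}{n}\bD(j)^\top\bF(j))>0$. This lets me write $\lambda_i(\widetilde{\bM}(j)) = \gamma(j)\lambda_i$, so if I set $P_i(t) := \prod_{j=0}^{t-1}(1-\eta\gamma(j)\lambda_i)$, then $P_i(t)$ is monotonically decreasing in $t$ (the termination condition $\lambda_{\max}(\bM^*(j))<1/\eta$ together with $\|\bm\Gamma(j)\|\le 40/m$ ensures $\eta\gamma(j)\lambda_i<1$, so every factor is in $(0,1)$). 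Also, since $\lambda_1\geq\cdots\geq\lambda_r$, the map $i\mapsto P_i(t)$ is monotonically non-decreasing in $i$.

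By the very definition of $t_1$ and $t_2$, for any $t\in[t_1,t_2)$ I have $P_1(t)\leq P_1(t_1)<a_2$ and $P_r(t)\geq a_1$. I then split on two cases. If $P_1(t)\geq a_1$, condition \eqref{B.1} holds with $i=1$ directly. If $P_1(t)<a_1$, the monotonicity in $i$ together with $P_r(t)\geq a_1$ produces a unique largest index $i^*\in\{1,\dots,r-1\}$ with $P_{i^*}(t)<a_1$ and $P_{i^*+1}(t)\geq a_1$; the job is to verify $P_{i^*+1}(t)\leq a_2$.

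For the upper bound on $P_{i^*+1}(t)$, I would chain two elementary inequalities. First, the standard bound $\prod_j(1-x_j)\geq 1-\sum_j x_j$ applied to $P_{i^*}(t)$ gives $\sum_{j=0}^{t-1}\eta\gamma(j)\lambda_{i^*}\geq 1-P_{i^*}(t)>1-a_1$. Second, combining $1-x\leq e^{-x}$ with the eigenspectrum hypothesis $\lambda_{i^*+1}\geq \lambda_{i^*}/\chi$ from Assumption~\ref{eigenspectrum assumption appen} yields
\begin{equation*}
 P_{i^*+1}(t)\leq \exp\!\Bigl(-\sum_{j=0}^{t-1}\eta\gamma(j)\lambda_{i^*+1}\Bigr)\leq \exp\!\Bigl(-\tfrac{1}{\chi}\sum_{j=0}^{t-1}\eta\gamma(j)\lambda_{i^*}\Bigr)\leq e^{(a_1-1)/\chi}=a_2,
\end{equation*}
so $i=i^*+1$ witnesses \eqref{B.1}. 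The only nontrivial point to double-check is the positivity of all factors $(1-\eta\gamma(j)\lambda_i)$, which I would confirm via the induction hypothesis $\|\bm\Gamma(j)\|\leq 40/m$ of Lemma~\ref{direction guarantee} plus the running assumption $\eta\lambda_{\max}(\bM^*(j))<1$; every other step is an application of calculus inequalities, so I do not anticipate a serious obstacle.
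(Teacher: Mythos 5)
Your proof is correct and takes essentially the same route as the paper's: you establish the key adjacent-index implication (if the product along direction $i$ falls below $a_1$, then the product along direction $i+1$ is at most $e^{(a_1-1)/\chi}=a_2$) via Bernoulli's inequality and the eigengap condition $\lambda_i\leq \chi\lambda_{i+1}$, and then combine this with the defining properties of $t_1$ and $t_2$. The only cosmetic differences are that you use $1-x\leq e^{-x}$ where the paper invokes Jensen's inequality followed by $(1+\frac{a_1-1}{t\chi})^t\leq e^{(a_1-1)/\chi}$, and you make explicit the proportionality $\widetilde{\bM}(j)=\gamma(j)\bX^\top\bX$ and the witnessing index $i^*$ that the paper handles by contraposition.
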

\begin{proof}
We prove for all $i\in [r-1]$, 
\begin{equation}
    \text{If }\prod_{j=0}^{t-1}(1-\eta \lambda_{i}(\widetilde{\bM}(j)))< a_1, \text{then }\prod_{j=0}^{t-1}(1-\eta \lambda_{i+1}(\widetilde{\bM}(j)))< e^{(a_1-1)/{\chi}}.\label{condition}
\end{equation}

In this way, the only two possibility that all $i\in [r]$ doesn't satisfy the condition~\eqref{B.1} is: (1) the $\prod_{j=0}^{t-1}(1-\eta \lambda_{1}(\widetilde{\bM}(j)))>a_2$; (2) $\prod_{j=0}^{t-1}(1-\eta \lambda_{r}(\widetilde{\bM}(j)))< a_1$. Otherwise, there must be some $i$ s.t. $\prod_{j=0}^{t-1}(1-\eta \lambda_{i}(\widetilde{\bM}(j)))\in [a_1, a_2].$ Thus, if condition~\eqref{condition} is satisfied, we have this lemma proved.

Now suppose $\prod_{j=0}^{t-1}(1-\eta \lambda_{i}(\widetilde{\bM}(j)))< a_1$. By Bernoulli's inequality, 
$$(1+x_1)(1+x_2)...(1+x_n)\geq 1+x_1+x_2+...+x_n, \text{if } x_i\geq -1, \ \forall i\in[n]$$
since $-\eta \lambda_{i}(\widetilde{\bM}(j)) > -1$ for all $i$, we have 
$$
\sum_{j=0}^{t-1}\eta \lambda_{i}(\widetilde{\bM}(j))> 1-a_1
$$

By Jensen inequality, we have
$$
\sum_{j=0}^{t-1}\log(1-\eta \lambda_{i}(\widetilde{\bM}(j))/\chi) \leq t\log\left(1-\frac{\sum_{j=0}^{t-1}\eta \lambda_{i}(\widetilde{\bM}(j))}{t\chi}\right)
$$

Hence we have the following inequalities:
\begin{align*}
    \prod_{j=0}^{t-1}(1-\eta \lambda_{i+1}(\widetilde{\bM}(j))) &\leq \prod_{j=0}^{t-1}(1-\eta \lambda_{i}(\widetilde{\bM}(j))/\chi)\\
    &\leq \exp\left\{t\log\left(1-\frac{\sum_{j=0}^{t-1}\eta \lambda_{i}(\widetilde{M}(j))}{t\chi}\right)\right\}\\
    &\leq (1+\frac{a_1-1}{t\chi})^t\\
    &\leq e^{(a_1-1)/\chi}
\end{align*}

So we complete the proof.
\end{proof}

Then we pay attention back to sharpness. We have the sharpness's dynamics by Lemma~\ref{update rule of M}.
\begin{align*}
\sha^*(t+1)-\sha^*(t) ={}& \bv_1^\top(\bM(t+1)-\bM(t))\bv_1\tag{Definition \eqref{B.0}}\\
={}&-\frac{8\eta \lambda_1}{n^2m}(\bD(t)^\top\bF(t)+ (\bD(t)^\top\bv_1)(\bF(t)^\top\bv_1))\\&+\frac{8\eta^2\lambda_1}{m^2n^3}(\bD(t)^\top\bX^\top \bW^\top(t) \bW(t)\bX\bD (t)+\|\bA(t)\|^2\lambda_1(\bD(t)^\top\bv_1)^2)
\end{align*}

This equation shows that the dynamics of sharpness is closely related to the dynamics of $\|\bA(t)\|^2$, i.e. highly dependent on $-\bD(t)^\top\bF(t)$. Based on the lemmas above, we can prove progressive sharpening happens almost along the whole training trajectory (Lemma~\ref{progressive shap appen}) until the loss $\mathcal{L} = \frac{1}{n}\|\bD(t)\|^2$ converges to $O(n^{-1})$. 

\begin{lemma}\label{progressive shap appen}
Under Assumption~\ref{eigenspectrum assumption appen} and \ref{y assumption appen}, if $m > \frac{112 c_1n^2}{\lambda_r^2}$ and
$$
n > \max\{(70\lambda_r+25\lambda_r^2)/(98\kappa^2\min\{ e^{-1/\chi}-e^{-2/\chi}, \eta c_1(1-\eta c_1)\}), (\frac{70\lambda_r+25\lambda_r^2}{98\kappa^2\epsilon})^2\}
$$

\textit{we have $\sha^*(t+1)-\sha^*(t)>0$ until the time t when $\|\bD(t)\|^2\leq \epsilon^2 + \frac{5\epsilon\lambda_r}{7\sqrt n}  +\frac{25\lambda_r^2}{196n}$.}
\end{lemma}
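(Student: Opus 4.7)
The proof follows the strategy of Lemma~\ref{Norm growth}, but now applied to the sharpness increment rather than to $-\bD^\top\bF$. I would begin by writing $\sha^*(t+1)-\sha^*(t)=\bv_1^\top(\bM(t+1)-\bM(t))\bv_1$ and substituting the Gram-matrix update of Lemma~\ref{update rule of M}. The second-order ($\eta^2$) contributions
$$\tfrac{8\eta^2\lambda_1}{m^2 n^3}\bigl(\bD(t)^\top\bX^\top\bW(t)^\top\bW(t)\bX\bD(t)+\|\bA(t)\|^2\lambda_1(\bD(t)^\top\bv_1)^2\bigr)$$
are manifestly nonnegative, so it suffices to lower-bound the first-order part, i.e.\ to show
$$\Psi(t)\;:=\;-\bD(t)^\top\bF(t)-(\bD(t)^\top\bv_1)(\bF(t)^\top\bv_1)\;>\;0.$$

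Using $\bF=\bD+\bY$ and the direction guarantee of Lemma~\ref{direction guarantee}, expand $\bD(t)=-\sum_{i=1}^{r} z_i\,p_i(t)\,\bv_i+\be(t)$, where $p_i(t):=\prod_{j=0}^{t-1}(1-\eta\lambda_i(\widetilde\bM(j)))$ and $\|\be(t)\|\le 40 n^{3/2}/(m\lambda_r)$. Orthonormality of $\{\bv_i\}$ collapses the clean part to
$$\Psi_{\mathrm{clean}}(t)\;=\;\sum_{i=1}^{r} z_i^2\,p_i(1-p_i)\;+\;z_1^2\,p_1(1-p_1),$$
which by Assumption~\ref{y assumption appen} is at least $\kappa^2 n\bigl(p_1(1-p_1)+\max_i p_i(1-p_i)\bigr)$. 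The $\be(t)$ contributions (pure error and cross terms with $\bY$) are handled by exactly the same manipulations as in the proof of Lemma~\ref{Norm growth}; once $m\ge 112 c_1 n^2/\lambda_r^2$ is imposed they yield an additive error at most $(70\lambda_r+25\lambda_r^2)/98$, which is twice the error appearing in Lemma~\ref{Norm growth} because both $\bD^\top\bF$ and the new term $(\bD^\top\bv_1)(\bF^\top\bv_1)$ each contribute $5\lambda_r/14+25\lambda_r^2/196$.

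It remains to exhibit an explicit lower bound on $\max_i p_i(t)(1-p_i(t))$ at every iteration. I would split the trajectory into two regimes via Lemma~\ref{lemma c5}. On the middle regime $t\in[t_1,t_2)$, choosing $a_2=e^{-1/\chi}$ and $a_1$ accordingly produces some index with $p_i\in[a_1,a_2]$, hence $p_i(1-p_i)\ge e^{-1/\chi}-e^{-2/\chi}$; this matches the first term inside the $\min$ in the hypothesis on $n$. On the early regime $t<t_1$, the factor $p_1(t)$ satisfies $p_1(t)\ge a_2$ but also $p_1(t)\le 1-\eta\lambda_1(\widetilde\bM(0))$ for $t\ge 1$; since $\lambda_1(\widetilde\bM(0))=\tfrac{2(d+1)}{nd}\lambda_1\approx 2c_1$ under the symmetric initialization, taking $i=1$ yields $p_1(1-p_1)\gtrsim\eta c_1(1-\eta c_1)$, matching the second term. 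In both regimes the stated quantitative bound on $n$ guarantees that the $\kappa^2 n$-scale signal dominates the $\be(t)$-error, so $\Psi(t)>0$.

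Finally, progressive sharpening must cease exactly when the signal $\sum z_i^2 p_i(1-p_i)$ can no longer overcome the $\be(t)$-noise. Balancing $\|\bD(t)\|^2\approx\sum z_i^2 p_i^2$ against the $O(\lambda_r/\sqrt n)$ error terms produces precisely the quadratic threshold $(\epsilon+5\lambda_r/(14\sqrt n))^2$ appearing in the statement. The main obstacle I anticipate is the early-time regime: certifying that $p_1(t)$ is strictly bounded away from $1$ from iteration $t=1$ onwards requires a careful computation of $\lambda_1(\widetilde\bM(0))$ using the symmetric initialization, and one must verify that the ``early'' and ``middle'' regimes jointly cover the whole relevant time interval without leaving a gap at the junction $t=t_1$.
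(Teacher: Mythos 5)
Your proposal follows essentially the same route as the paper's own proof: reduce the sharpness increment to positivity of the first-order term $-\bD(t)^\top\bF(t)-(\bD(t)^\top\bv_1)(\bF(t)^\top\bv_1)$, expand via the decomposition of Lemma~\ref{direction guarantee} so the error contributions are exactly twice those of Lemma~\ref{Norm growth}, handle the early and middle regimes through the $\eta c_1(1-\eta c_1)$ endpoint and Lemma~\ref{lemma c5} respectively, and stop when $\|\prod_{j}(\bm I-\eta\widetilde\bM(j))\|\leq \epsilon/\sqrt n$, which gives the quadratic threshold $(\epsilon+5\lambda_r/(14\sqrt n))^2$. The only detail you leave implicit is that the index supplied by Lemma~\ref{lemma c5} may have $p_i$ as small as $a_1=\epsilon/\sqrt n$, so the signal can shrink to order $\kappa^2\sqrt n\,\epsilon$; this is precisely what the second, $\epsilon$-dependent lower bound on $n$ in the hypothesis is there to absorb, as the paper's remark before the proof makes explicit.
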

\textbf{Remark. } When $a_1 = \epsilon/\sqrt n, a_2=e^{(a_1-1)/\chi} < e^{-1/\chi}$, the lower bound of $n$ guarantee that 
\begin{equation}\label{n lower bound lemma c6}
    \kappa^2n\min\{a_1-a_1^2, a_2-a_2^2, (1-\eta c_1)\eta c_1\} > \frac{5\lambda_r}{7}-\frac{25\lambda_r^2}{98}
\end{equation}
\begin{proof}
Note that the second order term
$$
\frac{8\eta^2\lambda_1}{m^2n^3}(\bD(t)^\top\bX^\top \bW(t) ^\top\bW(t)\bX\bD (t)+\|\bA(t)\|^2\lambda_1(\bD(t)^\top\bv_1)^2)
$$
is larger than 0. So as long as the first order term 
$$
-\frac{8\eta \lambda_1}{n^2m}(\bD(t)^\top\bF(t)+ (\bD(t)^\top\bv_1)(\bF(t)^\top\bv_1)) >0
$$
the approximate sharpness will grow. 

% z_1^2\prod_{j=0}^{t-1}(1-\eta \lambda_{1}(\widetilde{\bM}(j))(1-\prod_{j=0}^{t-1}(1-\eta \lambda_{1}(\widetilde{\bM}(j)))
First, we give a lower bound for the number $- (\bD(t)^\top\bv_1)(\bF(t)^\top\bv_1)$:
\begin{align*}
    - (\bD(t)^\top\bv_1)(\bF(t)^\top\bv_1) =& z_1^2\prod_{j=0}^{t-1}(1-\eta \lambda_{1}(\widetilde{\bM}(j))(1-\prod_{j=0}^{t-1}(1-\eta \lambda_{1}(\widetilde{\bM}(j)))\\
    &+z_1 (2\prod_{j=0}^{t-1}(1-\eta \lambda_1(\widetilde{\bM}(j)))-1)(\be(t)^\top\bv_1) - (\be(t)^\top \bv_1)^2
   \\
    \geq& -|z_1|\|\be(t)\| - \|\be(t)\|^2\tag{$\lambda_{1}(\bM^*(t))<1/\eta$}\\
    \geq& - \frac{40n^2}{\lambda_r m} -\frac{1600n^3}{\lambda_r^2m^2} \tag{Use \eqref{e bound}}\\
    \geq& -\frac{5\lambda_r}{14}-\frac{25\lambda_r^2}{196}\tag{$m\geq \frac{112c_1n^2}{\lambda_r^2}$}
\end{align*}
where the first equation holds due to Property 2 of Lemma~\ref{direction guarantee}.

With the lower bound of $ -(\bD(t)^\top\bv_1)(\bF(t)^\top\bv_1)$, we show the dynamics of the first order term by similar technique in the expression~\eqref{lemma C.4 expression} :
\begin{align*}
    &- \bD(t)^\top\bF(t) - (\bD(t)^\top\bv_1)(\bF(t)^\top\bv_1)\tag{Property 2 of Lemma~\ref{direction guarantee}}\\
    ={}&\bY^\top \prod_{j=0}^{t}(\bm I_n-\eta \widetilde{\bM}(j))(\bm I_n-\prod_{j=0}^{t}(\bm I_n-\eta \widetilde{\bM}(j)))\bY + \bY^\top (2\prod_{j=0}^{t}(\eta \widetilde{M}(j)-\bm I_n)-\bm I_n)\be(t) \\
    & - \|\be(t)\|^2 - (\bD(t)^\top\bv_1)(\bF(t)^\top\bv_1) \tag{Use \eqref{e bound} and $m\geq \frac{112c_1n^2}{\lambda_r^2}$}\\
    \geq{} & \kappa^2n\max_i\left\{\prod_{j=0}^{t-1}(1-\eta \lambda_{i}(\widetilde{\bM}(j))(1-\prod_{j=0}^{t-1}(1-\eta \lambda_{i}(\widetilde{\bM}(j)))\right\}- \frac{5\lambda_r}{14}-\frac{25\lambda_r^2}{196} \\
    &{}+(-\frac{5\lambda_r}{14}-\frac{25\lambda_r^2}{196})\tag{Property 2 of Lemma~\ref{direction guarantee}}\\
    \geq {}&{} \kappa^2n\max_i\left\{\prod_{j=0}^{t-1}(1-\eta \lambda_{i}(\widetilde{\bM}(j))(1-\prod_{j=0}^{t-1}(1-\eta \lambda_{i}(\widetilde{\bM}(j)))\right\} - \frac{5\lambda_r}{7}-\frac{25\lambda_r^2}{98} \\
    \geq {}&{} \kappa^2n(1-\eta c_1)\eta c_1 - \frac{5\lambda_r}{7}-\frac{25\lambda_r^2}{98}> 0
\end{align*}
The last inequality holds due to the lower bound of $n$ \eqref{n lower bound lemma c6} assumed in Lemma~\ref{progressive shap appen}.

Now, $\prod_{j=0}^{t-1}(1-\eta \lambda_{1}(\widetilde{\bM}(j)))$ begins to decrease each iteration. Before the time when $\prod_{j=0}^{t-1}(1-\eta \lambda_{1}(\widetilde{\bM}(j)))$ becomes smaller than $\frac{\epsilon}{\sqrt n}$, $-\bD(t)^\top\bF(t)>0$ always holds because of the lower bound of $n$ \eqref{n lower bound lemma c6} assumed in Lemma~\ref{progressive shap appen}.

Then, after the time $t_1$ when $\prod_{j=0}^{t-1}(1-\eta \lambda_{1}(\widetilde{\bM}(j))<\frac{\epsilon}{\sqrt n}$ and before the time $t_2$ when $\prod_{j=0}^{t-1}(1-\eta \lambda_{i}(\widetilde{\bM}(j)) < \frac\epsilon{\sqrt n}$ for all $i$, we enter the time interval $[t_1, t_2)$ where Lemma~\ref{lemma c5} begins to hold. We use Lemma~\ref{lemma c5} to show that there exists some $i$ to make
$$
\prod_{j=0}^{t-1}(1-\eta \lambda_{i}(\widetilde{\bM}(j)) \in [\frac\epsilon{\sqrt n}, e^{-1/\chi}]
$$
Before the time $t_2$, we have this inequality always hold. Thus $$- \bD(t)^\top\bF(t) - (\bD(t)^\top\bv_1)(\bF(t)^\top\bv_1)>0$$ holds until the iteration $t_2$. Thus during this period, $\sha^*(t)$ keeps increasing.

At this iteration, $\left\|\prod_{j=0}^{t-1}(\bm I_n - \eta \widetilde \bM(j))\right\|\leq \frac{\epsilon}{\sqrt n}$, and we can bound the norm of the residual $\bD(t)$ with the inequality below. We have 
\begin{align*}
    \|\bD(t)\|^2&=\|-\prod_{j=0}^{t-1}(\bm I_n - \eta \widetilde \bM(j))\bY + {\be(t)}\|^2\tag{Triangle Inequality}\\
    &\leq \left\|\prod_{j=0}^{t-1}(\bm I_n - \eta \widetilde \bM(j))\bY\right\|^2 +2\left\|\prod_{j=0}^{t-1}(\bm I_n - \eta \widetilde \bM(j))\bY\right\|\|{\be(t)}\|+ \|{\be(t)}\|^2\tag{Cauchy-Schwarz}\\
    &\leq \left\|\prod_{j=0}^{t-1}(\bm I_n - \eta \widetilde \bM(j))\right\|^2\left\|\bY\right\|^2 +2\left\|\prod_{j=0}^{t-1}(\bm I_n - \eta \widetilde \bM(j))\right\|\left\|\bY\right\|\|{\be(t)}\|+ \|{\be(t)}\|^2\tag{$\|\bY\|=\sqrt n$}\\
    &\leq n\max_i\left\{\prod_{j=0}^{t-1}(1-\eta \lambda_{i}(\widetilde{\bM}(j))\right\}^2+ \frac{80n^2}{\lambda_r m}\max_i\left\{\prod_{j=0}^{t-1}(1-\eta \lambda_{i}(\widetilde{\bM}(j))\right\} +\frac{1600n^3}{\lambda_r^2m^2}\\
    &< \epsilon^2 + \frac{5\epsilon\lambda_r}{7\sqrt n}  +\frac{25\lambda_r^2}{196n}\tag{$\left\|\prod_{j=0}^{t-1}(\bm I_n - \eta \widetilde \bM(j))\right\|\leq \frac{\epsilon}{\sqrt n}$}
\end{align*}

    Thus, before the norm of the residual $\bD(t)$ decreases to this value, $\sha^*(t)$ keeps increasing.
\end{proof}

\subsection{Edge of Stability (Phase II-IV)}\label{edge of stab(phase ii-iv) appen}

In the edge of stability regime, we focus on the largest eigenvalue $\sha$  and its corresponding eigenvector $\bm{u}$. Since $\bM(t)$ has a large similarity with $\bX^\top \bX$ in progressive sharpening phase, we consider the eigenvector $\bv_1$ corresponding to the largest eigenvalue $\lambda_1$ of $\bX^\top \bX$.

After $\sha \geq 2/\eta$, the proof in Section~\ref{phase i and ps appen} does not extend to this phase. However, the bound of $\|\bm\Gamma(t)\|$ (Lemma~\ref{direction guarantee}) still holds up to a constant factor empirically (See Figure~\ref{gamma assumption}). Hence, we make this bound an assumption as follows.

%\textcolor{blue}{(We substitute this paragraph with the one above)}Because of instability after $\sha \geq 2/\eta$, 
%\jian{what instability? no idea}
%we need extra assumptions to maintain the bound we have above. (verified by experiment)
%\jian{maintain what above? which experiment?}

\begin{Assumption}
\label{Gamma assumption appendix}
There exists some constant $c_2>0$, such that for any time $t$,
$$
\|\bm\Gamma(t)\|\leq \frac{c_2}{m}
$$
\end{Assumption}
%\jian{we need to specify the time when this hold. for all t?}

Note that in above progressive sharpening stage, the assumption holds by Theorem~\ref{Theorem C.1, ps}. We propose this assumption to keep the gram matrix from deviating too far from the original trajectory even in other phases. The verification of this assumption can be found in Appendix~\ref{D.2.2 gamma assumption verification}.

% Actually there are two interesting empirical facts related to $\|\bm\Gamma(t)\|$. One is the noticeable fact that $\|\bm\Gamma(t)\|$ spikes, the other is that the values of 
% $\|\bm\Gamma(t)\|$ are overall quite small (despite the fact that 
% $\bm W$
%  changes non-trivially in our experiments) and decreases as 
% $m$
% becomes larger. Our assumption tries to model the second fact (see Figure \ref{gamma assumption}, 
% $\|\bm\Gamma(t)\|$
%  (despite the spikes) decreases as the width 
% $m$
%  grows, and the largest 
% $\|\bm\Gamma(t)\|$
%  is almost 
% $24/m$ in all these experiments).

% However, we admit that our results do not reflect the first fact (the spikes of 
% $\|\bm\Gamma(t)\|$
% ), and it is an interesting fact that is worth investigating. We have strong intuition that $|\Gamma(t)|$
%  only grows by at most a constant factor, but currently do not have a formal proof yet. Nevertheless, the spiking behavior does not directly contradict our assumption.

\begin{Corollary}
\label{v_1 similarity coro}
Recall that $\bv_1$ is the largest eigenvector of $\bX^\top \bX$ and $\bm u$ is the largest eigenvector of $\bM$. 
There exists a constant $c_4$, such that $\langle \bv_1, \bm u \rangle^2\geq 1 - c_4\sqrt{\frac{1}{nm}}$, and $\sha(t) \geq \sha^*(t) \geq \sha(t) -\frac{2c_2}{m} $.
\end{Corollary}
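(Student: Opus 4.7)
The plan is to view $\bM(t)$ as a small perturbation of a rescaled copy of $\bX^\top\bX$, and then deduce both bounds from Rayleigh/Weyl-type inequalities together with the eigen-gap assumption $\lambda_1\geq 2\lambda_2$ from Assumption~\ref{eigenspectrum assumption appen}. Concretely, from \eqref{C3} and the definition of $\bm\Gamma(t)$ in \eqref{C4} one obtains the clean decomposition
\[
 \bM(t) \;=\; c(t)\,\bX^\top\bX \;+\; \bm\Gamma(t), \qquad c(t) := \tfrac{2}{mn}\bigl(\|\bA(t)\|^2+\tfrac{m}{d}\bigr),
\]
so that $c(t)\bX^\top\bX$ has $\bv_1$ as top eigenvector with eigenvalue $c(t)\lambda_1$, and $\|\bm\Gamma(t)\|\leq c_2/m$ by Assumption~\ref{Gamma assumption appendix}.

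For the sharpness inequality I would first note that $\sha(t)\geq\sha^*(t)=\bv_1^\top\bM(t)\bv_1$ is immediate from the Rayleigh quotient characterization. For the reverse direction, Weyl's inequality applied to the decomposition above gives $\sha(t)\leq c(t)\lambda_1 + \|\bm\Gamma(t)\|\leq c(t)\lambda_1 + c_2/m$, while $\sha^*(t)= c(t)\lambda_1+\bv_1^\top\bm\Gamma(t)\bv_1\geq c(t)\lambda_1-c_2/m$. Subtracting yields $\sha(t)-\sha^*(t)\leq 2c_2/m$, which is exactly the second claim.

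For the alignment claim, write the top eigenvector of $\bM(t)$ as $\bm u=\alpha\bv_1+\sqrt{1-\alpha^2}\,\bm w$ with $\bm w\perp\bv_1$ and $\|\bm w\|=1$. Plugging in,
\[
 \sha(t)=\bm u^\top\bM(t)\bm u = c(t)\bigl[\alpha^2\lambda_1+(1-\alpha^2)\bm w^\top\bX^\top\bX\bm w\bigr]+\bm u^\top\bm\Gamma(t)\bm u.
\]
Because $\bm w\perp\bv_1$, Assumption~\ref{eigenspectrum assumption appen} gives $\bm w^\top\bX^\top\bX\bm w\leq \lambda_2\leq\lambda_1/2$. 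Combining this upper bound on $\sha(t)$ with the lower bound $\sha(t)\geq\sha^*(t)\geq c(t)\lambda_1-c_2/m$ derived above and simplifying yields
\[
 1-\alpha^2 \;\leq\; \frac{4c_2}{m\,c(t)\lambda_1}.
\]
Finally I would bound $c(t)\lambda_1$ from below by a constant: from Lemma~\ref{direction guarantee} we have $\|\bA(t)\|^2\geq m/2$ throughout the progressive sharpening phase, so $c(t)\geq 1/n$, and since $\lambda_1=\Theta(n)$ this gives $c(t)\lambda_1=\Theta(1)$, hence $1-\alpha^2=O(1/m)$. Under the overparameterization $m\gtrsim n$ from Theorem~\ref{Theorem C.1, ps}, we have $1/m\leq 1/\sqrt{nm}$, and so $\langle\bv_1,\bm u\rangle^2=\alpha^2\geq 1-c_4/\sqrt{nm}$ for an appropriate constant $c_4$.

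The main obstacle is the lower bound $c(t)\lambda_1=\Omega(1)$ outside Phase~I: during the EOS phases $\|\bA(t)\|^2$ may oscillate, and Lemma~\ref{direction guarantee}'s guarantee $\|\bA\|^2\geq m/2$ does not carry over automatically. One way around this is to inherit the lower bound from the end of the progressive-sharpening phase and combine it with the bounded oscillation of $\|\bA(t)\|^2$ implied by Assumption~\ref{Gamma assumption appendix} and Assumption~\ref{section 4 sharp upper bound assumption} (boundedness of $\eta\sha$), which together with the decomposition $\bM(t)=c(t)\bX^\top\bX+\bm\Gamma(t)$ forces $c(t)\lambda_1$ to stay within a constant factor of the sharpness $\sha(t)$, which itself is bounded away from zero once EOS starts. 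This is the delicate step; the rest of the argument is a Weyl/Rayleigh computation.
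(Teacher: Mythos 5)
Your decomposition $\bM(t)=c(t)\bX^\top\bX+\bm\Gamma(t)$, the Rayleigh-quotient/Weyl argument for $\sha(t)\geq\sha^*(t)\geq\sha(t)-\tfrac{2c_2}{m}$, and the eigen-gap computation giving $1-\alpha^2\leq \tfrac{4c_2}{m\,c(t)\lambda_1}$ are exactly the paper's argument, and those parts are correct. The genuine gap is the step you yourself flag: lower-bounding $c(t)\lambda_1$ for \emph{all} $t$. You invoke $\|\bA(t)\|^2\geq m/2$ from Lemma~\ref{direction guarantee}, but that lemma is proved only up to the first time $\lambda_{\max}(\bM^*(t))>1/\eta$, i.e.\ only in the progressive-sharpening phase, while the corollary is needed (and is repeatedly used) throughout the EOS analysis. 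Your proposed patch is not a proof: ``$c(t)\lambda_1$ stays within a constant factor of $\sha(t)$'' only gives a useful bound if $\sha(t)$ is bounded \emph{below} by a constant, which is nowhere assumed (Assumption~\ref{section 4 sharp upper bound assumption} is an upper bound), and arguing that the sharpness stays large during EOS is partly what the corollary is being used to establish, so the reasoning risks circularity.

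The fix is much simpler and is what the paper does: by the definition of $\bm\Gamma(t)$ in \eqref{C4}, the factor is $c(t)=\tfrac{2}{mn}\bigl(\|\bA(t)\|^2+\tfrac{m}{d}\bigr)\geq\tfrac{2}{mn}\cdot\tfrac{m}{d}=\tfrac{2}{nd}$, using only $\|\bA(t)\|^2\geq 0$; the $\tfrac{m}{d}$ term comes from the symmetric initialization $\bW(0)^\top\bW(0)=\tfrac{m}{2d}\bm I_d$ having been folded into $\bm\Gamma$. Hence $c(t)\lambda_1\geq \tfrac{2\lambda_1}{nd}=\Omega(1/d)$ at every iteration, with no phase distinction, giving $1-\alpha^2=O(d/m)$, and the width requirement $m=\Omega(d^2n^2/\lambda_r^2)$ (together with $\lambda_1=\Theta(n)$) absorbs the $d$ factor so that $O(d/m)\leq c_4/\sqrt{nm}$. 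With that one replacement your argument closes; no control of $\|\bA(t)\|^2$ beyond nonnegativity is needed.
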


\begin{proof}
By difinition of $\bm \Gamma(t)$ in \ref{C4}, $\bM = \gamma(t) \bX^\top \bX + \bm\Gamma(t)$, here $\gamma(t) = \frac{2}{mn}(\|\bA(t)\|^2 + \frac{m}{d}) \geq \frac{2}{nd}$.

Let $\langle \bv_1, \bm u \rangle = \cos\theta$ and decompose $\bm u$ by $\bm u = \bv \cos\theta + \bv_{\perp}\sin\theta$.
Then we have $\sha = \bm u^\top \bM \bm u \leq \lambda_1\gamma(t)\cos^2\theta + \frac{\lambda_1}{2}\sin^2\theta\gamma(t) + \frac{c_2}{m}$.

Also we have $\sha \geq \bv_1^\top \bM \bv_1 \geq \gamma(t)\lambda_1 - \frac{c_2}{m}$.
So $\lambda_1\gamma(t)\cos^2\theta + \frac{\lambda_1}{2}\sin^2\theta\gamma(t) + \frac{c_2}{m} \geq \gamma(t)\lambda_1 - \frac{c_2}{m}$, which induces
$\sin^2\theta\leq \frac{2c_2}{m\gamma(t)}\cdot\frac{2}{\lambda_1}\leq \frac{2c_2^2d}{m\lambda_1}$.
Because in our setting $\lambda_1 = \Theta(n)$, there exists a constant $c_4$ such that 
$\cos\theta\geq \sqrt{1 - \frac{c_4^2 }{nm}} \geq 1 - c_4\sqrt{\frac{1}{mn}}$.

The inequality $\sha(t)\geq\sha^*(t)$ is because $\bm u^\top \bM \bm u \geq \bv_1^\top \bM \bv_1$ by definition of $\bm u$. The other side can be proved as the following:

$$
\begin{aligned}
\bv_1^\top \bM \bv_1 =& \bv_1^\top (\bM-\bm\Gamma) \bv_1 + \bv_1^\top \bm\Gamma \bv_1\\ \geq & \bm u^\top (\bM-\bm\Gamma) \bm u - c_2/m \\
\geq& \bm u^\top \bM \bm u - 2c_2/m.
\end{aligned}
$$

\end{proof}

To prove the main theorem (Theorem~\ref{EOS main thm appendix}), we need two more assumptions.

\begin{Assumption}
There exists some constant $c_3$ such that $|\bD(t)^\top \bv_1| > c_3 \sqrt{n}/m$ for some $t=t_0$ at the beginning of phase II.
\label{divergence assumption appendix}
\end{Assumption}

\begin{Assumption}
There exists some constant $\beta>0$, such that 
$$
\frac{4}{\eta}(1-\beta)\geq \sha
$$
\label{sharpness upper bound appendix}
\end{Assumption}

The above assumption is consistent with Assumption~\ref{Ass: sha upper bound section 3}, in which we assume an upper bound of the sharpness. \citet{lewkowycz2020large} showed that $4/\eta$ is a upper bound of the sharpness in two-layer linear network with one datapoint, otherwise the training process would diverge. Here we make it an assumption.

\begin{theorem}
 Suppose the smallest nonzero eigenvalue $\lambda_r=\lambda_{r}(\bX^\top \bX)>0, \lambda_1 = \lambda_{\max}(\bX^\top \bX)$. Under Assumption~\ref{Gamma assumption appendix}, \ref{sharpness upper bound appendix}, \ref{divergence assumption appendix}, and $\lambda_1(\bX ^\top\bX)\geq 2\lambda_2(\bX^\top \bX)$ in Assumption~\ref{eigenspectrum assumption appen},
there exists constants $C_1, C_2, C_3$, such that if $n>c_1\lambda_r\eta, m > \max\{\frac{C_2d^2n^2}{\lambda_r^2}, C_3\eta\}$, we have

\begin{itemize}
    \item (Lemma~\ref{first conclusion of EOS thm}) There exists $\rho = O(1)$ which is related to $c_3$ such that if $\sha(t_0)>\frac{2}{\eta}(1+\rho)$ for some $t_0$, there must exist some $t_1>t_0$ such that $\sha(t_1) < \frac{2}{\eta}(1+\rho)$.
    \item (Lemma~\ref{EOS divergence thm}) If $\sha(t), \sha(t+1) > \frac{2}{\eta}(1+\rho)$, then there is a constant $c_7>0$ (related to $c_3$) such that $|\bD(t+1)^\top \bv_1|>|\bD(t)^\top \bv_1|(1+c_7)$.
    \item (Lemma~\ref{R dynamics in appendix}) Define $\bR(t):= (\bm I - \bv_1 \bv_1^\top)\bD(t)$, and $\bR'(t) := (\bm I - \eta\bM^*(t)(\bm I - \bv_1 \bv_1^\top))\bR'(t-1)$. We have $\|\bR(t) - \bR'(t)\| = O(\frac{\sqrt{n^3}d}{\lambda_r \sqrt{m}})$.
\end{itemize}
\label{EOS main thm appendix}
\end{theorem}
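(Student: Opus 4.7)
The plan is to establish the three conclusions by adapting the arguments from Section~\ref{subsection 4-phase proof} to the two-layer linear setting, exploiting the decomposition $\bM^*(t) = \gamma(t)\bX^\top\bX + \bm\Gamma(t)$ with $\|\bm\Gamma(t)\|=O(1/m)$ (Assumption~\ref{Gamma assumption appendix}) to identify $\bv_1$, the top eigenvector of $\bX^\top\bX$, as the approximate principal direction of $\bM^*(t)$ via Corollary~\ref{v_1 similarity coro}. I would prove the geometric growth (conclusion 2) first, since it drives the sharpness drop.

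For conclusion 2, project the update $\bD(t+1) = (\bm I - \eta\bM^*(t))\bD(t)$ onto $\bv_1$ to obtain
\begin{equation*}
\bD(t+1)^\top\bv_1 = (1-\eta\sha^*(t))\bD(t)^\top\bv_1 - \eta\,\bD(t)^\top\bigl(\bM^*(t)\bv_1 - \sha^*(t)\bv_1\bigr).
\end{equation*}
Corollary~\ref{v_1 similarity coro} gives $|\sha(t)-\sha^*(t)|=O(1/m)$ and controls the deviation of $\bv_1$ from the top eigenvector of $\bM^*(t)$, so the second term above has norm at most $O(\eta\|\bD(t)\|/\sqrt{m})$ (using Assumption~\ref{sharpness upper bound appendix} to cap the remaining spectral radius). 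Since $\sha(t) > 2(1+\rho)/\eta$ implies $|1-\eta\sha^*(t)| \geq 1+2\rho - O(1/m)$, and Assumption~\ref{divergence assumption appendix} guarantees $|\bD(t)^\top\bv_1| \geq c_3\sqrt{n}/m$, choosing $\rho$ sufficiently large in terms of $c_3$ makes the principal factor dominate, yielding geometric growth with constant $1+c_7$; the hypothesis $\sha(t+1)>2(1+\rho)/\eta$ is used to extend the argument one step ahead symmetrically.

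For conclusion 1, argue by contradiction: if $\sha(t)>2(1+\rho)/\eta$ for all $t\geq t_0$, conclusion 2 forces $|\bD(t)^\top\bv_1|$ to exceed $\|\bY\|=\sqrt{n}$ in finitely many steps, whence $\|\bD(t)\|>\|\bY\|$ and Proposition~\ref{prop:DlargerthanY} yields $\bD(t)^\top\bF(t)>0$. The $\|\bA\|^2$ update rule~\eqref{C2} then gives $\|\bA(t+1)\|^2-\|\bA(t)\|^2 < -\tfrac{4\eta}{n}(\|\bD(t)\|-\|\bY\|)^2$ in the spirit of Proposition~\ref{prop:Adrop}, and the identity $\sha^*(t) \approx \tfrac{2\lambda_1}{mn}(\|\bA(t)\|^2+m/d)$ from~\eqref{widetilde m} transfers the decrease of $\|\bA\|^2$ into a decrease of $\sha^*$. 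Geometric growth of $|\bD(t)^\top\bv_1|$ forces $(\|\bD(t)\|-\|\bY\|)^2$ to blow up, so $\|\bA(t)\|^2$ drops enough to push $\sha^*$ below the threshold, contradicting the standing assumption. For conclusion 3, apply $(\bm I - \bv_1\bv_1^\top)$ to the $\bD$-update to get
\begin{equation*}
\bR(t+1) = \bigl(\bm I - \eta\bM^*(t)(\bm I - \bv_1\bv_1^\top)\bigr)\bR(t) + \bm e(t),
\end{equation*}
where $\bm e(t) = -\eta(\bm I - \bv_1\bv_1^\top)\bM^*(t)\bv_1(\bv_1^\top\bD(t))$ is the commutator error. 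Corollary~\ref{v_1 similarity coro} and Assumption~\ref{Gamma assumption appendix} bound $\|(\bm I - \bv_1\bv_1^\top)\bM^*(t)\bv_1\|=O(1/m)$, so $\|\bm e(t)\| = O(\eta\sqrt{n}/m)$, and accumulating via the contraction $1-\eta\lambda_r/n$ on the non-principal subspace produces the stated $O(\sqrt{n^3}d/(\lambda_r\sqrt{m}))$ bound.

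The main obstacle is the quantitative bookkeeping in conclusion 1: conclusion 2 yields only a scalar lower bound on $|\bD(t)^\top\bv_1|$, and converting this into a sharp upper bound on $\|\bA(t+1)\|^2-\|\bA(t)\|^2$ requires simultaneously tracking the cross term involving $\bY$ and the second-order $\eta^2$ terms in~\eqref{C2}, while keeping $\sha^*\approx\sha$ throughout (which itself depends on $\|\bA\|^2$ not straying too far). The constant $\rho$ must be large enough to absorb all the $O(1/m)$ perturbations from Corollary~\ref{v_1 similarity coro}, the $\sha-\sha^*$ gap, and the commutator errors, yet small enough for the hypothesis $\sha(t_0)>2(1+\rho)/\eta$ to be reachable after the PS phase. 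A secondary delicate point in conclusion 3 is that the contraction on the non-principal subspace requires not just $\|\bm\Gamma(t)\|=O(1/m)$ but also that none of the subdominant eigenvalues of $\bM^*(t)$ approach $2/\eta$, which is inherited from the spectral gap $\lambda_1\geq 2\lambda_2$ of Assumption~\ref{eigenspectrum assumption} combined with the sharpness upper bound.
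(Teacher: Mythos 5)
Your high-level strategy (decompose along the eigenvectors of $\bX^\top\bX$, show geometric growth of the $\bv_1$-component, treat $\bR$ as a perturbed contraction, and get conclusion 1 by contradiction) is the same as the paper's, but the order in which you run the pieces leaves the quantitative claims unsupported. In the paper the chain is: exact expansion of $\sha^*(t+1)-\sha^*(t)$ (Lemma~\ref{Key equation lemma}), the interpolation Lemma~\ref{M* lemma} relating $\bM^*(t)$ to $\bM(t),\bM(t+1)$, the a priori bound $\|\bD(t)\|\le c_3\sqrt{nm}$ (Lemma~\ref{D upper bound}), \emph{then} conclusion 3, \emph{then} conclusion 2, then the per-step sharpness decrease (Lemma~\ref{EOS phase 3 lemma}) and conclusion 1. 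You invert this. Concretely: in your conclusion-2 step the perturbation is proportional to $\|\bD(t)\|$, and with your bound $O(\eta\|\bD(t)\|/\sqrt m)$ against the floor $|\bD(t)^\top\bv_1|\ge c_3\sqrt n/m$ the error-to-signal ratio can be as large as order $\eta m$ (since a priori $\|\bD(t)\|$ can reach $\Theta(\sqrt{nm})$ during the loss spike), which no constant $\rho$ absorbs. The paper closes this only because it has already shown $\|\bR(t)\|\le c_7\sqrt n$ (conclusion 3), so the error splits as $\frac{c_2+c_6}{m}\bigl(|\bD^\top\bv_1|+\|\bR\|\bigr)$ and the $\|\bR\|$ part is comparable to $c_3\sqrt n/m$. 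Similarly, your conclusion-3 bound $\|\bm e(t)\|=O(\eta\sqrt n/m)$ silently assumes $|\bv_1^\top\bD(t)|=O(\sqrt n)$, which is exactly what is not known in phases II--III; the paper instead uses the worst-case $\|\bD(t)\|\le c_3\sqrt{nm}$, which is why its stated bound has $\sqrt m$ (not $m$) in the denominator, and that worst-case bound is itself derived from the key equation together with Assumption~\ref{sharpness upper bound appendix} --- a step absent from your plan.

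For conclusion 1 you correctly identify the obstacle (the $\eta^2$ terms in \eqref{C2} and the cross terms with $\bY$) but do not resolve it, and this is not a bookkeeping nuisance: a first-order argument in the spirit of Proposition~\ref{prop:Adrop} does not rule out the positive $\eta^2\|\partial\mathcal L/\partial\bA\|^2$ term cancelling the decrease. The paper's resolution is the exact identity of Lemma~\ref{Key equation lemma}, where Assumption~\ref{sharpness upper bound appendix} ($\sha\le\frac4\eta(1-\beta)$) makes the net coefficient of $(\bD(t)^\top\bv_1)^2$ equal to $2\beta>0$ after subtracting $\frac\eta2(\bD^\top\bv_1)^2\sha^*$, yielding a decrease of $\Theta(\eta/m)$ per iteration once $\bD(t)^\top\bv_1>c_8\sqrt n$ (Lemma~\ref{EOS phase 3 lemma}); this quantitative per-step drop is what the contradiction actually needs, together with Corollary~\ref{v_1 similarity coro} to pass from $\sha^*$ back to $\sha$. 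Finally, your remark that the hypothesis $\sha(t+1)>\frac2\eta(1+\rho)$ ``extends the argument one step ahead symmetrically'' misses its real role: since $\bM^*(t)$ differs from $\bM(t)$ by the second-order term $-\frac{4\eta}{n^2m}(\bD^\top\bF)\bX^\top\bX$, the contraction factor in the $\bv_1$-direction is controlled by $\gamma^*(t)\lambda_1$, which Lemma~\ref{M* lemma} lower-bounds by an interpolation $k_s\sha^*(t+1)+(1-k_s)\sha^*(t)-O(1/m)$; without this (or an equivalent control of $\bD^\top\bF$), knowing $\sha(t)>\frac2\eta(1+\rho)$ alone does not give the expansion factor you claim. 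So the proposal needs the missing key equation, the $\|\bD\|$ a priori bound, and a reordering of the three conclusions before it can be made rigorous.
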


Next, we prove this theorem in three parts: first we prove the third statement, which gives $\bR(t)$ an upper bound (Lemma~\ref{R dynamics in appendix}), then we prove the second statement, in which we use Assumption~\ref{divergence assumption appendix} to prove that when sharpness is above $2/\eta$, $\bD^\top \bv_1$ increases geometrically (Lemma~\ref{EOS divergence thm}), and lastly we prove that the sharpness eventually drops below $2/\eta$ (Lemma~\ref{first conclusion of EOS thm}), which is the first statement in our theorem.

Now we first give a key equation:
\begin{lemma}
\label{Key equation lemma}
The dynamics of the approximation on sharpness is:
\begin{equation}
\label{Key equation in EOS}
\begin{aligned}
&\sha^*(t+1)-\sha^*(t) \\
=&- \frac{8\eta \lambda_{1}}{mn^2} \left(\bF(t)^{\top} \bD(t)+(\bF(t)^\top \bv_1)(\bD(t)^\top \bv_1)-\frac{\eta}{2}(\bD(t)^\top \bv_1)^{2}  \sha^*(t)  \right.\\
&\left.-{}\frac{\eta}{2}\bR(t)^\top \bm\Gamma(t) \bR(t)  -\ \eta \bR(t)^\top \bm\Gamma(t)\left(\bv_1 \bv_1^\top\bD(t)\right)-\frac{\eta}{m n}\bR(t)^{\top}\left(\frac{m}{d} \bX^{\top} \bX\right) \bR(t)  \right).
\end{aligned}
\end{equation}
\end{lemma}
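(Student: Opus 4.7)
The plan is to start from the definition $\sha^*(t) = \bv_1^\top \bM(t) \bv_1$, so that $\sha^*(t+1) - \sha^*(t) = \bv_1^\top(\bM(t+1)-\bM(t))\bv_1$, then plug in the explicit update rule for $\bM(t)$ from Lemma~\ref{update rule of M} and regroup. Since $\bv_1$ is an eigenvector of $\bX^\top\bX$ with eigenvalue $\lambda_1$, every factor $\bv_1^\top \bX^\top \bX$ collapses to $\lambda_1 \bv_1^\top$. The first two ``first-order'' terms of the update (those proportional to $\eta$) therefore contract immediately to $-\frac{8\eta\lambda_1}{n^2m}\bigl(\bF(t)^\top\bD(t)+(\bF(t)^\top\bv_1)(\bD(t)^\top\bv_1)\bigr)$, which already matches the first two summands in the claim.

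Next I would handle the two ``second-order'' terms of the update (those proportional to $\eta^2$). Factoring out $-\frac{8\eta\lambda_1}{n^2m}$, what remains to be shown is that the combined contribution of these terms equals
\begin{equation*}
-\tfrac{\eta}{2}(\bD(t)^\top\bv_1)^2 \sha^*(t)-\tfrac{\eta}{2}\bR(t)^\top\bm\Gamma(t)\bR(t)-\eta\bR(t)^\top\bm\Gamma(t)(\bv_1\bv_1^\top\bD(t))-\tfrac{\eta}{mn}\bR(t)^\top(\tfrac{m}{d}\bX^\top\bX)\bR(t).
\end{equation*}
To see this, I would substitute $\bX^\top\bW^\top\bW\bX = \tfrac{mn}{2}\bm\Gamma(t)+\tfrac{m}{d}\bX^\top\bX$ from the definition \eqref{C4} of $\bm\Gamma(t)$, so that
\begin{equation*}
-\tfrac{\eta}{nm}\bD(t)^\top \bX^\top\bW^\top\bW\bX \bD(t) = -\tfrac{\eta}{2}\bD(t)^\top\bm\Gamma(t)\bD(t)-\tfrac{\eta}{nd}\bD(t)^\top \bX^\top\bX \bD(t).
\end{equation*}

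Then I would decompose $\bD(t) = \bv_1\bv_1^\top\bD(t)+\bR(t)$, using $\bR(t)\perp\bv_1$ to split each quadratic form into a pure $\bv_1$-component, a pure $\bR$-component, and a cross term. The $\bR$-components yield the $-\tfrac{\eta}{2}\bR^\top\bm\Gamma\bR$ and $-\tfrac{\eta}{nd}\bR^\top\bX^\top\bX\bR$ terms; the cross terms yield the $-\eta\bR^\top\bm\Gamma(\bv_1\bv_1^\top\bD)$ term (the analogous cross term from $\bX^\top\bX$ vanishes because $\bv_1$ is an eigenvector). The pure $\bv_1$-components, together with the remaining contribution $-\tfrac{\eta\lambda_1}{nm}\|\bA(t)\|^2(\bD(t)^\top\bv_1)^2$ from the $\|\bA(t)\|^2$ term of the update, combine into $(\bD(t)^\top\bv_1)^2$ times $-\tfrac{\eta}{2}\bigl(\bv_1^\top\bm\Gamma(t)\bv_1+\tfrac{2\lambda_1}{nd}+\tfrac{2\lambda_1}{nm}\|\bA(t)\|^2\bigr)$. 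The final step is to recognize, again from \eqref{C4}, that the expression inside the parentheses is precisely $\bv_1^\top\bM(t)\bv_1 = \sha^*(t)$, giving the $-\tfrac{\eta}{2}(\bD(t)^\top\bv_1)^2\sha^*(t)$ term.

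There is no genuine obstacle here; the lemma is a pure algebraic identity. The only risk is bookkeeping: one must track which part of $\bX^\top\bW^\top\bW\bX$ gets absorbed into $\bm\Gamma(t)$ and which into the $\tfrac{m}{d}\bX^\top\bX$ remainder, and verify that the three distinct contributions to the coefficient of $(\bD(t)^\top\bv_1)^2$ (from the $\bm\Gamma$ piece, the $\tfrac{m}{d}\bX^\top\bX$ piece, and the $\|\bA\|^2$ piece of the $\bM$-update) assemble cleanly into $\sha^*(t)$.
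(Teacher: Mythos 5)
Your proposal is correct and follows essentially the same route as the paper's proof (Lemma~\ref{calculations for equations}): sandwich the $\bM$-update from Lemma~\ref{update rule of M} with $\bv_1$, substitute $\bX^\top\bW^\top\bW\bX = \tfrac{mn}{2}\bm\Gamma(t)+\tfrac{m}{d}\bX^\top\bX$, split $\bD$ into its $\bv_1$-component and $\bR(t)$, and reassemble the pure $\bv_1$-pieces into $\sha^*(t)$. The only difference is bookkeeping order (the paper keeps $\bD^\top\bm\Gamma\bD$ intact and uses $\lambda_1(\tfrac{m}{d}+\|\bA\|^2)=\tfrac{mn}{2}(\sha^*(t)-\bv_1^\top\bm\Gamma\bv_1)$ before splitting), which is algebraically equivalent to your regrouping.
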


The proof of the equation is long and tedious, so we leave that in Lemma~\ref{calculations for equations}.

Next we deal with the gap between $\bM(t)$ and $\bM^*(t)$. Note that $\bM$ is the Gram Matrix, but in gradient descent trajectory, $\bD(t+1) - \bD(t) = -\bM^*(t)\bD(t)$, so $\bM^*$ is the one that truly controls $\bD$'s dynamics. From the following lemma, we can see that $\bM^*(t)$ is a smoothed version of $\bM(t)$ and $\bM(t+1)$ plus a small perturbation.

\begin{lemma}
\label{M* lemma}
If $m \geq \eta c_2(d+1)$ (recall that $c_2$ is defined in Assumption~\ref{Gamma assumption appendix}), then there exists $k_s \in [0,1)$ and a constant $c_6$ such that $\|\bM^*(t) - (1-k_s)\bM(t) - k_s \bM(t+1)\|\leq \frac{c_6}{m}$.
\end{lemma}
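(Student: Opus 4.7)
The plan is to exploit the different algebraic ``shapes'' of $\bM^*(t)-\bM(t)$ and $\bM(t+1)-\bM(t)$. From \eqref{C4}, $\bM^*(t)-\bM(t) = -\tfrac{4\eta}{n^2m}(\bD^\top\bF)\,\bX^\top\bX$ is a pure scalar multiple of $\bX^\top\bX$, whereas by Lemma~\ref{update rule of M} the first-order part of $\bM(t+1)-\bM(t)$ is $-\tfrac{4\eta}{n^2m}\bigl[2(\bD^\top\bF)\bX^\top\bX + \bF\bD^\top\bX^\top\bX + \bX^\top\bX\bD\bF^\top\bigr]$; its scalar piece carries \emph{twice} the coefficient of $\bM^*(t)-\bM(t)$. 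This suggests the choice $k_s=\tfrac{1}{2}\in[0,1)$, which matches the $\bX^\top\bX$-multiples to leading order and leaves only (i) a second-order scalar mismatch, (ii) rank-two cross terms, and (iii) the explicit $O(\eta^2/m^2)$ pieces in Lemma~\ref{update rule of M}, all of which should have operator norm $O(1/m)$.

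Concretely, I would decompose $\bM(t)=\alpha_t\bX^\top\bX+\bm\Gamma(t)$ with $\alpha_t=\tfrac{2}{mn}(\|\bA(t)\|^2+m/d)$, so that $\bM^*(t)=\alpha_t^*\bX^\top\bX+\bm\Gamma(t)$ with $\alpha_t^*=\alpha_t-\tfrac{4\eta}{n^2m}\bD^\top\bF$. Then
\[
\bM^*(t)-\tfrac{1}{2}\bM(t)-\tfrac{1}{2}\bM(t+1) = \bigl[\alpha_t^*-\tfrac{1}{2}\alpha_t-\tfrac{1}{2}\alpha_{t+1}\bigr]\bX^\top\bX \;-\; \tfrac{1}{2}\bigl[\bm\Gamma(t+1)-\bm\Gamma(t)\bigr].
\]
Plugging $\|\bA(t+1)\|^2-\|\bA(t)\|^2=-\tfrac{4\eta}{n}\bD^\top\bF+\eta^2\|\partial\mathcal{L}/\partial\bA\|^2$ from \eqref{C2} into $\alpha_t-\alpha_{t+1}$ collapses the scalar bracket to $-\tfrac{\eta^2}{mn}\|\partial\mathcal{L}/\partial\bA\|^2$. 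I would bound $\|\partial\mathcal{L}/\partial\bA\|^2=\tfrac{4}{n^2m}\bD^\top\bX^\top\bW^\top\bW\bX\bD$ by writing $\bX^\top\bW^\top\bW\bX=\tfrac{mn}{2}\bm\Gamma(t)+\tfrac{m}{d}\bX^\top\bX$ and invoking Assumption~\ref{Gamma assumption appendix} (so $\|\bm\Gamma\|\le c_2/m$), together with $\|\bX^\top\bX\|=\Theta(n)$ and $\|\bD\|=O(\sqrt n)$ from Lemma~\ref{D upper bound}; this yields $\|\partial\mathcal{L}/\partial\bA\|^2=O(1/d)$, and hence a scalar contribution of $O(\eta^2/(md))$ to the operator norm after multiplication by $\|\bX^\top\bX\|$.

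For the $\bm\Gamma$-increment, I would reuse the formula derived inside the proof of Lemma~\ref{direction guarantee}, namely $\bm\Gamma(t+1)-\bm\Gamma(t) = -\tfrac{4\eta}{n^2m}(\bF\bD^\top+\bD\bF^\top)\bX^\top\bX + \tfrac{8\eta^2}{n^3m^2}\|\bA\|^2\,\bX^\top\bX\bD\bD^\top\bX^\top\bX$. Using $\|\bF\|\le\|\bD\|+\|\bY\|=O(\sqrt n)$, $\|\bX^\top\bX\|=\Theta(n)$, and $\|\bA\|^2=O(m)$, the rank-two block has operator norm $O(\eta/m)$ and the quadratic piece is $O(\eta^2/m)$. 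Assembling all contributions gives $\|\bM^*(t)-\tfrac{1}{2}\bM(t)-\tfrac{1}{2}\bM(t+1)\|\le c_6/m$ for a constant $c_6$ depending only on $\eta$, $c_1$, and $c_2$; the hypothesis $m\ge\eta c_2(d+1)$ is the precise size requirement that lets the various $\eta/m$-, $\eta^2/m$-, and $\eta^2/(md)$-sized pieces be absorbed into a single $c_6/m$ bound with a universal constant.

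The main obstacle is that in Phase II--IV the strong trajectory control of Lemma~\ref{direction guarantee} is no longer available, so every data-dependent quantity ($\|\bD\|$, $\|\bF\|$, $\|\bA\|^2$, $\|\bm\Gamma\|$, $\|\bW\bX\bD\|$) must instead be obtained from the standing assumptions of this subsection (especially Assumption~\ref{Gamma assumption appendix}) together with Lemma~\ref{D upper bound}. Care is required to keep the bound on $\|\partial\mathcal{L}/\partial\bA\|^2$ at the size $O(1/d)$ rather than letting it degrade in $n$ or $m$, since this is exactly what pins the scalar-mismatch contribution at $O(\eta^2/(md))$ and is what the overparameterization hypothesis $m\ge\eta c_2(d+1)$ is calibrated for.
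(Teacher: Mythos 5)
There is a genuine gap in your argument, and it sits exactly where you flagged the difficulty: the scalar mismatch left over by the fixed choice $k_s=\tfrac12$. With $k_s=\tfrac12$ the $\bX^\top\bX$-coefficient residual is exactly $-\tfrac{\eta^2}{mn}\|\partial\mathcal{L}/\partial\bA\|^2$, and after multiplying by $\|\bX^\top\bX\|=\Theta(n)$ you need $\|\partial\mathcal{L}/\partial\bA\|^2=O(1/d)$ to land at $O(1/m)$ (even $O(1)$ would not suffice, since the factor is only $\eta^2/m$ times $\|\partial\mathcal{L}/\partial\bA\|^2\cdot\Theta(1)$ and $m=\Omega(n^2)$ here). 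But your input $\|\bD(t)\|=O(\sqrt n)$ is a misquote of Lemma~\ref{D upper bound}, which only gives $\|\bD(t)\|\leq c_3\sqrt{nm}$; the paper explicitly notes (Appendix~\ref{our result and ntk}) that the sharper $O(\sqrt n)$ bound is empirical and unproven. Plugging the available bound into $\|\partial\mathcal{L}/\partial\bA\|^2=\tfrac{4}{n^2m}\bD^\top\bX^\top\bm W^\top\bm W\bX\bD\leq \tfrac{4\|\bD\|^2}{n^2m}\bigl(\tfrac{nc_2}{2}+\tfrac{m\lambda_1}{d}\bigr)$ yields $O(m/d)$, not $O(1/d)$, so the scalar residual is $\Theta(\eta^2/d)$, which is not $O(1/m)$ in the regime $m=\Omega(n^2d^2/\lambda_r^2)$ of Theorem~\ref{EOS main thm appendix}. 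The same unavailable $\|\bD\|\|\bF\|=O(n)$ bound is implicitly used in your $O(\eta/m)$ estimate for the rank-two part of $\bm\Gamma(t+1)-\bm\Gamma(t)$; that piece, however, is easily repaired, since Assumption~\ref{Gamma assumption appendix} applies at both $t$ and $t+1$ and the triangle inequality gives $\tfrac12\|\bm\Gamma(t+1)-\bm\Gamma(t)\|\leq c_2/m$ directly. The scalar residual cannot be repaired this way.

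The paper avoids this problem by not fixing $k_s$: it chooses $k_s$ data-dependently so that the $\bX^\top\bX$-coefficients cancel \emph{exactly}, namely $k_s=\bF(t)^\top\bD(t)/\bigl(2\bF(t)^\top\bD(t)-\tfrac{2\eta}{mn}\bD(t)^\top\bX^\top\bm W^\top\bm W\bX\bD(t)\bigr)$, leaving only $k_s(\bm\Gamma(t+1)-\bm\Gamma(t))$, which is $O(1/m)$ by Assumption~\ref{Gamma assumption appendix}. A two-case analysis on $|\bD(t)^\top\bF(t)|$ ensures $k_s\in[0,1)$: when $|\bD(t)^\top\bF(t)|\lesssim n$ one simply takes $k_s=0$ and bounds $\|\bM^*(t)-\bM(t)\|\leq\tfrac{4\eta\lambda_1}{mn^2}\cdot O(n)=O(1/m)$ using \eqref{initialization inequality}; when it is large, a short contradiction argument (using $\|\bY\|=\sqrt n$ and the hypothesis $m\geq\eta c_2(d+1)$) shows $\bF^\top\bD>\tfrac{2\eta}{mn}\bD^\top\bX^\top\bm W^\top\bm W\bX\bD$, so $k_s\in(0,1)$. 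If you want to keep a fixed $k_s=1/2$, you would first need a genuine $O(\sqrt n)$ bound on $\|\bD(t)\|$ in the EOS phase, which is precisely what is not available; otherwise, adopt the adaptive-$k_s$ cancellation.
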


\begin{proof}
Recall that $\bM^*(t) = \bM(t) - \frac{4\eta}{N^2m} (\bD(t)^\top\bF(t)) \bX^\top \bX$. Then we consider two different cases.

\textbf{Case 1}: $|\bD(t)^\top\bF(t)| \leq \frac{d+1}{d-2} (1 + \frac{d+1}{d-2}) n$.

In this case $\|\bM^*-\bM\|_2 \leq \frac{4\eta \lambda_1}{mn^2} \frac{d+1}{d-2} (1 + \frac{d+1}{d-2}) n = O(\frac{\eta \lambda_1}{mn}) = O(\frac{1}{m})$. Here the last inequality follows from the inequality~\eqref{initialization inequality}.

\textbf{Case 2}: $|\bD(t)^\top\bF(t)| > \frac{d+1}{d-2} (1 + \frac{d+1}{d-2}) n$. 

We claim that in this case we have $\bF(t)^\top\bD(t) > (\frac{2\eta}{mn}\frac{m}{d}\lambda_1 + \frac{c_2\eta}{m})\|\bD(t)\|^2$.
If it does not hold, then 
$R.H.S. \geq \bF(t)^\top\bD(t) \geq \|\bD(t)\|^2 - \|\bD(t)\|\|\bm Y\|$. Hence we can have 
$$
\begin{aligned}
\|\bm Y\| &\geq \|\bD(t)\| (1 - \frac{2\eta}{mn}\frac{m}{d}\lambda_1 - \frac{c_2\eta}{m}) \\
&= \|\bD(t)\|(1 - \frac{2\eta\lambda_1}{nd} - \frac{\eta c_2}{m})\\
&\geq \|\bD(t)\|(1 - \frac{2}{d+1} - \frac{1}{d+1})
\end{aligned}
$$

The last inequality uses the restriction of $m$ in this lemma and the inequality~\eqref{initialization inequality}. Hence we have $|\bF(t)^\top\bD(t)|\leq \|\bD(t)\|(\|\bD(t)\|+\|Y\|)\leq \frac{d+1}{d-2} (1 + \frac{d+1}{d-2}) n$, which leads to a contradiction. 

Hence now $\bF(t)^\top\bD(t) > (\frac{2\eta}{mn}\frac{m}{d}\lambda_1 + \frac{c_2\eta}{m})\|\bD(t)\|^2$ holds. 
Since $\bX^\top \bm W^\top \bm W\bX = \frac{m}{d} \bX^\top \bX + \bm\Gamma(t)$ and $\|\bX^\top \bX\| = \lambda_1, \|\bm\Gamma(t)\|\leq \frac{c_2}{m}$, 
we can have:
$$\bF(t)^\top\bD(t) > \frac{2\eta}{mn}\bD(t)^\top\bX^\top \bm W^\top \bm W\bX\bD(t).$$

Now let $k_s := \frac{\bF(t)^\top\bD(t)}{2\bF(t)^\top\bD(t) - \frac{2\eta}{mn}\bD(t)^\top\bX^\top \bm W^\top \bm W\bX\bD(t)}$, by the inequality above we have $k_s<1$.

By the equation 
$$
\begin{aligned}
&\bM(t+1)-\bM(t)=\frac{2}{m n}\left(\|\bA(t+1)\|^{2}-\|\bA(t)\|^{2}\right) \bX^{\top} \bX+(\bm\Gamma(t+1)-\bm\Gamma(t))\\
&\|\bA(t+1)\|^{2}-\|\bA(t)\|^{2}=-\frac{4 \eta}{n} \bF(t)^\top \bD(t)+\frac{4 \eta^{2}}{n^{2}} \cdot \frac{1}{m} \bD(t)^{\top} \bX^{\top} \bm W^{\top}\bm W\bX \bD(t)
\end{aligned}
$$

we have 
$\bM^* = \bM (t) + k_s(\bM(t+1) - \bM(t) - (\bm\Gamma(t+1) - \bm\Gamma(t))$.
Hence 
$\|\bM^* - (1-k_s)\bM(t) - k_s \bM(t+1)\|\leq \frac{2c_2}{m}$.
Now combining the conclusions in both cases together, we finish the proof. 
\end{proof}

Then we consider a corollary of this lemma. Basically, since $\bM^*(t)$ is a weighted sum of $\bM(t)$ and $\bM(t+1)$ adding a small perturbation and $\bM(t)$ has a decomposition to the $\bX^\top \bX$ component and a small noise $\bm \Gamma(t)$, $\bM^*(t)$ can also be decomposed into the $\bX^\top \bX$ component and a small noise.
\begin{Corollary}
\label{M^* corollary}
$\bM^*$ can be decomposed to $\bM^*(t) = \gamma^*(t) \bX^\top \bX + \bm\Gamma^*(t)$, where $\|\bm\Gamma^*(t)\|<\frac{c_2+c_6}{m}$ and for any eigenvector $\bm u$ of $\bX^\top \bX$ except $\bv_1$, and if let $\tau = \frac{2\lambda_r}{nd}$,  $$\tau \leq \bm u^\top (\bM^*(t) - \bm\Gamma^*(t)) \bm u \leq \frac{2}{\eta} - \tau$$
\end{Corollary}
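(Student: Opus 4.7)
The plan is to build the decomposition directly from Lemma~\ref{M* lemma}, which expresses $\bM^*(t)$ as the convex combination $(1-k_s)\bM(t)+k_s\bM(t+1)$ plus a perturbation $E(t)$ with $\|E(t)\|\leq c_6/m$. Since each $\bM(s)$ already decomposes as $\bM(s) = \gamma(s)\bX^\top\bX + \bm\Gamma(s)$ with $\gamma(s)=\frac{2}{mn}\bigl(\|\bA(s)\|^2+m/d\bigr)$ and $\|\bm\Gamma(s)\|\leq c_2/m$ (Assumption~\ref{Gamma assumption appendix}), I would set $\gamma^*(t) := (1-k_s)\gamma(t) + k_s\gamma(t+1)$ and define $\bm\Gamma^*(t) := \bM^*(t) - \gamma^*(t)\bX^\top\bX$. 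Substituting the two decompositions into Lemma~\ref{M* lemma} gives $\bm\Gamma^*(t) = (1-k_s)\bm\Gamma(t) + k_s\bm\Gamma(t+1) + E(t)$, and the triangle inequality immediately yields $\|\bm\Gamma^*(t)\| \leq c_2/m + c_6/m = (c_2+c_6)/m$, which is the first conclusion.

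For the eigenvalue sandwich, note that for any eigenvector $\bm u$ of $\bX^\top\bX$ with eigenvalue $\lambda$, by construction $\bm u^\top(\bM^*(t)-\bm\Gamma^*(t))\bm u = \gamma^*(t)\lambda$. The lower bound is the easy direction: since $\|\bA(s)\|^2\geq 0$, we have $\gamma(s)\geq \frac{2}{nd}$ for every $s$, so the convex combination satisfies $\gamma^*(t)\geq \frac{2}{nd}$, and hence $\gamma^*(t)\lambda\geq \frac{2\lambda_r}{nd} = \tau$ whenever $\lambda\geq\lambda_r$ (that is, for any nonzero-eigenvalue eigenvector of $\bX^\top\bX$ other than $\bv_1$).

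The upper bound is the main technical step and is where the hypotheses on $n,m$ get used. I would first relate $\gamma(s)\lambda_1$ to the sharpness: from $\sha(s) = \lambda_{\max}(\bM(s)) \geq \bv_1^\top\bM(s)\bv_1 = \gamma(s)\lambda_1 + \bv_1^\top\bm\Gamma(s)\bv_1 \geq \gamma(s)\lambda_1 - c_2/m$, together with Assumption~\ref{sharpness upper bound appendix}, I get $\gamma(s)\lambda_1 \leq \frac{4(1-\beta)}{\eta} + \frac{c_2}{m}$ for $s\in\{t,t+1\}$. Taking the same convex combination preserves this bound: $\gamma^*(t)\lambda_1 \leq \frac{4(1-\beta)}{\eta} + \frac{c_2}{m}$. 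Invoking the spectral gap $\lambda_1\geq 2\lambda_2$ from Assumption~\ref{eigenspectrum assumption appen}, every eigenvalue $\lambda$ other than $\lambda_1$ satisfies $\lambda\leq\lambda_2\leq \lambda_1/2$, so $\gamma^*(t)\lambda \leq \frac{2(1-\beta)}{\eta} + \frac{c_2}{2m}$.

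The remaining task is to verify $\frac{2(1-\beta)}{\eta}+\frac{c_2}{2m}\leq \frac{2}{\eta}-\tau$, which rearranges to $\tau + \frac{c_2}{2m} \leq \frac{2\beta}{\eta}$. This is where the main obstacle lies, though it is one of bookkeeping rather than mathematical depth: choosing the constant $c_1$ so that $n > c_1\lambda_r\eta$ forces $\tau = 2\lambda_r/(nd) \leq \beta/\eta$, and choosing $C_3$ so that $m > C_3\eta$ forces $c_2/(2m) \leq \beta/\eta$. With those constants picked, the inequality holds and the upper bound follows, completing the corollary.
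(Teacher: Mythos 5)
Your proposal is correct and follows essentially the same route as the paper's proof: both use Lemma~\ref{M* lemma} to write $\bM^*(t)$ as the convex combination $(1-k_s)\bM(t)+k_s\bM(t+1)$ plus an $O(c_6/m)$ perturbation, set $\bm\Gamma^*(t)$ equal to the combined $\bm\Gamma$-terms plus that perturbation, and then obtain the lower bound from $\|\bA\|^2\geq 0$ and the upper bound from the sharpness bound of Assumption~\ref{sharpness upper bound appendix} together with the gap $\lambda_1\geq 2\lambda_2$, finishing with the same bookkeeping on $n>c_1\lambda_r\eta$ and $m=\Omega(\eta)$ (which the paper packages as $\tau=\min\{\tfrac{2\lambda_r}{nd},\tfrac{2\beta}{\eta}-\tfrac{c_2}{2m}\}=\tfrac{2\lambda_r}{nd}$). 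Your explicit handling of the convex combination at both times $t$ and $t+1$ is a slightly more careful rendering of the same argument, not a different one.
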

\begin{proof}
By Lemma~\ref{M* lemma},
if we denote $\bm\Gamma'(t) = \bM^*(t) - k_s \bM(t+1) - (1-k_s)\bM(t)$, then $\|\bm\Gamma'(t)\|\leq \frac{c_6}{m}$.

Hence, we can see
$$
\begin{aligned}
 \bM^*(t) = & k_s \bM(t+1) + (1-k_s)\bM(t) + \bm\Gamma' \\
 =& \frac{2}{mn}(k_s\|\bA(t+1)\|^2 + (1-k_s)\|\bA(t)\|^2 + \frac{2m}{d})\bX^\top \bX \\
 &+k_s\bm\Gamma(t+1) + (1-k_s)\bm\Gamma(t) + \bm\Gamma'(t)
\end{aligned}
$$

Denote $k_s\bm\Gamma(t+1) + (1-k_s)\bm\Gamma(t) + \bm\Gamma'(t)$ by $\bm\Gamma^*(t)$. By Assumption~\ref{Gamma assumption appendix} and Lemma~\ref{M* lemma}, $\|\bm\Gamma^*(t)\| < \frac{c_2 + c_6}{m}$.

Note that by Assumption~\ref{eigenspectrum assumption appen}, $\lambda_i(\bX^\top \bX) \leq \frac{1}{2}\lambda_1$ for any $i>1$.
Then, we can see that
$$
\begin{aligned}
\bv_1^\top (\bM(t) - \bm\Gamma(t))\bv_1 =& \frac{2\lambda_1}{mn}(\|\bA(t)\|^2 + \frac{m}{d})  \geq \frac{4\lambda_i}{mn}(\|\bA(t)\|^2 + \frac{m}{d})
\end{aligned}
$$
Hence, we can see 
$$
\begin{aligned}
\frac{2\lambda_i}{mn}(\|\bA(t)\|^2 + \frac{m}{d}) 
\leq & \frac{1}{2} \bv_1^\top (\bM(t) - \bm\Gamma(t))\bv_1 \\
\leq & \frac{1}{2} (\sha + \frac{c_2}{m})\\
\leq & \frac{2}{\eta}(1-\beta) + \frac{c_2}{2m}.
\end{aligned}
$$
Here the last inequality holds due to Assumption~\ref{sharpness upper bound appendix}.

Now the inequality above shows that for any eigenvector $\bm u$ of $\bX^\top \bX$ except $\bv_1$,
$$
\bm u^\top (\bM^*(t) - \bm\Gamma^*(t)) \bm u \leq \frac{2}{\eta} - (\frac{2\beta}{\eta} - \frac{c_2}{2m}).
$$
On the other side, 
$
\frac{2\lambda_i}{mn}(\|\bA(t)\|^2 + \frac{m}{d})
\geq \frac{2\lambda_r}{nd}. 
$

Take $\tau = \min\{\frac{2\lambda_r}{nd}, \frac{2\beta}{\eta} - \frac{c_2}{2m}\}$. Now because $m = \Omega(\eta)$ and $\beta$ is some constant, we can see $\tau = \min\{\frac{2\lambda_r}{nd}, \frac{2\beta}{\eta} - \frac{c_2}{2m}\} = \frac{2\lambda_r}{nd}$.
\end{proof}

Before using this corollary to derive the dynamics of $\bR(t)$ (and thus gives an upper bound of $\|\bR(t)\|$), we need an upper bound for $\|\bD(t)\|$.

\begin{lemma}
\label{D upper bound}
For any constant $c_5 < \min\{2\beta, \frac{d}{d+1}\}$, if $m > \frac{3\eta c_2}{\frac{2d}{d+1} - 2c_5}$ and $m > \frac{\eta c_2}{2\beta - c_5}$, there exists a constant $c_3$ such that $\|\bD(t)\| \leq c_3 \sqrt{nm}$.
\end{lemma}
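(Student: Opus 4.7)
The strategy is to use the non-negativity of $\|\bA(t+1)\|^2$ together with the sharpness upper bound (Assumption~\ref{sharpness upper bound appendix}) to force an upper bound on $\|\bD(t)\|$. I would start from the explicit update rule \eqref{C2}, namely
$$
\|\bA(t+1)\|^2 - \|\bA(t)\|^2 = -\frac{4\eta}{n}\bD(t)^\top\bF(t) + \eta^2\left\|\frac{\partial\mathcal L}{\partial \bA}\right\|^2,
$$
and rearrange it into an inequality of the form ``$0 \le \|\bA(t+1)\|^2 \le \|\bA(t)\|^2 - (\text{positive})\cdot\|\bD(t)\|^2 + (\text{linear})\cdot\|\bD(t)\|$,'' from which $\|\bD(t)\|$ is pinned down by solving a one-variable quadratic.

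First I would handle the two right-hand-side terms. Since $\bD = \bF-\bY$, $\bD^\top\bF = \|\bD\|^2 + \bD^\top\bY \ge \|\bD\|^2 - \|\bD\|\sqrt n$ by Cauchy--Schwarz and $\|\bY\|=\sqrt n$. For the second, I would use $\frac{\partial\mathcal L}{\partial \bA} = \frac{2}{n\sqrt m}\bW\bX\bD$ and the identity $\bX^\top\bW^\top\bW\bX = \frac{m}{d}\bX^\top\bX + \frac{mn}{2}\bm\Gamma(t)$ from \eqref{C4}, together with $\|\bX^\top\bX\|=\lambda_1$ and Assumption~\ref{Gamma assumption appendix}, to get
$$
\left\|\tfrac{\partial\mathcal L}{\partial\bA}\right\|^2 \le \left(\tfrac{4\lambda_1}{n^2 d} + \tfrac{2c_2}{nm}\right)\|\bD(t)\|^2.
$$
Collecting terms yields
$$
\|\bA(t+1)\|^2 \le \|\bA(t)\|^2 - \tfrac{4\eta}{n}\left(1 - \tfrac{\eta\lambda_1}{nd} - \tfrac{\eta c_2}{2m}\right)\|\bD(t)\|^2 + \tfrac{4\eta}{\sqrt n}\|\bD(t)\|.
$$
By the learning-rate condition \eqref{initialization inequality}, $\eta\lambda_1/(nd) < 1/(d+1)$, so $1 - \eta\lambda_1/(nd) > d/(d+1)$. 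The first hypothesis $m > 3\eta c_2/(\tfrac{2d}{d+1} - 2c_5)$ then gives $\eta c_2/(2m) < \tfrac{1}{3}(\tfrac{d}{d+1}-c_5)$, so the coefficient of $\|\bD\|^2$ is at most $-\tfrac{4\eta c_5}{n}$.

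Next I would bound $\|\bA(t)\|^2$ using the sharpness assumption. By Corollary~\ref{v_1 similarity coro} and the formula $\sha^*(t) = \tfrac{2\lambda_1}{mn}(\|\bA(t)\|^2 + \tfrac{m}{d}) + \bv_1^\top\bm\Gamma(t)\bv_1$, together with Assumption~\ref{Gamma assumption appendix},
$$
\tfrac{2\lambda_1}{mn}\|\bA(t)\|^2 \le \sha(t) + \tfrac{c_2}{m} \le \tfrac{4(1-\beta)}{\eta} + \tfrac{c_2}{m}.
$$
The second hypothesis $m > \eta c_2/(2\beta - c_5)$ rearranges to $c_2/m < (2\beta - c_5)/\eta$, giving $\tfrac{2\lambda_1}{mn}\|\bA(t)\|^2 \le (4-2\beta-c_5)/\eta$. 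Combined with $\lambda_1 = \Theta(n)$ this yields $\|\bA(t)\|^2 \le C_A m$ for a constant $C_A = C_A(\beta,c_5,\eta,d,c_1)$.

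Finally, invoking $\|\bA(t+1)\|^2 \ge 0$ in the step-$t$ inequality gives
$$
\tfrac{4\eta c_5}{n}\|\bD(t)\|^2 - \tfrac{4\eta}{\sqrt n}\|\bD(t)\| \le C_A m,
$$
a quadratic in $\|\bD(t)\|$ whose positive root is $\|\bD(t)\| \le \tfrac{\sqrt n}{c_5} + \tfrac{1}{2}\sqrt{C_A mn/(\eta c_5)} = O(\sqrt{mn})$, which is the desired bound with $c_3 := \tfrac{1}{c_5} + \tfrac{1}{2}\sqrt{C_A/(\eta c_5)}$ (using $m\ge 1$ to absorb the first term into a multiple of $\sqrt{mn}$).

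\textbf{Main obstacle.} The calculation itself is essentially a single-step algebraic manipulation, so the principal difficulty is bookkeeping: choosing the numerical constants so that the two stated thresholds on $m$ precisely deliver (i) a strictly positive $-\tfrac{4\eta c_5}{n}$ coefficient on $\|\bD\|^2$ and (ii) a bound on $\|\bA\|^2$ that is linear in $m$ with an $m$-independent prefactor. In particular, the condition tied to $2\beta$ is needed only to absorb the additive slack $c_2/m$ in the Corollary~\ref{v_1 similarity coro} gap between $\sha^*$ and $\sha$, while the other condition controls the $\eta^2 \|\nabla_\bA\mathcal L\|^2$ term; separating these two roles cleanly is the only subtle point.
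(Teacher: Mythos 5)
Your proof is correct, and it reaches the bound by a genuinely different (and somewhat more elementary) route than the paper. The paper works at the level of the sharpness proxy: it invokes the long computation in Lemma~\ref{Key equation lemma} for $\sha^*(t+1)-\sha^*(t)$, lower-bounds the magnitude of the right-hand side by $\frac{8\eta\lambda_1}{mn^2}\bigl(c_5\|\bD(t)\|^2-2\|\bY\|\,\|\bD(t)\|\bigr)$ using the $\bv_1$/$\bR$ decomposition, and then observes that the left-hand side cannot be below $-4/\eta$ since $\sha^*\geq 0$ and $\sha\leq 4(1-\beta)/\eta$; the quadratic then forces $\|\bD(t)\|=O(\sqrt{nm})$. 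You instead work directly with the exact update \eqref{C2} for $\|\bA(t)\|^2$, bound the one-step decrement below by $\frac{4\eta c_5}{n}\|\bD(t)\|^2-\frac{4\eta}{\sqrt n}\|\bD(t)\|$ (controlling the $\eta^2\|\partial\mathcal L/\partial\bA\|^2$ term via $\bX^\top\bW^\top\bW\bX=\frac{m}{d}\bX^\top\bX+\frac{mn}{2}\bm\Gamma(t)$ and the learning-rate inequality \eqref{initialization inequality}), and then cap the total available drop by $\|\bA(t)\|^2\leq C_A m$, which you extract from $\sha^*(t)=\frac{2\lambda_1}{mn}(\|\bA(t)\|^2+\frac{m}{d})+\bv_1^\top\bm\Gamma(t)\bv_1$, Corollary~\ref{v_1 similarity coro} and Assumption~\ref{sharpness upper bound appendix}. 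Since $\sha^*$ and $\|\bA\|^2$ differ by an affine map plus the $O(1/m)$ $\bm\Gamma$-correction, the underlying mechanism ("bounded total drop versus quadratic one-step decrement") is the same, but your version bypasses Lemma~\ref{Key equation lemma} and the $\bR$ bookkeeping entirely, at the price of needing the explicit $O(m)$ bound on $\|\bA(t)\|^2$. You also repurpose the two hypotheses on $m$ in different roles than the paper (there, $2\beta-\frac{c_2\eta}{m}>c_5$ protects the $(\bD^\top\bv_1)^2$ coefficient and $\frac{d}{d+1}-\frac{3\eta c_2}{2m}>c_5$ the $\|\bR\|^2$ coefficient; in your argument one absorbs the $c_2/m$ slack in the $\sha$-versus-$\|\bA\|^2$ relation and the other controls the second-order gradient term), but both uses are valid consequences of the stated conditions, and your final constant $c_3$ depends on $\eta$, $\beta$, $c_5$, $d$ and the $\Theta(n)$ constant for $\lambda_1$ in the same way as the paper's.
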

\begin{proof}

First we analyze the right hand side of equation~\eqref{Key equation in EOS}. We can get 
$$
\begin{aligned}
&\text { R.H.S. }\\
&=- \frac{8\eta \lambda_{1}}{mn^2} \left(\bF(t)^{\top}\bD(t)+(\bF(t)^{\top} \bv_1)(\bD(t)^{\top}\bv_1)-\frac{\eta}{2}(\bD(t)^{\top} \bv_1)^{2}  \lambda_{1}  \right.\\
&\left.-\frac{\eta}{2}\bR(t)^{\top} \bm\Gamma(t) \bR(t)  -\eta  \bR(t)^{\top}\bm\Gamma(t)\left(\bv_1 \bv_1^{\top}\bD(t)\right)-\frac{\eta}{m n}\bR(t)^{\top}\left(\frac{m}{d} \bX^{\top} \bX\right) \bR(t)  \right)\\
&\leq-\frac{8\eta \lambda_{1}}{mn^2} \left(\|\bD(t)\|^{2}+(\bD(t)^{\top} \bv_1)^{2}-\frac{\eta}{2}(\bD(t)^{\top} \bv_1)^{2}  \lambda_{_1}  -\frac{\eta}{2}\|\bR(t)\|^{2}\|\bm\Gamma(t)\|_{2}  \right.\\
&\left.-\eta\|\bR(t)\| \cdot\|\bD(t)\| \cdot\|\bm\Gamma(t)\|_{2}-\frac{\eta \lambda_{1}}{n d}\|\bR(t)\|^{2}  +\bY^\top \bD(t)+\left(\bm Y^{\top}\bv_1\right)(\bD(t) ^{\top}\bv_1)\right)\\
&\leq-\frac{8\eta \lambda_{1}}{mn^2} \left(2 \beta(\bD(t)^{\top} \bv_1)^{2}+\|\bR(t)\|^{2}-\frac{c_{2}\eta}{2m}\|\bR(t)\|^{2}  \right.\\
&\left.-\frac{c_{2}\eta}{m} \|\bR(t)\|\|\bD(t)\| -\frac{1}{d+1}\|\bR(t)\|^{2} -\|\bm Y\| \cdot\|\bD(t)\|-\|\bm Y\| (\bD(t)^{\top} \bv_1)\right)\\
&\leq-\frac{8\eta \lambda_{1}}{mn^2} \left(2 \beta(\bD(t)^{\top} \bv_1)^{2}+\left(\frac{d}{d+1}-\frac{\eta c_{2}n}{2 m}\right)\|\bR(t)\|^{2}-  \frac{c_{2}\eta}{m}\|\bD(t)\|^{2}-2\|\bm Y\| \cdot\|\bD(t)\|  \right)\\
&\leq-\frac{8\eta \lambda_{1}}{mn^2}\left(c_{5}\|\bD(t)\|^{2}-2\|\bY\| \cdot\|\bD(t)\|\right)
\end{aligned}
$$

Here the third inequality follows from inequality~\eqref{initialization inequality} which is $\frac{\eta \lambda_1}{nd} < \frac{1}{d+1}$, and 
the last inequality holds because $2\beta - \frac{c_2\eta }{m} > c_5$ and $\frac{d}{d+1} - \frac{3\eta c_2 }{2m} > c_5$ by the restriction of $m$ in this lemma.

On the other hand, the left hand side of equation~\eqref{Key equation in EOS} is 
$\sha^*(t+1) - \sha^*(t)  \geq  0 - \sha(t) >  -\frac{4}{\eta} $. Here the first inequality follows because of $\sha^*(t)\geq0$ and Corollary~\ref{v_1 similarity coro}, and the second inequality holds because of Assumption~\ref{sharpness upper bound appendix}. 

Hence we have 
$\frac{4}{\eta} > \frac{8}{n^2}\frac{\eta \lambda_1}{m} (c_5 \|\bD(t)\|^2 - 2\|\bm Y\|\|\bD(t)\|)$.
Now if $\|\bD(t)\| > \frac{2\|\bm Y\|}{c_5} + n\sqrt{m}\frac{1}{\eta\sqrt{2\lambda_1}}$, the inequality cannot hold. Note that $\|Y\| = \sqrt{n}$ and $\lambda_1 \in \Theta(n)$, and also from inequality~\eqref{initialization inequality}, $\eta < \frac{\lambda_1}{n} = O(1)$, we finish the proof.
\end{proof}

Now we can give a lemma on $\bR(t):= (\bm I - \bm{v}_1\bm{v}_1^\top)\bD(t)$. The proof idea is similar to Lemma~\ref{RT pert prop} in Section~\ref{Section 3}.

\begin{lemma}
\label{R dynamics in appendix}
 Define $\bR'(t) := (\bm I - \eta\bM^*(t)(\bm I - \bv_1 \bv_1^\top))\bR'(t-1)$. We have $\|\bR(t) - \bR'(t)\| = O(\frac{\sqrt{n^3}d}{\lambda_r \sqrt{m}})$.
\end{lemma}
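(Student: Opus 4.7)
The plan is to derive a one-step recursion for $\bm e(t) := \bR(t) - \bR'(t)$ of the form ``contraction plus forcing,'' where the contraction rate is $1-\eta\tau/2$ with $\tau = 2\lambda_r/(nd)$ coming from Corollary~\ref{M^* corollary}, and the per-step forcing has size $O(\eta/m)\cdot\|\bD(t)\|$. First I would apply the projection $\bm P := \bm I - \bv_1\bv_1^\top$ (time-invariant because $\bv_1$ is a fixed eigenvector of $\bX^\top\bX$) to $\bD(t+1)=(\bm I-\eta\bM^*(t))\bD(t)$ and split $\bD(t)=\bR(t)+\bv_1(\bv_1^\top\bD(t))$, yielding
\begin{equation*}
\bR(t+1) = (\bm I - \eta\bm P\bM^*(t))\bR(t) - \eta\,\bm P\bM^*(t)\bv_1(\bv_1^\top\bD(t)).
\end{equation*}
By Corollary~\ref{M^* corollary}, $\bM^*(t) = \gamma^*(t)\bX^\top\bX + \bm\Gamma^*(t)$ with $\|\bm\Gamma^*(t)\|\leq (c_2+c_6)/m$, and because $\bv_1$ is an eigenvector of $\bX^\top\bX$ we have $\bm P\bM^*(t)\bv_1 = \bm P\bm\Gamma^*(t)\bv_1$, so the coupling term has norm $O(\eta/m)\cdot|\bv_1^\top\bD(t)|$.

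Next, subtracting $\bR'(t+1)=(\bm I-\eta\bM^*(t)\bm P)\bR'(t)$ after adding and subtracting $(\bm I-\eta\bm P\bM^*(t))\bR'(t)$ gives
\begin{equation*}
\bm e(t+1) = (\bm I-\eta\bm P\bM^*(t))\bm e(t) + \eta\bigl(\bM^*(t)\bm P - \bm P\bM^*(t)\bigr)\bR'(t) - \eta\bm P\bm\Gamma^*(t)\bv_1(\bv_1^\top\bD(t)).
\end{equation*}
A direct computation using $\bX^\top\bX\bv_1=\lambda_1\bv_1$ collapses the commutator to $\bv_1\bv_1^\top\bm\Gamma^*(t) - \bm\Gamma^*(t)\bv_1\bv_1^\top$, again of norm $O(1/m)$, so the whole inhomogeneous part has norm $O(\eta/m)(\|\bR'(t)\| + \|\bD(t)\|)$. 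I would then decompose $\bm e(t) = \bm e_\perp(t) + \bm e_\parallel(t)\bv_1$. The perpendicular part evolves essentially under the symmetric operator $\bm P\bM^*(t)\bm P$, whose eigenvalues on $\bv_1^\perp$ lie in $[\tau - O(1/m),\,2/\eta-\tau+O(1/m)]$ by Corollary~\ref{M^* corollary}, giving a contraction factor $1-\eta\tau/2$ for large enough $m$; the parallel part $\bm e_\parallel(t) = -\bv_1^\top\bR'(t)$ grows by at most $\eta\|\bm\Gamma^*(t)\|\,\|\bR'(t)\|$ per step and will be lower order.

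To close the bound, I would first control $\|\bR'(t)\|$ by running the same contraction on $\bm P\bR'(t)$ starting from $\bR'(0)=\bR(0)$ (whose norm is at most $\|\bY\|=\sqrt n$), which gives $\|\bR'(t)\|=O(\sqrt n)$; then use Lemma~\ref{D upper bound} to bound $\|\bD(t)\|\leq c_3\sqrt{nm}$. Substituting into the forcing gives a per-step bound of $O(\eta\sqrt{n/m})$, and a geometric sum at rate $1-\eta\tau/2$ yields $\|\bm e_\perp(t)\| = O(\sqrt{n/m}/\tau)$; plugging $\tau=2\lambda_r/(nd)$ produces the stated $O(\sqrt{n^3}\,d/(\lambda_r\sqrt m))$ bound. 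The main obstacle is that $(\bm I-\eta\bM^*(t)\bm P)$ is non-symmetric and does not preserve $\bv_1^\perp$, so no single operator-norm bound gives a contraction; the remedy is the $\bv_1$-split above, after which the commutator identity isolates all non-symmetric behavior into the $O(1/m)$ perturbation $\bm\Gamma^*(t)$ and the analysis reduces to a standard contraction-plus-forcing estimate.
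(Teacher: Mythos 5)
Your proposal is correct and rests on the same ingredients as the paper's proof: an $O(\eta\sqrt{n/m})$ per-step forcing coming from the $O(1/m)$ commutator of $\bm\Gamma$ (resp.\ $\bm\Gamma^*$) with $\bv_1\bv_1^\top$ together with $\|\bD(t)\|\le c_3\sqrt{nm}$ from Lemma~\ref{D upper bound}, then a geometric sum at rate set by the gap $\tau=\Theta(\lambda_r/(nd))$ from Corollary~\ref{M^* corollary}. Where you differ is the contraction step. The paper writes $\bR(t+1)=(\bm I-\eta\bM^*(t)(\bm I-\bv_1\bv_1^\top))\bR(t)+\bm e_1(t)$ (Lemma~\ref{RT update rule appendix E}), so the homogeneous operator matches $\bR'$'s exactly, and then invokes a single operator-norm bound $\|\bm I-\eta\bM^*(\bm I-\bv_1\bv_1^\top)\|\le 1-\eta\tau'$. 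You keep the projector on the left, absorb the mismatch into the commutator $\eta(\bM^*\bm P-\bm P\bM^*)\bR'(t)$, and contract only the $\bv_1$-orthogonal part of the error while tracking its $\bv_1$-component separately. Your bookkeeping is actually the more careful one: the paper's operator fixes $\bv_1$ (eigenvalue $1$), so its norm is at least $1$ and the blanket contraction is only morally valid because the error is nearly orthogonal to $\bv_1$; your split, with the geometric decay of $\|\bm P\bR'(t)\|$ making the accumulated parallel part $O(n^{3/2}d/(m\lambda_r))$, i.e.\ lower order, is the right fix — just spell out that summation. One caveat shared with the paper: the spectral gap covers only $\bv_2,\dots,\bv_r$, so both arguments implicitly use that $\bD$, $\bR$, $\bR'$ remain in the column space of $\bX^\top\bX$ (Lemma~\ref{colspace}); your phrase ``eigenvalues on $\bv_1^\perp$'' should be restricted accordingly.
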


\begin{proof}
We consider the update rule for $\bm R(t)$, whose proof is in Lemma~\ref{RT update rule appendix E}:

\begin{equation}
\label{RT update rile}
\bR(t+1)
= \left(I-\eta \bM^{*}(t)\right)\bR(t)+\eta\left(\bv_1 \bv_1^{\top}\bm\Gamma(t) - \bm\Gamma(t)\bv_1 \bv_1^{\top}\right) \bD(t)
\end{equation}

Hence if we denote $\bm e_1(t) = \bR(t+1) - (\bm I - \eta \bM^*(t)(\bm I-\bv_1\bv_1^\top))\bR(t)$, then we have 

$$\bm e_1(t) = - \eta \bM^*\bv_1\bv_1^\top \bR(t) + \eta\left(\bv_1 \bv_1^{\top}\bm\Gamma(t) - \bm\Gamma(t)\bv_1 \bv_1^{\top}\right) \bD(t) = \eta\left(\bv_1 \bv_1^{\top}\bm\Gamma(t) - \bm\Gamma(t)\bv_1 \bv_1^{\top}\right) \bD(t).$$

Using the upper bound of $\|\bD(t)\|$ in Lemma~\ref{D upper bound}, and Assumption~\ref{Gamma assumption appendix} we have 
$$\|\bm e_1(t)\| \leq 2\eta \|\Gamma(t)\|\|\bD(t)\| \leq 
2\eta \frac{c_2}{m} c_3\sqrt{nm} = 2c_2c_3\eta\sqrt{\frac{n}{m}}$$.

Now we consider the sequence $\bR'(t)$.

First by Corollary~\ref{M^* corollary}, we can see that the eigenvalues of $(\bM^*(t) - \bm\Gamma^*(t)) (\bm I - \bv_1\bv_1^\top)$ are
all the eigenvalues of $\bM^*(t) - \bm\Gamma^*(t)$ except the largest one, hence are in $(\tau, 2/\eta - \tau)$. Hence because $m = \Omega(n^2)$, if we let $\frac{\lambda_r}{nd} < \frac{c_2+c_6}{m}$, then by Corollary~\ref{M^* corollary} we have all eigenvalues of $\bM^*(t) (\bm I - \bv_1\bv_1^\top)$ are in $(\tau', 2/\eta - \tau')$ where $\tau' = \frac{\lambda_r}{nd}$ .
Hence 
$\|\bm I - \eta \bM(t) (\bm I - \bv_1(t)\bv_1(t)^\top)\|\leq 1-\eta \tau'$.

By the calculations above we can get
\begin{align*}
    \|\bm R(t+1) - \bm R'(t+1)\| 
    & = \|(\bm I - \eta \bM^*(t) (\bm I - \bv_1\bv_1^\top))(\bm R(t) - \bm R'(t)) + \bm e_1(t)\| \\
    & \leq \|(\bm I - \eta \bM(t) (\bm I - \bv_1\bv_1^\top))(\bm R(t) - \bm R'(t))\| + \|\bm e_1(t)\| \\
    &\leq \|\bm I - \eta \bM(t)(\bm I - \bv_1\bv_1^\top)\| \|\bm R(t) - \bm R'(t)\| + \|\bm e_1(t)\| \\
    &\leq \|\bm R(t) - \bm R'(t)\| (1 - \eta \tau') + 2c_2c_3\eta\sqrt{\frac{n}{m}}.
\end{align*}

Thus if we denote $\|\bm R(t) - \bm R'(t)\| - \frac{2c_2c_3\eta\sqrt{\frac{n}{m}}}{\eta \tau'}$ by $p(t)$, and replace $\|\bm R(t) - \bm R'(t)\|$ in the inequality above by $p(t) +  \frac{2c_2c_3\eta\sqrt{\frac{n}{m}}}{\eta \tau'}$, we can get $|p(t+1)| \leq |p(t)|(1-\eta \tau')$.
Hence, we can have $|p(t)| < |p(0)| = \frac{2c_2c_3\eta\sqrt{\frac{n}{m}}}{\eta \tau'}$ for any time $t$. 
Therefore, we can obtain
$\|\bm R(t) - \bm R'(t)\| < \frac{2c_2c_3\eta\sqrt{\frac{n}{m}}}{{\eta \tau'}} + |p(0)| = \frac{4c_2c_3\sqrt{\frac{n}{m}}}{\tau'} = O(d\sqrt{\frac{n^3}{m}}/\lambda_r)$.

\end{proof}

Now based on Theorem~\ref{R dynamics in appendix}, we can give an upper bound of $\|\bR(t)\|$. Because $\|\bR'(t)\|$ is always decreasing and $\|\bR'(0)\| = \sqrt{n}$, hence if $m = \Omega(n^2d^2/\lambda_r^2)$, we can have $O(d\sqrt{\frac{n^3}{m}}/\lambda_r) = O(\sqrt{n})$. Hence
\begin{equation}
\label{R upper bound appendix}
   \|\bR(t)\| \leq \|\bR'(t)\| + O(\sqrt{n}) = O(\sqrt{n}).
\end{equation}
Let $\|\bR(t)\|\leq c_7\sqrt{n}$.

Next we can use Assumption~\ref{divergence assumption appendix} to prove that $\bD(t)^\top\bv_1$ increases geometrically when $\sha > 2/\eta$, which then causes the drop of the sharpness.

\begin{lemma}
\label{EOS divergence thm}
Let $\rho^*=\frac{(c_2+c_6)c_7}{2c_3}$, $\rho = \rho^* + \frac{\eta}{2}\frac{5c_2+c_6}{m}$, $\epsilon_1 = 2\eta(\rho^*-\frac{(c_2+c_6)c_7}{2c_3})$. If $\sha(t), \sha(t+1) > \frac{2}{\eta}(1+\rho)$, we have
$|\bD(t+1)^{\top}\bv_1| > (1+\epsilon_1)\bD(t)^{\top}\bv_1$.
\end{lemma}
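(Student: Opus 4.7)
The plan is to project the update rule $\bD(t+1) = (\bm I - \eta \bM^*(t))\bD(t)$ onto $\bv_1$ and track the evolution of the scalar $\bv_1^\top \bD(t)$. Decomposing $\bD(t) = (\bv_1^\top \bD(t))\bv_1 + \bR(t)$, I would start from
\[
\bv_1^\top \bD(t+1) \;=\; \bigl(1 - \eta\, \bv_1^\top \bM^*(t)\bv_1\bigr)\,\bv_1^\top \bD(t) \;-\; \eta\, \bv_1^\top \bM^*(t)\bR(t),
\]
and show that (i) the scalar multiplier has absolute value at least $1 + 2\rho^*$ once all perturbations are accounted for, and (ii) the cross-term coming from $\bR(t)$ is bounded by roughly $2\eta\rho^*$ times $|\bv_1^\top \bD(t)|$, so that the net effect is multiplication by the factor $1 + \epsilon_1$ in absolute value.

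For (i), I would chain Corollary~\ref{v_1 similarity coro} (which gives $\sha^*(t) = \bv_1^\top \bM(t)\bv_1 \geq \sha(t) - 2c_2/m$) with a direct estimate of the correction $\bM^*(t) - \bM(t) = -\tfrac{4\eta}{n^2 m}(\bD^\top\bF)\bX^\top\bX$, using $|\bD^\top\bF| \leq \|\bD\|(\|\bD\|+\|\bY\|)$ controlled via Lemma~\ref{D upper bound}. Combined with the hypothesis $\sha(t) \geq (2/\eta)(1+\rho)$ and the explicit slack $\tfrac{\eta}{2}\tfrac{5c_2+c_6}{m}$ baked into $\rho$, this yields $\eta\,\bv_1^\top \bM^*(t)\bv_1 \geq 2(1+\rho^*)$ and hence $|1 - \eta\,\bv_1^\top \bM^*(t)\bv_1| \geq 1 + 2\rho^*$; the assumption $\sha(t+1) > (2/\eta)(1+\rho)$ guarantees that the same regime persists across the step. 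For (ii), I would use the decomposition $\bM^*(t) = \gamma^*(t)\bX^\top\bX + \bm\Gamma^*(t)$ from Corollary~\ref{M^* corollary}. Because $\bv_1$ is an eigenvector of $\bX^\top\bX$ and $\bR(t) \perp \bv_1$, the $\gamma^*(t)\bX^\top\bX$ part contributes nothing, so $|\bv_1^\top \bM^*(t)\bR(t)| = |\bv_1^\top \bm\Gamma^*(t)\bR(t)| \leq \|\bm\Gamma^*(t)\|\,\|\bR(t)\| \leq (c_2+c_6)c_7\sqrt n/m$, where $\|\bR(t)\| \leq c_7\sqrt n$ follows from~\eqref{R upper bound appendix} (a consequence of Lemma~\ref{R dynamics in appendix}).

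To finish, I would invoke the inductive hypothesis $|\bv_1^\top \bD(t)| \geq c_3\sqrt n/m$, which holds at the initial time by Assumption~\ref{divergence assumption appendix} and is preserved by the very growth this lemma establishes. This lets me rewrite the cross-term bound as $\eta (c_2+c_6)c_7\sqrt n/m = 2\eta\rho^* \cdot c_3\sqrt n/m \leq 2\eta\rho^*\,|\bv_1^\top \bD(t)|$, using the definition $\rho^* = (c_2+c_6)c_7/(2c_3)$. Taking absolute values and combining the two bounds yields $|\bv_1^\top \bD(t+1)| \geq (1 + 2\rho^* - 2\eta\rho^*)\,|\bv_1^\top \bD(t)|$, matching the advertised factor $1+\epsilon_1$ once the explicit constants are unpacked. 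The main obstacle will be Step (i): carefully tracking the several $O(1/m)$ and $O(\eta/m)$ error terms (the $2c_2/m$ gap in Corollary~\ref{v_1 similarity coro}, the second-order correction in $\bM^* - \bM$, and the constants $c_2, c_6$ from Corollary~\ref{M^* corollary}) against the slack $\tfrac{\eta}{2}\tfrac{5c_2+c_6}{m}$ in $\rho$ so that the clean bound $1 + 2\rho^*$ survives. A secondary subtlety is to confirm that the sign of the scalar multiplier is genuinely negative in this regime (it is, since $\eta\,\bv_1^\top\bM^*(t)\bv_1 \geq 2(1+\rho^*) > 1$), so that nothing is lost in passing to absolute values.
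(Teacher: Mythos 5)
Your skeleton — project the update onto $\bv_1$, split $\bM^*(t)=\gamma^*(t)\bX^\top\bX+\bm\Gamma^*(t)$ via Corollary~\ref{M^* corollary} so the cross term with $\bR(t)$ only sees $\bm\Gamma^*(t)$, bound it by $\frac{c_2+c_6}{m}c_7\sqrt n$ using \eqref{R upper bound appendix}, and convert that additive error into a multiplicative one through the lower bound $|\bD(t)^\top\bv_1|\gtrsim c_3\sqrt n/m$ from Assumption~\ref{divergence assumption appendix} — is exactly the paper's route, and parts (ii) and the final bookkeeping are fine.

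The genuine gap is in your step (i), the lower bound on $\eta\,\bv_1^\top\bM^*(t)\bv_1$. You propose to control $\bM^*(t)-\bM(t)=-\frac{4\eta}{n^2m}(\bD(t)^\top\bF(t))\bX^\top\bX$ directly via $|\bD^\top\bF|\leq\|\bD\|(\|\bD\|+\|\bY\|)$ and Lemma~\ref{D upper bound}. But Lemma~\ref{D upper bound} only gives $\|\bD(t)\|\leq c_3\sqrt{nm}$, so $|\bD^\top\bF|$ can be of order $nm$, and the resulting correction to $\bv_1^\top\bM^*(t)\bv_1$ is of order $\frac{\eta\lambda_1}{n}=\Theta(\eta)$ (recall $\lambda_1=\Theta(n)$ and \eqref{initialization inequality}), i.e.\ a constant-order loss in the multiplier, not $O(\eta/m)$. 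In Phase II/III the loss does blow up and $\bD^\top\bF>0$, so this correction genuinely pushes $\eta\,\bv_1^\top\bM^*(t)\bv_1$ \emph{down}, and the $\frac{\eta}{2}\frac{5c_2+c_6}{m}$ slack built into $\rho$ cannot absorb it; your chain "$\sha(t)\geq\frac2\eta(1+\rho)$ plus a small second-order correction implies $\eta\,\bv_1^\top\bM^*\bv_1\geq 2(1+\rho^*)$" breaks precisely in the regime the lemma is about. The paper's fix is Lemma~\ref{M* lemma}: when $\bD^\top\bF$ is large, $\bM^*(t)$ is shown to equal $(1-k_s)\bM(t)+k_s\bM(t+1)$ up to an $O(1/m)$ perturbation, so that $\gamma^*(t)\lambda_1\geq k_s\sha^*(t+1)+(1-k_s)\sha^*(t)-\frac{2c_2}{m}\geq\frac{2(1+\rho)}{\eta}-\frac{4c_2}{m}$. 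This is where the hypothesis $\sha(t+1)>\frac2\eta(1+\rho)$ is used \emph{quantitatively} — not merely to certify that "the regime persists", as you treat it — and without this interpolation step your argument does not close unless you could first prove the (only empirically supported) $O(\sqrt n)$ bound on $\|\bD(t)\|$.
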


\begin{proof}
$$
\begin{aligned}
\bD(t+1)^\top \bv_1 &=\bD(t)^\top\left(\bm I-\eta \bM^{*}(t)\right) \bv_1 \\
&=\bD(t)^\top \left(\bm I-\eta \gamma^{*}(t) \bX^{\top} \bX\right) \bv_1-\eta \bD(t)^{\top} \bm\Gamma^{*}(t) \bv_1 \\
&=\left(1-\eta \gamma^*(t)\lambda_1\right) \bD(t)^\top \bv_1-\eta \bD(t)^\top \bm\Gamma^{*}(t) \bv_1
\end{aligned}
$$

First by Lemma~\ref{M* lemma} and Corollary~\ref{v_1 similarity coro}:
$$
\begin{aligned}
\gamma^*(t)\lambda_1 &= \bv_1^\top (k_s(\bM(t+1)-\bm\Gamma(t+1))+(1-k_s)(\bM(t) - \bm\Gamma(t)))\bv_1 \\&\geq k_s\sha^*(t+1) + (1-k_s)\sha^*(t) - \frac{2c_2}{m} \\&\geq \frac{2(1+\rho)}{\eta} - \frac{4c_2}{m}
\end{aligned}
$$

Also we have 
$
\left|\bD(t)^{\top} \bm\Gamma^{*}(t) \bv_1\right| \leq \frac{c_{2} +c_{6}}{m} \|\bD(t)\| \leq \frac{c_{2} +c_{6}}{m}|(\bD(t)^\top \bv_1|+\|\bR(t)\|)
$.

Hence 
$$
\begin{aligned}
|\bD(t+1)^{\top}\bv_1| 
&\geq |1-\eta \gamma^*(t)\lambda_1||\bD(t)^\top \bv_1|-\eta |\bD(t)^\top \bm\Gamma^{*}(t) \bv_1| \\
&\geq |-1-2\rho + \frac{4c_2\eta}{m} + \frac{(c_2+c_6)\eta}{m}|\\
&\geq (1+2\eta\rho^*)|\bD(t)^{\top}\bv_1| - \eta \frac{c_2+c_6}{m}c_7\sqrt{n} \\ &\geq (1+\epsilon_1)|\bD(t)^{\top}\bv_1|
\end{aligned}
$$

Here the third inequality holds by the definition of $\rho^*$.
\end{proof}

Now we state a lemma which proves that if $\bD(t)^\top \bv_1$ is large enough, then the sharpness will decrease in the next iteration.

\begin{lemma}
\label{EOS phase 3 lemma}
Assume $m= \Omega(\eta)$.
There exists constant $c_{8}, c_{9}>0$ such that if $\bD(t)^\top\bv_1>c_{8} \sqrt{n}$, then $\bv_1^\top \bM(t+1) \bv_1 - \bv_1^\top \bM(t) \bv_1 < -c_{9}/m$.
\end{lemma}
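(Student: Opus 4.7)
The plan is to apply the exact dynamics identity in Lemma~\ref{Key equation lemma} and show that the bracketed quantity on its right-hand side, which I will call $\Phi(t)$, satisfies $\Phi(t) \ge \Omega(n)$. Since $\bv_1^\top \bM(t)\bv_1 = \sha^*(t)$ by definition \eqref{B.0} and Lemma~\ref{Key equation lemma} gives $\sha^*(t+1) - \sha^*(t) = -\frac{8\eta\lambda_1}{mn^2}\Phi(t)$, the lower bound $\Phi(t) = \Omega(n)$ together with $\eta\lambda_1 = \Theta(n)$ in our setting immediately yields $\sha^*(t+1)-\sha^*(t) \le -c_9/m$, which is the statement of the lemma.

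The first step I would take is the algebraic decomposition of the first-order pieces of $\Phi(t)$. Using $\bF = \bD + \bY$ and $\bD = (\bD^\top\bv_1)\bv_1 + \bR$, direct expansion yields
$\bF^\top\bD + (\bF^\top\bv_1)(\bD^\top\bv_1) = 2(\bD^\top\bv_1)^2 + \|\bR\|^2 + 2(\bY^\top\bv_1)(\bD^\top\bv_1) + \bY^\top\bR$.
Combining the $(\bD^\top\bv_1)^2$ coefficient with the $-\frac{\eta}{2}(\bD^\top\bv_1)^2\sha^*(t)$ term and invoking Assumption~\ref{sharpness upper bound appendix} (together with Corollary~\ref{v_1 similarity coro}, which gives $\sha^*(t)\le\sha(t)\le\frac{4}{\eta}(1-\beta)$) leaves a coefficient of at least $2\beta$. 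So the ``signal'' surviving in $\Phi(t)$ is at least $2\beta(\bD^\top\bv_1)^2 + \|\bR\|^2$.

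Next I would bound every remaining cross and perturbation term against this signal using the hypothesis $|\bD(t)^\top\bv_1| > c_8\sqrt{n}$ and the a priori bounds already available. The label cross terms give $|2(\bY^\top\bv_1)(\bD^\top\bv_1)| \le 2\sqrt{n}|\bD^\top\bv_1| \le \frac{2}{c_8}(\bD^\top\bv_1)^2$ and $|\bY^\top\bR| \le \sqrt{n}\|\bR\| = O(n)$ by \eqref{R upper bound appendix}. The $\bm\Gamma$ terms, bounded via Assumption~\ref{Gamma assumption appendix} ($\|\bm\Gamma\|\le c_2/m$) and $m = \Omega(\eta)$, contribute $\frac{\eta}{2}|\bR^\top\bm\Gamma\bR| = O(n)$ and $\eta|\bR^\top\bm\Gamma(\bv_1\bv_1^\top\bD)| \le \frac{1}{c_8}(\bD^\top\bv_1)^2 + O(n)$ after AM-GM. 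The last term is controlled by the learning-rate condition \eqref{initialization inequality}: $\frac{\eta}{nd}\bR^\top\bX^\top\bX\bR \le \frac{\eta\lambda_1}{nd}\|\bR\|^2 \le \frac{1}{d+1}\|\bR\|^2$, which absorbs only a fraction of the positive $\|\bR\|^2$ contribution. Collecting everything gives $\Phi(t) \ge \bigl(2\beta - 3/c_8\bigr)(\bD^\top\bv_1)^2 + \frac{d}{d+1}\|\bR\|^2 - Cn$ for an absolute constant $C$. Choosing $c_8$ large enough (e.g.\ $c_8 \ge 6/\beta$, with an additional margin for $C$) makes the $(\bD^\top\bv_1)^2$ coefficient at least $\beta$, yielding $\Phi(t) \ge \beta c_8^2 n - Cn = \Omega(n)$ once $c_8$ is taken sufficiently large in terms of $\beta, c_2, c_6, c_7, d$.

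The main obstacle is purely the bookkeeping of constants: one must ensure that the single quadratic $\beta c_8^2 n$ actually dominates every additive $O(n)$ error coming from the label cross term, the two $\bm\Gamma$ noise terms, and the residual quadratic $\bR^\top\bX^\top\bX\bR/(nd)$. A secondary subtlety is that the proof uses the \emph{upper} bound on $\sha^*(t)$ from Assumption~\ref{sharpness upper bound appendix}, so the lemma is implicitly valid only in the non-diverging regime enforced throughout Section~\ref{edge of stab(phase ii-iv) appen}; consequently the final rate $c_9/m$ hides the problem-dependent factor $\eta\lambda_1/n$, which is $\Theta(1)$ whenever $\eta$ is chosen in the EOS range compatible with \eqref{initialization inequality}.
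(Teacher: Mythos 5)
Your proposal is correct and takes essentially the same approach as the paper: both start from the identity of Lemma~\ref{Key equation lemma}, use Assumption~\ref{sharpness upper bound appendix} to extract a $2\beta(\bD(t)^\top\bv_1)^2$ term, control the remaining terms via Assumption~\ref{Gamma assumption appendix}, the bound $\|\bR(t)\|\leq c_7\sqrt{n}$ from \eqref{R upper bound appendix}, and inequality \eqref{initialization inequality}, and conclude that the bracket is $\Omega(n)$ once $\bD(t)^\top\bv_1>c_8\sqrt{n}$. The only difference is bookkeeping: the paper keeps the bracket as an explicit quadratic $2\beta x^{2}-c_{10}x+c_{11}$ and sets the threshold via the quadratic formula, while you absorb the linear and constant terms directly using $x>c_8\sqrt{n}$ — and both arguments leave the factor $\eta\lambda_1/n$ inside the constant $c_9$ in the same way.
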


\begin{proof}
Recall Equation~\eqref{Key equation in EOS}. Also we have $\|\bR(t)\|\leq c_7\sqrt{n}$ by ~\eqref{R upper bound appendix}. Hence

$$
\begin{aligned}
&\sha^*(t+1) - \sha^*(t) \\
=&-\frac{8\eta \lambda_{1}}{mn^2}\left(\bF(t)^{\top}\bD(t)+(\bF(t)^{\top} \bv_1)(\bD(t)^{\top} \bv_1)-\frac{\eta}{2}(\bD(t)^{\top} \bv_1)^{2}  \sha^*(t)  \right. \\
&\left.-\frac{\eta}{2}\bR(t)^{\top} \bm \Gamma(t) \bR(t)  -\eta  \bR(t)^{\top} \bm \Gamma(t)\left(\bv_1 \bv_1^{\top}\bD(t)\right)-\bR(t)^{\top}\left(\frac{m}{d} \bX^{\top} \bX\right) \bR(t) \frac{\eta}{m n}\right) \\
\leq& -\frac{8\eta \lambda{1}}{mn^2}\left(2 \beta(\bD(t)^{\top} \bv_1)^{2}+\|\bR(t)\|^{2}-\frac{\eta c_2}{2m}\|\bR(t)\|^{2}    \right. \\
&\left.-\frac{c_{2}\eta }{m} \|\bR(t)\|   \cdot\|\bD(t)\|-\frac{\eta \lambda_{1}}{d n}\|\bR(t)\|^{2}  -2\|\bm Y\| \cdot\|\bD(t)\| \right) \\
\leq& -\frac{8\eta \lambda_{1}}{mn^2}\left(2 \beta(\bD(t)^{\top} \bv_1)^{2}+\|\bR(t)\|^{2}(1- \frac{c_{2}\eta}{2m} )\right. \\
&\left.-\frac{c_{2}\eta }{m} \|\bR(t)\|   \cdot(\|\bR(t)\| + |\bD(t)^\top \bv_1|)-\frac{\eta \lambda_{1}}{d n}\|\bR(t)\|^{2}  -2\|\bm Y\| \cdot(\|\bR(t)\| + |\bD(t)^\top \bv_1|)\right) \\ 
\leq&-\frac{8\eta \lambda_{1}}{mn^2}\left(2 \beta\left(\bD(t)^{\top}{\bv_1}\right)^{2}-(  \frac{\eta c_7c_{2}\sqrt{n} }{m}+2\sqrt{n}) \cdot|\bD(t)^\top \bv_1|\right. \\
&\left.+c_7^2n (1-\frac{c_{2}  \eta}{2 m}-\frac{c_{2} \eta}{m}-\frac{\eta \lambda_{1}}{d n}) - 2c_7n\right) \\
=&-\frac{8\eta \lambda_{1}}{mn^2}\left(2 \beta(\bD(t)^\top \bv_1)^{2}-c_{10}\left(\bD(t)^\top{\bv_1}\right)+c_{11}\right)
\end{aligned}
$$ 

Here $c_{10}:= \eta c_7\sqrt{n} \cdot \frac{c_{2}}{m}+2\sqrt{n} = \Theta(\sqrt{n})$, $c_{11} := c_7^2n \left(1-\frac{c_{2}  \eta}{2 m}-\frac{c_{2} \eta}{m}-\frac{\eta \lambda_{1}}{d n}\right) - 2c_7n = Nn)$, and the first inequality holds because of Assumption~\ref{sharpness upper bound appendix}, Assumption~\ref{Gamma assumption appendix} and $\|\bD(t)\|^2=(\bD(t)^{\top} \bv_1)^2 + \|\bR(t)\|^2$, the second inequality holds because of $\|\bD(t)\|\leq |\bD(t)^{\top} \bv_1| + \|\bR(t)\|$, and the third inequality follows from the upper bound \eqref{R upper bound appendix} of $\|\bR(t)\|$.

Note that it is a quadratic function of $\bD(t)^{\top}\bv_1$. Hence if $\bD(t)^\top \bv_1 > \frac{|c_{10}|}{2\beta} + \sqrt{\frac{|c_{11}|}{2\beta}}$, we have 
$$\sha^*(t+1) - \sha^*(t) < - \frac{8\eta \lambda_1}{n^2 m} \cdot \frac{|c_{10}|\sqrt{|c_{11}|}}{2\beta} = \Theta(\frac{\eta}{m})$$ Hence we finish the proof.
\end{proof}

Now we can prove the final part (the first conclusion) of 
Theorem~\ref{EOS main thm appendix}. 

\begin{lemma}
\label{first conclusion of EOS thm}
Let $\rho$ be the one defined in Theorem~\ref{EOS divergence thm}. If $\sha(t_0)>\frac{2}{\eta}(1+\rho)$ for some $t_0$, there exists some $t_1>t_0$ such that $\sha(t_1) < \frac{2}{\eta}(1+\rho)$.
\end{lemma}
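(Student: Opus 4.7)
The plan is to argue by contradiction: assume $\sha(t) \geq \frac{2(1+\rho)}{\eta}$ for every $t \geq t_0$, and derive that $\sha^*(t)$ must eventually be pushed below the level permitted by this assumption together with Corollary~\ref{v_1 similarity coro}. Because $\sha^*(t) \geq \sha(t) - 2c_2/m$, the contradiction hypothesis forces $\sha^*(t) \geq \frac{2(1+\rho)}{\eta} - 2c_2/m$ for all $t \geq t_0$, so we only need a mechanism that drives $\sha^*$ down by an amount exceeding the slack $\frac{4(1-\beta)}{\eta} - \big(\frac{2(1+\rho)}{\eta} - 2c_2/m\big)$, which is $O(1/\eta)$.

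First I would use Lemma~\ref{EOS divergence thm}. Under the contradiction hypothesis $\sha(t), \sha(t+1) > \frac{2}{\eta}(1+\rho)$ holds for every pair of consecutive iterations, so $|\bD(t+1)^\top \bv_1| \geq (1+\epsilon_1)|\bD(t)^\top \bv_1|$ iteratively. Starting from the lower bound $|\bD(t_0)^\top \bv_1| \geq c_3\sqrt{n}/m$ given by Assumption~\ref{divergence assumption appendix}, after $T_1 = O(\log m / \log(1+\epsilon_1))$ steps we obtain $|\bD(t_0+T_1)^\top \bv_1| > c_8\sqrt{n}$, the threshold needed to apply Lemma~\ref{EOS phase 3 lemma}. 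Since the geometric growth continues under the contradiction hypothesis, $|\bD(t)^\top \bv_1|$ exceeds $c_8\sqrt{n}$ for all $t \geq t_0 + T_1$.

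Second I would invoke Lemma~\ref{EOS phase 3 lemma} repeatedly on this range to conclude $\sha^*(t+1) - \sha^*(t) < -c_9/m$ for each $t \geq t_0 + T_1$. Telescoping yields $\sha^*(t_0 + T_1 + T_2) \leq \sha^*(t_0+T_1) - T_2 c_9/m \leq \frac{4(1-\beta)}{\eta} - T_2 c_9/m$, using Assumption~\ref{sharpness upper bound appendix} and Corollary~\ref{v_1 similarity coro} for the initial upper bound. Choosing $T_2$ of order $m/\eta$ (large enough to kill the gap) produces $\sha^*(t_0+T_1+T_2) < \frac{2(1+\rho)}{\eta} - 2c_2/m$, which by Corollary~\ref{v_1 similarity coro} forces $\sha(t_0+T_1+T_2) < \frac{2(1+\rho)}{\eta}$, contradicting the standing assumption. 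Hence the desired $t_1$ exists.

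The main obstacle is ensuring the ``divergence'' of $|\bD(t)^\top \bv_1|$ is not interrupted by the other mechanisms—in particular that during the $T_1$ warmup steps the conditions for Lemma~\ref{EOS divergence thm} stay valid (they do, tautologically, under the contradiction hypothesis), and that the growth is genuinely in absolute value so Lemma~\ref{EOS phase 3 lemma} (whose proof reduces to a quadratic bound in $\bD(t)^\top \bv_1$, hence symmetric in its sign) applies uniformly. A minor bookkeeping point will be verifying that $c_2/m$ slack terms coming from $\sha^* \leq \sha \leq \sha^* + 2c_2/m$ and from $\rho - \rho^*$ do not absorb the per-step decrement $c_9/m$, which is handled by the $m = \Omega(\eta)$ sizing already imposed. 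Everything else is telescoping.
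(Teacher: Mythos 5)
Your proposal is correct and follows essentially the same route as the paper's proof: assume $\sha(t)>\frac{2}{\eta}(1+\rho)$ persists, use Lemma~\ref{EOS divergence thm} to drive $|\bD(t)^\top\bv_1|$ geometrically past the threshold $c_8\sqrt{n}$, then apply Lemma~\ref{EOS phase 3 lemma} repeatedly so that $\sha^*$ drops by $\Theta(1/m)$ per step until Corollary~\ref{v_1 similarity coro} yields a contradiction. The only difference is that you make explicit the quantitative bookkeeping (the warm-up length $T_1$, the telescoping length $T_2\sim m/\eta$, and the sign-symmetry of the quadratic bound in Lemma~\ref{EOS phase 3 lemma}) that the paper leaves implicit.
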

\begin{proof}
Otherwise, for any $t>t_0$, $\sha(t) > \frac{2}{\eta}(1+\rho)$. Then by Theorem~\ref{EOS divergence thm}, $\bD(t)^\top \bv_1$ increases geometrically, hence there exists $t_2>t_0$, such that $\bD(t)^\top \bv_1 > c_8\sqrt{n}$. Now by Lemma~\ref{EOS phase 3 lemma}, each iteration $\sha^*(t)$ will decrease by at least a fixed amount. Hence there must exist a time $t_3>t_0$ such that $\sha^*(t_3)<\frac{2}{\eta} - 2c_2/m$. Then by Corollary~\ref{v_1 similarity coro}, we get a contradiction.
\end{proof}

\subsection{Our Results and the NTK Regime}
\label{our result and ntk}
In this subsection, we explain why our results (Theorem~\ref{Theorem C.1, ps}, Theorem~\ref{EOS main thm appendix}) are sufficiently different from 
the quadratic setting (e.g., linear regression) or the recent convergence analysis in
NTK setting.

A key requirement in the convergence analysis in the NTK regime is that 
the learning rate is very small and the GD trajectory almost tracks the gradient flow,
hence converges to the global minimum.
However, we consider typical learning rate used in practice, which can be much larger. In particular, $\eta>2/\Lambda$ can happen in our setting, which causes instability (i.e., such as the growth of loss in Lemma C.11) along the training trajectory.
Such instability cannot be captured by any existing convergence analysis in NTK regime at all. 
Hence, all existing NTK convergence results do not directly apply here. 

Equally importantly, we find that even when $\bW(t)$ changes slightly (several orders of magnitude smaller than its initialization), PS and EOS still happen with a not so small learning rate $\eta$. To support our claim, we include the experimental results in
Appendix~\ref{Appendix.D.2.3 Compare with NTK}. In Figure~\ref{ntk compare}, we can see that the initialization $\bW(0)$ is 
much larger than the change of $\bW(t)$ and the norm of $\bW(t)$ grows larger when $m$ becomes larger. However, we still observe that PS and EOS occur in this setting. Hence, the setting we study in this paper and our results are intrinsically different from the quadratic setting (in which case EOS cannot happen).

Last, 
in our proofs in Section~\ref{phase i and ps appen} and~\ref{edge of stab(phase ii-iv) appen}, our current bound requires that $m= \Omega(n^2)$ and we also assume $\lambda_r = \Omega(1)$. This may create an impression that we need
a very wide network which operates in the NTK regime.
%This may be confused with the NTK regime.
% However, we would like to point out that we do not focus on the NTK regime.
% (in which the Hessian and hence sharpness do not change much).
% On one hand, our requirement on the width $m$ is relatively mild compared to the typical NTK regime analysis (e.g., \citet{jacot2018neural,lee2019wide,arora2019fine,du2018gradient,du2019gradient,ji2019polylogarithmic}), which also require very small step size $\eta$. 
% Although the bound of $m$ in \cite{ji2019polylogarithmic}
% is polylogarithmic in related parameters, it depends on the separation margin, hence incomparable of our requirement of $m$.
% On the other hand, 
However,
we remark that if our analysis can be tightened to $m=O(n)$, 
one can formally prove that
$\|\bA(t)\|$ can actually change significantly ($\|\bA(t)\|^2-\|\bA(0)\|^2$) has the same scale as the initialization $\|\bA(0)\|^2$), resulting a significant change of sharpness as well,
hence beyond the NTK regime. For example, in the proof of EOS (Section~\ref{edge of stab(phase ii-iv) appen}), we prove a loose $O(\sqrt{mn})$ upper bound of $\|\bD(t)\|$ (Lemma~\ref{D upper bound}). However by Lemma~\ref{EOS phase 3 lemma}, 
when $\|\bD(t)\|$ reaches $O(\sqrt{n})$, the sharpness starts dropping quickly. 
So if a better upper bound of $\|\bD(t)\| = O(\sqrt{n})$ can be proven (this is true empirically for all of our experiments), the width $m$ can be set to $\Theta(n)$, and this  suffices to implies a significant change of $\|\bA(t)\|^2$. We leave these improvements as future directions.

\section{Verification for Assumptions} \label{appendixD}
In this section, we first justify the assumptions we made in Section~\ref{Section 3} and Section~\ref{theoretical analysis} empirically. Then we present the experiment described in Section~\ref{5}. 

\subsection{Assumptions in Section~\ref{Section 3}}
In this subsection, we conduct experiments to verify 
the assumptions in Section~\ref{Section 3}. The detailed experiment settings  
can be found in Appendix~\ref{appendixA}.

\subsubsection{$\mathbf{M}_A$ has small eigenvalues}
In Section~\ref{subsection A norm}, we mentioned that the largest eigenvalue of $\bm {M_A}\ := (\frac{\partial \bm{F}}{\partial \bm{A}}) (\frac{\partial \bm{F}}{\partial \bm{A}})^\top$  is much smaller than the sharpness. 
%\jian{incorrect sentence.}
We verify this assumption under different settings in Figure~\ref{M_A}, including a fully-connected linear network, a fully-connected network with tanh activation and a convolutional one. Observe that $\|\bm{M_A}\|$ (the blue curve) is very close to 0 and hardly increases during the training process along the whole trajectory.
\begin{figure}[H]
    \centering
    \subfigure[Fully-connected Linear Network]{
    \includegraphics[width=0.31\textwidth]{./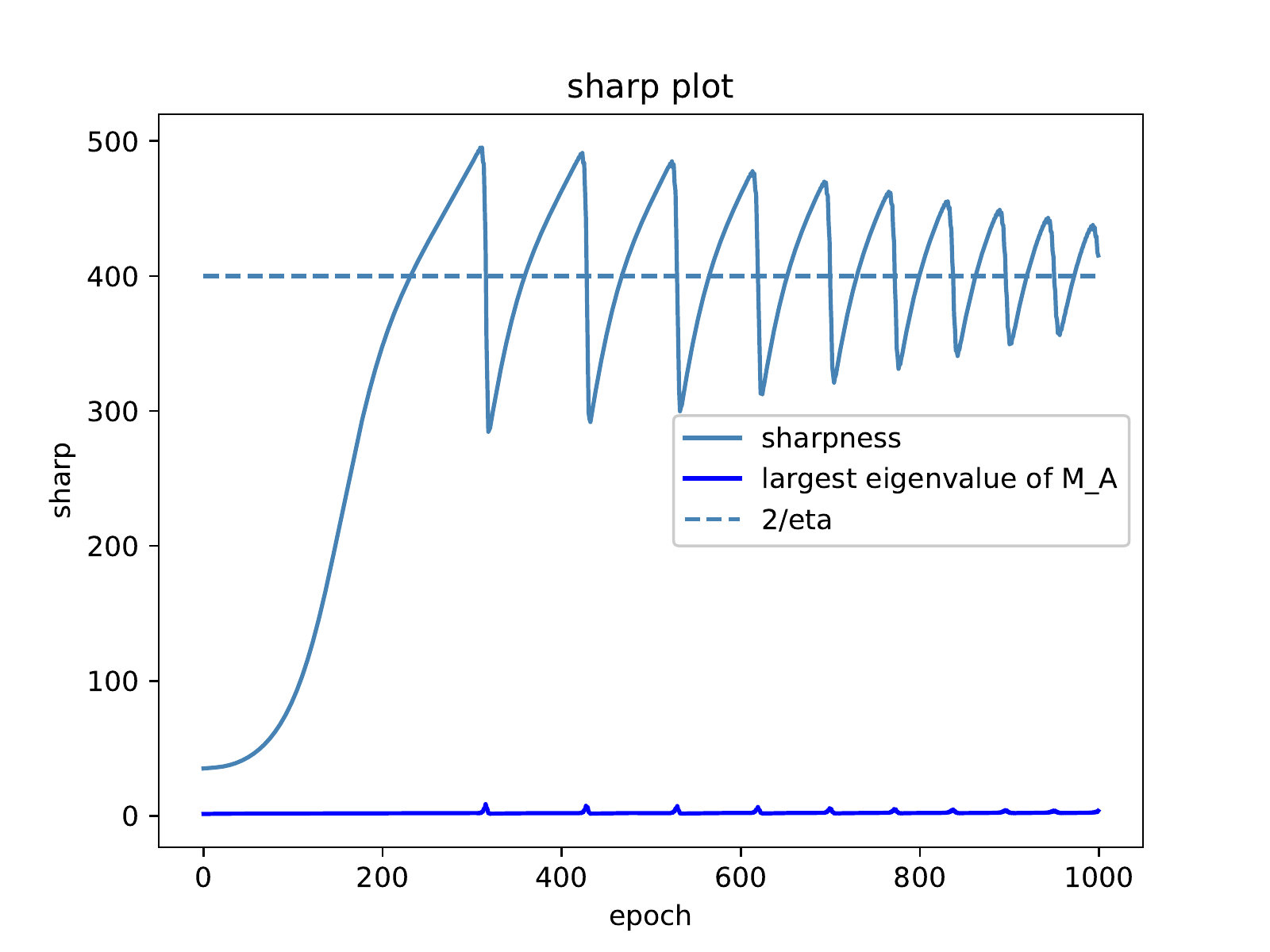}
    }
    \subfigure[Fully-connected tanh Network]{
    \includegraphics[width=0.31\textwidth]{./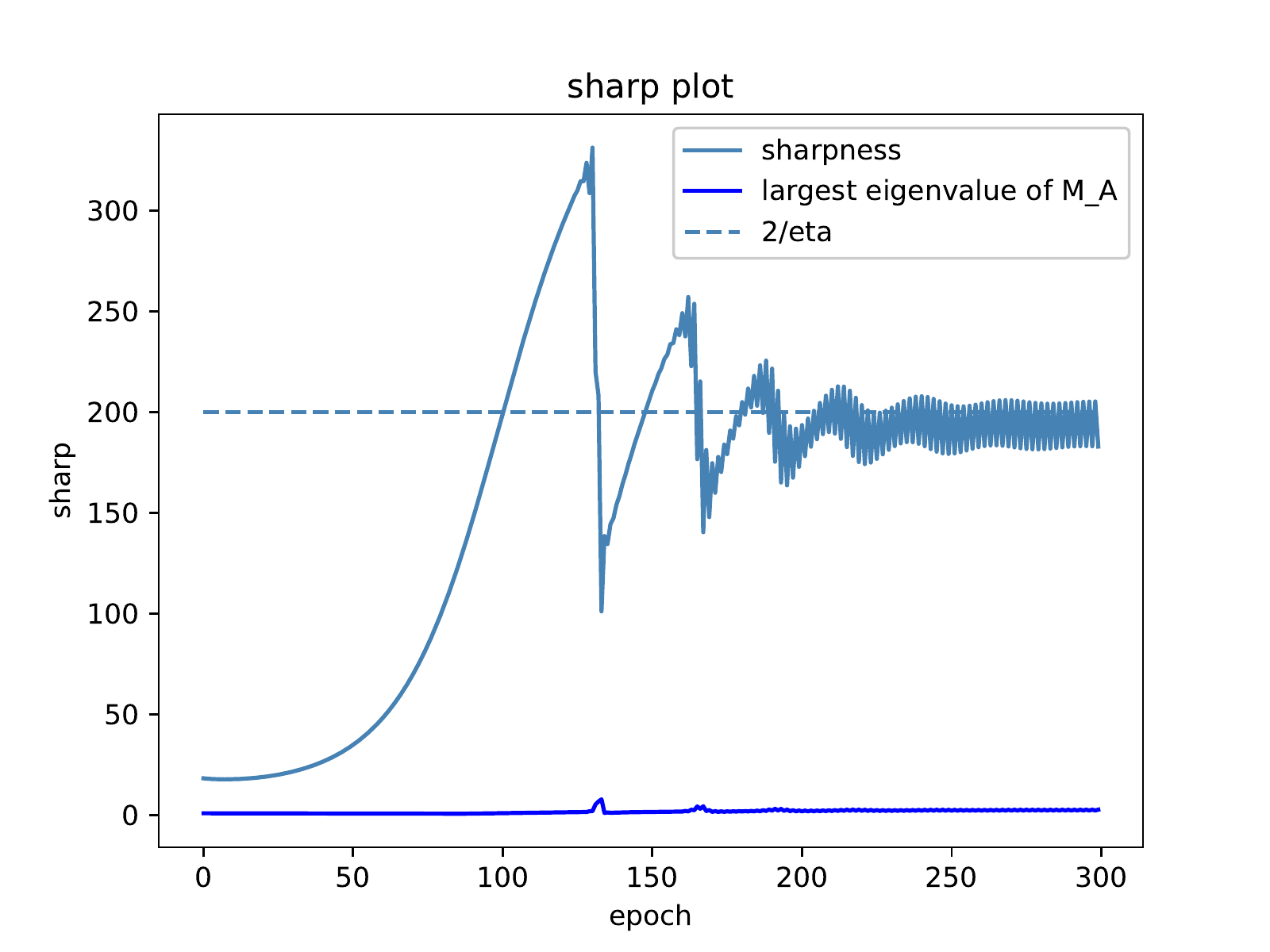}
    }
    \subfigure[Convolutional tanh Network]{
    \includegraphics[width=0.31\textwidth]{./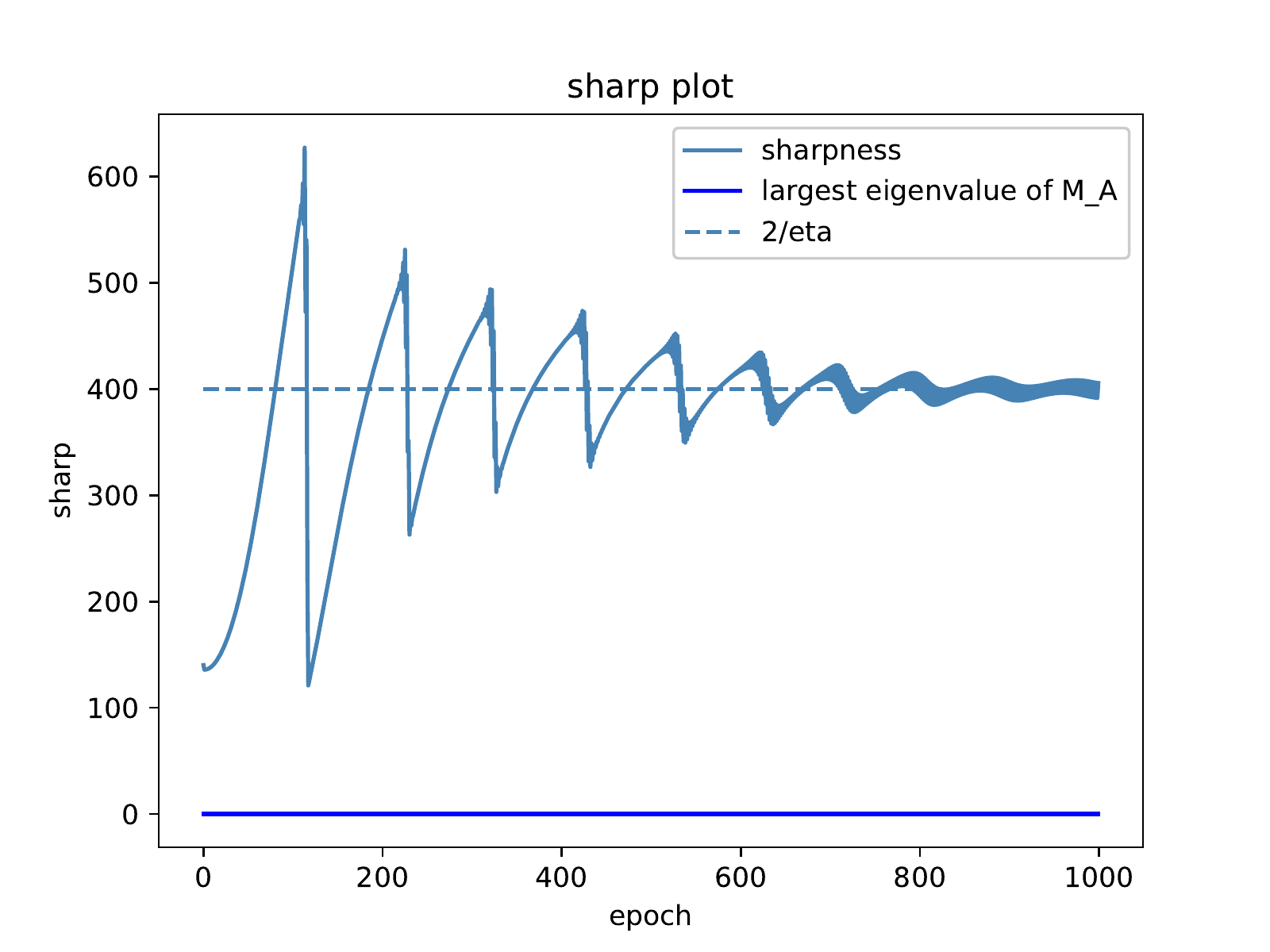}
    }
    \caption{In this figure, we show that $\|\bm{M_A}\|$ is much smaller than the sharpness. Note that $\|\bm{M_A}\|$ (the blue curve) is very close to 0 and hardly increases during the training process along the whole trajectory. The sharpness (the steel-blue curve) strictly dominates $\|\bm{M_A}\|$ for all time. The detailed experiment settings can be found in Appendix~\ref{appendixA}.
    }
    \label{M_A}
\end{figure}

\subsubsection{Assumption~\ref{outlier assp}}

In this assumption, we assume that there is a large gap between the largest and the second largest eigenvalue, and thus the second largest eigenvalue is always below $1/\eta$.
We verify the outlier assumption by calculating the largest and the second largest eigenvalue of $\bm{M}(t)$. In \citet{sagun2016eigenvalues,sagun2017empirical}, the sharpness is much larger than the largest eigenvalue in the bulk (the $(K+1)$-th largest eigenvalue of $\bM$ where $K$ is the number of classes). In our binary setting $K=1$. In Figure~\ref{Outlier Pics}, we show that the largest eigenvalue indeed dominates the second one, and the second one never reaches $1/\eta$, which verifies Assumption~\ref{outlier assp}.
\begin{figure}[H]
    \centering
    \subfigure[Fully-connected linear network]{
    \includegraphics[width=0.31\textwidth]{./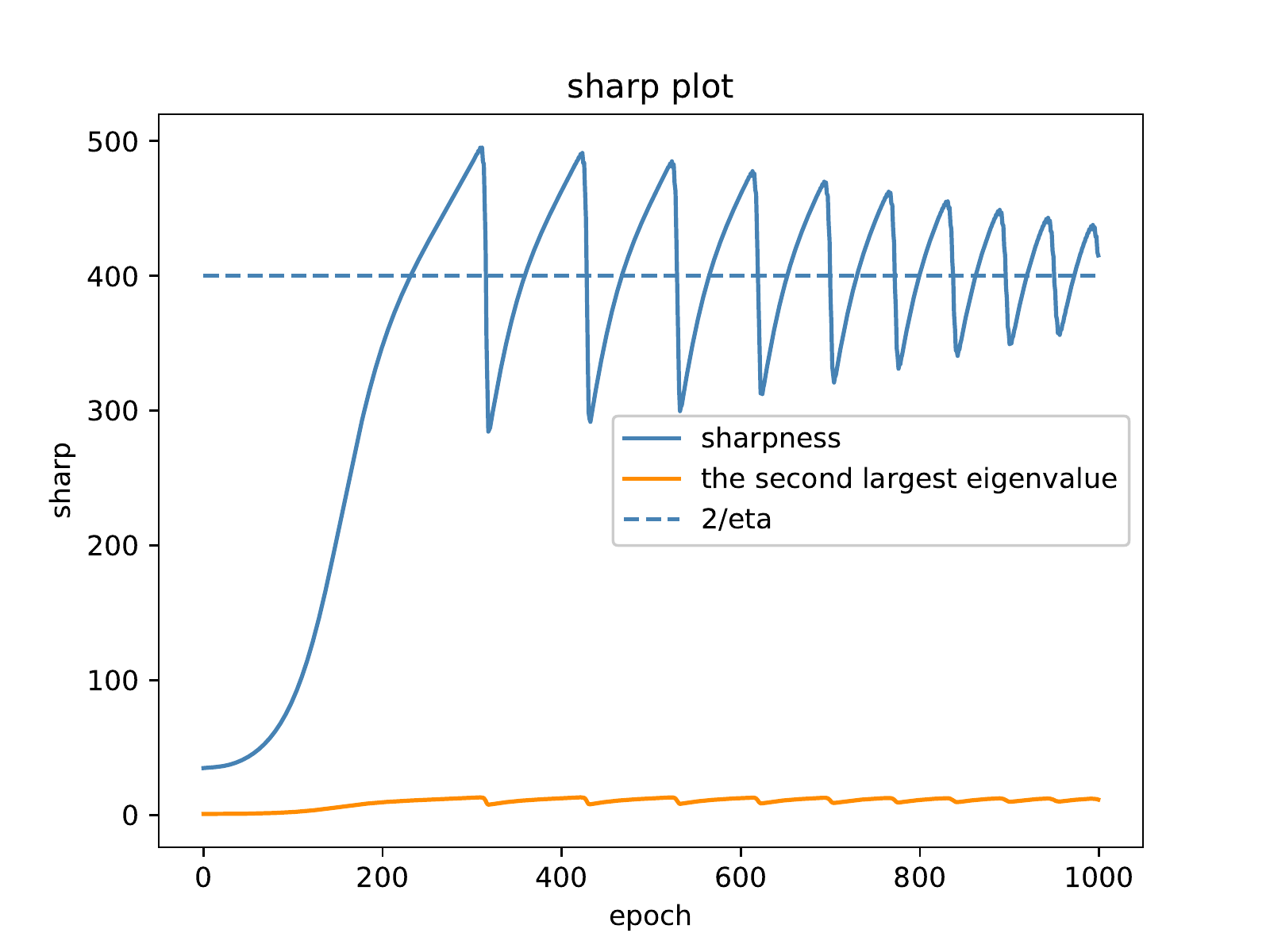}
    }
    \subfigure[Fully-connected tanh network]{
    \includegraphics[width=0.31\textwidth]{./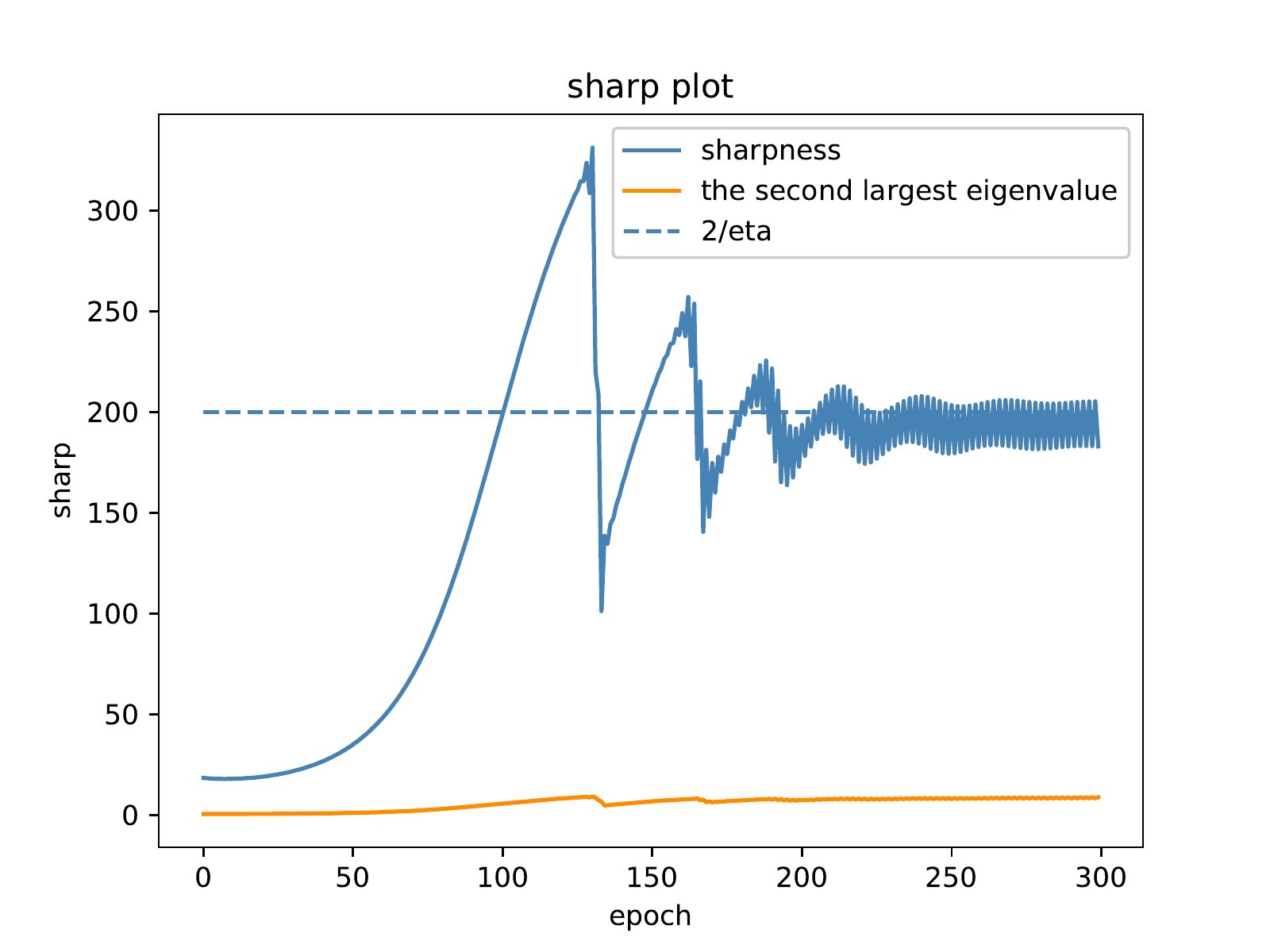}
    }
    \subfigure[Convolutional network with ELU activation]{
    \includegraphics[width=0.31\textwidth]{./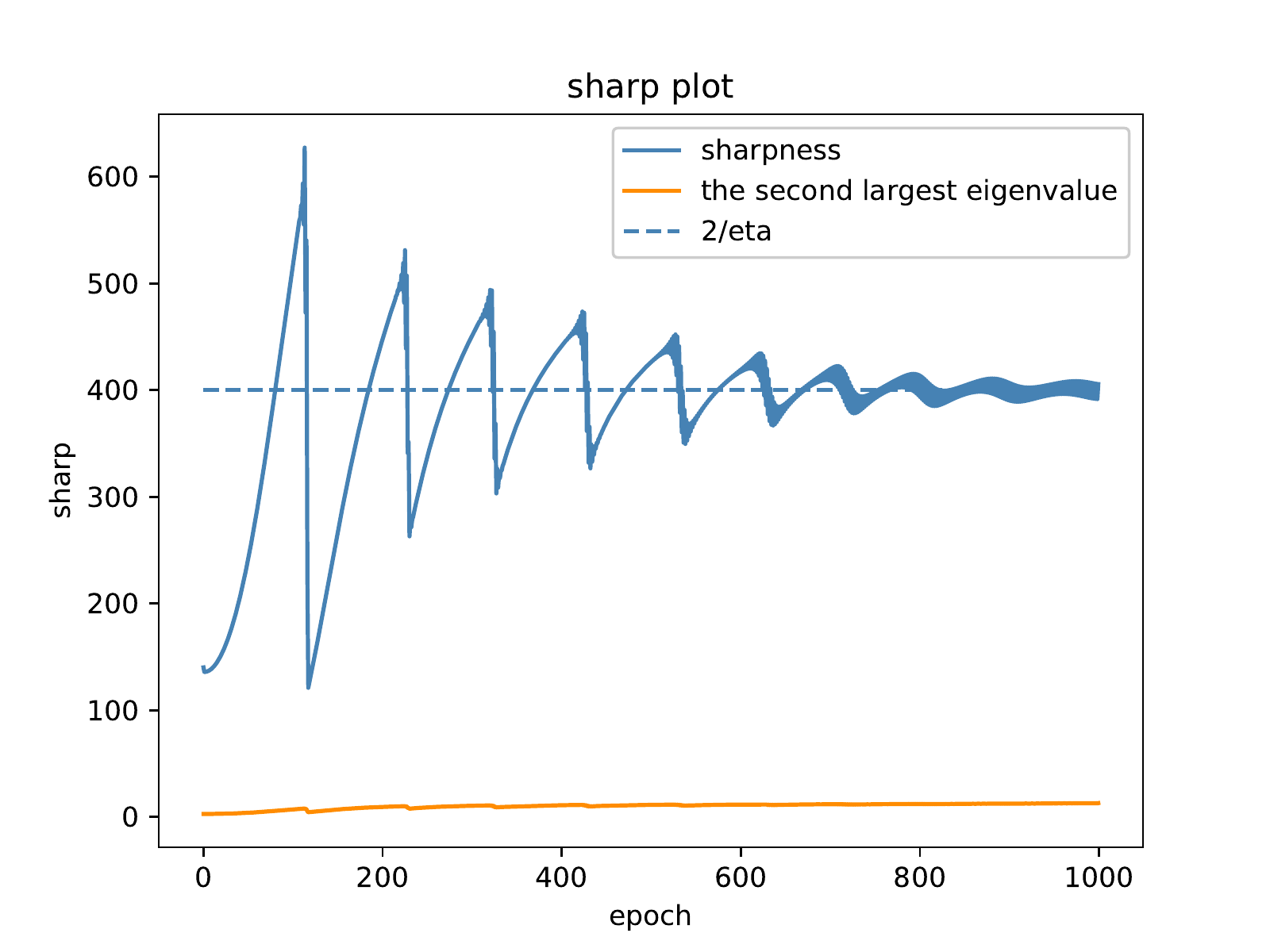}
    }
    \caption{The second largest eigenvalue (the orange curve) of $\bm{M}$ is much smaller than the sharpness (the steel-blue curve). Also, the second largest eigenvalue never reaches $2/\eta$.}
    \label{Outlier Pics appendix}
\end{figure}

\subsubsection{Assumption~\ref{Ass: Order one app} (First Order Approximation of GD)}
\label{Appen: order one apprx}
In Assumption~\ref{Ass: Order one app}, we assume the gradient descent trajectory is close to the first order approximation. To verify that the first order term is indeed dominant along the trajectory, for both the residual $\bD(t)$ and the output layer norm $\|\bA(t)\|^2$, we plot the norms of the actual GD update, the first order approximation order approximation and the higher order terms of the update rule in Figure~\ref{fig: first order approx}. Observe that in the progressive sharpening phase, the first order approximation is almost the same as the actual gradient update; while in the EOS phase, the first order approximation is still close to the actual gradient most of the time.
We can see that the norm of the higher order terms spikes occasionally, but when this happens the first order term spikes much higher. 
%This shows the first order approximation of gradient descent is sufficient to qualitatively analysis the PS and EOS phenomenon.

\begin{figure}[H]
    \centering
    \subfigure[Linear network]{
    \includegraphics[width=0.48\textwidth]{./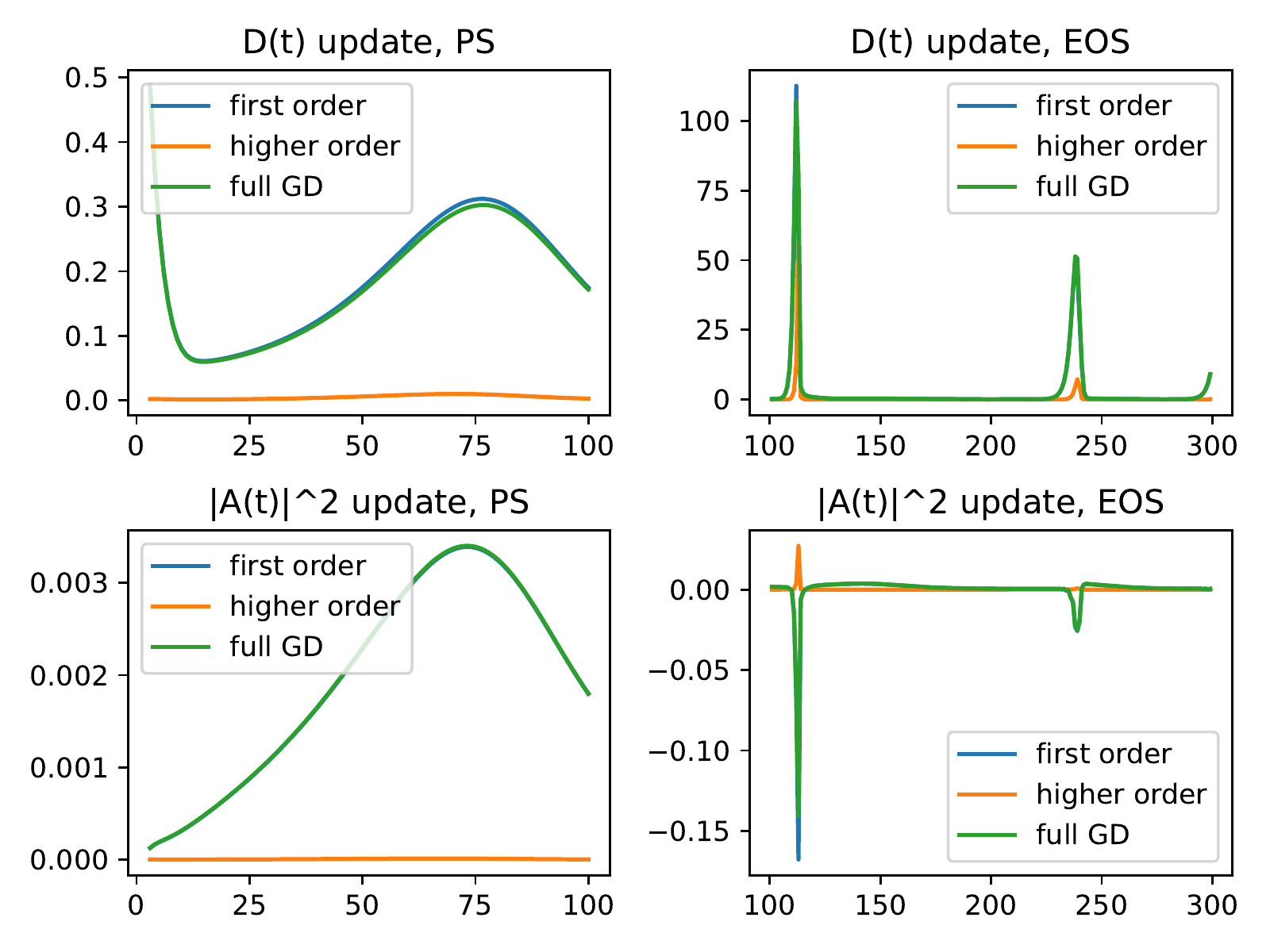}
    }
    \subfigure[Fully connected tanh network]{
    \includegraphics[width=0.48\textwidth]{./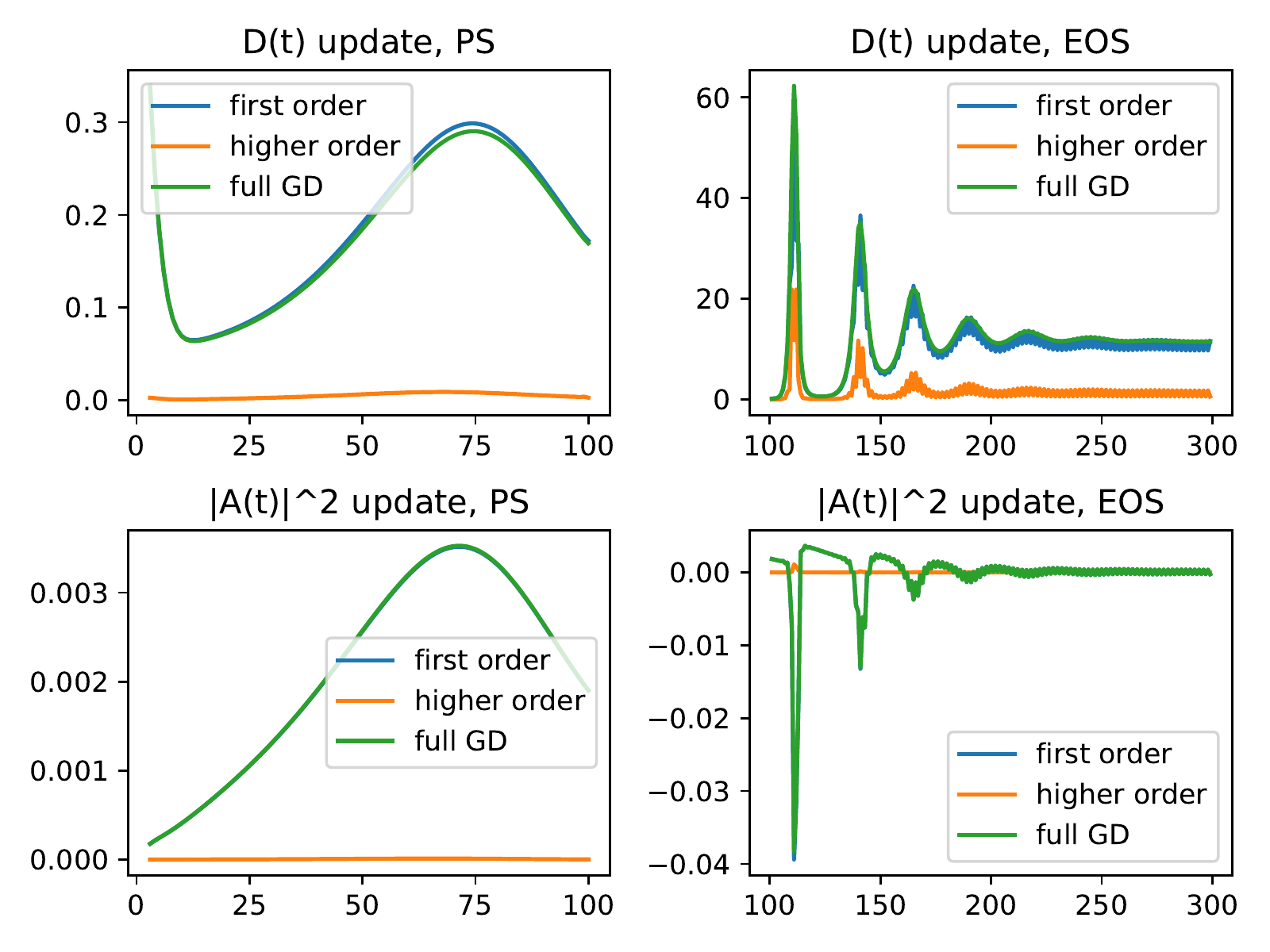}
    }
    
    \caption{This figure shows the norm of the full gradient $\|\bD(t+1)-\bD(t)\|$, the first order approximation $\|\eta\bM(t)\bD(t)\|$ and the higher order terms $\|\bD(t+1)-\bD(t)+\eta\bM(t)\bD(t) \|$ in the training process. Observe that in both the $\|\bA(t)\|^2$ and $\bD(t)$ dynamics, the first order approximation (the blue curve) almost overlaps with the actual update (the green curve) in the PS phase. While in the EOS phase, though the norm of the higher order terms (the orange curve) spikes, the first order approximation is still close to the actual update. }
    \label{fig: first order approx}
\end{figure}

\subsubsection{Assumption~\ref{gf assumption} (Gradient flow for the PS phase)}

In Assumption~\ref{gf assumption}, we assume that in the progressive sharpening phase, the gradient descent trajectory is close to the gradient flow trajectory. We refer to Appendix J  in \citet{cohen2021gradient} for more experiments about this claim. 
Here for readers' convenience, we duplicate their Figure 29 in Appendix J as Figure~\ref{Outlier Pics}.

\begin{figure}[H]
    \centering
    \includegraphics[width=0.9\textwidth]{./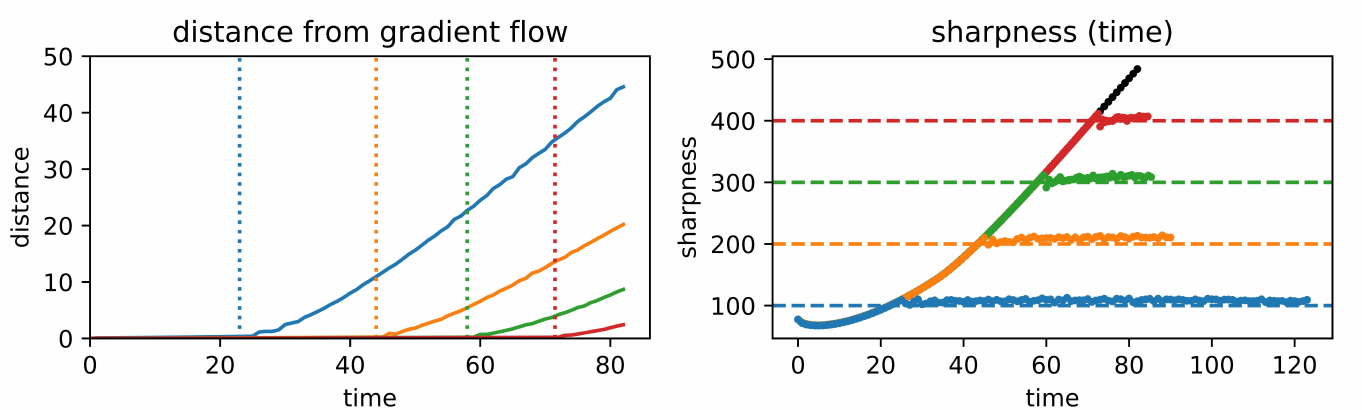}
    \caption{\textbf{Left}: the picture shows the $\ell_2$ distance between the gradient flow trajectory at $t$ iteration and the gradient descent trajectory at $t/\eta$ iteration. The vertical dotted line is the time when the sharpness reaches $2/\eta$. \textbf{Right: the plot of sharpness where (iteration $\times$ learning rate) is on the x-axis.}}
    \label{Outlier Pics}
\end{figure}

\subsubsection{Assumption~\ref{Ass: eigspace assumptions} (iii) (principal directions moves slowly)}
In Figure~\ref{Largest direction} we verify Assumption~\ref{Ass: eigspace assumptions} (iii) and as well as the discussion after Assumption~\ref{Ass: eigspace assumptions} (i). 
In general models, we find that the eigenvectors corresponding to small
eigenvalues may change drastically. But for the largest eigenvector, it indeed changes slowly from the initialized direction. In Figure~\ref{Largest direction}, we can see that over a long training time the similarity of $\bv_1(t)$ with its initialization is still larger than at least 0.98.  \citet{mulayoff2021implicit} also proved that near the minima, the top eigenvectors of the Hessian matrices tend to align. That is, the directions of these top eigenvectors are approximately parallel. This fact also corroborates our assumption near the minima.

\begin{figure}[H]
    \vspace{-0.2cm}
    \centering
    \subfigure[Fully-connected linear network]{
    \includegraphics[width=0.315\textwidth]{./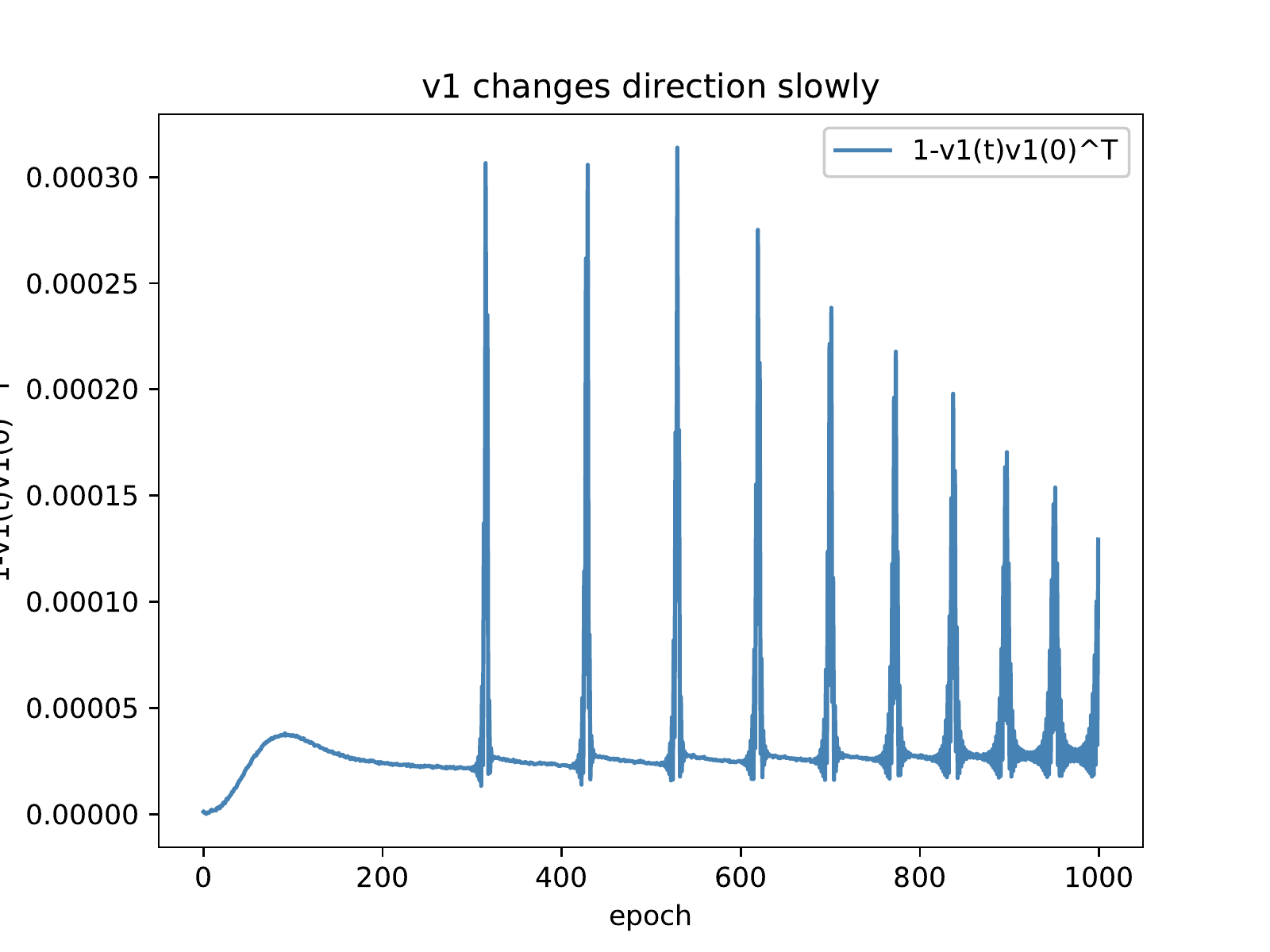}
    }
    \subfigure[Fully-connected tanh network]{
    \includegraphics[width=0.315\textwidth]{./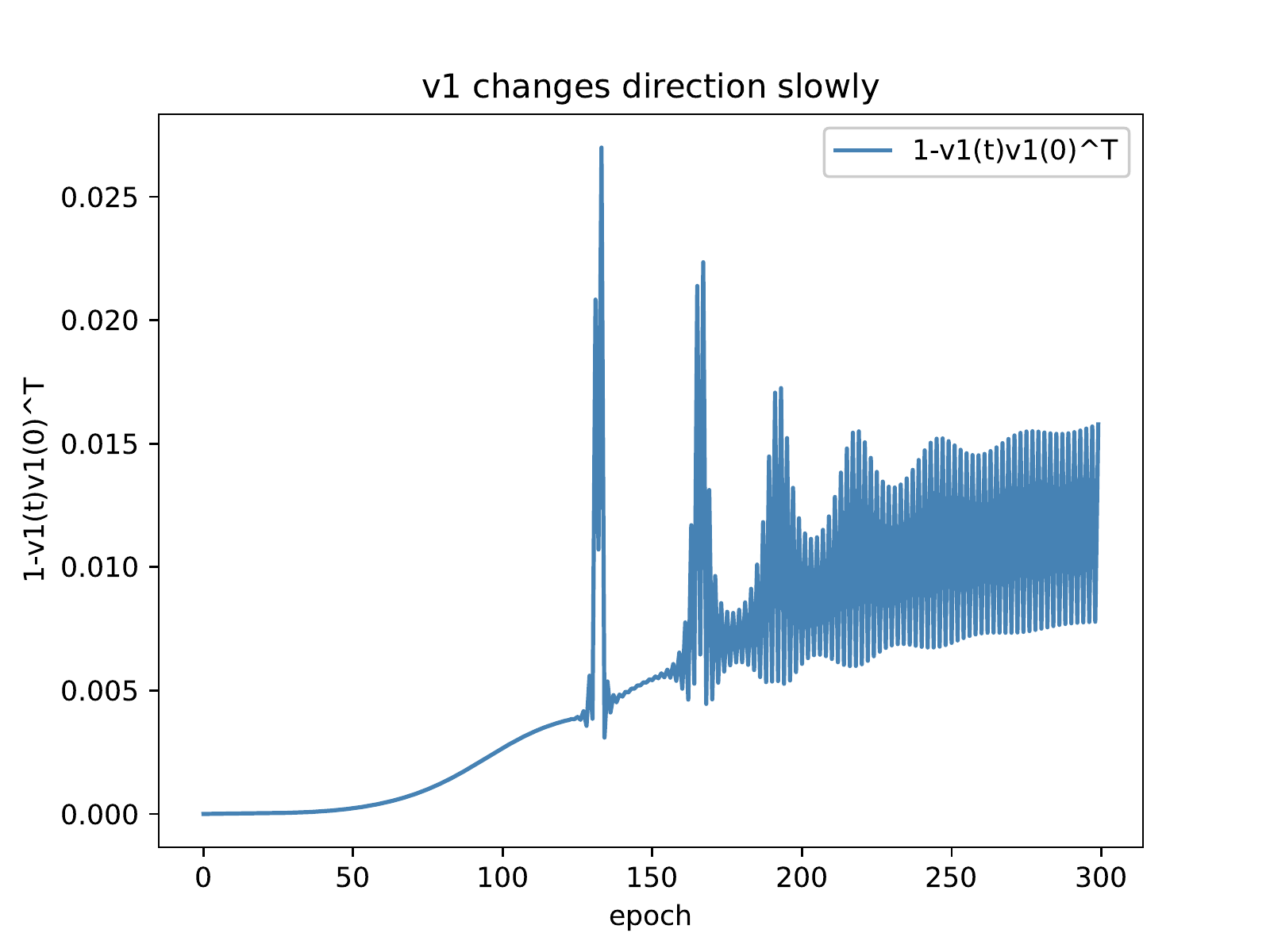}
    }
    \subfigure[Convolutional network with ELU activation]{
    \includegraphics[width=0.315\textwidth]{./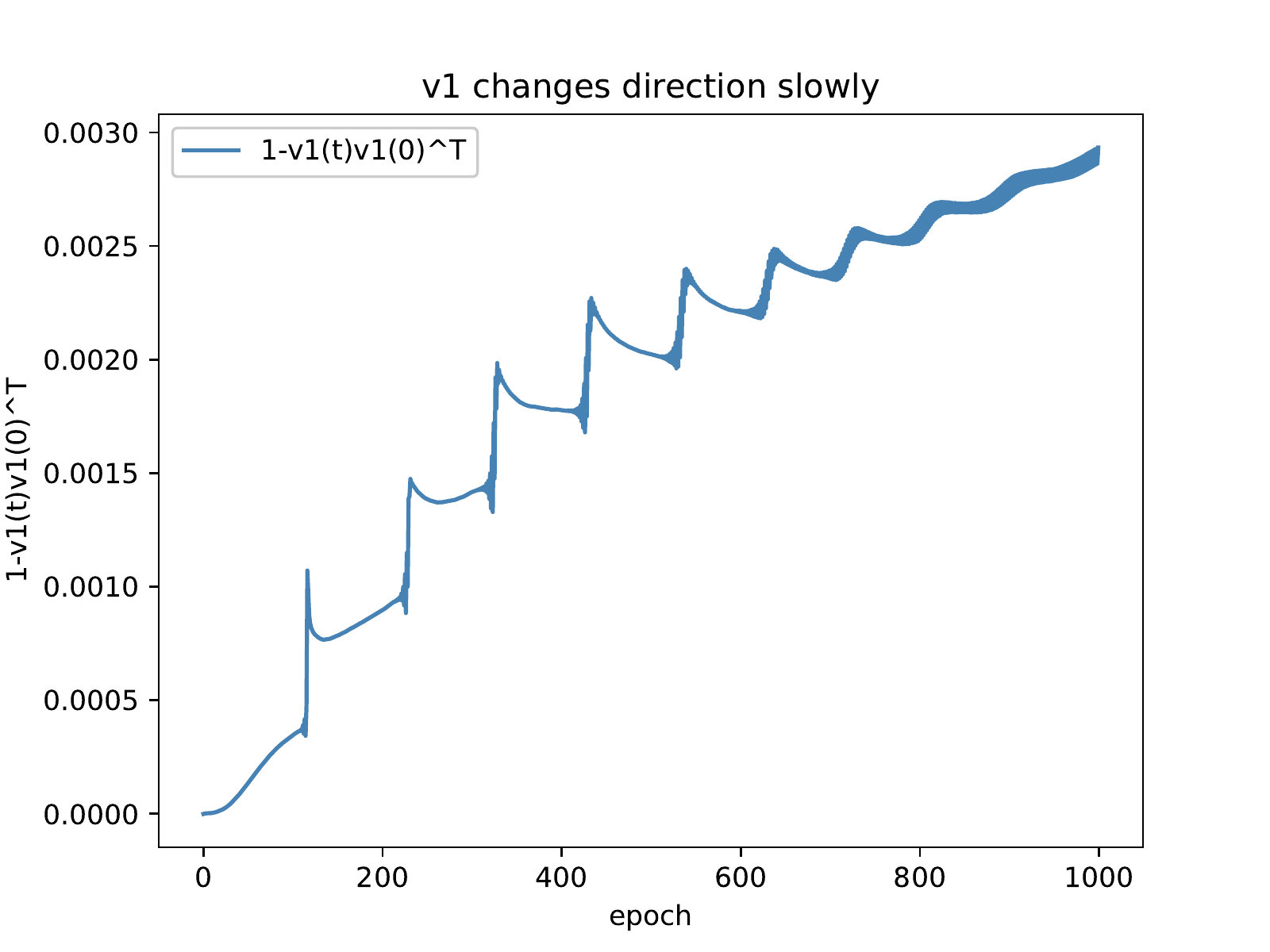}
    }
    \caption{In this figure, we show that under various architectures, the  first eigenvector of $\bm{M}$ changes slowly throughout the training process, even when Edge of Stability phenomenon occurs.}
    \label{Largest direction}
\end{figure}

\subsection{Assumptions in Section~\ref{theoretical analysis}}\label{D.2 appen}
In this subsection, we present the assumptions in Section~\ref{theoretical analysis}, and conduct experiments to verify them. We add a scale coefficient $\frac{1}{\sqrt m}$ in the linear network to be consistent with the settings of theoretical analysis.
\subsubsection{Assumption~\ref{eigenspectrum assumption}}\label{Assumption 4.1}

In Assumption~\ref{eigenspectrum assumption}, we assume the ratio $\lambda_i/\lambda_{i+1}$ between the two adjacent eigenvalues of $\bX^\top\bX$ is bounded by a small constant. In the 1000-example subset of CIFAR-10, we verify this assumption by experiments. We plot the eigenvalues and their ratio in Figure~\ref{eigenspec assumption fig nonz}. It shows that
Assumption~\ref{eigenspectrum assumption} holds and the constant $\chi\approx 38$, since almost all the ratios are close to 1, and the largest ratio is $\lambda_1/\lambda_{2}\approx 38$.
%\jian{use formal name. what is outlier assumption?}

\begin{figure}[H]
    \centering
    \subfigure[The eigenvalue spectrum (log scale)]{
    \includegraphics[width=0.48\textwidth]{./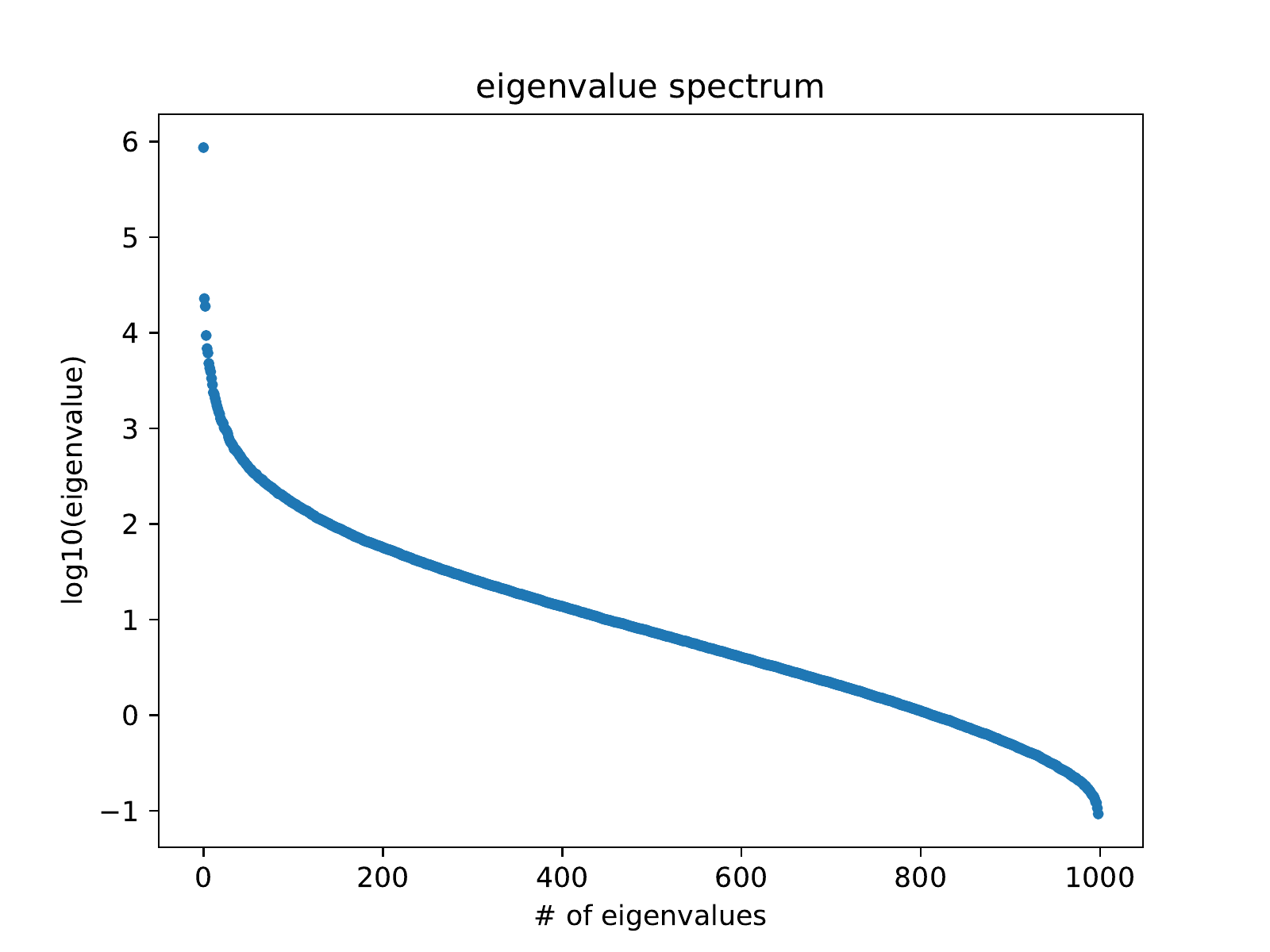}
    }
    \subfigure[The ratio between $\lambda_i$ and $\lambda_{i+1}$]{
    \includegraphics[width=0.48\textwidth]{./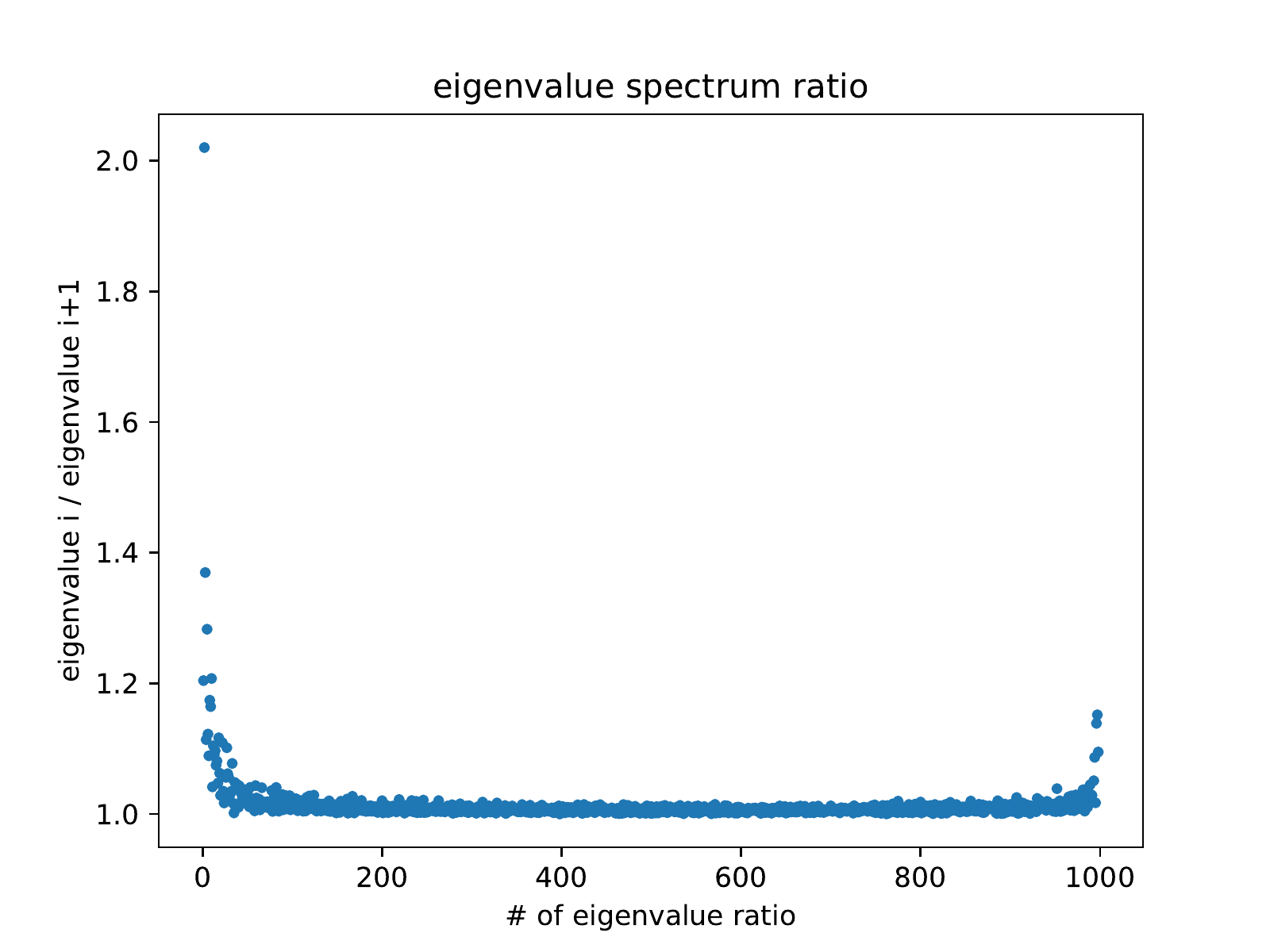}
    }
    \caption{(a): The full eigenvalue spectrum of of $\bX^\top\bX$. Observe that the maximal ratio $\max\{\lambda_i/\lambda_{i+1}\}=\lambda_1/\lambda_2$. (b): The ratio between two adjacent eigenvalues $\lambda_i/\lambda_{i+1}$, $i\geq 2$. We exclude the largest eigenvalue to make the figure clearer. }
    \label{eigenspec assumption fig nonz}
\end{figure}

If we further consider a mean-subtracted version of this subset of CIFAR-10 (See Figure~\ref{eigenspec assumption fig}), we can reduce the ratio to $\chi\approx 3$.
\begin{figure}[H]
    \centering
    \subfigure[The eigenvalue spectrum (log scale)]{
    \includegraphics[width=0.48\textwidth]{./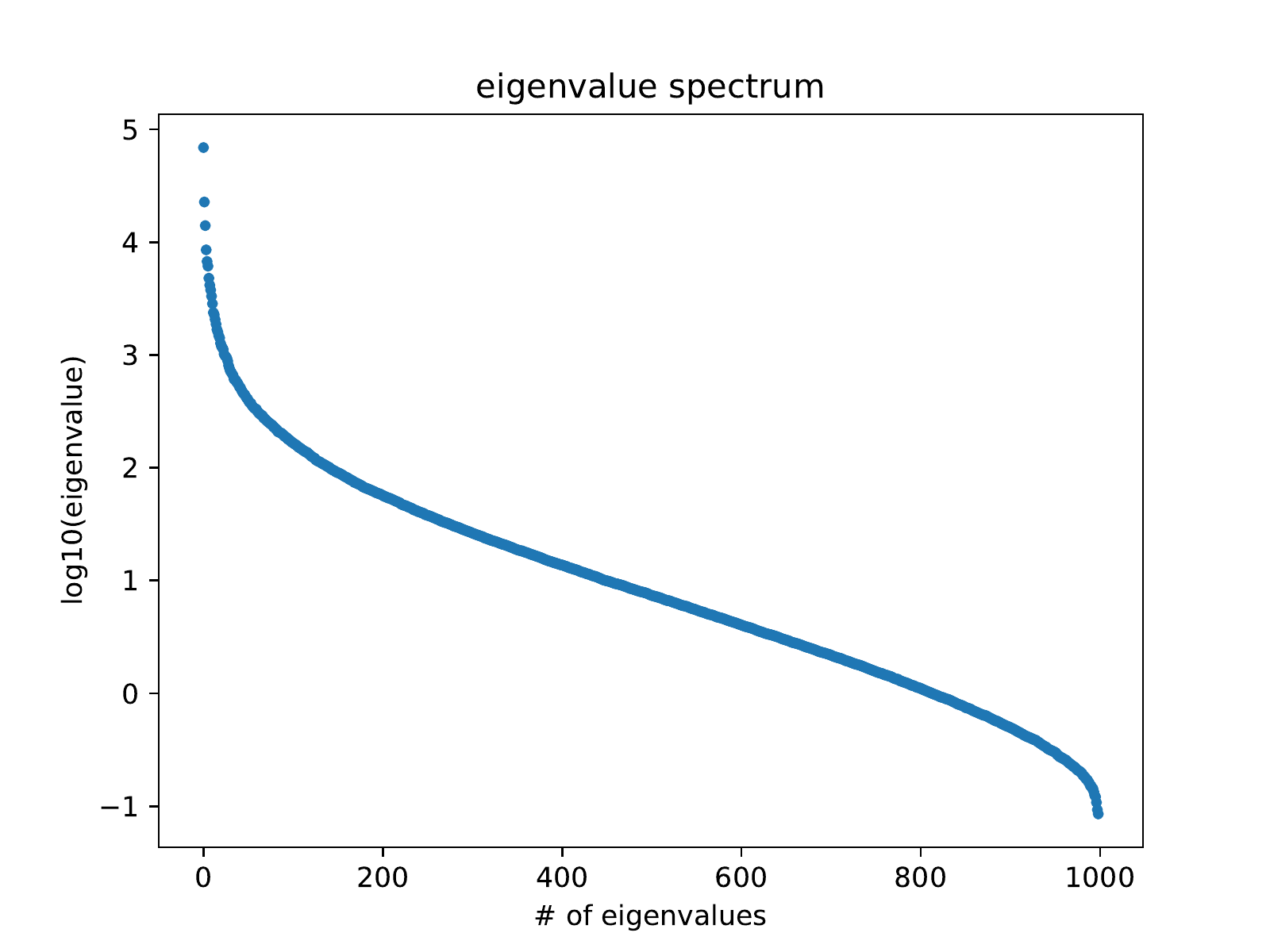}
    }
    \subfigure[The ratio between $\lambda_i$ and $\lambda_{i+1}$]{
    \includegraphics[width=0.48\textwidth]{./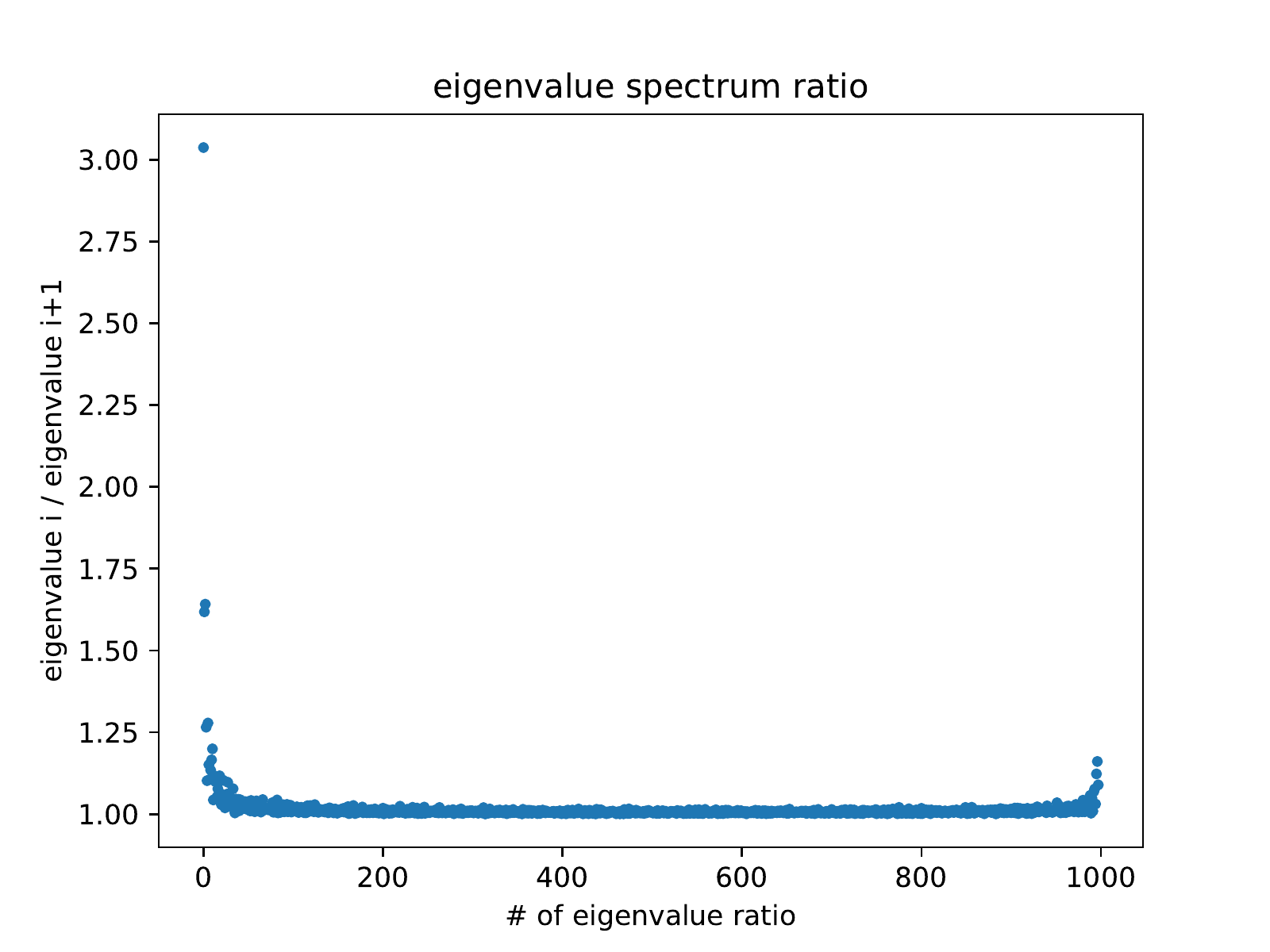}
    }
    \caption{(a): The full eigenvalue spectrum of of $\bX^\top\bX$ when samples' mean is subtracted. (b): the ratio between two adjacent eigenvalues $\lambda_i/\lambda_{i+1}$. In Figure (b), observe that the maximal ratio $\max\{\lambda_i/\lambda_{i+1}\}=\lambda_1/\lambda_2\approx 3$. \textbf{Remark.} Note we should exclude the minimal eigenvalue if the mean of examples is subtracted, because when we subtract the mean from the dataset, it cannot be full rank. The minimal eigenvalue of $\bX^\top\bX$ is 0, thus it is unnecessary to check it for our assumption.}
    \label{eigenspec assumption fig}
\end{figure}
% \textbf{\textit{Remark:}} Note we should exclude the minimal eigenvalue if the mean of examples is subtracted, because when we subtract the mean from the dataset, it can't be full rank. The minimal eigenvalue of $\bX^\top\bX$ is 0, thus it's unnecessary to check it for our assumption.

\subsubsection{Assumption~\ref{Lambda assumption}}
\label{D.2.2 gamma assumption verification}
In Appendix~\ref{phase i and ps appen}, by Theorem~\ref{Theorem C.1, ps} we prove that $\|\bm\Gamma(t)\|$ is bounded by $R_w=O(1/m)$ in the progressive sharpening phase. When gradient descent enters EOS, the proof does not hold and we make this bound into an assumption (Assumption~\ref{Lambda assumption}). We empirically verify that the bound can only increase by a constant factor (despite some spikes in EOS). See Figure~\ref{gamma assumption}.
In this figure $\|\bm \Gamma(t)\|\leq \frac{24}{m}$ with $m=40, 80, 160, 200$.

\begin{figure}[H]
    \centering
    \subfigure[The sharpness, $m=40, 80, 160, 200$]{
    \includegraphics[width=0.245\textwidth]{./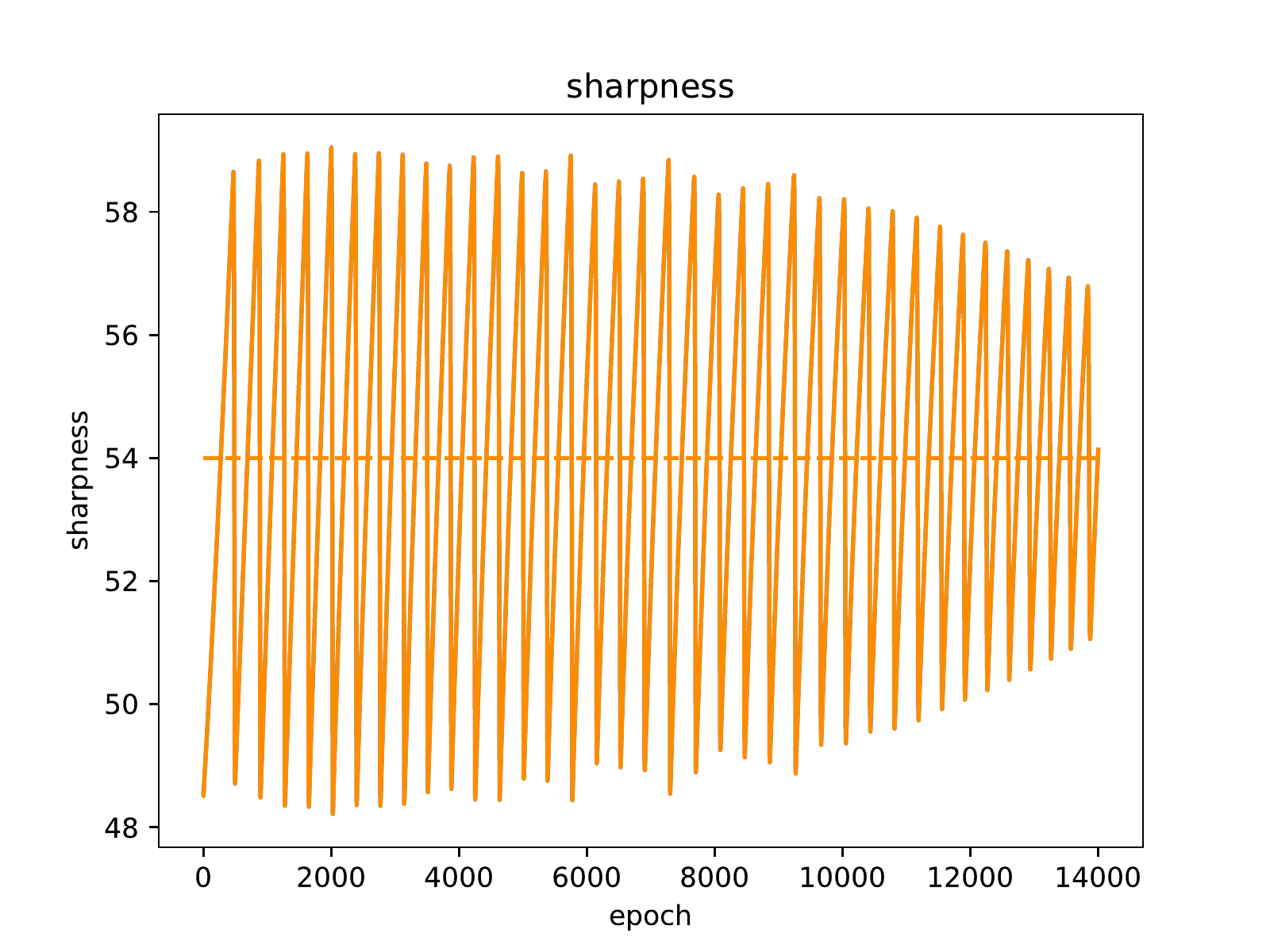}
    
    \includegraphics[width=0.245\textwidth]{./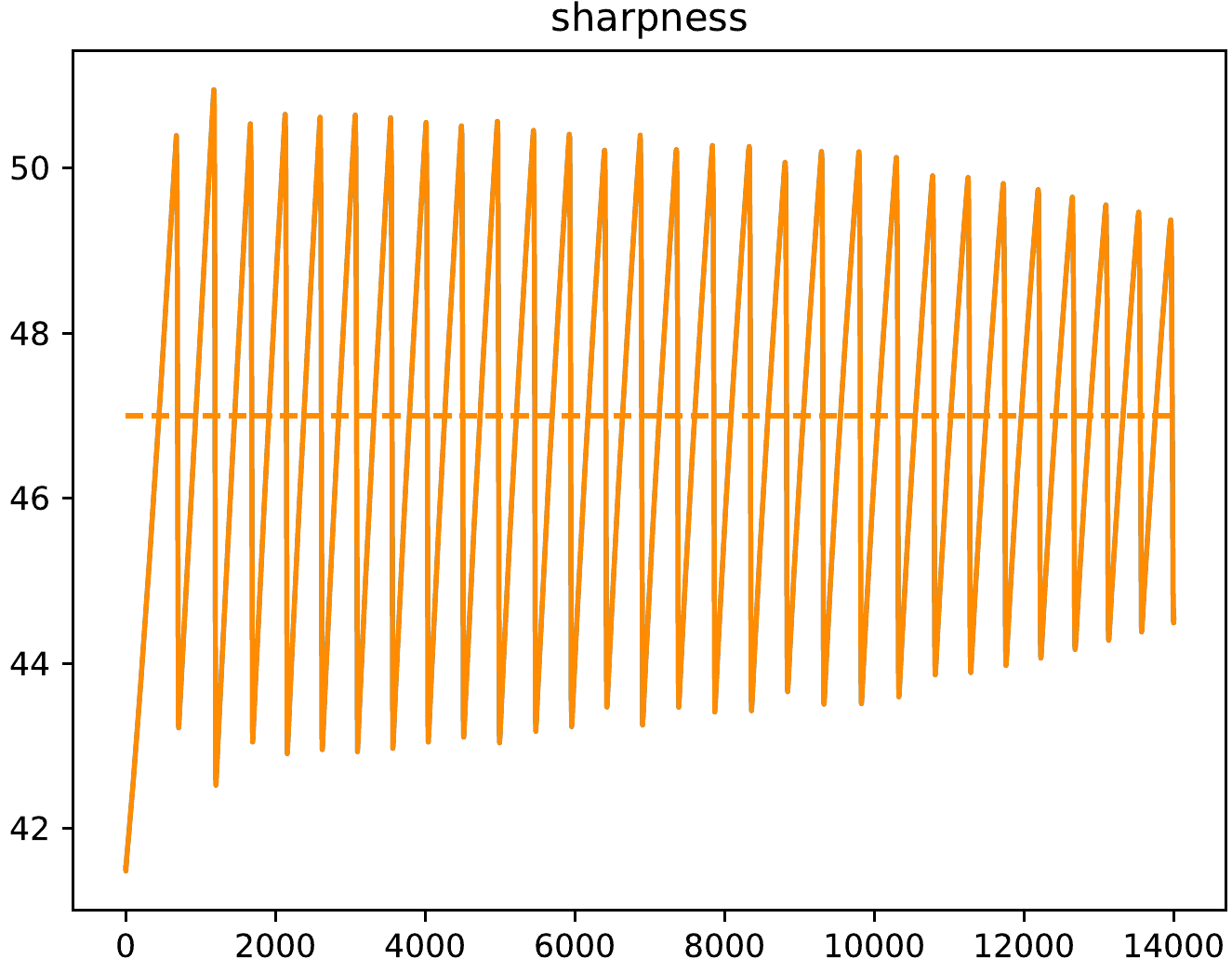}
    
    \includegraphics[width=0.245\textwidth]{./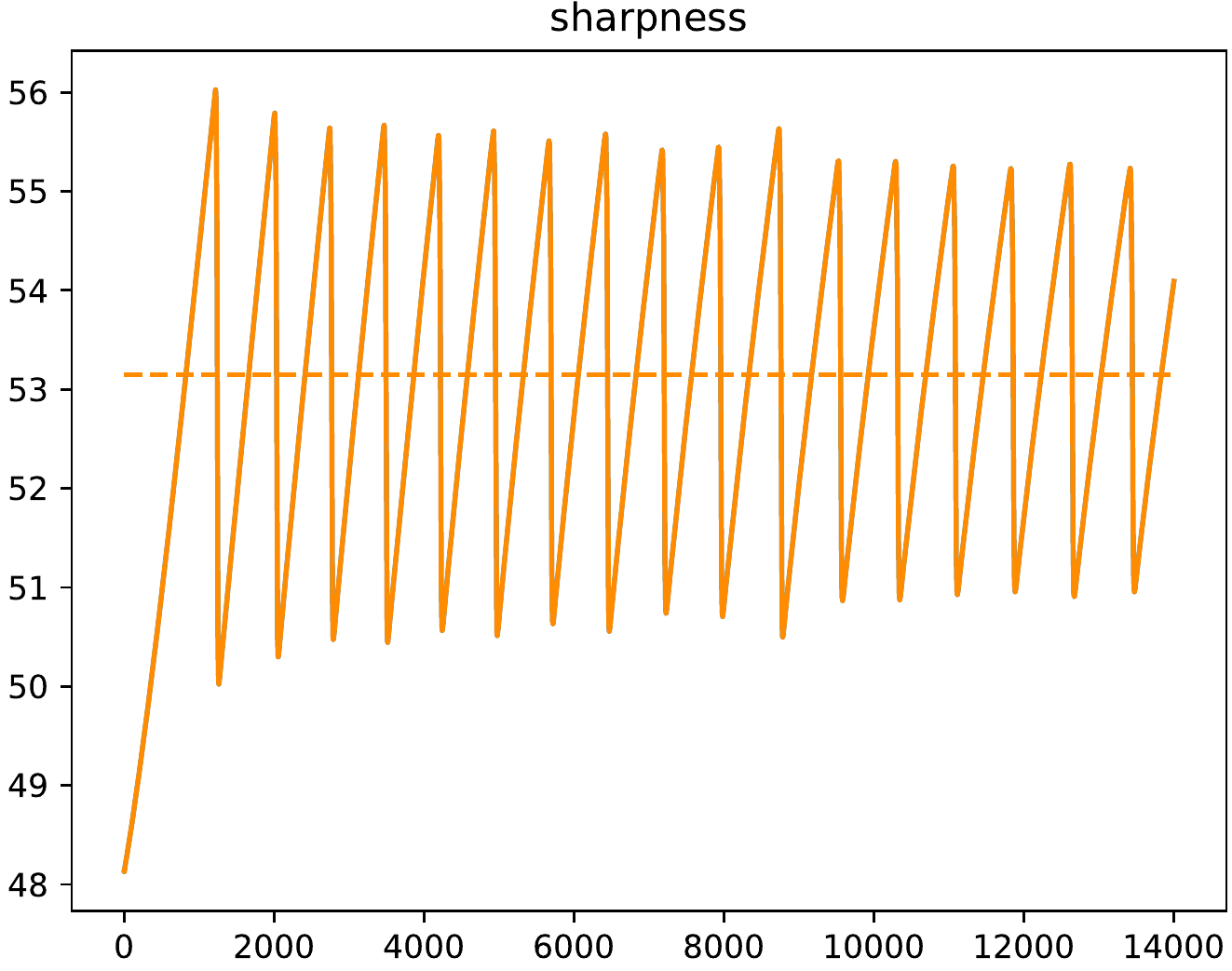}

    \includegraphics[width=0.245\textwidth]{./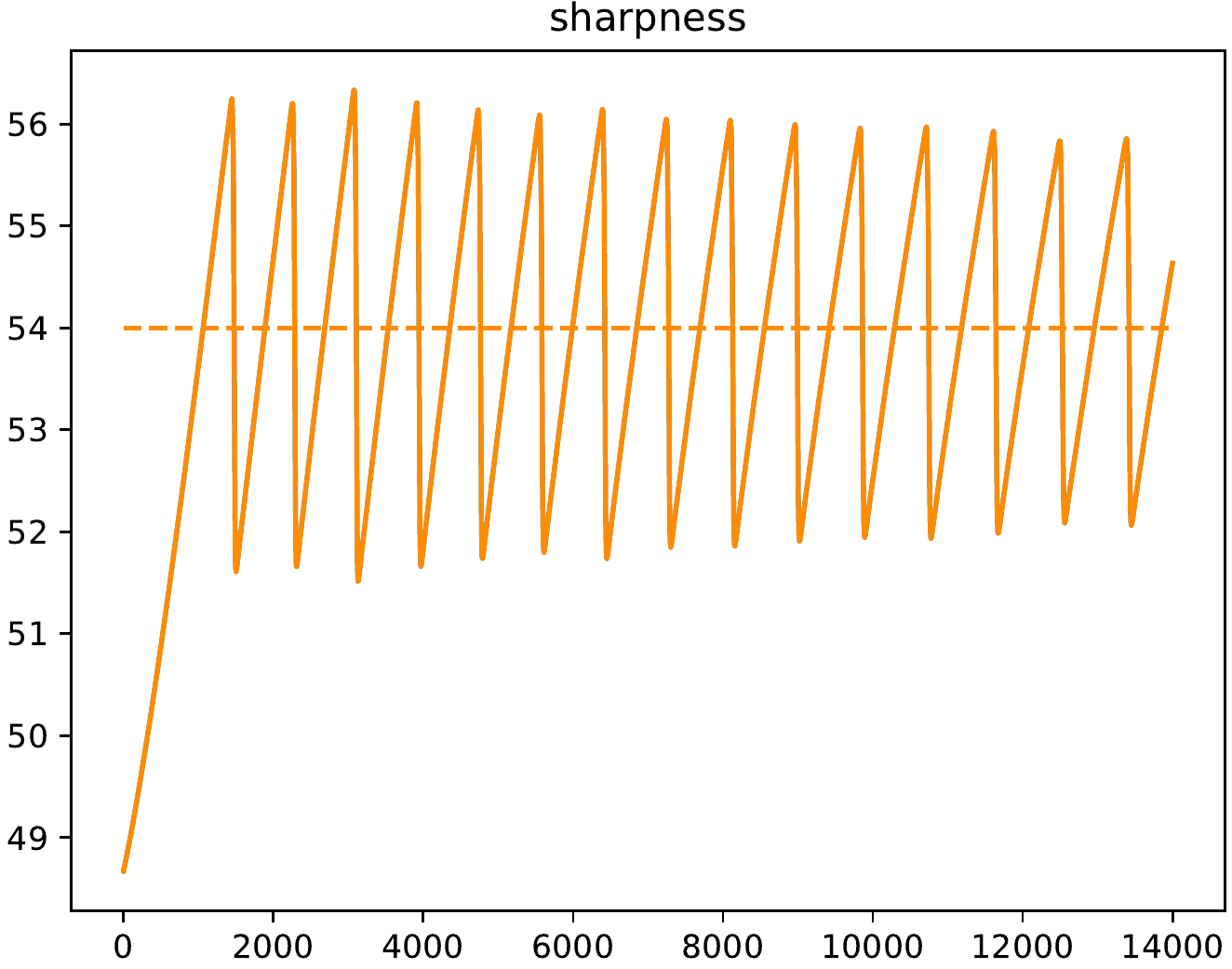}
    }
    \subfigure[$\|\bm\Gamma(t)\|$, $m=40, 80, 160, 200$]{
    \includegraphics[width=0.245\textwidth]{./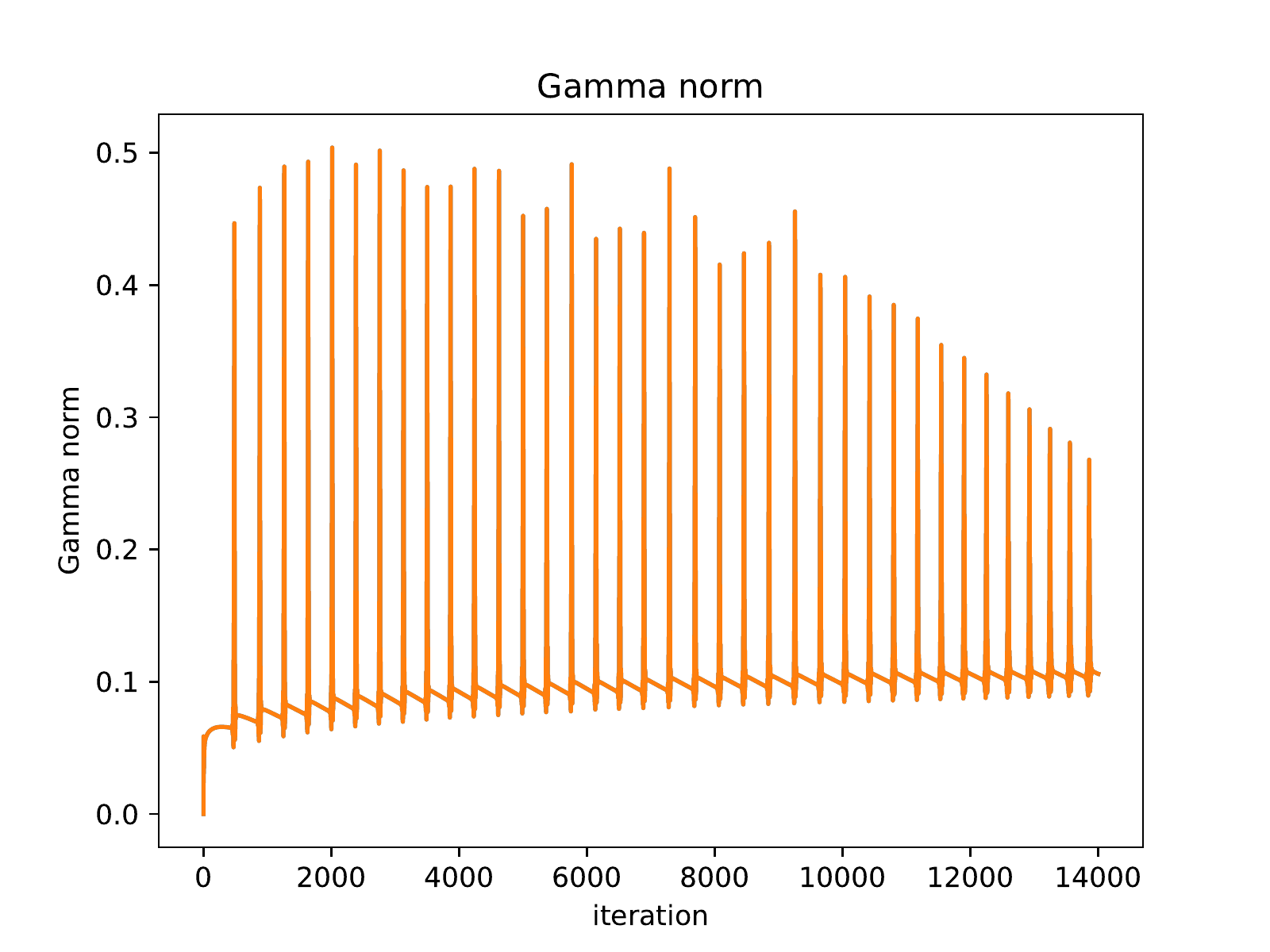}
    \includegraphics[width=0.245\textwidth]{./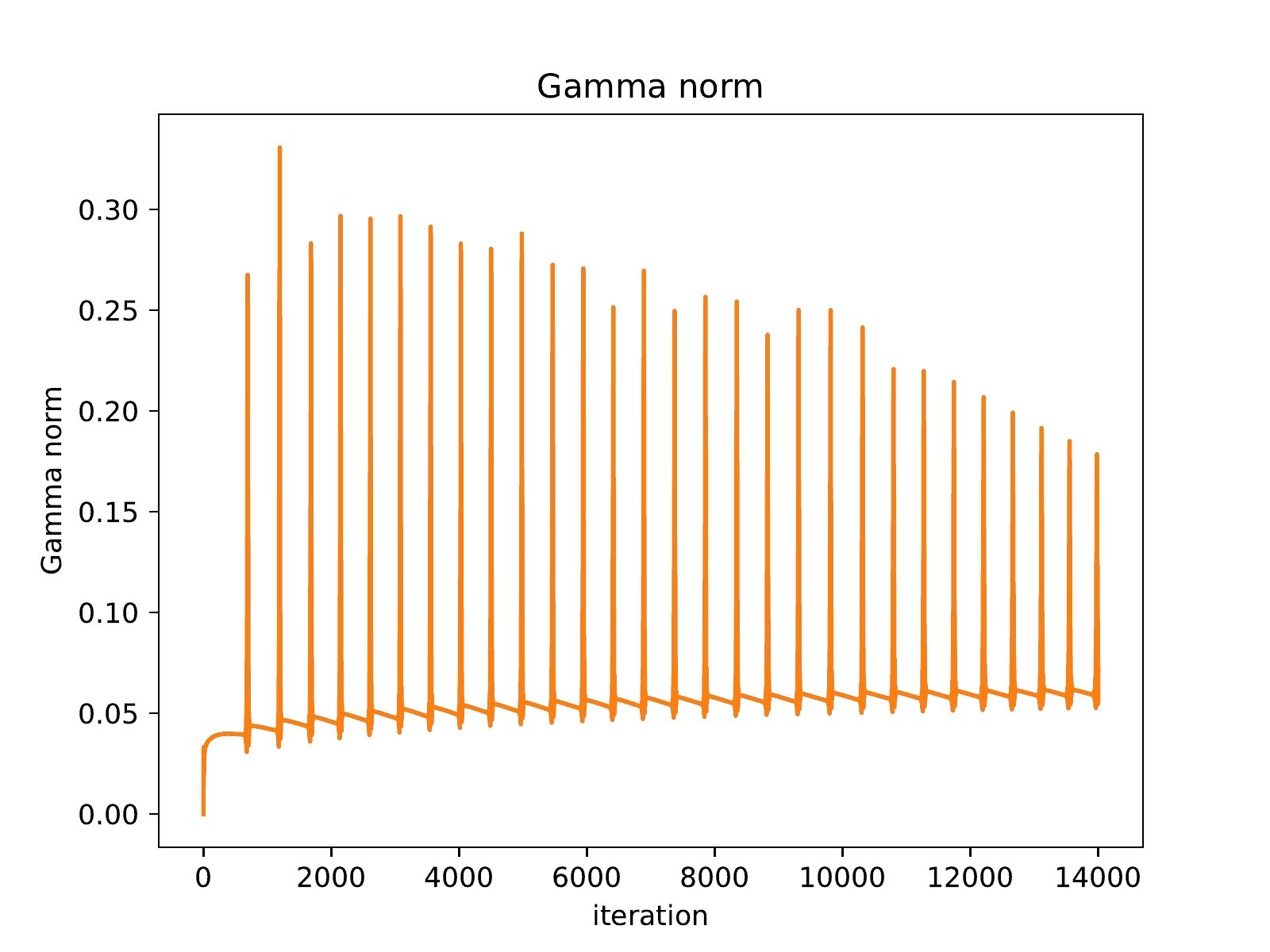}
    
    \includegraphics[width=0.245\textwidth]{./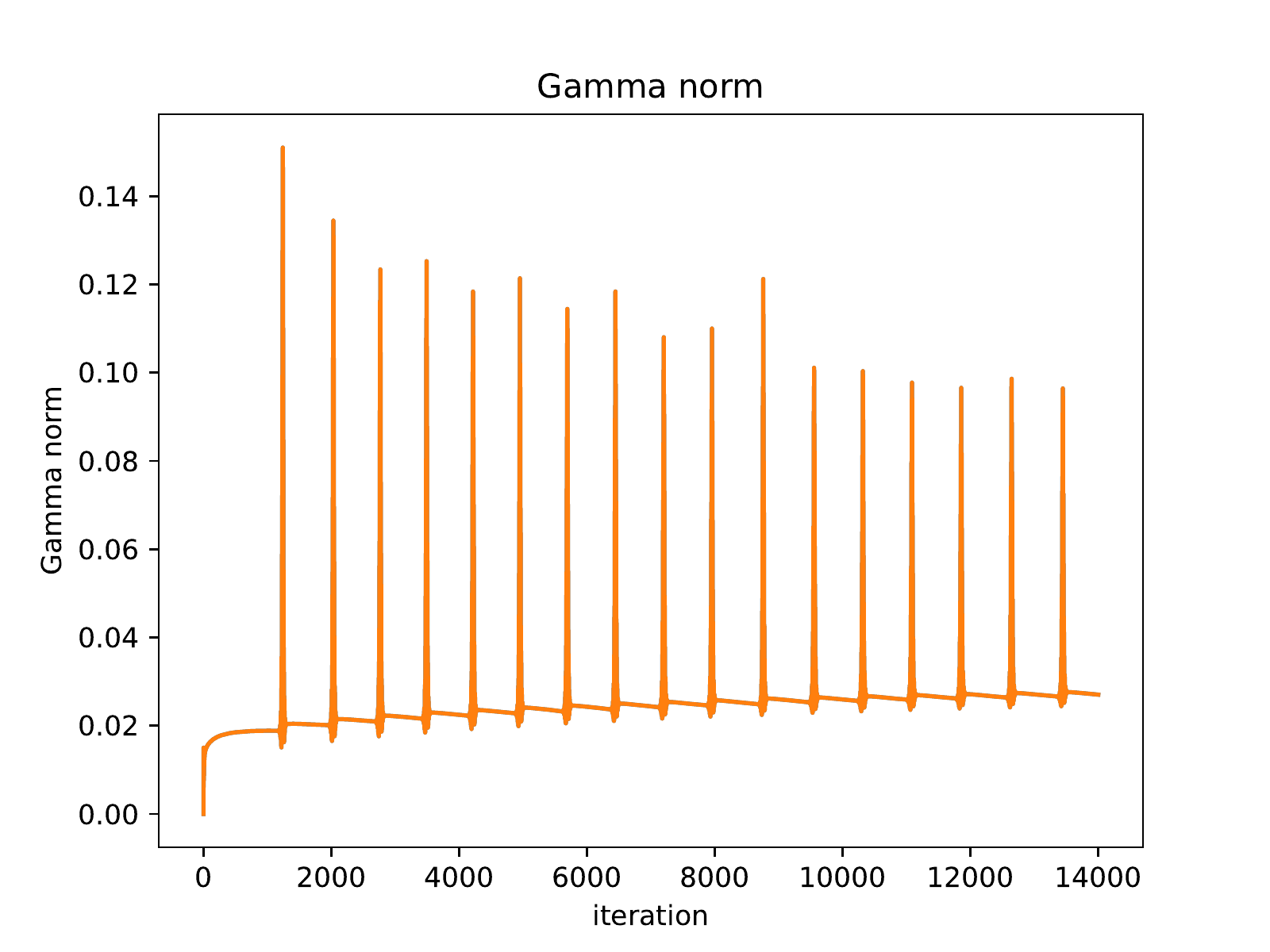}
    
    \includegraphics[width=0.245\textwidth]{./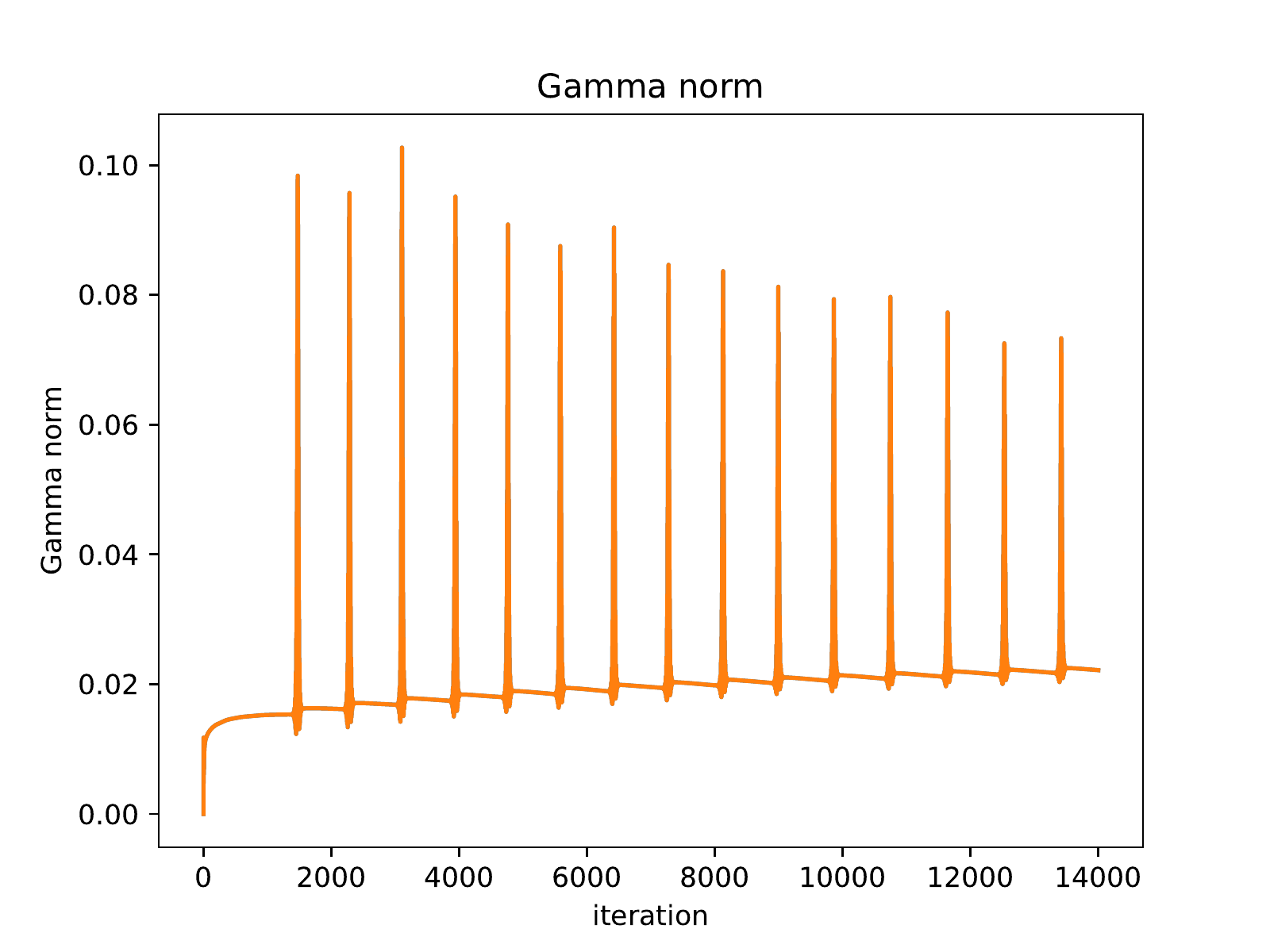}
    }
    \caption{In this figure, we show that even when GD enters EOS, the bound of $\|\bm\Gamma(t)\|$ is still $\Theta(1/m)$ along the whole trajectory. The corresponding constant $c_2\approx 24$.
    }
    \label{gamma assumption}
\end{figure}

Actually there are two interesting empirical facts related to $\|\bm\Gamma(t)\|$. One is the noticeable fact that $\|\bm\Gamma(t)\|$ spikes, the other is that the values of 
$\|\bm\Gamma(t)\|$ are overall quite small (despite the fact that 
$\bm W$
 changes non-trivially in our experiments) and decreases as 
$m$
becomes larger. Our assumption tries to model the second fact (see Figure \ref{gamma assumption}, 
$\|\bm\Gamma(t)\|$
 (despite the spikes) decreases as the width 
$m$
 grows, and the largest 
$\|\bm\Gamma(t)\|$
 is almost 
$24/m$ in all these experiments).
However, we admit that our results do not reflect the first fact (the spikes of 
$\|\bm\Gamma(t)\|$
), and it is an interesting fact that is worth investigating. We have strong intuition that $\|\bm \Gamma(t)\|$
 only grows by at most a constant factor, but currently do not have a formal proof yet. Nevertheless, the spiking behavior does not directly contradict our assumption.

\textbf{Remark: }Note that Assumption~\ref{Lambda assumption} is not equivalent to a small movement of $\bW(t)$. Actually, in the experiments above ($m=40, 80, 160, 200$) the movement of $\bW(t)$ is quite significant compared to the norm of $\bW(0)$ at initialization. See Figure~\ref{W(t) actually changes much}.

\begin{figure}[H]
    \centering
    \subfigure[$\bW(t)\&\|\Delta \bW(t)\|$, $m=40$]{
    \includegraphics[width=0.4\textwidth]{./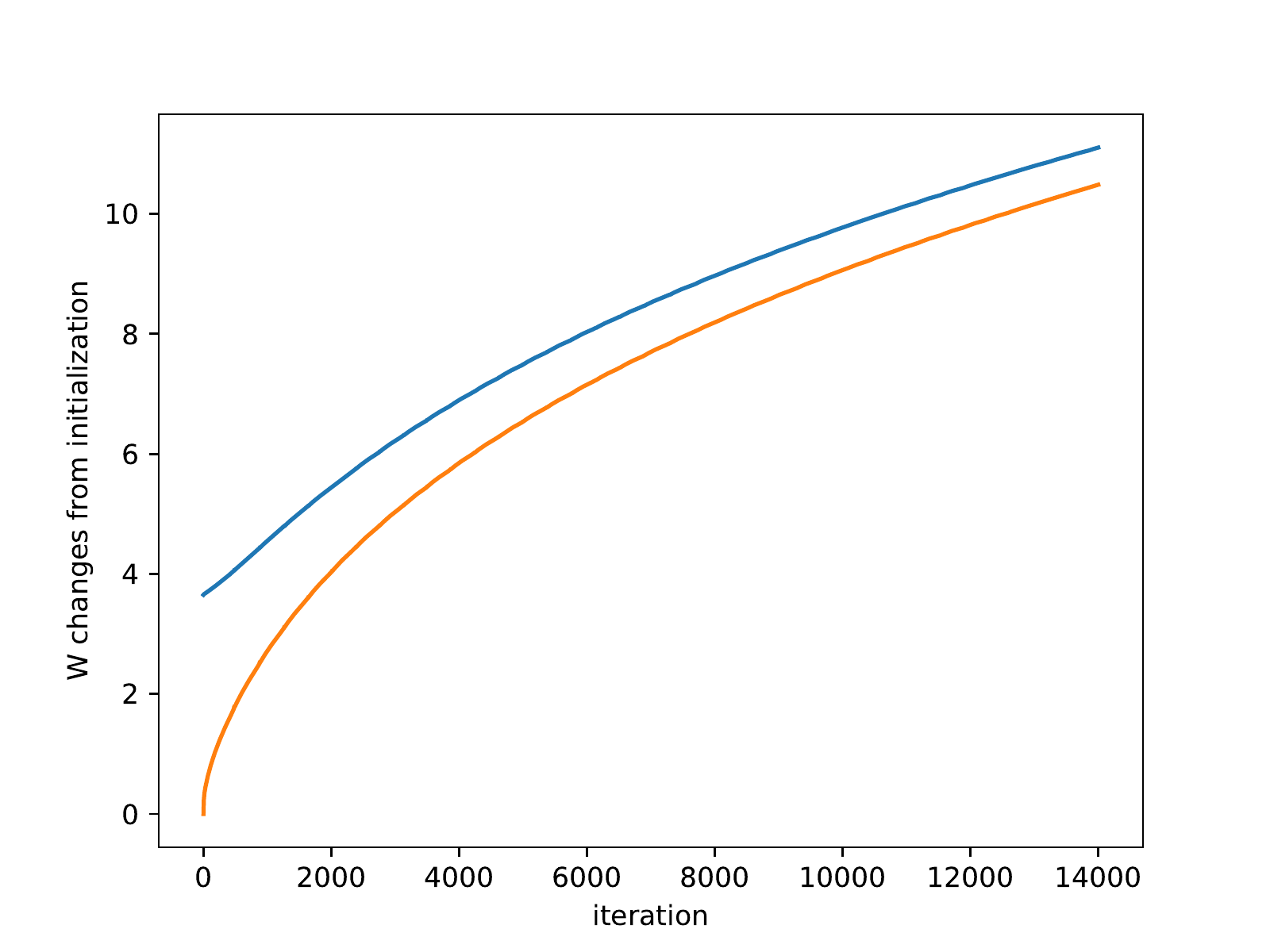}
    }
    \subfigure[$\bW(t)\&\|\Delta \bW(t)\|$, $m=80$]{
    \includegraphics[width=0.4\textwidth]{./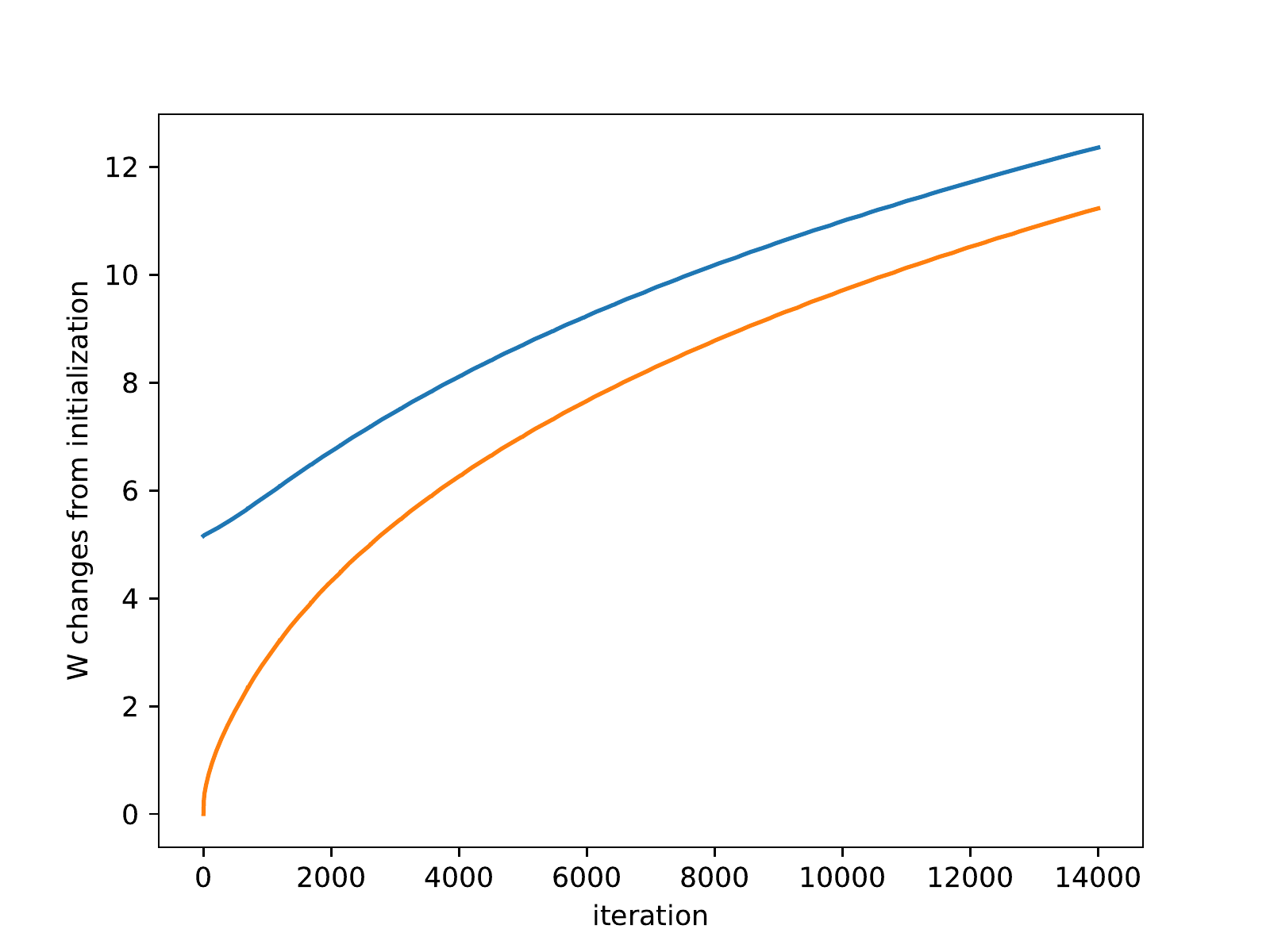}
    }
    \subfigure[$\bW(t)\&\|\Delta \bW(t)\|$, $m=160$]{
    \includegraphics[width=0.4\textwidth]{./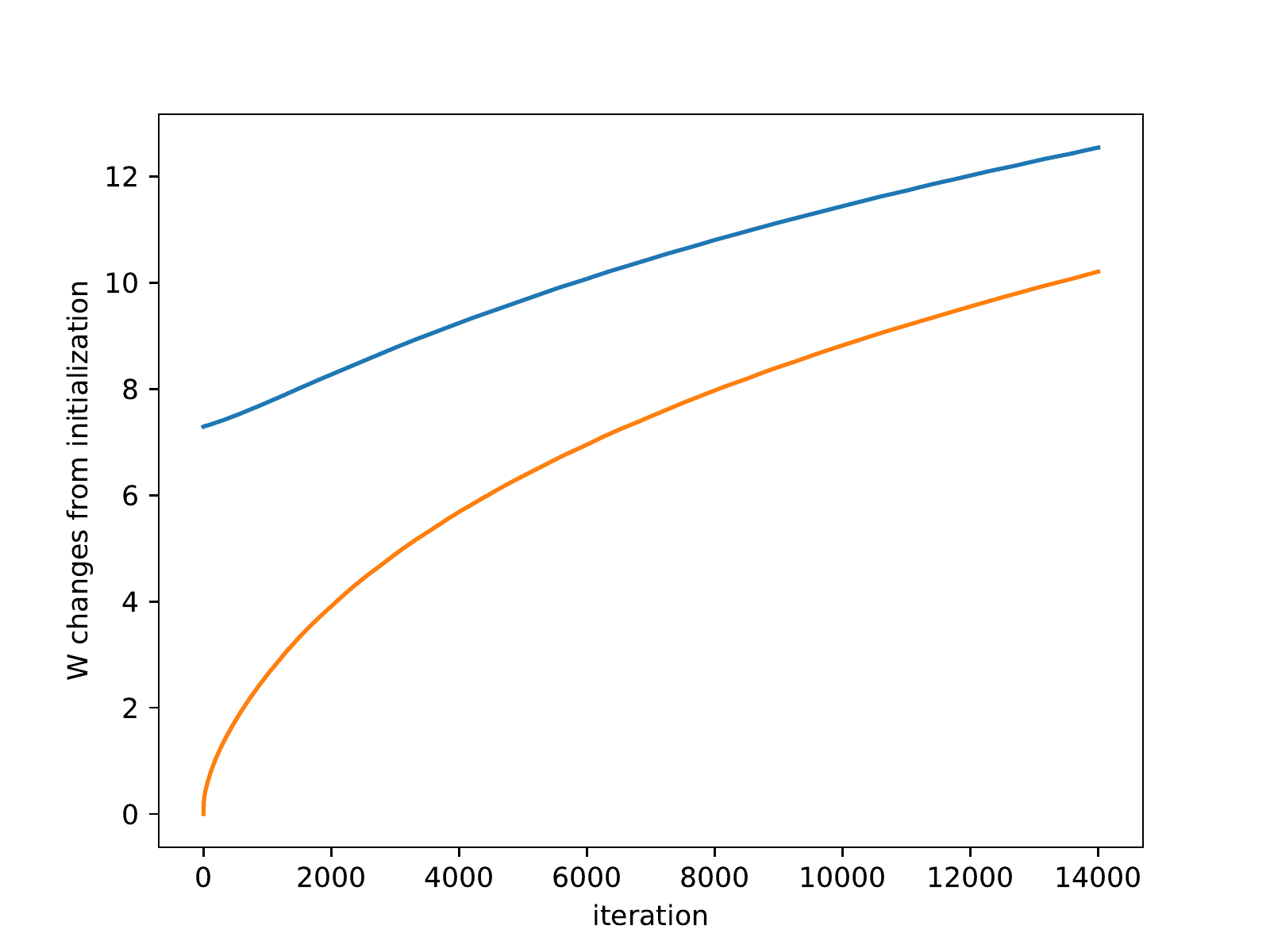}
    }
    \subfigure[$\bW(t)\&\|\Delta \bW(t)\|$, $m=200$]{
    \includegraphics[width=0.4\textwidth]{./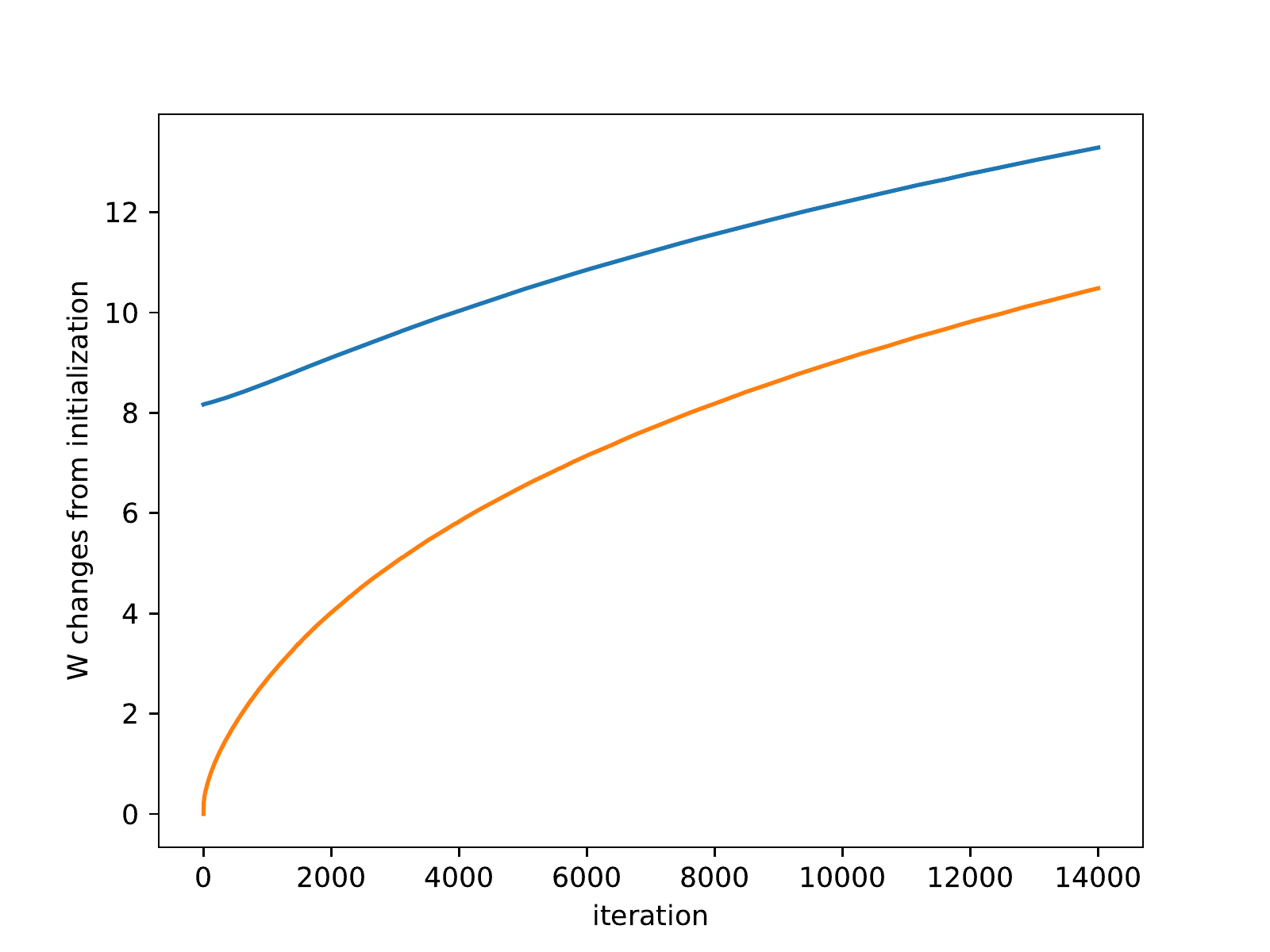}
    }
    \caption{In this figure, we show that even the $\bm \Gamma (t)$ is small and bounded, the movement $\|\Delta\bW(t)\|:=\|\bW(t)-\bW(0)\|$ (the dark orange curve) is close to $\|\bW(t)\|$ (the blue curve), implying that $\bW(t)$ moves considerably.
    }
    \label{W(t) actually changes much}
\end{figure}

\subsubsection{Comparison with the NTK regime: the non-quadratic property}\label{Appendix.D.2.3 Compare with NTK}
Here we illustrate why our setting and results are sufficiently different from 
the quadratic setting (e.g., linear regression) or the recent convergence analysis in
NTK setting. In particular, we show that even when the movement of $\bW(t)$ is comparably negligible compared to the initialization $\bW(0)$, EOS can still happen. Here we take a larger initialization of $\bW(0)$, which is ten times of the standard initialization in order to dwarf the movement of $\bW(t)$. The widths are $m=1000, 2000, 4000, 8000$. We can see that the initialization $\bW(0)$ is 
much larger than the change of $\bW(t)$ and the norm of $\bW(t)$ grows larger when $m$ becomes larger. See Figure~\ref{ntk compare}. Detailed comparison is in Appendix~\ref{our result and ntk}.
\begin{figure}[ht]
    \centering
    \subfigure[The sharpness, $m=1000 (\eta = 2/54), 2000 (\eta = 2/52), 4000 (\eta = 2/53), 8000 (\eta = 2/52)$]{
    \includegraphics[width=0.245\textwidth]{./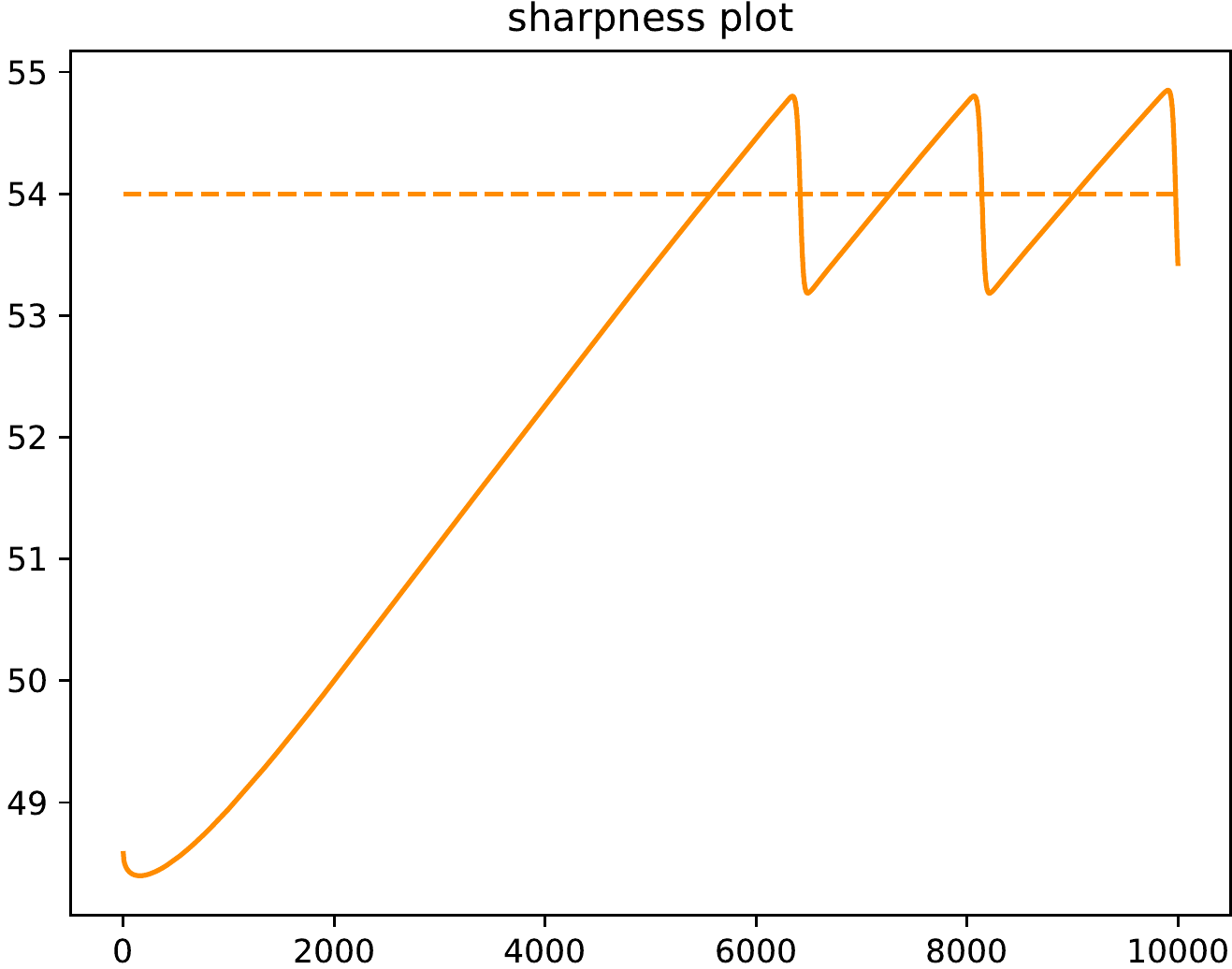}
    
    \includegraphics[width=0.245\textwidth]{./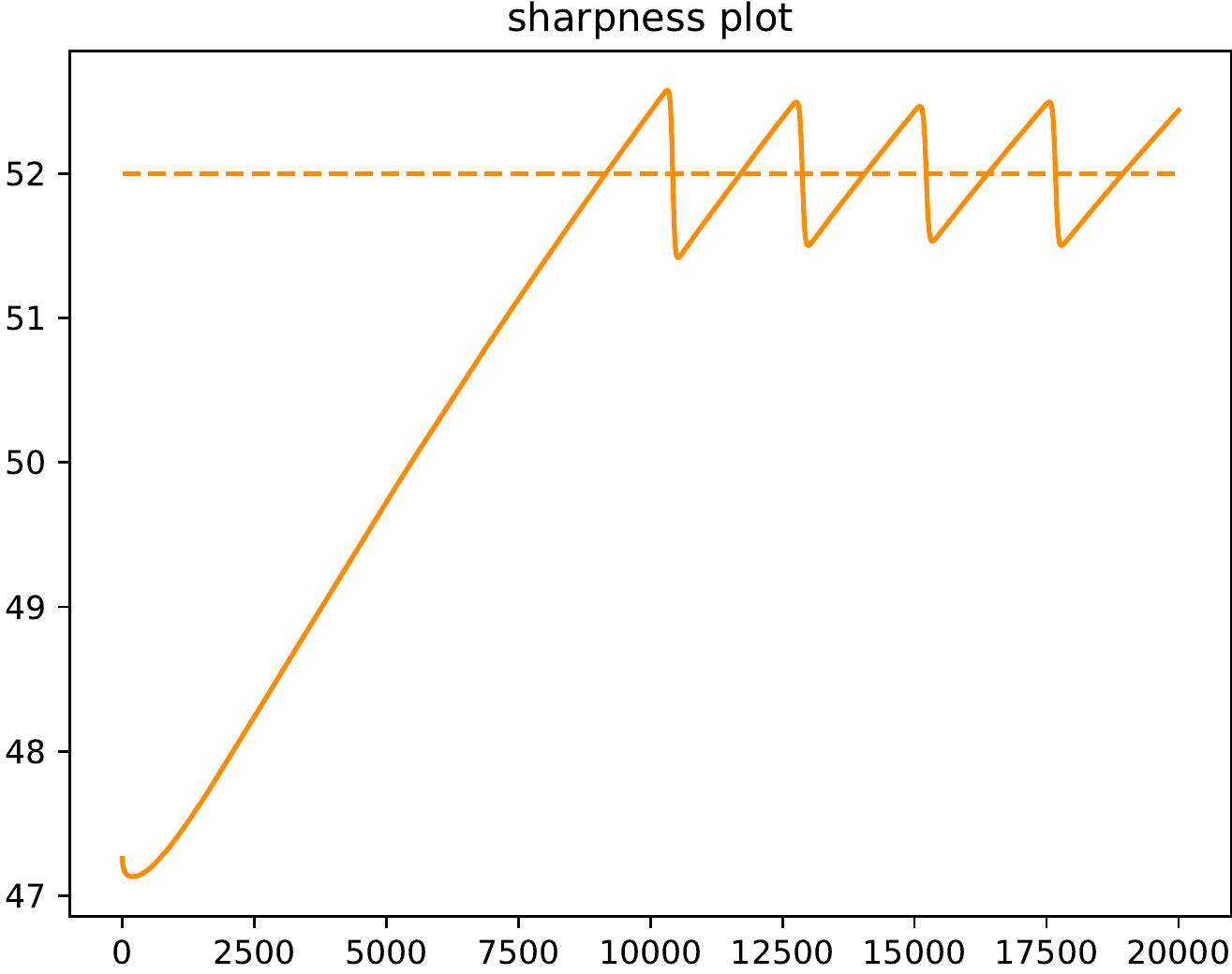}
    
    \includegraphics[width=0.245\textwidth]{./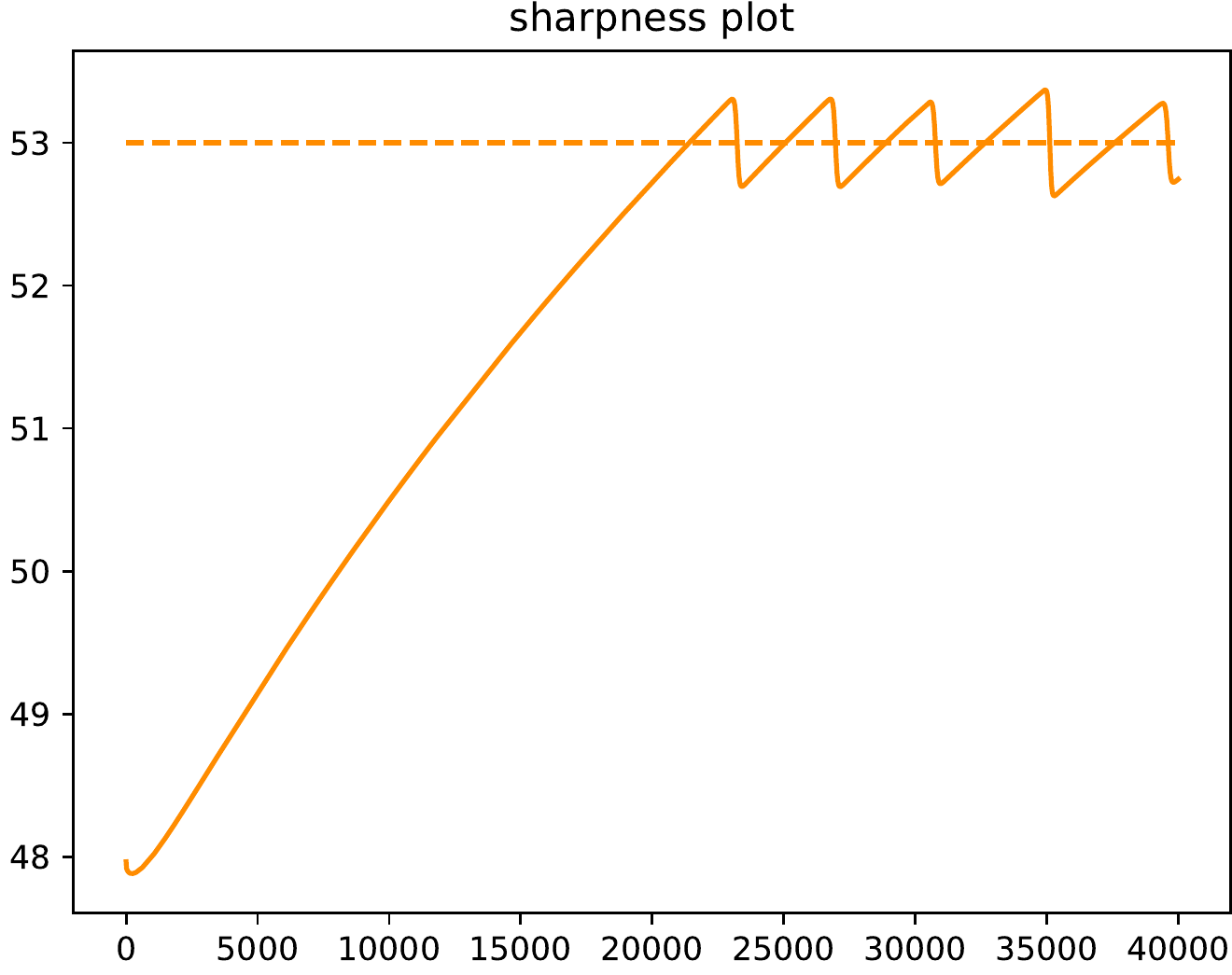}
    
    \includegraphics[width=0.245\textwidth]{./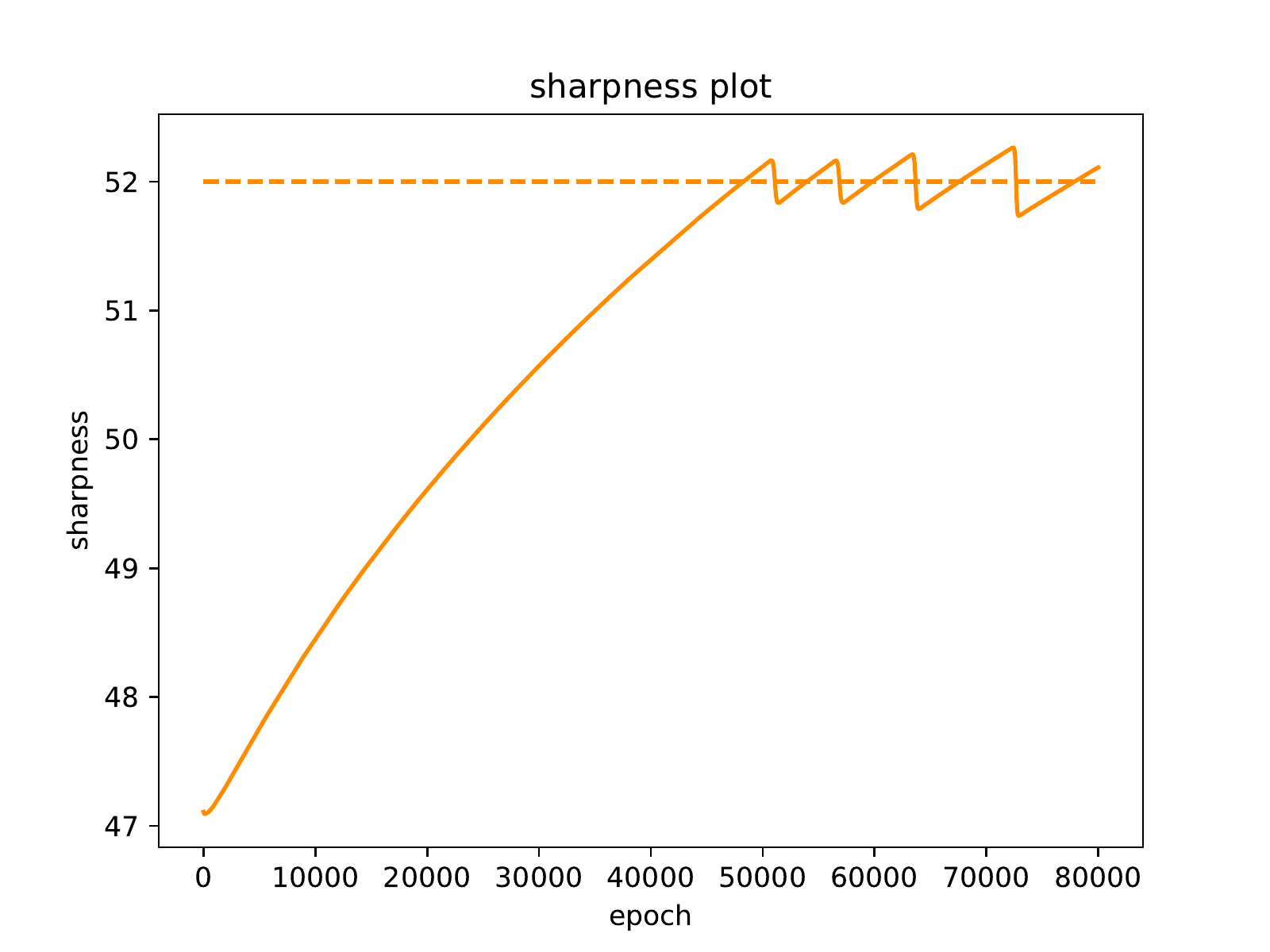}
    }
    \subfigure[The loss, $m=1000, 2000, 4000, 8000$]{
    \includegraphics[width=0.245\textwidth]{./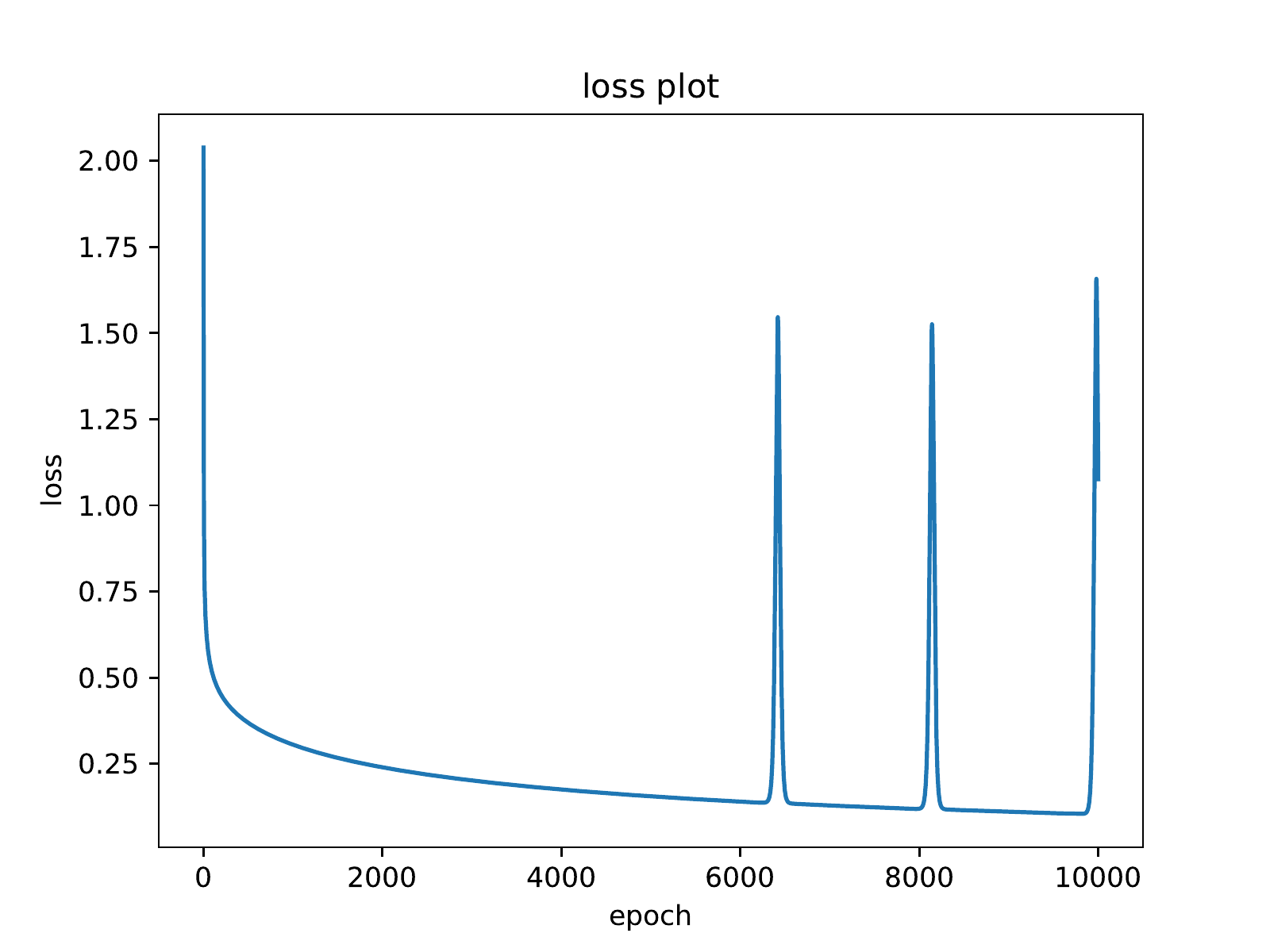}
    
    \includegraphics[width=0.245\textwidth]{./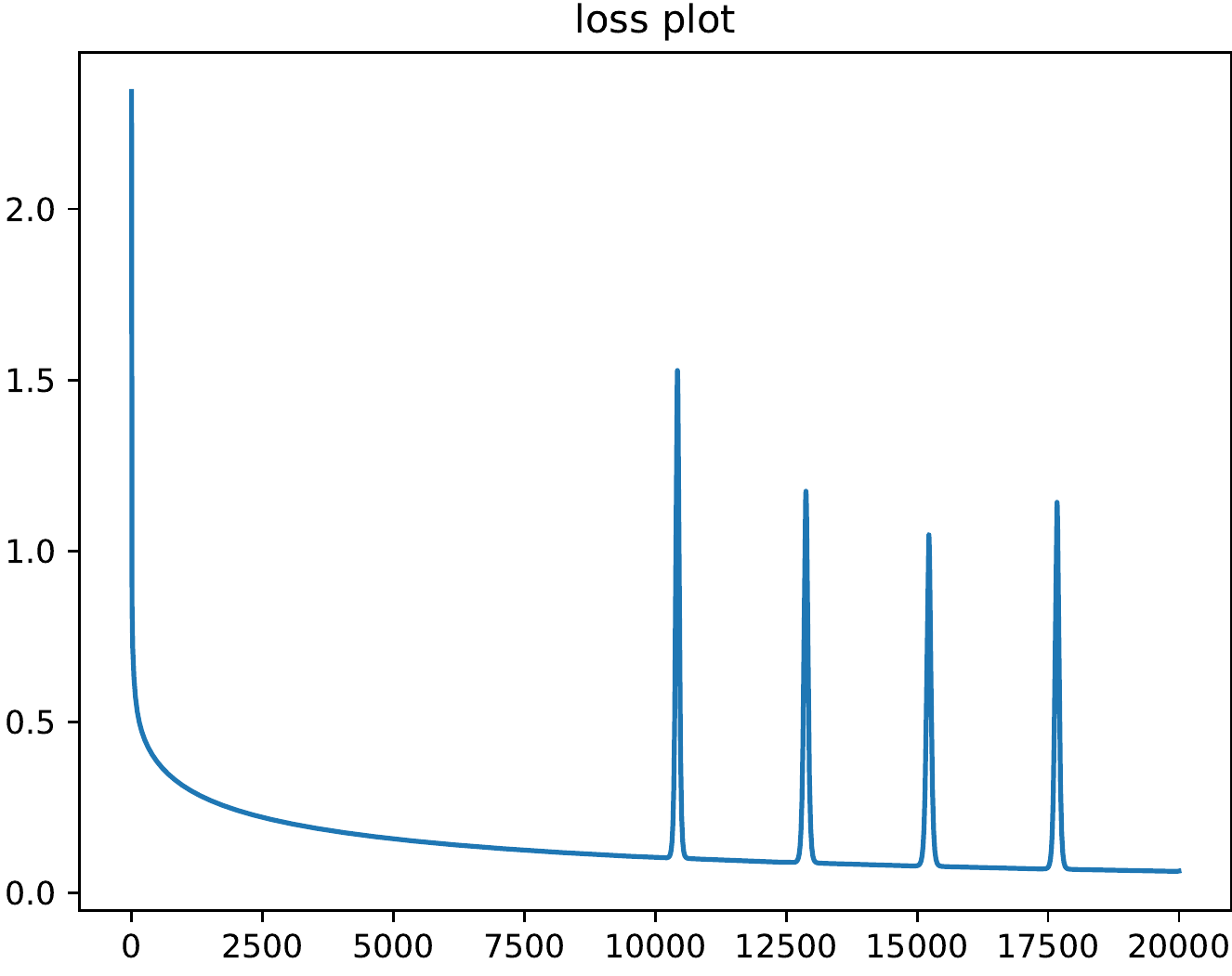}
    
    \includegraphics[width=0.245\textwidth]{./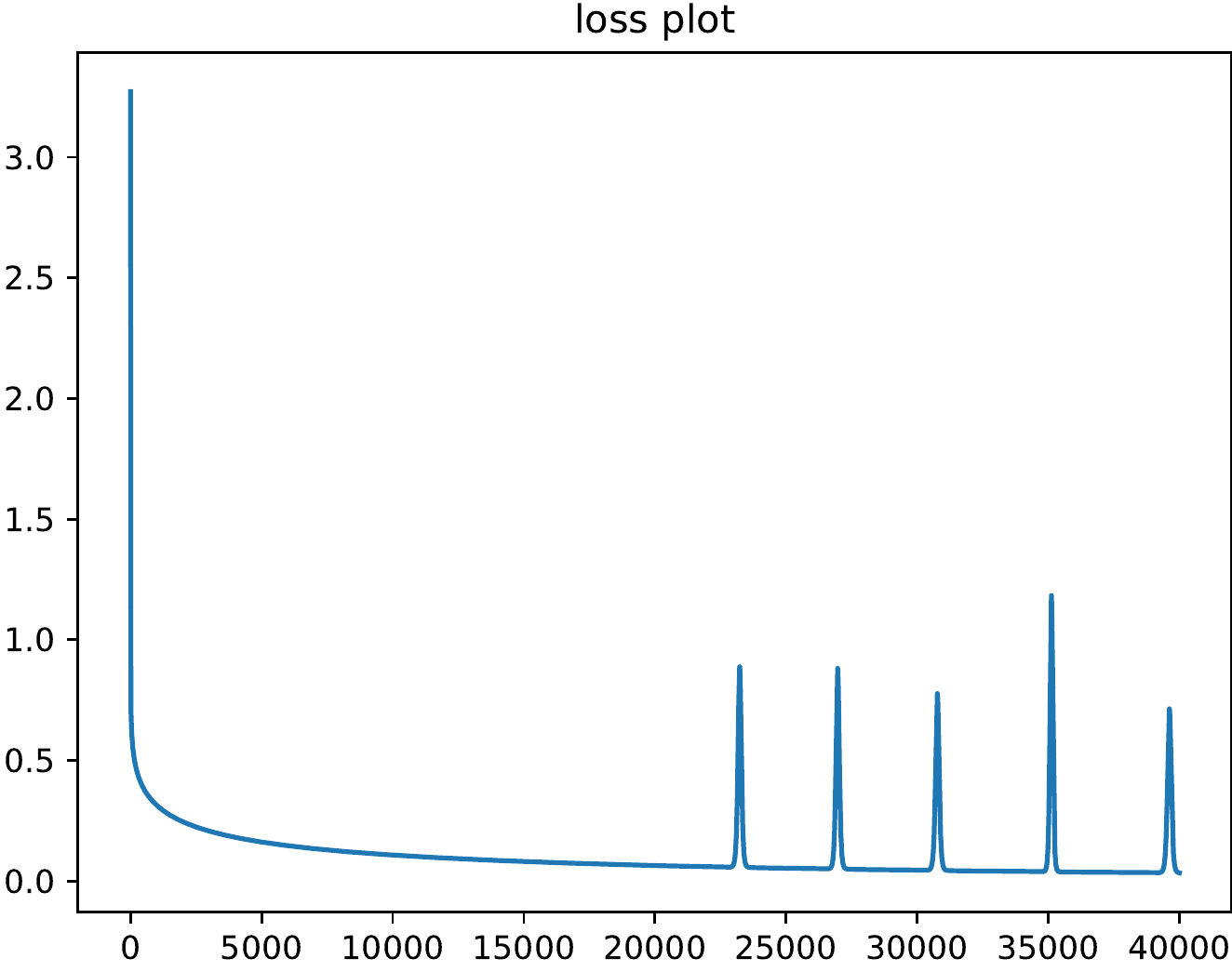}
    
    \includegraphics[width=0.245\textwidth]{./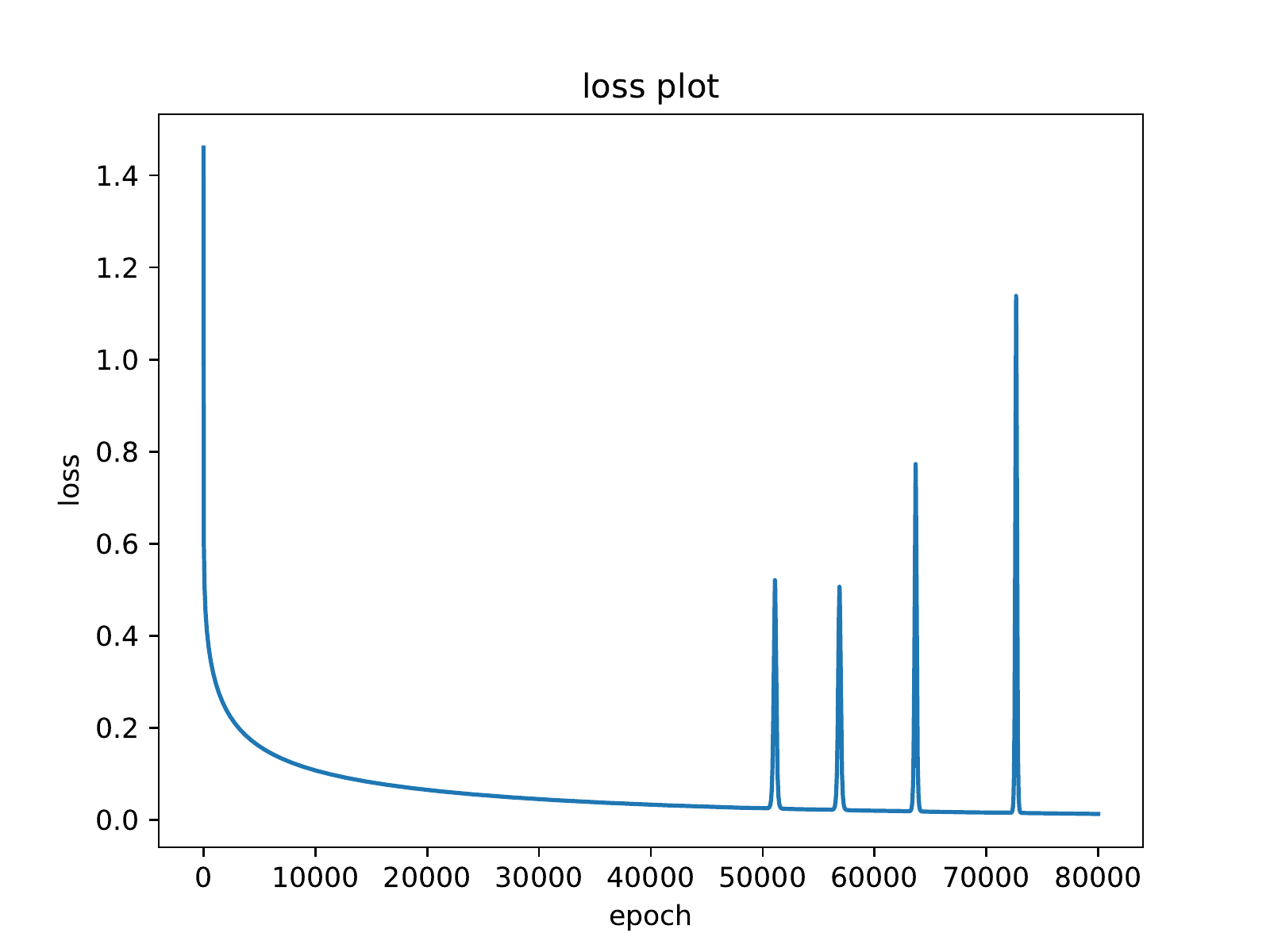}
    }
    \subfigure[$\bW(t)\&\|\Delta \bW(t)\|$, $m=1000, 2000, 4000, 8000$]{
    \includegraphics[width=0.245\textwidth]{./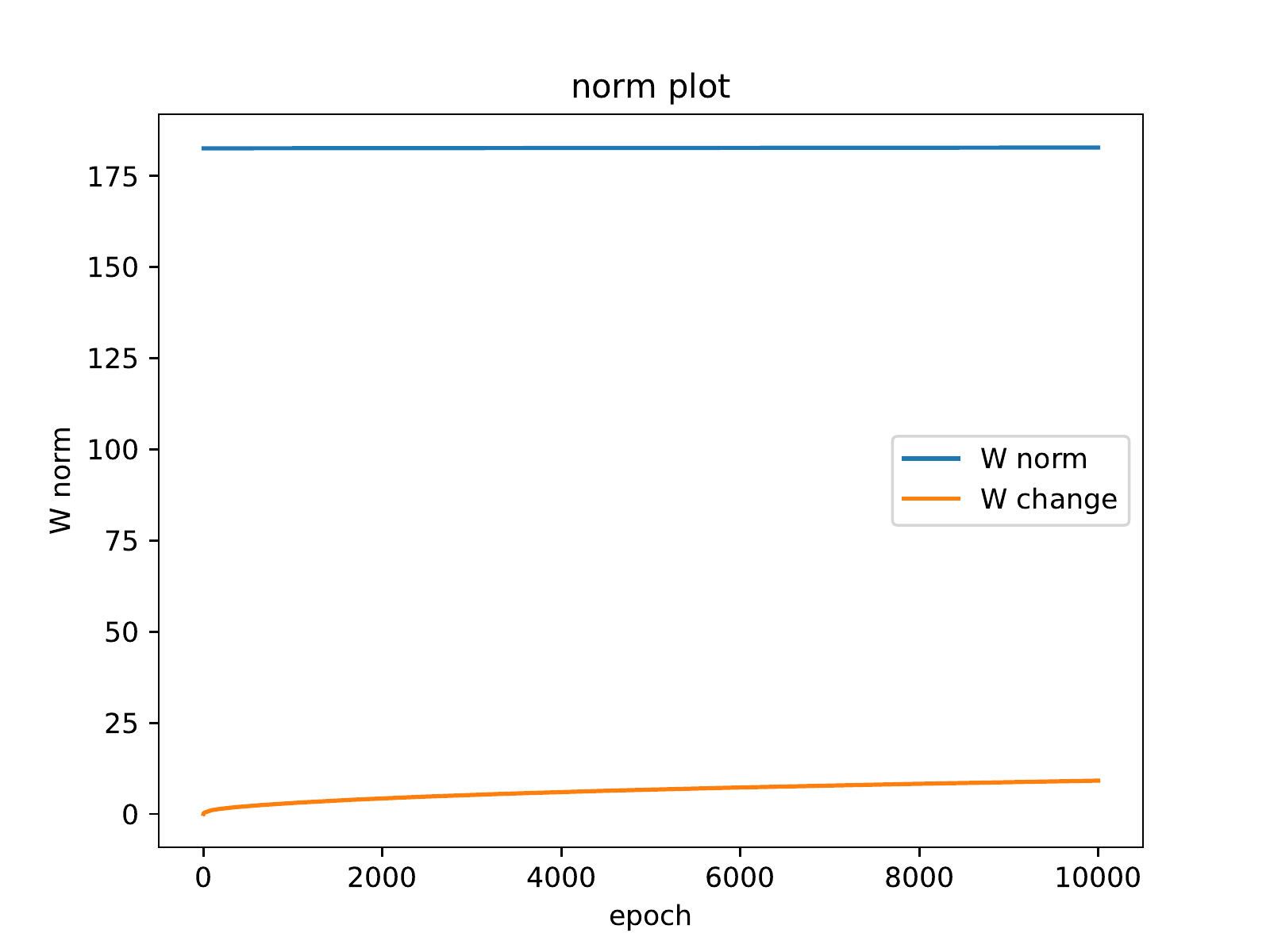}
   
    \includegraphics[width=0.245\textwidth]{./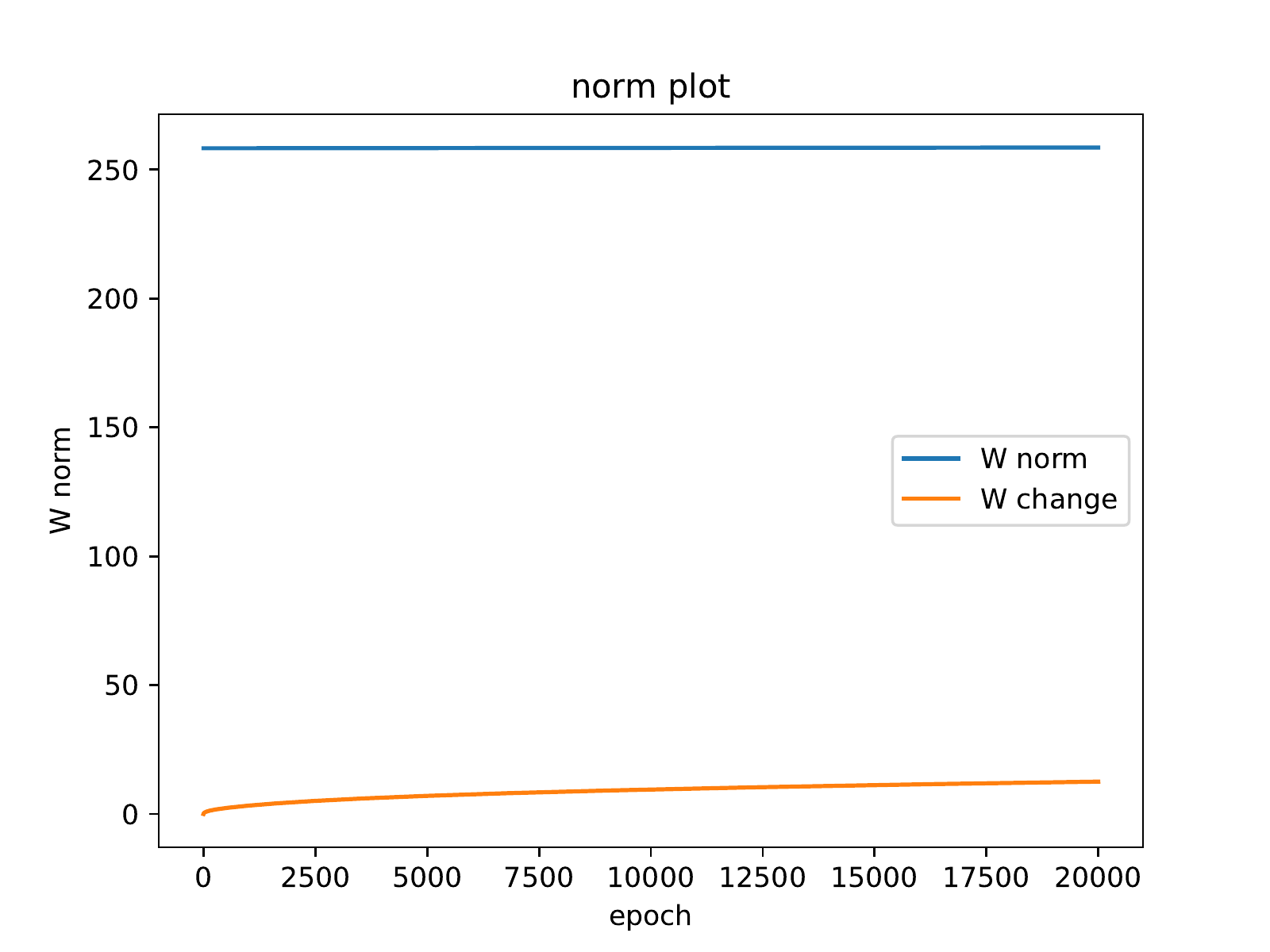}
    
    \includegraphics[width=0.245\textwidth]{./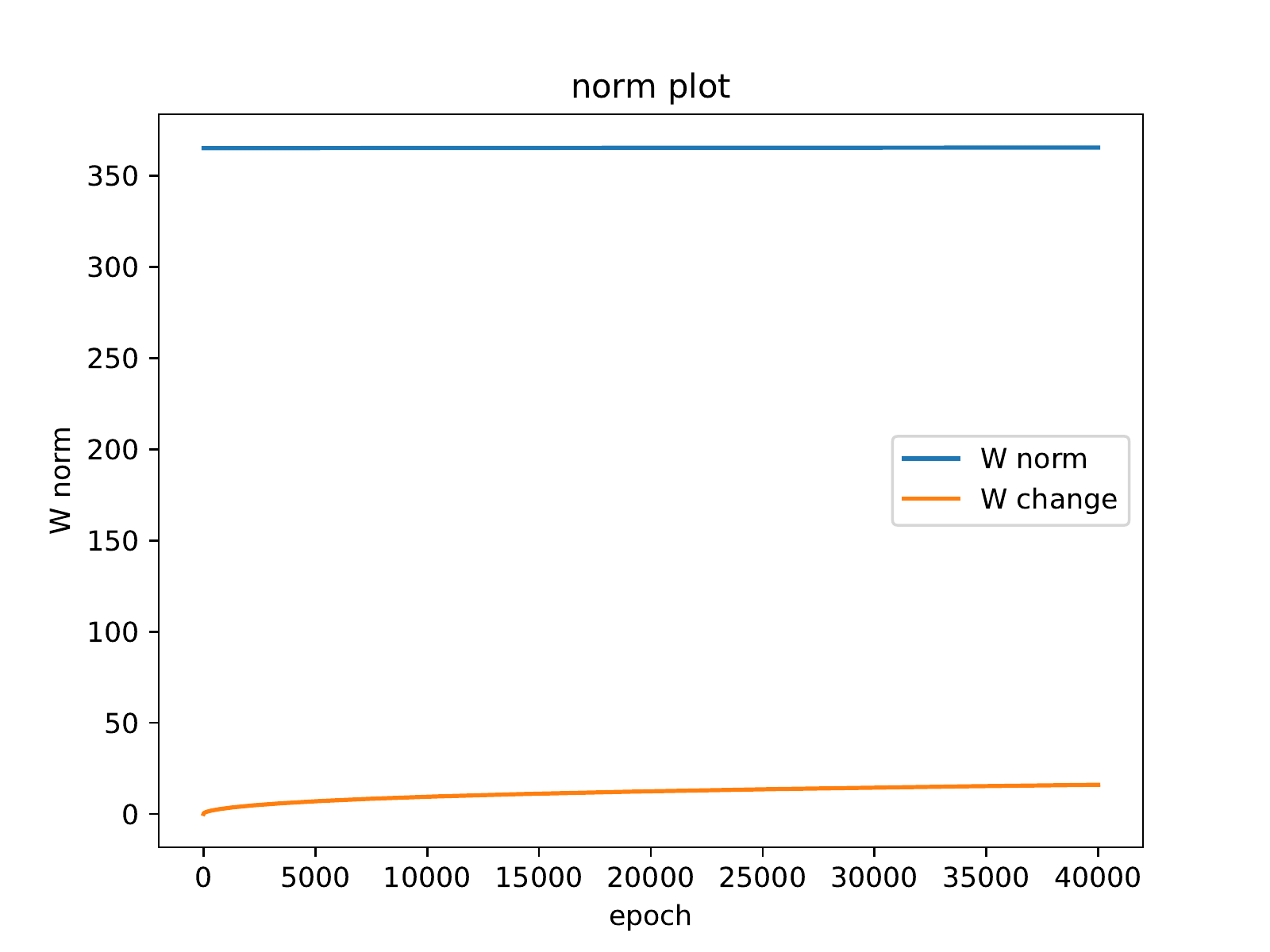}
   
    \includegraphics[width=0.245\textwidth]{./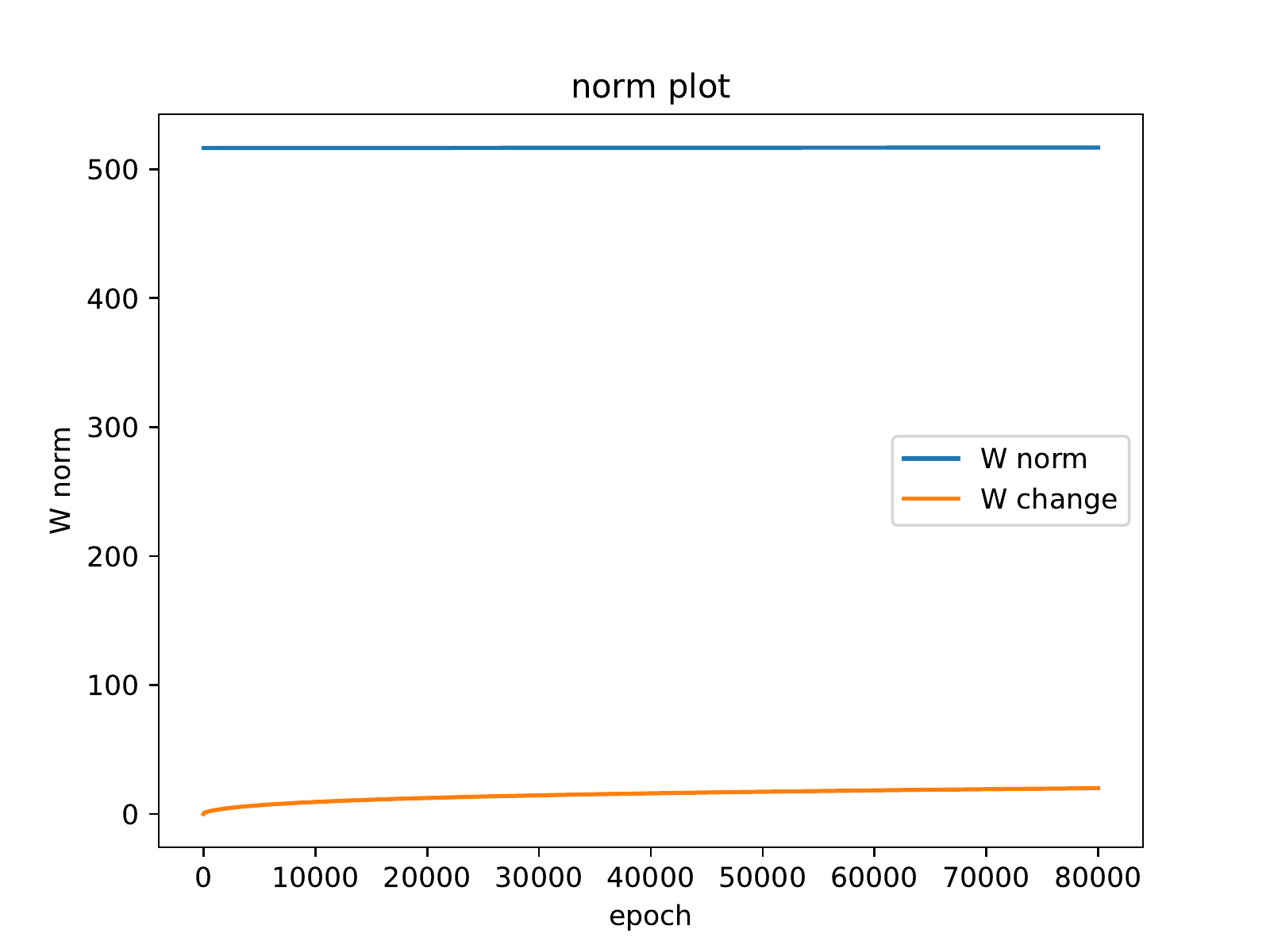}
    }
    \caption{In this figure, we show that even when $\bW(t)$ stays close to its initialization, EOS can still happen. In particular, Figure (c) shows the $\|\bW(t)\|$ (the blue curve) and $\|\bW(t)-\bW(0)\|$ (the orange curve).
    Here for $m=1000, 2000, 4000, 8000$, observe that the EOS still happens when $\|\bW(t)\|/\|\bW(t)-\bW(0)\|$ > 10. That indicates the intrinsic non-quadratic property of neural networks in the EOS regime.}
    \label{ntk compare}
\end{figure}

\subsection{Experiments in Section~\ref{5}}
\label{appendix section: discussion experiment}
In this subsection we show some empirical results that may reveal some interesting relation between the inner layers and the sharpness, which is not yet reflected in our theory.
%However, we do find some interesting facts. 
Our experimental results show that all layers seem to work together to influence the sharpness, and contribute to the progressive sharpening and edge of stability phenomena. 
%In other words, their influence on sharpness seems to be cooperative but not competitive. 
In our experiment, while the sharpness is still calculated by the gradient of all parameters, we freeze some of the layers in the training process. 
The experimental results show that the more layers we freeze, the slower progressive sharpening happens and the weaker the oscillation of the sharpness is (See Figure~\ref{freeze tanh Pics}, Figure~\ref{Freeze linear pics}). 
This indicates that layers other than the output layer has nontrivial influence on the sharpness,
but since different layers seem to work in the same direction, further justifying our assumption that $\|\bA\|$ is positively related with the sharpness.

\textbf{Fully-connected tanh Network:}
\begin{figure}[H]
    \centering
    \includegraphics[width=0.5\textwidth]{./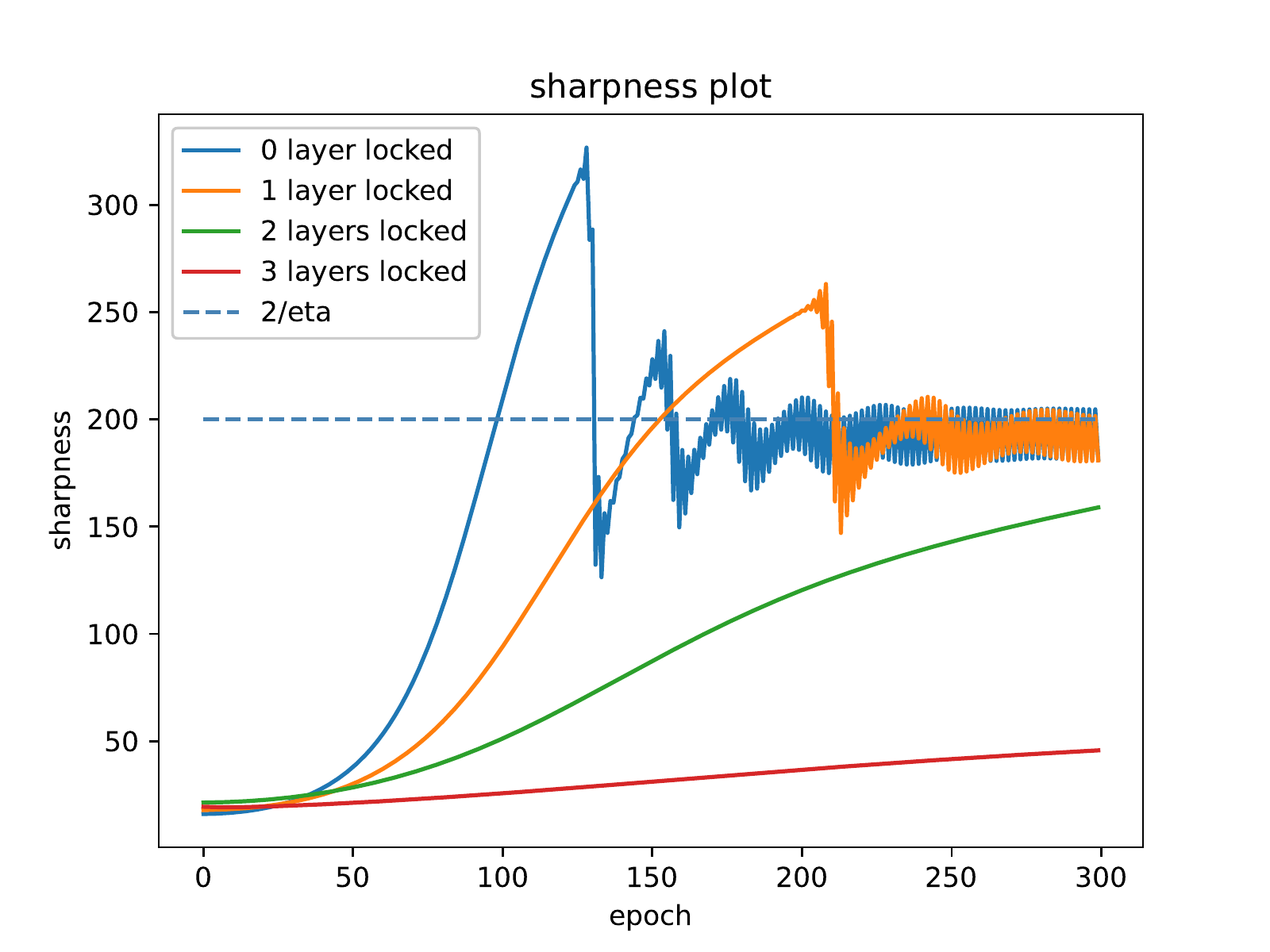}
    \caption{We conduct the experiment on a 5-layer tanh fully-connected network. 
    Four lines in the plot show the sharpness for four independent training processes but with 0,1,2,3 outer layers locked. All other hyper-parameters are the same. The result shows that all layers have cooperative effect on the sharpness.}
    \label{freeze tanh Pics}
\end{figure}

\textbf{Fully-connected Linear Network:}
\begin{figure}[H]
    \vspace{-0.5cm}
    \centering
    \subfigure[Fully-connected linear network 0 layer frozen]{
    \includegraphics[width=0.48\textwidth]{./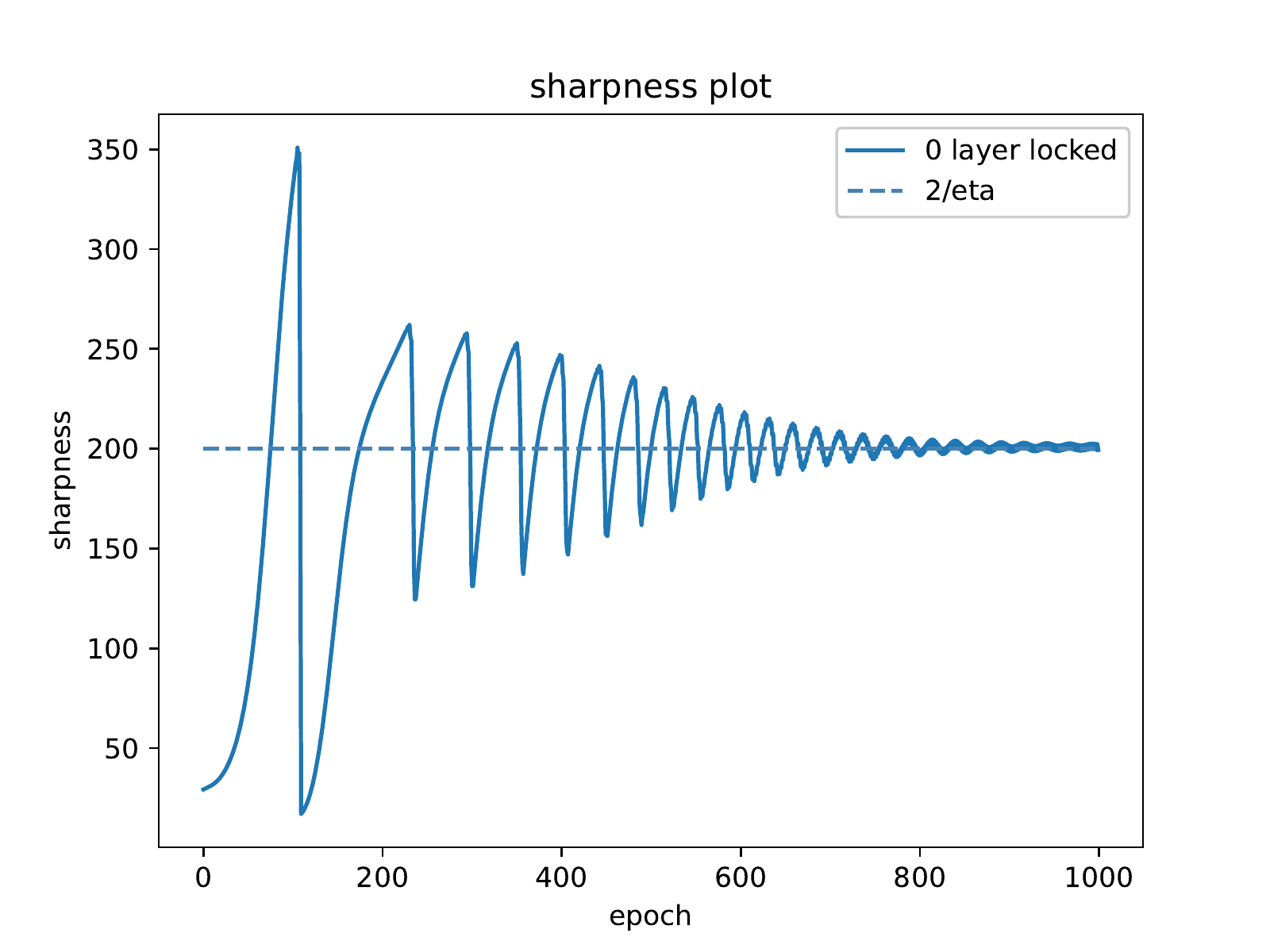}
    }
    \subfigure[Fully-connected linear network 1 layer frozen]{
    \includegraphics[width=0.48\textwidth]{./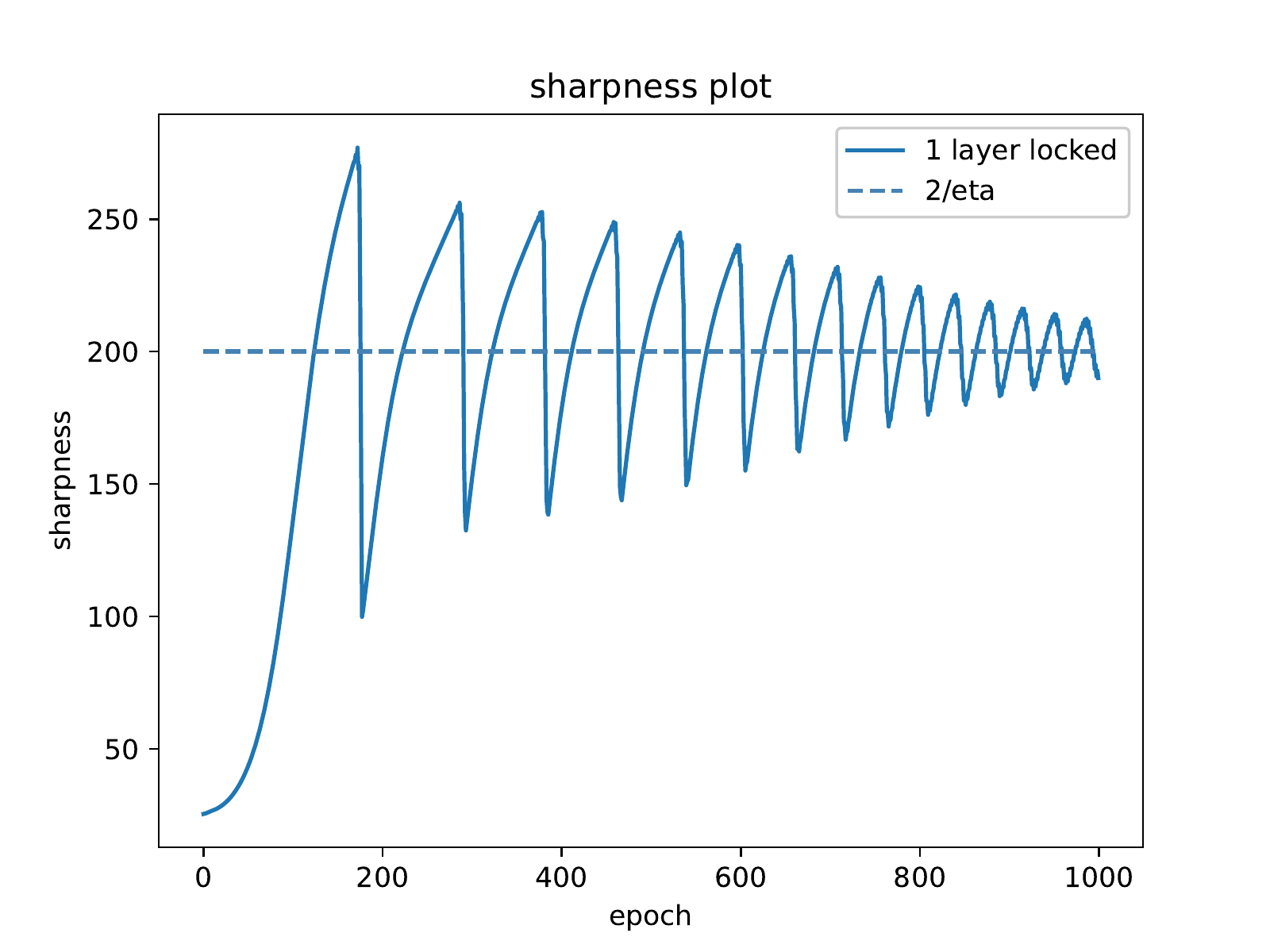}
    }
    \subfigure[Fully-connected linear network 2 layers frozen]{
    \includegraphics[width=0.48\textwidth]{./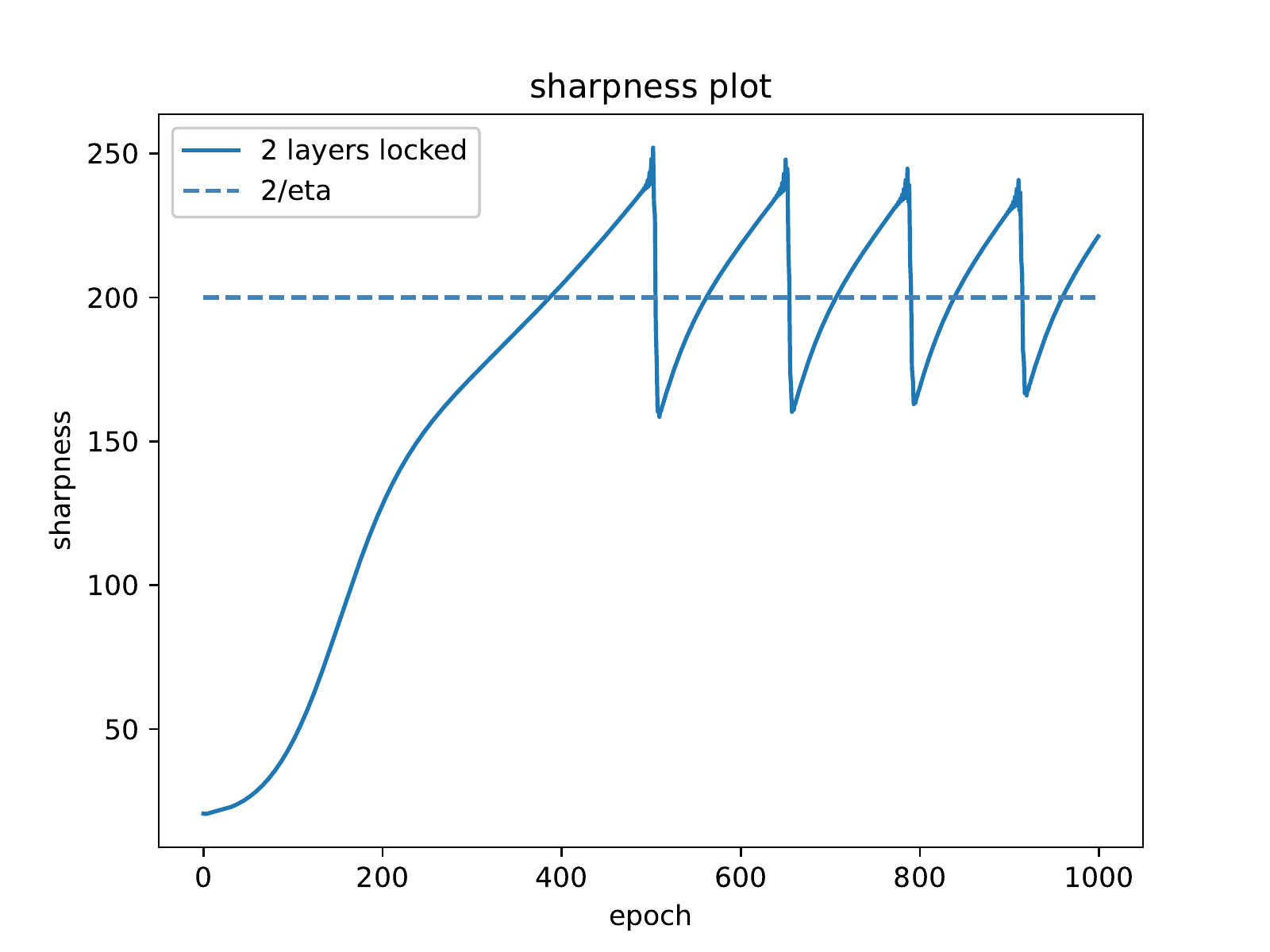}
    }
    \subfigure[Fully-connected linear network 3 layers frozen]{
    \includegraphics[width=0.48\textwidth]{./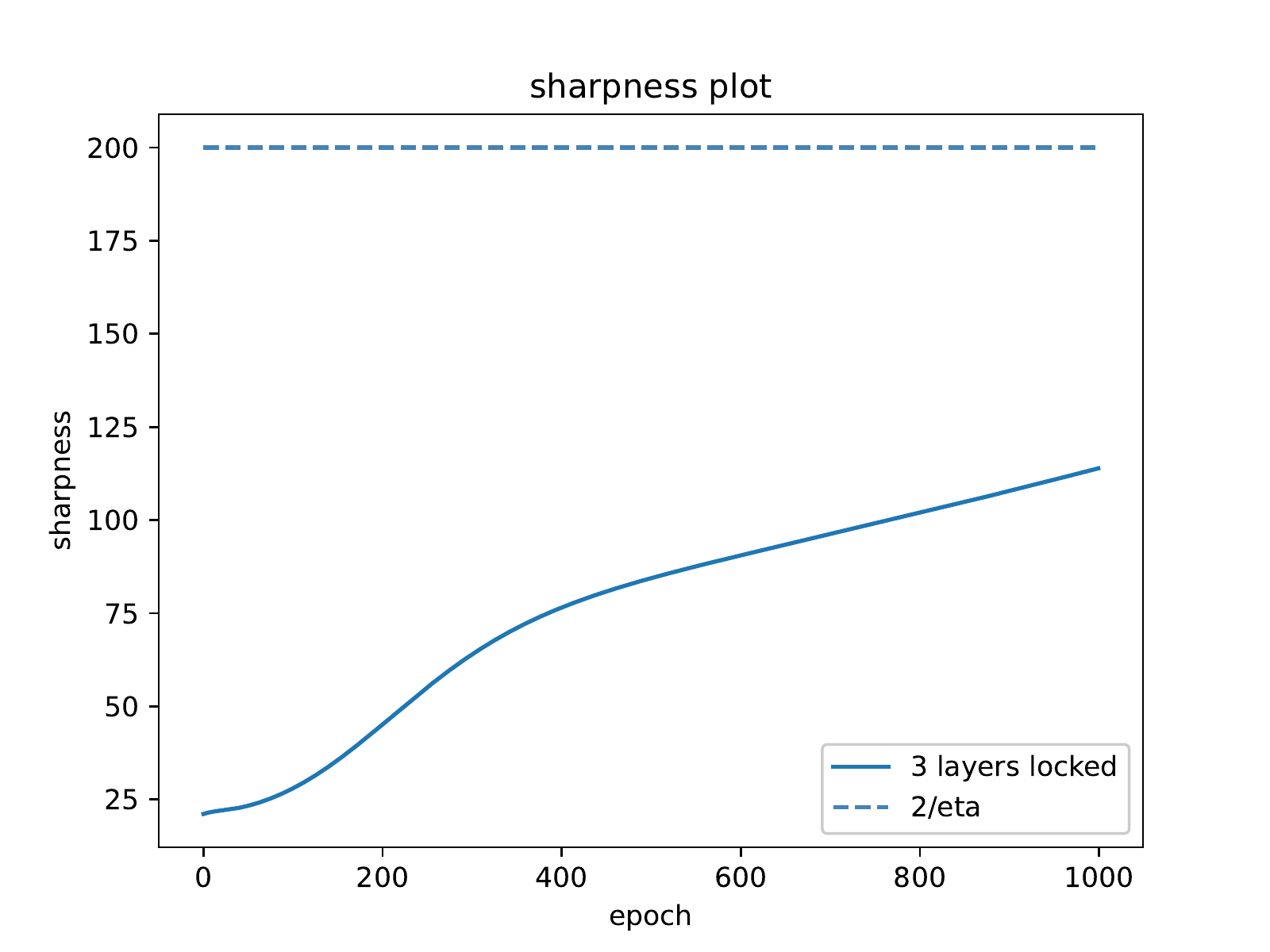}
    }
    \caption{The sharpness after 0,1,2,3 outer layers are locked. Details refer to Figure~\ref{freeze tanh Pics}.}
    \label{Freeze linear pics}
\end{figure}

\section{Missing proofs}\label{appendixE}

\subsection{Proofs in Section~\ref{Section 3}}
\label{appendix: proofs in sec3}
%Most proofs in Section~\ref{Section 3} are omitted in the 9-page submission due to page limit.
In this section, we provide the missing proofs in Section~\ref{Section 3}.
%Omitted proofs are provided below.
%in the main body of this full paper. For better context understanding, we recommend readers to read Section~\ref{Section 3} directly.

%There is only one lemma's proof is omitted in our main body:
\begin{lemma}(Lemma~\ref{ps lemma})
For all $t$ in Phase I, 
under Assumption~\ref{Ass: eigspace assumptions} (i) and \ref{gf assumption},
it holds that $\bD(t)^\top\bF(t) < 0$.\label{lemma: appendix_ps lemma}
\end{lemma}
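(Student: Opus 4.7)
The plan is to exploit the fact that with fixed eigendirections (Assumption~\ref{Ass: eigspace assumptions}(i)) and continuous gradient flow (Assumption~\ref{gf assumption}), the dynamics of $\bD(t)$ decouples into independent one-dimensional ODEs along each eigenvector $\bv_i$. This will let us write $\bD(t)^\top\bF(t)$ as an explicit sum with a definite sign.

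I would start by decomposing $\bD(t) = \sum_i d_i(t)\bv_i$ and $\bY = \sum_i y_i \bv_i$ using the fixed orthonormal basis $\{\bv_i\}$. Since $\bM(t)\bv_i = \lambda_i(t)\bv_i$, the gradient flow equation $\mathrm d\bD/\mathrm dt = -\bM(t)\bD(t)$ separates into $\dot d_i(t) = -\lambda_i(t)\, d_i(t)$. Using the initialization assumption $\bF(0)=0$, so $\bD(0) = -\bY$ and $d_i(0) = -y_i$, the ODE integrates to
\begin{equation*}
d_i(t) = -y_i\,\alpha_i(t),\qquad \alpha_i(t) := \exp\!\left(-\int_0^t \lambda_i(s)\,\mathrm ds\right) \in (0,1],
\end{equation*}
where $\alpha_i(t)\leq 1$ because $\lambda_i(s)\geq 0$ (the Gram matrix $\bM(s)$ is PSD).

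Next I would express $\bF(t) = \bD(t) + \bY$ componentwise: its $\bv_i$-coefficient is $d_i(t) + y_i = y_i(1-\alpha_i(t))$. Plugging in and using orthonormality of $\{\bv_i\}$,
\begin{equation*}
\bD(t)^\top \bF(t) \;=\; \sum_i \bigl(-y_i\alpha_i(t)\bigr)\bigl(y_i(1-\alpha_i(t))\bigr) \;=\; -\sum_i y_i^2\,\alpha_i(t)\bigl(1-\alpha_i(t)\bigr).
\end{equation*}
Each term on the right is nonnegative since $\alpha_i(t)\in(0,1]$, so $\bD(t)^\top\bF(t)\leq 0$. For strict negativity at any $t>0$ in Phase I, I would note that $\bY$ has at least one nonzero component $y_i$ along some $\bv_i$ with $\lambda_i(s) > 0$ on a set of positive measure (otherwise training cannot make progress and we never leave the trivial point), which forces $\alpha_i(t)\in(0,1)$ and hence a strictly positive contribution.

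There is no real technical obstacle; the only subtlety is the corner case $t=0$ (where $\bD^\top\bF = 0$ trivially) and the case where $\bY$ happens to lie entirely in the kernel of $\bM(t)$, which is excluded by the assumption that progressive sharpening is actually happening. The argument is essentially a clean closed-form computation enabled entirely by the fixed-eigendirections hypothesis.
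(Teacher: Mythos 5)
Your proposal is correct and follows essentially the same route as the paper's proof: decompose $\bD(t)$ in the fixed eigenbasis, integrate the decoupled scalar ODEs to get $\bD(t)^\top\bv_i = -e^{-\int_0^t\lambda_i}\,\bY^\top\bv_i$, and conclude $\bD(t)^\top\bF(t) = -\sum_i(\bY^\top\bv_i)^2 e^{-\int_0^t\lambda_i}(1-e^{-\int_0^t\lambda_i}) \le 0$. Your extra attention to the $t=0$ and kernel-direction corner cases (where the expression is only zero, not strictly negative) is in fact slightly more careful than the paper's statement of strictness, but it is a refinement of the same argument rather than a different approach.
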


\begin{proof}
Consider the inner product $\bD(t)^\top\bm v_i$. 
Recall $\bm v_i$ is the $i$-th eigenvector of $\bM$.
By Assumption~\ref{Ass: eigspace assumptions}(i), $\bm v_i$ does not change with $t$.
By Assumption~\ref{gf assumption}, we have
$$
\frac{\mathrm d\bD(t)^\top\bm v_i}{\mathrm d t} = - \bD(t)^\top \bM(t)\bm v_i = - \lambda_i(t) \bD(t)^\top\bm v_i
$$
where $\lambda_i(t)\geq 0$ is the corresponding eigenvalue of $\bm v_i$.

Since $\bF(0) =0$, $\bD(0)=\bF(0)-\bY = -\bY.$ Then the differential equation has the solution
$$\bD(t)^\top\bv_i = e^{- \int_0^{t}\lambda_i(t)\mathrm dt}\cdot \bD(0)^\top\bv_i = -e^{- \int_0^{t}\lambda_i(t)\mathrm dt}\cdot \bY^\top\bv_i$$

Plug it into the expression $\bD(t)^\top\bF(t) = \bD(t)^\top(\bD(t)+\bY)$ and we have:
$$
\bD(t)^\top\bF(t) = \sum_{i=1}^n \bD(t)^\top\bv_i(\bD(t)^\top\bv_i + \bY^\top \bv_i) 
= -\sum_{i=1}^n (\bY^\top\bv_i)^2e^{- \int_0^{t}\lambda_i(t)\mathrm dt}(1-e^{- \int_0^{t}\lambda_i(t)\mathrm dt})
$$
since $(\bY^\top\bv_i)^2e^{- \int_0^{t}\lambda_i(t)\mathrm dt}(1-e^{- \int_0^{t}\lambda_i(t)\mathrm dt})>0$ for all $i\in [n]$, $\bD(t)^\top\bF(t)<0$.
\end{proof}

\begin{lemma} (Lemma~\ref{divergence proposition})
\label{lemma: appendix_Dincrease_exponentially}
Suppose Assumption~\ref{Ass: eigspace assumptions} (iii) and \ref{noise assumption}
hold during this phase (with constants $\epsilon_2>0$ and $c>1$).
If $\sha(t) = (2 + \tau)/\eta$ and $\tau > \frac{1}{1-\epsilon_2 - 1/c}-1$, then $\bm{D}(t)^\top\bm{v}_1(t)$ increases geometrically with factor $(1+\tau)(1-\epsilon_2 - 1/c) > 1$
for $t\geq t_0$ in this phase.
\label{lemma: appendix_divergence proposition}
\end{lemma}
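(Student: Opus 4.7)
The plan is to directly exploit the update rule from Assumption~\ref{Ass: Order one app} together with the slow motion of $\bv_1$ in Assumption~\ref{Ass: eigspace assumptions}(iii). Write $\alpha(t):=\bD(t)^\top\bv_1(t)$. From $\bD(t{+}1)=(\bm I-\eta\bM(t))\bD(t)$ I get
$$\alpha(t+1)=\bD(t)^\top(\bm I-\eta\bM(t))\bv_1(t+1).$$
Decompose $\bv_1(t+1)=\beta\,\bv_1(t)+\bm w$ with $\bm w\perp \bv_1(t)$ and $\beta=\langle\bv_1(t),\bv_1(t+1)\rangle\geq 1-\epsilon_2$, so $\|\bm w\|\leq \sqrt{2\epsilon_2-\epsilon_2^2}$, which I will simply bound by (a constant multiple of) $\epsilon_2$ to match the statement. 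Since $\bM(t)\bv_1(t)=\sha(t)\bv_1(t)$ and $\eta\sha(t)=2+\tau$, the projection onto $\bv_1(t)$ contributes
$$\beta(1-\eta\sha(t))\,\alpha(t)=-(1+\tau)\beta\,\alpha(t),$$
while the orthogonal contribution is bounded by $\|(\bm I-\eta\bM(t))\|\cdot\|\bm w\|\cdot\|\bD(t)\|\leq(1+\tau)\epsilon_2\|\bD(t)\|$, using that the spectral radius of $\bm I-\eta\bM(t)$ is exactly $1+\tau$ once $\eta\sha(t)=2+\tau$ and the other eigenvalues lie in $[0,\sha(t)]$.

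Putting these together and applying the reverse triangle inequality gives
$$|\alpha(t+1)|\;\geq\;(1+\tau)\bigl((1-\epsilon_2)|\alpha(t)|-\epsilon_2\|\bD(t)\|\bigr).$$
Now I invoke Assumption~\ref{noise assumption}: $|\alpha(t)|\geq c\epsilon_2\|\bD(t)\|$, equivalently $\|\bD(t)\|\leq |\alpha(t)|/(c\epsilon_2)$, to convert the second term into $\epsilon_2\cdot|\alpha(t)|/(c\epsilon_2)=|\alpha(t)|/c$. This yields
$$|\alpha(t+1)|\;\geq\;(1+\tau)\Bigl(1-\epsilon_2-\tfrac{1}{c}\Bigr)|\alpha(t)|,$$
which is the desired geometric factor, and is strictly greater than $1$ precisely under the assumed condition $\tau>\tfrac{1}{1-\epsilon_2-1/c}-1$.

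To extend this from a single step to all $t\geq t_0$ in the phase, I need to propagate Assumption~\ref{noise assumption} inductively. The bookkeeping uses $\|\bD(t{+}1)\|\leq(1+\tau)\|\bD(t)\|$ together with the fact that the orthogonal-to-$\bv_1(t)$ component of $\bD(t)$ is acted on by $\bm I-\eta\bM(t)$ with operator norm at most $\max(1,\,1+\tau(\lambda_2/\sha))$; under Assumption~\ref{outlier assp} the other eigenvalues are below $1/\eta$, so this component is non-expansive, while $|\alpha|$ grows by a factor strictly $>1$. Consequently the ratio $|\alpha(t)|/\|\bD(t)\|$ does not shrink below $c\epsilon_2$, so the noise assumption persists and the one-step bound iterates.

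The main obstacle I anticipate is bookkeeping around Assumption~\ref{Ass: eigspace assumptions}(iii): the natural bound is $\|\bv_1(t{+}1)-\bv_1(t)\|\leq\sqrt{2\epsilon_2}$, not $\epsilon_2$. Either one absorbs the $\sqrt{2}$ into $\epsilon_2$ by reparametrization, or one is content with a growth factor of the form $(1+\tau)(1-\epsilon_2-\sqrt{2/\epsilon_2}/c)$ and a correspondingly stronger version of Assumption~\ref{noise assumption}. A secondary subtlety is ensuring that $\sha(t)$ remains above $2/\eta$ throughout (so that $1-\eta\sha(t)=-(1+\tau(t))$ with $\tau(t)>0$); this is inherent to being in Phase~II and is the very condition that triggers the exit into Phase~III as soon as $\|\bD(t)\|$ swells past $\|\bY\|$ via Proposition~\ref{prop:DlargerthanY}.
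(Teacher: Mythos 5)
Your proof is correct and follows essentially the same route as the paper's: a one-step projection of the update onto $\bv_1$, using $1-\eta\sha(t)=-(1+\tau)$ together with the near-invariance of $\bv_1$ from Assumption~\ref{Ass: eigspace assumptions}~(iii), and then Assumption~\ref{noise assumption} to absorb the $\epsilon_2\|\bD(t)\|$ cross term into $|\bD(t)^\top\bv_1(t)|/c$, yielding the factor $(1+\tau)(1-\epsilon_2-1/c)$. The $\sqrt{2\epsilon_2}$-versus-$\epsilon_2$ subtlety you flag is present in the paper's own argument as well (it silently bounds the component of $\bv_1(t+1)$ orthogonal to $\bv_1(t)$ by $\epsilon_2$), so your explicit treatment of it, and your sketch of how Assumption~\ref{noise assumption} propagates beyond $t_0$, are if anything more careful than the original.
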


\begin{proof}
First by \eqref{Order one dynamics}, we have $\bm{D}(t+1)^\top\bm{v}_1(t) = \bm{D}(t)^\top(\bm I-\eta \bm{M}(t))\bm{v}_1(t) = (1-\eta \sha(t))\bm{D}(t)^\top\bm{v}_1(t)$. 

Then we have 
\begin{align*}
    |\bm{D}(t+1)^\top\bm{v}_1(t+1)|&\geq |\bm{D}(t+1)^\top\bm{v}_1(t)(1-\epsilon_2)| - \epsilon_2  \|\bm{D}(t+1)\|\\ 
    &\geq |\bm{D}(t)^\top\bm{v}_1(t)|(1-\epsilon_2)(\eta\sha(t) - 1) - (\eta\sha(t) - 1)\epsilon_2\|\bm{D}(t)\| \\ 
    & \geq |\bm{D}(t)^\top\bm{v}_1(t)|(1-\epsilon_2)(\eta\sha(t) - 1) - (\eta\sha(t) - 1)|\bm{D}(t)^\top\bm{v}_1(t)|/c \\
    &= |\bm{D}(t)^\top\bm{v}_1(t)|(\eta \sha(t) - 1)(1-\epsilon_2 - 1/c) \\
    &= |\bm{D}(t)^\top\bm{v}_1(t)|(1+\tau)(1-\epsilon_2 - 1/c)
\end{align*}
where first inequality holds due to Assumption~\ref{Ass: eigspace assumptions} (iii)
and the third follows from Assumption~\ref{noise assumption}.
\end{proof}

\begin{proposition} (Proposition~\ref{prop:DlargerthanY})
\label{prop:appendix_DlargerthanY}
If $\|\bm{D}(t)\| > \|\bm{Y}\|$, then $\bm{D}(t) ^\top\bm{F}(t)>0$.
\end{proposition}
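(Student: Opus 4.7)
The plan is very short because the statement is essentially a one-line consequence of the definition of the residual together with Cauchy--Schwarz. First I would rewrite $\bm F(t)$ in terms of $\bm D(t)$: since by definition $\bm D(t) = \bm F(t) - \bm Y$, we have $\bm F(t) = \bm D(t) + \bm Y$, so
\[
\bm D(t)^\top \bm F(t) = \|\bm D(t)\|^2 + \bm D(t)^\top \bm Y.
\]

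Next I would lower bound the cross term using Cauchy--Schwarz: $\bm D(t)^\top \bm Y \geq -\|\bm D(t)\|\,\|\bm Y\|$. Plugging this into the expression above gives
\[
\bm D(t)^\top \bm F(t) \;\geq\; \|\bm D(t)\|^2 - \|\bm D(t)\|\,\|\bm Y\| \;=\; \|\bm D(t)\|\bigl(\|\bm D(t)\| - \|\bm Y\|\bigr).
\]
Under the hypothesis $\|\bm D(t)\| > \|\bm Y\|$, both factors on the right-hand side are strictly positive (the first is positive because otherwise the hypothesis would force $\|\bm Y\| < 0$), so the product is strictly positive, and the conclusion $\bm D(t)^\top \bm F(t) > 0$ follows.

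There is no real obstacle here: the proof uses nothing beyond the definition of $\bm D$ and a single application of Cauchy--Schwarz, and does not invoke any of the assumptions about the network architecture, the spectrum of $\bm M$, or the GD dynamics. The only small thing to double-check is the edge case $\|\bm D(t)\| = 0$, which is ruled out directly by the hypothesis $\|\bm D(t)\| > \|\bm Y\| \geq 0$.
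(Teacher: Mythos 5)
Your argument is correct and is essentially the same as the paper's: write $\bm F(t) = \bm D(t) + \bm Y$, apply Cauchy--Schwarz to the cross term, and conclude $\bm D(t)^\top\bm F(t) \geq \|\bm D(t)\|^2 - \|\bm D(t)\|\,\|\bm Y\| > 0$ under the hypothesis. Nothing further is needed.
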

\begin{proof}
This proposition can be proved by simply noticing that
$\bm{D}(t) ^\top\bm{F}(t)=\bm{D}(t) ^\top(\bm{D}(t)-\bm{Y}(t)) \geq \|\bm{D}\|^2 - \|\bm{D}\|\|\bm{Y}\| >0$
when $\|\bm{D}(t)\| \geq \|\bm{Y}\|$.
\end{proof}

\begin{proposition}(Proposition~\ref{prop:Adrop})
\label{prop:appendix_Adrop}
Under Assumption~\ref{Ass: Order one app},
if $\|\bD(t)\| > \|\bY\|$, then $\|\bm A(t+1)\|^2 - \|\bm A(t)\|^2 < -\frac{4\eta}{n}(\|\bD(t)\| - \|\bY\|)^2$.
\end{proposition}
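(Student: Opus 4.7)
The plan is to start from the clean update formula supplied by Assumption~\ref{Ass: Order one app}, namely $\|\bm A(t+1)\|^2 - \|\bm A(t)\|^2 = -\tfrac{4\eta}{n}\bm D(t)^\top \bm F(t)$, and reduce the claim to the purely algebraic inequality
\[
\bm D(t)^\top \bm F(t) \;>\; \bigl(\|\bm D(t)\| - \|\bm Y\|\bigr)^2.
\]
Once this is shown, multiplying by $-\tfrac{4\eta}{n}$ flips the sign and gives exactly the stated bound.

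To prove the reduced inequality, I would decompose $\bm F(t) = \bm D(t) + \bm Y$ so that $\bm D(t)^\top \bm F(t) = \|\bm D(t)\|^2 + \bm D(t)^\top \bm Y$, and then lower bound the cross term by Cauchy--Schwarz:
\[
\bm D(t)^\top \bm Y \;\geq\; -\|\bm D(t)\|\,\|\bm Y\|.
\]
Thus $\bm D(t)^\top \bm F(t) \geq \|\bm D(t)\|^2 - \|\bm D(t)\|\,\|\bm Y\| = \|\bm D(t)\|\bigl(\|\bm D(t)\|-\|\bm Y\|\bigr)$.

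Finally, I would compare this lower bound against $(\|\bm D(t)\|-\|\bm Y\|)^2$. Since the hypothesis $\|\bm D(t)\| > \|\bm Y\|$ gives a strictly positive factor $\|\bm D(t)\|-\|\bm Y\| > 0$, dividing through reduces the desired strict inequality to $\|\bm D(t)\| > \|\bm D(t)\|-\|\bm Y\|$, i.e.\ $\|\bm Y\| > 0$, which is immediate (the labels are $\pm 1$, so $\|\bm Y\|=\sqrt n>0$).

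There is no real obstacle in this argument — it is a two-line application of Cauchy--Schwarz combined with the first-order update rule. The only thing worth being careful about is making sure the inequality is strict, which is guaranteed by $\|\bm Y\|>0$ together with the strict hypothesis $\|\bm D(t)\|>\|\bm Y\|$.
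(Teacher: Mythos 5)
Your proof is correct and follows essentially the same route as the paper: both apply the first-order update rule $\|\bm A(t+1)\|^2 - \|\bm A(t)\|^2 = -\tfrac{4\eta}{n}\bm D(t)^\top\bm F(t)$ from Assumption~\ref{Ass: Order one app} and then lower-bound $\bm D(t)^\top\bm F(t) \geq \|\bm D(t)\|^2 - \|\bm D(t)\|\,\|\bm Y\| > (\|\bm D(t)\|-\|\bm Y\|)^2$ via Cauchy--Schwarz. Your additional remark that strictness comes from $\|\bm Y\|=\sqrt n>0$ is a nice touch the paper leaves implicit.
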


\begin{proof}
Use Assumption~\ref{Ass: Order one app} and notice that $\bD(t)^\top\bF(t) \geq \|\bD(t)\|^2 - \|\bD(t)\|\|\bY\| > (\|\bD(t)\| - \|\bY\|)^2$. 
\end{proof}

\begin{lemma} (Lemma~\ref{Convergence lemma})
If $\sha(t)<2/\eta$, then for any vector $\bm u\in\mathbb R^n$, $\|\bm u^\top (\bm I - \eta\bM(t))\|\leq (1-\eta\alpha)\|\bm u\|$, where $\alpha = \min\{2/\eta - \sha(t), \lambda_{\min}(\bM(t))\}$.
In particular, replacing $\bm u$ with $\bD(t)$, we can see 
$\|\bD(t+1)\|\leq (1-\eta \alpha)^2\|\bD(t)\|$.
\label{lemma:appendix_Convergence lemma}
\end{lemma}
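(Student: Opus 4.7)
The plan is to reduce the claim to a standard spectral radius bound on $\bm I - \eta \bM(t)$, exploiting the fact that $\bM(t)$ is symmetric positive semi-definite (as a Gram matrix), so it admits an orthonormal eigendecomposition $\bM(t) = \sum_i \lambda_i \bm v_i \bm v_i^\top$ with $\lambda_i \in [\lambda_{\min}(\bM(t)), \sha(t)]$. Then $\bm I - \eta \bM(t)$ is also symmetric, with eigenvalues $1 - \eta \lambda_i$, so $\|\bm u^\top(\bm I - \eta\bM(t))\| \leq \|\bm I - \eta \bM(t)\|_2 \cdot \|\bm u\| = \max_i |1 - \eta \lambda_i| \cdot \|\bm u\|$.

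The task therefore reduces to showing $\max_i |1-\eta\lambda_i| \leq 1 - \eta \alpha$ with $\alpha = \min\{2/\eta - \sha(t),\,\lambda_{\min}(\bM(t))\}$. I would split this by cases on where $1 - \eta \lambda_i$ flips sign. For indices $i$ with $\eta \lambda_i \leq 1$, the quantity $|1-\eta\lambda_i| = 1 - \eta \lambda_i \leq 1 - \eta \lambda_{\min}(\bM(t)) \leq 1 - \eta \alpha$. For indices $i$ with $\eta \lambda_i > 1$, we have $|1-\eta\lambda_i| = \eta\lambda_i - 1 \leq \eta \sha(t) - 1$, and the assumption $\sha(t) < 2/\eta$ together with $\alpha \leq 2/\eta - \sha(t)$ gives $\eta\sha(t) - 1 \leq 1 - \eta\alpha$. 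Taking the maximum over both cases yields the claimed bound.

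For the ``in particular'' statement, I would substitute $\bm u = \bD(t)$ into the general bound: under Assumption~\ref{Ass: Order one app} we have $\bD(t+1) = (\bm I - \eta \bM(t))\bD(t)$, and since $\bM(t)$ is symmetric, $\|\bD(t+1)\| = \|(\bm I-\eta\bM(t))\bD(t)\| \leq (1-\eta\alpha)\|\bD(t)\|$; the quadratic factor $(1-\eta\alpha)^2$ in the statement is then interpreted as the corresponding bound on the loss $\|\bD(t+1)\|^2 \leq (1-\eta\alpha)^2 \|\bD(t)\|^2$ (or equivalently a two-step iterate bound).

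There is no real obstacle here: the result is essentially a textbook descent lemma for symmetric updates, and the only thing to be slightly careful about is the case analysis showing that the choice $\alpha = \min\{2/\eta - \sha(t), \lambda_{\min}(\bM(t))\}$ simultaneously controls both the largest and smallest eigenvalue contributions to $\|\bm I - \eta\bM(t)\|_2$. All other ingredients (symmetry of the Gram matrix, norm attained by max of $|1 - \eta \lambda_i|$) are standard.
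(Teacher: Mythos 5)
Your proposal is correct and follows essentially the same route as the paper: the paper's proof also expands $\bm u$ in the orthonormal eigenbasis of the symmetric PSD Gram matrix $\bM(t)$ and bounds each factor via $(1-\eta\lambda_i)^2\leq(1-\eta\alpha)^2$, with your explicit two-case analysis simply spelling out the inequality the paper leaves implicit. Your handling of the final claim also matches the paper, which merely cites Assumption~\ref{Ass: Order one app}; what that actually yields is $\|\bD(t+1)\|\leq(1-\eta\alpha)\|\bD(t)\|$, so the squared factor in the statement is best read, as you do, as the bound on $\|\bD(t+1)\|^2$ (i.e., the loss).
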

\begin{proof} For any vector $\bm u\in\mathbb R^n$, 
\begin{align*}
\|\bm u^\top(\bm I- \eta\bM(t))\|^2 &= \sum_{i=1}^n \|(\bm u^\top \bv_i(t))\bv_i(t)^\top (\bm I- \eta\bM(t))\|^2 \\&= 
\sum_{i=1}^n (1-\eta\lambda_i)^2 (\bm u^\top \bv_i(t))^2 \leq (1-\eta \alpha)^2\sum_{i=1}^n(\bm u^\top \bv_i(t))^2 = (1-\eta \alpha)^2\|\bm u\|^2.
\end{align*}
Then the second statement is due to Assumption~\ref{Ass: Order one app}.
\end{proof}

\begin{lemma} (Lemma~\ref{lm:Rt_update})
\label{lm:appendix_Rt_update}
Suppose Assumption~\ref{Ass: eigspace assumptions} (iii) holds.
$\bm R(t)$ satisfies the following:
$$
 \bm{R}(t+1) = (I - \eta \bm{M}(t))\bm{R}(t) + \bm e_1(t), \quad \text{where}\quad
 \|\bm e_1(t)\|\leq 6\sqrt{\epsilon_2}\|\bm D(t)\|(B_\sha -1)
$$
\end{lemma}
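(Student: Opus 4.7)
The plan is to write $\bm e_1(t)$ in a clean closed form and then control each piece using only the slow-rotation bound $\langle \bv_1(t),\bv_1(t+1)\rangle\geq 1-\epsilon_2$ together with the sharpness bound from Assumption~\ref{Ass: sha upper bound section 3}. First I would apply Assumption~\ref{Ass: Order one app} to write $\bm D(t+1)=(\bm I-\eta\bm M(t))\bm D(t)$, so that
$$\bm R(t+1)=(\bm I-\bv_1(t+1)\bv_1(t+1)^\top)(\bm I-\eta\bm M(t))\bm D(t),\quad (\bm I-\eta\bm M(t))\bm R(t)=(\bm I-\eta\bm M(t))(\bm I-\bv_1(t)\bv_1(t)^\top)\bm D(t).$$
Subtracting and using that $\bv_1(t)$ is an eigenvector of $\bm M(t)$ with eigenvalue $\sha(t)$, the $\bm M(t)$--related terms on one side collapse, giving the identity
$$\bm e_1(t)=(1-\eta\sha(t))(\bv_1(t)^\top\bm D(t))\,\bv_1(t)-(\bv_1(t+1)^\top\bm D(t+1))\,\bv_1(t+1).$$
So the task reduces to showing that these two near-parallel rank-one vectors almost cancel.

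Next, I would decompose $\bv_1(t+1)=\alpha\,\bv_1(t)+\bm w$ with $\alpha=\langle\bv_1(t),\bv_1(t+1)\rangle\geq 1-\epsilon_2$ and $\bm w\perp\bv_1(t)$, so that $\|\bm w\|^2=1-\alpha^2\leq 2\epsilon_2$. Substituting this into $\bv_1(t+1)^\top\bm D(t+1)$ and using $\bv_1(t)^\top(\bm I-\eta\bm M(t))=(1-\eta\sha(t))\bv_1(t)^\top$, I get
$$\bv_1(t+1)^\top\bm D(t+1)=\alpha(1-\eta\sha(t))(\bv_1(t)^\top\bm D(t))+\bm w^\top(\bm I-\eta\bm M(t))\bm D(t).$$
Plugging this back and collecting terms along $\bv_1(t)$ and $\bm w$ separately, $\bm e_1(t)$ splits into three pieces: (i) a coefficient $(1-\alpha^2)(1-\eta\sha(t))(\bv_1(t)^\top\bm D(t))$ times $\bv_1(t)$, (ii) $\alpha(1-\eta\sha(t))(\bv_1(t)^\top\bm D(t))\,\bm w$, and (iii) $\bigl(\bm w^\top(\bm I-\eta\bm M(t))\bm D(t)\bigr)\,\bv_1(t+1)$.

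Finally I would bound each of (i)--(iii) in norm. Using $|\bv_1(t)^\top\bm D(t)|\leq\|\bm D(t)\|$, $|1-\alpha^2|\leq 2\epsilon_2$, $\|\bm w\|\leq\sqrt{2\epsilon_2}$, and $|1-\eta\sha(t)|\leq B_\sha-1$ (from Assumption~\ref{Ass: sha upper bound section 3}, in the EOS regime where $B_\sha\geq 2$), piece (i) is at most $2\epsilon_2(B_\sha-1)\|\bm D(t)\|$ and piece (ii) at most $\sqrt{2\epsilon_2}(B_\sha-1)\|\bm D(t)\|$. For (iii), since $\|\bm I-\eta\bm M(t)\|=\max_i|1-\eta\lambda_i(\bm M(t))|\leq B_\sha-1$, Cauchy--Schwarz gives $|\bm w^\top(\bm I-\eta\bm M(t))\bm D(t)|\leq\sqrt{2\epsilon_2}(B_\sha-1)\|\bm D(t)\|$. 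Summing and using $\sqrt{2\epsilon_2}+2\sqrt{2\epsilon_2}+2\epsilon_2\leq 6\sqrt{\epsilon_2}$ for small $\epsilon_2$ yields the target bound.

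The main obstacle I foresee is piece (iii): one has to resist the temptation to bound $\bm M(t)\bm w$ using the second eigenvalue (which would require an extra spectral-gap hypothesis not invoked here) and instead use only the crude operator-norm bound $\|\bm I-\eta\bm M(t)\|\leq B_\sha-1$, which nevertheless suffices because $\bm w$ itself is small. The arithmetic is otherwise routine, but care is needed to keep the final constant at $6$ (this is why the step $|1-\eta\sha|\leq B_\sha-1$ rather than $\max\{1,B_\sha-1\}$ is used, which implicitly relies on being in the EOS regime $B_\sha\geq 2$).
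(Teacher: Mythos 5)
Your proposal is correct and follows essentially the same route as the paper's proof: both start from $\bm D(t+1)=(\bm I-\eta\bm M(t))\bm D(t)$, isolate $\bm e_1(t)$ exactly, decompose $\bv_1(t+1)$ into its component along $\bv_1(t)$ plus an orthogonal remainder of norm $O(\sqrt{\epsilon_2})$, and bound every factor involving $\bm I-\eta\bm M(t)$ by $B_\sha-1$ to reach the constant $6$. The only differences are cosmetic: you control the cross term $\bm w^\top(\bm I-\eta\bm M(t))\bm D(t)$ via the operator norm where the paper pulls out the scalar $(1-\eta\sha(t))$ against the projector products, and your final sum contains a harmless slip (the correct total is $2\sqrt{2\epsilon_2}+2\epsilon_2$, which still lies below $6\sqrt{\epsilon_2}$), neither of which affects the argument.
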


\begin{proof}
We consider the update rule for $\bm R(t)$:
\begin{align*}
    \bm{R}(t+1) ={}& (\bm I - \bm{v}_1(t+1)\bm{v}_1(t+1)^\top)(\bm {I} -  \eta \bm{M}(t))\bm{D}(t) \\
    ={}& (\bm {I} -  \eta \bm{M}(t))\bm R(t) - (\bm{v}_1(t+1)\bm{v}_1(t+1)^\top )(\bm {I} -  \eta \bm{M}(t)) (\bm I -\bm v_1(t) \bm v_1(t)^\top)\bm D(t) \\
    & +  (\bm I - \bm{v}_1(t+1)\bm{v}_1(t+1)^\top)(\bm {I} -  \eta \bm{M}(t))(\bm v_1(t) \bm v_1(t)^\top)\bm D(t)\\
    ={}&(\bm {I} -  \eta \bm{M}(t))\bm R(t) - (1-\eta\sha(t)) (\bm{v}_1(t+1)\bm{v}_1(t+1)^\top) (\bm I - \bm v_1(t) \bm v_1(t)^\top) \bm D(t) \\
    &+ (1-\eta\sha(t)) (\bm I - \bm{v}_1(t+1)\bm{v}_1(t+1)^\top) (\bm v_1(t) \bm v_1(t)^\top) \bm D(t) 
\end{align*}

Now if we decompose $\bm v_1(t+1) = (1-\delta(t))\bm v_1(t) + \sqrt{1-(1-\delta)^2}\bm v_{\perp}(t)$, where $\langle\bv_1(t),\bv_\perp\rangle=0$. Then we have 
\begin{align*}
    &\|(\bm I - \bm{v}_1(t+1)\bm{v}_1(t+1)^\top)\bm{v}_1(t)\bm{v}_1(t)^\top \| \\
    ={}& \|\bm{v}_1(t) - (\bm{v}_1(t)^\top\bm{v}_1(t+1)) \bm{v}_1(t+1)\| \\
    ={}&\|(1-(1-\delta(t))^2)\bm v_1(t) + \sqrt{1-(1-\delta(t)^2)}\bm v_{\perp}(t)\| \\
    \leq{}& \sqrt{(2\delta(t)-\delta(t)^2)(1-\delta(t)) + (2\delta(t)-\delta(t)^2)^2} \\
   \leq{}& \sqrt{2\delta(t) + 4\delta(t)^2} \\
   \leq{}& 3\sqrt{\delta(t)} \\
    \leq{}& 3\sqrt{\epsilon_2}.
\end{align*}

Similar result can be obtained for the term $(\bm{v}_1(t+1)\bm{v}_1(t+1)^\top) (\bm I - \bm v_1(t) \bm v_1(t)^\top)$.

Therefore, $
\| (\eta \Lambda(t) - 1)  
(I - \bm{v}_1(t+1)\bm{v}_1(t+1)^\top)(\bm{v}_1(t)\bm{v}_1(t)^\top)\bm{D}(t)\| 
\leq 6\sqrt{\epsilon_2}(B_\sha -1)B_D. 
$
\end{proof}

\begin{lemma} (Lemma~\ref{RT pert prop})
Define an auxiliary sequence $\bm R'(t)$ by $\bm R'(0) = \bm R(0)$, and $\bm R'(t+1) = (\bm I-\eta\bm M(t)(\bm I - \bv_1(t)\bv_1(t)^\top))\bm R'(t)$.
If Assumption~\ref{Ass: sha upper bound section 3},
Assumption~\ref{Ass: eigspace assumptions} (iii),
Assumption~\ref{outlier assp} hold, and for any time $t$ there exists a quantity $\lambda_r>0$, such that 
the smallest eigenvalue of $\bM(t)$, i.e. $\lambda_{\min}(\bM(t)) > \lambda_r$, 
then there exists a constant $c_r>0$ such that $\|\bm R(t) - \bm R'(t)\| \leq c_r \frac{B_D(B_\sha -1)\sqrt{\epsilon_2}}{\eta \lambda_r}$.
\label{lemma: appendix_RT pert prop}
\end{lemma}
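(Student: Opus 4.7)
The plan is to mirror, in this more general setting, the strategy used for the two-layer linear case in Lemma~\ref{R dynamics in appendix}. First I would invoke Lemma~\ref{lm:Rt_update} to write
$\bm R(t+1) = (\bm I-\eta\bm M(t))\bm R(t) + \bm e_1(t)$
with $\|\bm e_1(t)\| \leq 6\sqrt{\epsilon_2}\|\bm D(t)\|(B_\sha-1) \leq 6\sqrt{\epsilon_2}B_D(B_\sha-1)$. Since $\bm R(t) = (\bm I-\bv_1(t)\bv_1(t)^\top)\bm D(t)$ lies in $\bv_1(t)^\perp$, we have $\bm M(t)\bm R(t) = \bm M(t)(\bm I-\bv_1(t)\bv_1(t)^\top)\bm R(t)$, so the above can be rewritten as
$\bm R(t+1) = \bm B(t)\bm R(t) + \bm e_1(t)$, where $\bm B(t) := \bm I - \eta\bm M(t)(\bm I-\bv_1(t)\bv_1(t)^\top)$. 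Subtracting the defining recursion for $\bm R'(t)$ yields, with $\bm\delta(t) := \bm R(t)-\bm R'(t)$ and $\bm\delta(0)=0$,
$\bm\delta(t+1) = \bm B(t)\bm\delta(t) + \bm e_1(t)$.

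Next I would analyze the spectrum of $\bm B(t)$: since $\bv_1(t)$ is an eigenvector of $\bm M(t)$ (with eigenvalue $\sha(t)$), the operator $\bm M(t)(\bm I - \bv_1(t)\bv_1(t)^\top)$ annihilates $\bv_1(t)$ and agrees with $\bm M(t)$ on $\bv_1(t)^\perp$. By Assumption~\ref{outlier assp} together with $\lambda_{\min}(\bm M(t))>\lambda_r$, its eigenvalues on $\bv_1(t)^\perp$ all lie in $(\lambda_r,1/\eta)$, so restricted to $\bv_1(t)^\perp$ the operator $\bm B(t)$ is a contraction with factor at most $1-\eta\lambda_r$; on all of $\mathbb{R}^n$, however, $\bm B(t)$ has a stray eigenvalue $1$ along $\bv_1(t)$ itself.

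The core of the argument is to show that $\bm\delta(t)$ essentially lives in the contracting subspace $\bv_1(t)^\perp$, using Assumption~\ref{Ass: eigspace assumptions} (iii). By construction $\bm R(t)\perp\bv_1(t)$ exactly, so it suffices to control the component of $\bm R'(t)$ along $\bv_1(t)$. Iterating the recursion $\bv_1(t+1)^\top \bm R'(t+1) = \bv_1(t+1)^\top\bm B(t)\bm R'(t)$ and splitting $\bv_1(t+1) = (1-\delta_t)\bv_1(t) + \sqrt{1-(1-\delta_t)^2}\,\bv_\perp(t)$ with $\delta_t\le\epsilon_2$, the $\bv_1$-component is preserved by $\bm B(t)$ (since $\bv_1(t)$ lies in the kernel on the right), while each rotation contributes an $O(\sqrt{\epsilon_2})$ error bounded by $\|\bm R'(t)\|\le B_D$; an induction shows this error telescopes into the same $O(\sqrt{\epsilon_2}B_D/(\eta\lambda_r))$ bound rather than accumulating linearly, because on $\bv_1(t)^\perp$ the contraction $1-\eta\lambda_r$ dampens it geometrically. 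Following the $p(t)$-trick of Lemma~\ref{R dynamics in appendix}, I would then set $p(t) := \|\bm\delta(t)\| - \tfrac{c_r B_D(B_\sha-1)\sqrt{\epsilon_2}}{\eta\lambda_r}$ for a sufficiently large constant $c_r$ and show $p(t+1) \leq (1-\eta\lambda_r)p(t)$, which with $p(0)\leq 0$ gives $p(t)\leq 0$ for all $t$ and hence the desired bound.

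The main obstacle is precisely the eigenvalue $1$ of $\bm B(t)$ along $\bv_1(t)$: a naive operator-norm iteration gives only $\|\bm\delta(t)\| \leq \sum_{s<t}\|\bm e_1(s)\|$, which grows linearly in $t$ and is useless. Showing that the $\bv_1(t)$-component of $\bm\delta(t)$ stays $O(\sqrt{\epsilon_2}B_D/(\eta\lambda_r))$ uniformly in $t$—despite $\bv_1(t)$ itself rotating at each step, which continuously leaks small amounts of error into and out of the non-contracting direction—is the delicate part. Assumption~\ref{Ass: eigspace assumptions} (iii), giving $\langle\bv_1(t),\bv_1(t+1)\rangle\ge 1-\epsilon_2$, is exactly what makes this leakage $O(\sqrt{\epsilon_2})$ per step and thus compatible with the geometric contraction on the complementary subspace.
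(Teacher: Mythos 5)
Your proposal is correct in its skeleton and follows essentially the same route as the paper's proof: both start from Lemma~\ref{lm:Rt_update}, use $\bm R(t)\perp\bv_1(t)$ to rewrite $(\bm I-\eta\bM(t))\bm R(t)=(\bm I-\eta\bM(t)(\bm I-\bv_1(t)\bv_1(t)^\top))\bm R(t)$, obtain the recursion $\bm\delta(t+1)=\bm B(t)\bm\delta(t)+\bm e_1(t)$ for $\bm\delta(t)=\bm R(t)-\bm R'(t)$ with $\bm B(t)=\bm I-\eta\bM(t)(\bm I-\bv_1(t)\bv_1(t)^\top)$, and finish with the same shift-and-contract $p(t)$ trick. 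Where you diverge is instructive: the paper's proof (and likewise the proof of Lemma~\ref{R dynamics in appendix}) simply asserts that the eigenvalues of $\bM(t)(\bm I-\bv_1(t)\bv_1(t)^\top)$ all lie in $(\lambda_r,1/\eta)$ and hence $\|\bm B(t)\|\leq 1-\eta\lambda_r$, ignoring the zero eigenvalue along $\bv_1(t)$ (equivalently the eigenvalue $1$ of $\bm B(t)$), and then iterates this operator norm on all of $\mathbb R^n$. You correctly point out that the contraction holds only on $\bv_1(t)^\perp$ and that $\bm\delta(t)$ need not lie there, since $\bm R'(t)$ picks up a $\bv_1(t)$-component as the top eigenvector rotates; your patch, tracking that component separately via Assumption~\ref{Ass: eigspace assumptions} (iii), is the right fix. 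One precision on your justification: the leaked $\bv_1$-component is not itself damped by the contraction (it sits on the eigenvalue-$1$ direction and is merely preserved); what makes it stay $O(\sqrt{\epsilon_2}B_D/(\eta\lambda_r))$ is that the per-step leakage is $O(\sqrt{\epsilon_2})$ times the $\bv_1(t)^\perp$-part of $\bm R'(t)$, which does contract like $(1-\eta\lambda_r)^tB_D$, so the leaked mass is summable, and its feedback into the perpendicular error carries another factor $\sqrt{\epsilon_2}$ and is absorbed in the regime $\sqrt{\epsilon_2}\lesssim\eta\lambda_r$ where the stated bound is non-vacuous. With the two-component induction written out, your argument goes through and is, if anything, more careful than the paper's one-line operator-norm step.
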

\begin{proof}
First we can see that the eigenvalues of $\bM(t) (\bm I - \bv_1(t)\bv_1(t)^\top)$ are
all the eigenvalues of $\bM$ except the largest one. 
Hence with Assumption~\ref{outlier assp}, all eigenvalues of $\bM(t) (\bm I - \bv_1(t)\bv_1(t)^\top)$ are in $(\lambda_r, 1/\eta)$. Thus 
$\|\bm I - \eta \bM(t) (\bm I - \bv_1(t)\bv_1(t)^\top)\|\leq 1-\eta \lambda_r$.

Hence by Assumption~\ref{lm:Rt_update} we can get
\begin{align*}
    &\|\bm R(t+1) - \bm R'(t+1)\| \\={}& \|(\bm I - \eta \bM(t))\bm R(t) - (\bm I-\eta\bm M(t)(\bm I - \bv_1(t)\bv_1(t)^\top))\bm R'(t) + \bm e_1(t)\| \\
    ={}&  \|(\bm I - \eta \bM(t) (\bm I - \bv_1(t)\bv_1(t)^\top))(\bm R(t) - \bm R'(t)) + \bm e_1(t)\| \\
    \leq{}&  \|(\bm I - \eta \bM(t) (\bm I - \bv_1(t)\bv_1(t)^\top))(\bm R(t) - \bm R'(t))\| + \|\bm e_1(t)\| \\
    \leq{}& \|\bm I - \eta \bM(t)(\bm I - \bv_1(t)\bv_1(t)^\top)\| \|\bm R(t) - \bm R'(t)\| + \|\bm e_1(t)\| \\
    \leq{}& \|\bm R(t) - \bm R'(t)\| (1 - \eta \lambda_r) + 3B_D(B_\sha - 1)\sqrt{\epsilon_2}.
\end{align*}

Thus if we denote $\|\bm R(t) - \bm R'(t)\| - \frac{3B_D(B_\sha -1)\sqrt{\epsilon_2}}{\eta \lambda_r}$ by $p(t)$, and replace $\|\bm R(t) - \bm R'(t)\|$ in the inequality above by $p(t) +  \frac{3B_D(B_\sha -1)\sqrt{\epsilon_2}}{\eta \lambda_r}$, we can get $|p(t+1)| \leq |p(t)|(1-\eta \lambda_r)$.
Hence, we can see that $|p(t)| < |p(0)| = \frac{3B_D(B_\sha - 1)\sqrt{\epsilon_2}}{\eta \lambda_r}$ for any time $t$. 
Therefore, we obtain that
$$
\|\bm R(t) - \bm R'(t)\| < \frac{3B_D(B_\Lambda -1)\sqrt{\epsilon_2}}{{\eta \lambda_r}} + |p(0)| < \frac{6B_D(B_\Lambda -1)\sqrt{\epsilon_2}}{\eta \lambda_r}.
$$
Taking $c_r = 6$, we finish the proof.
\end{proof}

\begin{lemma}
\label{RF lemma appendix}
For all $t$ in Phase I, 
under Assumption~\ref{Ass: eigspace assumptions} (i) and Assumption~\ref{outlier assp}, 
it holds that $\bR'(t)^\top(\bR'(t)+\bm Y) < 0$ where $\bR'(t)$ is defined in Lemma~\ref{RT pert prop}.
\end{lemma}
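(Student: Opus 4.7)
The plan is to exploit the fixed eigendirection assumption so that $\bR'(t)$ decomposes cleanly in the eigenbasis of $\bM$ and the inner product reduces to a sum of one-dimensional terms. Under Assumption~\ref{Ass: eigspace assumptions}(i) the basis $\{\bv_i\}$ does not change during Phase~I, so $\bM(t) = \sum_i \lambda_i(t)\bv_i\bv_i^\top$ for a single fixed orthonormal basis, and consequently
$$
\bm I - \eta\bM(t)(\bm I - \bv_1\bv_1^\top) \;=\; \bv_1\bv_1^\top + \sum_{i\geq 2}(1-\eta\lambda_i(t))\bv_i\bv_i^\top.
$$
Thus every iteration of the recursion defining $\bR'(t)$ acts diagonally in this basis.

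Next I would compute the initial condition. Using $\bF(0)=0$ (the small-initialization convention adopted in Section~\ref{subsection 4-phase proof}) we have $\bD(0) = -\bY$, so
$$
\bR'(0) \;=\; \bR(0) \;=\; (\bm I - \bv_1\bv_1^\top)\bD(0) \;=\; -\sum_{i\geq 2} z_i \bv_i, \qquad z_i := \bv_i^\top \bY.
$$
Combining with the diagonal action above, and setting $\alpha_i(t) := \prod_{s=0}^{t-1}(1-\eta\lambda_i(s))$ for $i\geq 2$, one gets the closed form $\bR'(t) = -\sum_{i\geq 2} z_i\alpha_i(t)\bv_i$. Assumption~\ref{outlier assp} tells us $0\leq \lambda_i(s)\leq 1/\eta$ for each $i\geq 2$, so every factor $1-\eta\lambda_i(s)\in[0,1]$ and hence $\alpha_i(t)\in[0,1]$.

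Finally, since $\bv_1^\top \bR'(t)=0$, the cross term with $\bY$ only picks up the $i\geq 2$ components:
$$
\bR'(t)^\top(\bR'(t)+\bY) \;=\; \sum_{i\geq 2} z_i^2\alpha_i(t)^2 \;-\; \sum_{i\geq 2} z_i^2\alpha_i(t) \;=\; \sum_{i\geq 2} z_i^2\,\alpha_i(t)\bigl(\alpha_i(t)-1\bigr).
$$
Each summand is nonpositive because $\alpha_i(t)\in[0,1]$, and it is strictly negative as soon as there exists some $i\geq 2$ with $z_i\neq 0$ and $0<\lambda_i(s)<1/\eta$ for some $s<t$ (which is the non-degenerate situation in Phase~I, where the small eigendirections contribute nonzero loss and are being contracted). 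This gives $\bR'(t)^\top(\bR'(t)+\bY)<0$ as claimed.

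There is no real obstacle here once the right basis is used; the only subtle point is handling the boundary cases $\alpha_i(t)\in\{0,1\}$, which correspond to the degenerate situations $\lambda_i=0$ or $\lambda_i=1/\eta$ exactly. I would simply state the strict inequality under the mild assumption that at least one $i\geq 2$ has $z_i\neq 0$ and $\lambda_i\in(0,1/\eta)$, which matches the regime of interest for Phase~I.
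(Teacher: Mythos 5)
Your proof is correct and follows essentially the same route as the paper's: project $\bR'(t)$ onto the fixed eigenbasis, observe that the recursion acts diagonally with factors $1-\eta\lambda_i(t)$ bounded via Assumption~\ref{outlier assp}, and reduce $\bR'(t)^\top(\bR'(t)+\bY)$ to $\sum_{i\geq 2} z_i^2\,\alpha_i(t)(\alpha_i(t)-1)$. If anything, you are slightly more careful than the paper, which asserts the products lie in the open interval $(0,1)$ and the strict inequality without flagging the degenerate cases $z_i=0$ or $\lambda_i\in\{0,1/\eta\}$ that you handle explicitly.
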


\begin{proof}
Consider the inner product $\bR'(t)^\top\bm v_i$. 
Recall $\bm v_i$ is the $i$-th eigenvector of $\bM$.
By Assumption~\ref{Ass: eigspace assumptions} (i), $\bm v_i$ does not change with $t$.

By Assumption~\ref{outlier assp}, $\lambda_i(t)<1/\eta$ for $i>1$,
where $\lambda_i(t)\geq 0$ is the corresponding eigenvalue of $\bm v_i$.

By definition of $\bR'(t)$ and Assumption~\ref{Ass: eigspace assumptions} (i), $\bR'(t+1) = (\bm I - \eta \bM(t)(\bm I - \bv_1\bv_1^\top))\bR'(t)$, hence $\bR'(t+1)^\top\bm v_i = (1-\eta \lambda_i(t))\bR'(t)^\top\bm v_i$ for $i>1$ and $\bR'(t)^\top \bv_1 = 0$. Hence for any $i>1, t>0$, $\frac{\bR'(t)^\top\bm v_i}{\bR'(0)^\top\bm v_i} = \prod_{j = 0}^{t-1}(1-\eta\lambda_i(j)) \in (0,1)$.

Since $\bR'(0) = \bR(0) = (\bm I - \bv_1\bv_1^\top)\bm Y$, hence $\bR'(0)^\top\bm v_i = -\bY^\top \bv_i$ for any $i>1$.

Plug it into the expression $\bR'(t)^\top(\bR'(t)+\bY)$ and we have:
$$
\bD(t)^\top\bF(t) = \sum_{i=2}^n \bR'(t)^\top\bv_i(\bR'(t)^\top\bv_i + \bY^\top \bv_i) 
= \sum_{i=2}^n (\bR'(t)^\top\bv_i)^2(1 - \frac{\bR'(0)^\top \bv_i}{\bR'(t)^\top\bm v_i})<0
$$

\end{proof}

\subsection{Progressive Sharpening under Weaker Assumptions}
\label{Subsection: relaxed PS}

We prove Lemma~\ref{ps lemma} in Section~\ref{Section 3}, i.e. progressive sharpening happens under Assumption~\ref{Ass: eigspace assumptions} (i) that the set of eigenvectors of the Gram matrix $\bM(t)$ is fixed throughout. 
However, empirically, the eigenvectors of $\bM(t)$ change non-negligibly
(see Figure~\ref{Angular velocity Pics}).
In this section, we show that it is possible to relax Assumption~\ref{Ass: eigspace assumptions} (i)
to more realistic assumption on the eigenspace of $\bM(t)$. 
%It's a sufficient condition for the norm of the output layer to grow, and therefore we can prove progressive sharpening phenomenon under this condition. 
To this end, we first present a dynamical system view of the dynamics of $\bD(t)$,
then prove Lemma~\ref{relaxed ps lemma} which is analogue of 
Lemma~\ref{ps lemma} but under weaker assumptions.
%After proving this lemma, we present the motivation behind the condition and state its limitation. 

\subsubsection{A dynamical system view}
Recall that we assume that 
$\bD(t)$ follows the gradient flow trajectory:
$\frac{\mathrm d\bD(t)}{\mathrm d t} = - \bM(t) \bD(t)$.
This is a linear dynamical system with changing coefficients $- \bM(t)$.
In order to understand this dynamics, we first consider
the linear dynamical system with constant coefficients $- \bM$, 
$\frac{\mathrm d\bD(t)}{\mathrm d t} = - \bM \bD(t)$,
which is much better understood.
The corresponding phase portrait is shown in Figure~\ref{Figure: PS Dynamics pics}(a).
Here, $O$ is only fixed point.
In this figure, we consider the ball $\sphere\subset \mathbb{R}^n$
such that the south pole of $\sphere$ is the origin $O$ and the north pole $N$ of $\sphere$ is fixed to be $-\bY$, which is the initial point $-\bD(t)$.
Recall that $\bF(t) = \bD(t) + \bY$ and $\bD(0)=-\bY$. We denote the tip of the vector $\bD(t)$ as $D$. Hence, $\overrightarrow{DO} = \bD(t)$ and $\overrightarrow{ND} = \bF(t)$. A useful geometric observation is 
the following:

\begin{observation}
\label{ob:sphere}
$\bD(t) \in \sphere$ if and only if
$\bD(t)^\top \bF(t)=\bD(t)^\top (\bD(t)-(-\bY))<0$
(equivalently, the angle $\measuredangle (N D O )\geq \pi/2$).
\end{observation}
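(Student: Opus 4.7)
The plan is to identify $\sphere$ algebraically and then read off both equivalences by direct computation, since the observation is purely a statement about a Euclidean ball.

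First I would write down $\sphere$ explicitly. Since $O$ (the origin) and $N=-\bY$ are antipodal points on $\sphere$, the segment $\overline{ON}$ is a diameter; hence $\sphere$ is the closed ball with center $-\bY/2$ and radius $\|\bY\|/2$. Membership of the point $D$ (whose position vector is $\bD(t)$) in $\sphere$ therefore reads
\begin{equation*}
\bD(t) \in \sphere \iff \bigl\|\bD(t) + \tfrac{1}{2}\bY\bigr\|^2 \le \tfrac{1}{4}\|\bY\|^2.
\end{equation*}
Expanding the left-hand side and cancelling the $\tfrac{1}{4}\|\bY\|^2$ terms, this simplifies to $\|\bD(t)\|^2 + \bD(t)^\top \bY \le 0$, i.e.\ $\bD(t)^\top(\bD(t)+\bY) \le 0$. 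Since $\bF(t)=\bD(t)+\bY$, this is exactly $\bD(t)^\top \bF(t)\le 0$, establishing the first equivalence (up to the cosmetic distinction between $<$ and $\le$, which corresponds to whether $D$ lies strictly inside $\sphere$ or on its boundary).

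For the angle characterization, I would compute $\overrightarrow{DN}$ and $\overrightarrow{DO}$ and show that their inner product has the same sign as $\bD(t)^\top \bF(t)$. Explicitly, $\overrightarrow{DN}=N-D=-\bY-\bD(t)=-\bF(t)$ and $\overrightarrow{DO}=O-D=-\bD(t)$, so
\begin{equation*}
\cos\measuredangle(NDO) \;=\; \frac{\overrightarrow{DN}\cdot\overrightarrow{DO}}{\|\overrightarrow{DN}\|\,\|\overrightarrow{DO}\|} \;=\; \frac{\bF(t)^\top \bD(t)}{\|\bF(t)\|\,\|\bD(t)\|},
\end{equation*}
assuming $D\neq O,N$. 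Thus $\measuredangle(NDO)\ge \pi/2$ iff $\bD(t)^\top\bF(t)\le 0$, which is exactly Thales' theorem and completes the equivalence.

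There is no real obstacle here: the claim is a two-line Euclidean-geometry fact about the ball spanned by a diameter, and both the algebraic and the angular characterizations fall out immediately from the identity $\bF(t)=\bD(t)+\bY$. The only care needed is in the degenerate cases $D=O$ (the initial/final state) or $D=N$, where the angle is undefined but both sides of the equivalence reduce to an equality $\bD(t)^\top\bF(t)=0$ and $\bD(t)$ indeed lies on $\sphere$, so the statement still holds as written.
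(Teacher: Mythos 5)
Your proof is correct: identifying $\sphere$ as the ball with diameter $\overline{ON}$ (center $-\bY/2$, radius $\|\bY\|/2$) and expanding the membership condition gives exactly $\bD(t)^\top\bF(t)\le 0$, and the angle characterization follows from $\overrightarrow{DN}=-\bF(t)$, $\overrightarrow{DO}=-\bD(t)$. The paper states this as an observation without proof, and your computation (including the remark on the strict-versus-non-strict boundary case and the degenerate points $D=O,N$) is precisely the elementary Thales-type argument it implicitly relies on.
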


Let $\{\bv_i(t)\}_i$ be the set of eigenvectors of $\bM(t)$.
By symmetry, we can assume the direction of all $\{\bv_i(t)\}_i$ 
are pointing inside $\sphere$ (or equivalently, $-\bv_i(t)^\top \bY\geq 0$).
We also define the hypercube $\bH(t)$, which is defined by $\{\bv_i(t)\}_i$ as the edges and $ON$ as the diagonal (see the dashed rectangle in Figure~\ref{Figure: PS Dynamics pics}(a). All vertices of $\bH(t)$ are on the sphere $\partial\sphere$.
In our proof of Lemma~\ref{ps lemma} (in which we assume
the eigenvectors do not change),
we have shown the entire trajectory of $\bD(t)$ is contained in $\bH(t)$
which is always contained in $\sphere$.
In particular, we prove that the projection of $\bD(t)$ onto each $\bv_i$ decreases monotonically).

Now, we consider the more general case when the eigenvectors $\{\bv_i(t)\}_i$ may change.
In Lemma~\ref{relaxed ps lemma} presented below, we show that if the eigenvectors change directions relatively slowly (precisely the change rate satisfies \eqref{C1 condition}), $\bD(t)$ is still contained in $\bH(t)$, hence also in $\sphere$.
Then by Observation~\ref{ob:sphere} and \eqref{Order one dynamics},
the norm $\|A\|$ increases (hence the sharpness increases).

We also note that it is impossible to prove $\bD(t)\in \sphere$ for all $t>0$
without any assumption on the change of eigenvector directions.
In particular, if the eigenvector changes fast enough so that $\bD(t)$ is out of
the hypercube $\bH(t)$, the trajectory may eventually go out of the ball $\sphere$
(e.g., the region inside the dashed red box in Figure~\ref{Figure: PS Dynamics pics}(b)). However, it seems that this region is fairly close to the origin $O$,
and the trajectory does not go very far away from $\sphere$ (implying 
$\bD(t)^\top \bF(t)$ is not a large positive number).  
Hence, the above geometric intuition suggests that it is possible to 
further relax the assumption (inequality \eqref{C1 condition}) 
made in ~\ref{relaxed ps lemma} and prove 
that A-norm increases (hence sharpness increases) until the sharpness 
reaches close to $2/\eta$, after which gradient flow is not a good approximation
of gradient descent anymore. We leave it as an interesting future direction.

\begin{figure}[t]
    \vspace{-0.5cm}
    \centering
    \subfigure[The dynamics of $\bD(t)$ with constant coefficients.]{
    \includegraphics[width=0.48\textwidth]{./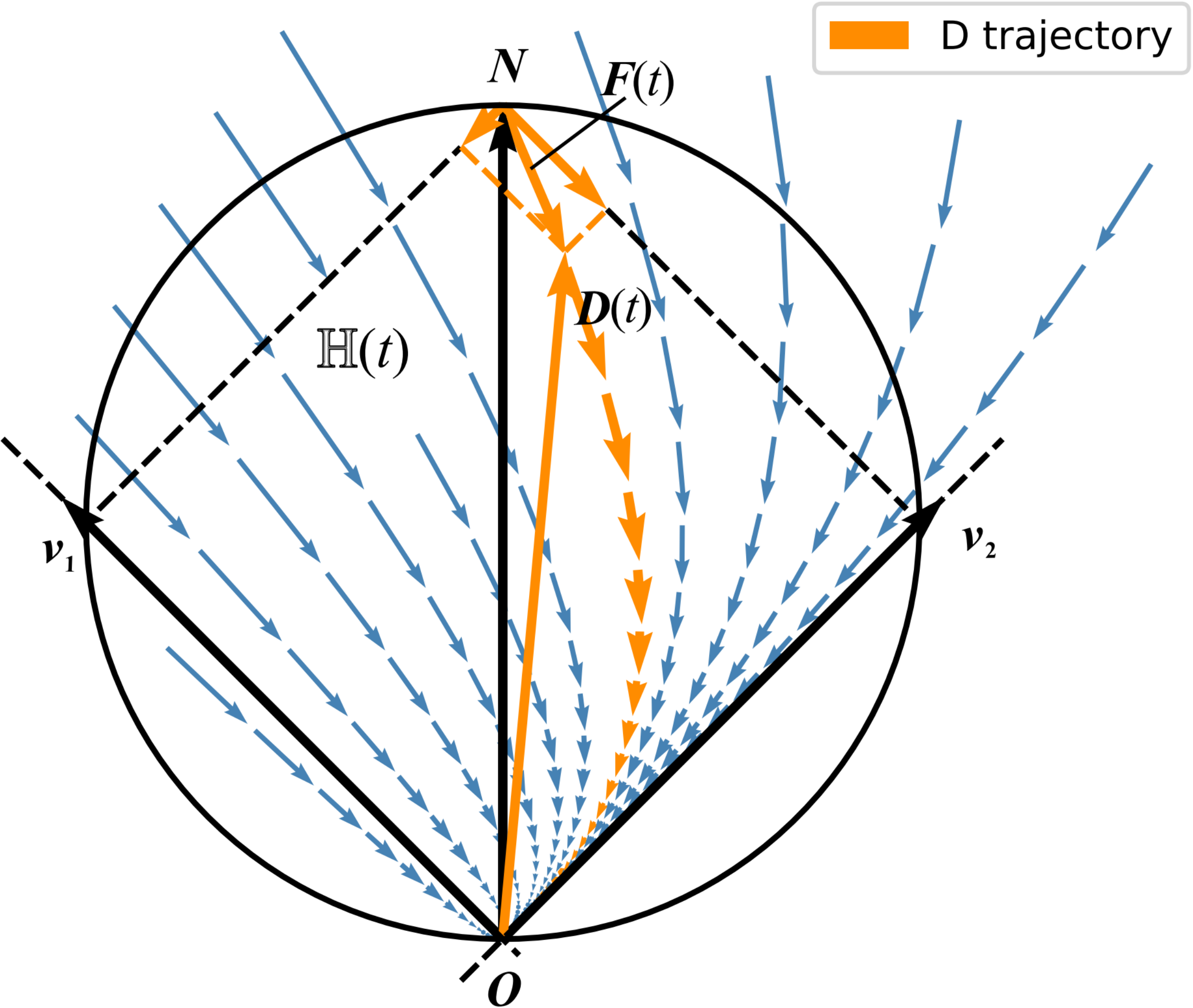}
    }
    \subfigure[The eigenvectors of $\bM(t)$ change]{
    \includegraphics[width=0.48\textwidth]{./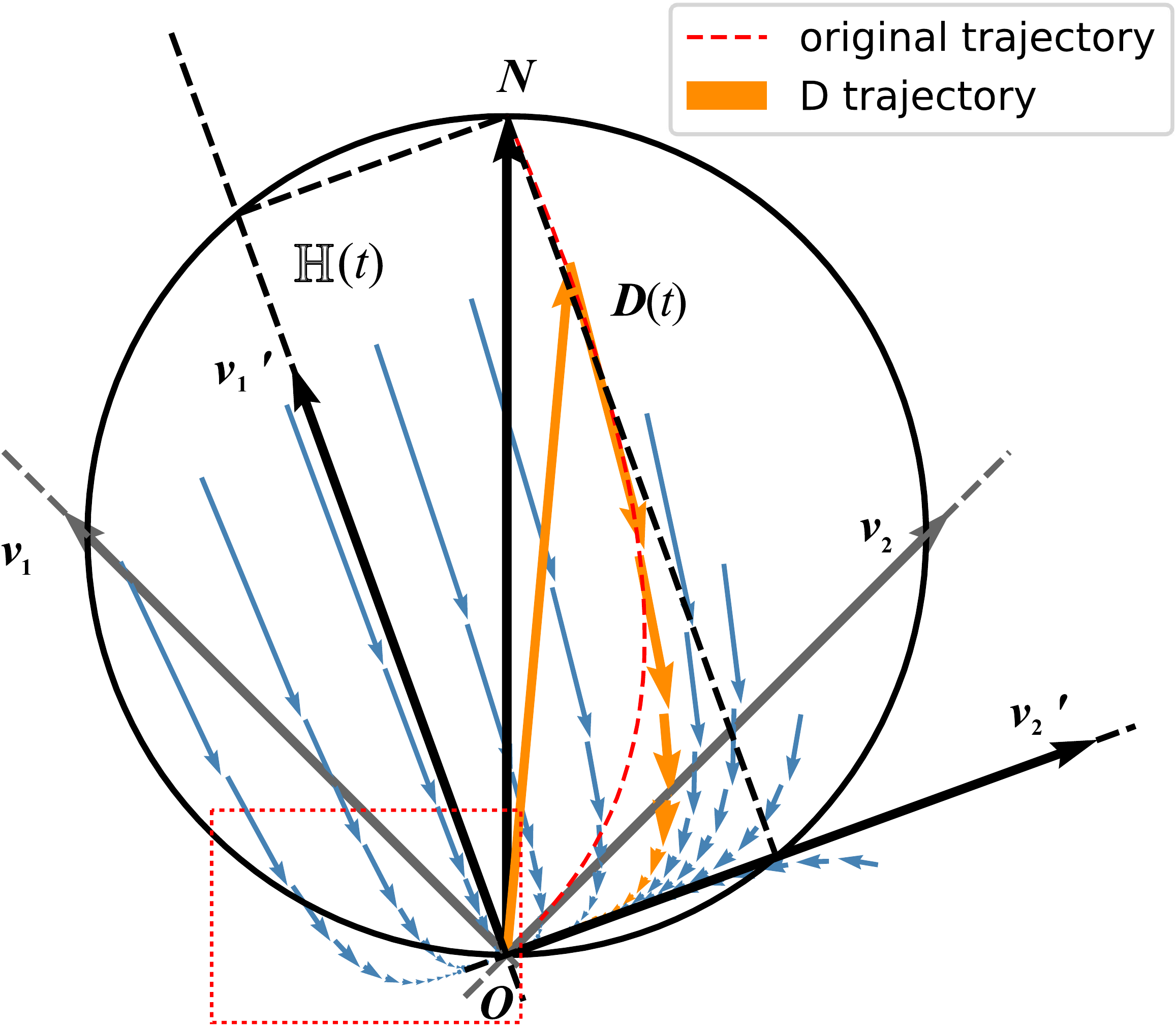}
    }

    \caption{The phase portraits of the dynamics of $\bD(t)$ 
    (projected to the first two principle directions).
    We rotate the coordinate system so that south pole of $\sphere$ is the origin $O$ and the north pole is $-\bY$ which is the initial point $\bD(t)$.
    $\bv_1$ and $\bv_2$ are the first two eigenvectors of $\bM$.
    In (b), $\bv_1$ and $\bv_2$ change directions to $\bv_1'$ and $\bv_2'$.
    The trajectory may go out of $\sphere$ from certain regions (the dashed red box). 
    }
    \label{Figure: PS Dynamics pics}
\end{figure}

%In Lemma~\ref{relaxed ps lemma}, we proved that if the eigenvectors change slowly, then in each direction $\bv_i(t)$, $ \bD(t)^\top\bv_i(t)\bF(t)^\top\bv_i(t) <0$. This lemma geometrically means that for all time $t$ the point $D$ remains in a 
%cubiod formed by projecting $\bY$ onto the eigenvectors $\bv_i(t)$. Therefore, $D$ will remain in the ball since the cubiod is in the ball, indicating that $\bD(t)^\top \bF(t) < 0.$

%\textcolor{red}{At any time $t$, because $\frac{\mathrm d\bm{D}(t)}{\mathrm d t}= - \bm{M}(t)\bm{D}(t)$, the eigenvalues and eigenvectors of $\bM(t)$ describe the dynamics of $\bD(t)$. As shown in the Figure~\ref{Figure: PS Dynamics pics}, the phase portraits demonstrate how $-\bD(t)$ changes at time $t$. From picture (a) in Figure~\ref{Figure: PS Dynamics pics}, we can see that from most places on the ball, $-\bD(t)$ goes in the direction inward the ball at time $t$, which means it can be hard for $-\bD(t)$ to escape the ball.}

%\textcolor{red}{Notice that in the red box in picture (b), the directions point outward the ball. That's when $-\bD(t)$ may go outside the ball. But 
%these places lie near the origin, which means in this dimension $\bD(t)^\top \bv_i(t)$ is near 0, which has a small contribution to $\bD(t)^\top \bF(t) = \sum_i \bD(t)^\top \bv_i(t) \bF(t)^\top \bv_i(t)$. Hence even if a small portion of dimensions of $\bD(t)$ are outside the ball, $\bD(t)^\top \bF(t)$ can still remains negative.}

\subsubsection{Progressive sharpening when eigendirections change slowly}

Suppose at initialization $\bF(0) = 0$. By $\bD(t) = \bF(t) - Y$, we know $\bD(0)= -Y$. With loss of generality, we assume $\bD(0)^\top \bv_i(0) > 0$ at initialization for all $i$ (Otherwise, we define $\bv_i(0) = -\bv_i(0)$ and this condition holds). 

\begin{lemma}
Suppose Assumption~\ref{gf assumption} holds, and the set of (unit) eigenvectors of $\bM(t)$ is $\{\bv_i(t)\}$. Assume at all time $t$ and direction $\bv_i(t)$, the following condition holds:
% $$\max\{\bD(t)^\top \frac{\mathrm d \bv_i(t)}{\mathrm d t}, \bF(t)^\top \frac{\mathrm d \bv_i(t)}{\mathrm d t}\} \leq \lambda_i(t)\epsilon $$
\begin{equation}
 \bF(t)^\top \frac{\mathrm d \bv_i(t)}{\mathrm d t} < \lambda_i(t)\bD(t)^\top \bv_i(t).  \label{C1 condition}
\end{equation}
Then we have $\bD(t)^\top\bF(t) < 0$ for all $t<t_1$, where $t_1$ is the first time when there exists some $i$, $\bD(t)^\top\bv_i(t)\leq 0$.
\label{relaxed ps lemma}
\end{lemma}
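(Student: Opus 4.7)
My plan is to expand both $\bD(t)$ and $\bF(t)$ in the moving orthonormal eigenbasis $\{\bv_i(t)\}$ of $\bM(t)$ and track each coordinate separately. Define $p_i(t):=\bD(t)^\top\bv_i(t)$ and $q_i(t):=\bF(t)^\top\bv_i(t)$; orthonormality of $\{\bv_i(t)\}$ then gives
\[
\bD(t)^\top\bF(t)\;=\;\sum_{i=1}^{n} p_i(t)\,q_i(t),
\]
so it suffices to show that on the open interval $(0,t_1)$ every $p_i(t)$ is strictly positive and every $q_i(t)$ is strictly negative. The positivity $p_i(t)>0$ on $(0,t_1)$ is essentially the definition of $t_1$ together with the WLOG normalization $p_i(0)>0$ stated just before the lemma, so the real content is to prove $q_i(t)<0$ on $(0,t_1)$.

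For this I would use Assumption~\ref{gf assumption} (i.e.\ $\mathrm d\bD/\mathrm d t=-\bM\bD$) and the product rule to compute
\[
\frac{\mathrm d q_i}{\mathrm d t}
\;=\;\Bigl(\frac{\mathrm d\bD}{\mathrm d t}\Bigr)^{\!\top}\!\bv_i\;+\;\bF^\top\frac{\mathrm d\bv_i}{\mathrm d t}
\;=\;-\bD^\top\bM\bv_i\;+\;\bF^\top\frac{\mathrm d\bv_i}{\mathrm d t}
\;=\;-\lambda_i(t)\,p_i(t)\;+\;\bF(t)^\top\frac{\mathrm d\bv_i(t)}{\mathrm d t}.
\]
Hypothesis \eqref{C1 condition} is precisely the statement that the second term is strictly less than $\lambda_i(t)p_i(t)$, so $\mathrm d q_i/\mathrm d t<0$ on $(0,t_1)$. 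Combined with the initial value $q_i(0)=\bF(0)^\top\bv_i(0)=0$ (since $\bF(0)=0$), integration gives $q_i(t)<0$ for all $t\in(0,t_1)$; plugging back into the displayed sum with $p_i(t)>0$ finishes the proof.

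The conceptual hurdle, and the reason Lemma~\ref{ps lemma} does not directly extend, is the rotation term $\bF^\top\dot{\bv}_i$: as the eigenframe turns, projections of $\bD$ get reshuffled across coordinates and could in principle produce positive contributions to $\bD^\top\bF$. Hypothesis \eqref{C1 condition} is exactly the per-direction quantitative bound saying that the rotation rate is dominated by the decay rate $\lambda_i p_i$ produced by the fixed-frame dynamics, so the monotonicity argument of Lemma~\ref{ps lemma} survives this perturbation. A minor technical wrinkle appears at $t=0$, where $\bF(0)=0$ forces $\bF^\top\dot{\bv}_i=0$ and so \eqref{C1 condition} reduces to $\lambda_i(0)\,p_i(0)>0$; this is automatic in nonzero-eigenvalue directions and otherwise easily handled by a right-continuity argument that shifts the analysis to a small $t>0$ after which $q_i$ has already turned strictly negative.
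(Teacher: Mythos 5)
Your proposal is correct and follows essentially the same route as the paper's proof: decompose $\bD^\top\bF$ in the moving eigenbasis, show via Assumption~\ref{gf assumption} and condition~\eqref{C1 condition} that each $\bF(t)^\top\bv_i(t)$ has strictly negative derivative and hence is negative after $t=0$ (since $\bF(0)=0$), while $\bD(t)^\top\bv_i(t)>0$ before $t_1$ by definition of $t_1$ and the sign normalization. Your extra remark about the degenerate behavior of \eqref{C1 condition} at $t=0$ is a small refinement the paper glosses over, but it does not change the argument.
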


In the above lemma, we can see that inequality~\eqref{C1 condition} holds if $\bv_i(t)$ changes relatively slowly. In particular, if $\frac{\mathrm d \bv_i(t)}{\mathrm d t} = 0$ for all $i$, the lemma reduces to Lemma~\ref{ps lemma}. Note that in contrast to the fixed eigenvector assumption, we cannot show that $\bD(t)^\top \bv_i(t)$ decreases in all directions all the time (since $\bM(t)$ may change). 
Instead, we show that $\bD(t)$ is always in the hypercube $\bH(t)$ under the condition of the lemma.
%That means, when the residual component $\bD(t)^\top\bv_i(t)$ is close to zero, it may become negative (switch sign) because of the basis change.

\begin{proof} 
We first notice the decomposition 
$$
\bD(t)^\top\bF(t)= \sum_{i=1}^r\bD(t)^\top\bv_i(t)\bF(t)^\top\bv_i(t).
$$
We show for each $\bv_i(t)$, 
$ \bD(t)^\top\bv_i(t)\bF(t)^\top\bv_i(t) <0$ under the conditions of the lemma.

We first consider the dynamics of inner product $\bD(t)^\top \bv_i(t)$. 
According to Assumption~\ref{gf assumption}, when $t<t_1$, $\bD(t)^\top \bv_i(t)>0$ due to the continuity.
Moreover, by the dynamics of $\bD(t)$ in Assumption~\ref{gf assumption}, 
we can see
\begin{align*}
    \frac{\mathrm d\bD(t)^\top\bm v_i(t)}{\mathrm d t} &= - \bD(t)^\top \bM(t)\bm v_i(t) + \bD(t)^\top \frac{\mathrm d \bv_i(t)}{\mathrm d t} \\&= - \lambda_i(t) \bD(t)^\top\bm v_i(t) +\bD(t)^\top \frac{\mathrm d \bv_i(t)}{\mathrm d t}
\end{align*}

With this dynamics above and $\bF(t) = \bD(t) + \bY$, we can get $\bF$'s dynamics:
\begin{align*}
    \frac{\mathrm d\bF(t)^\top\bm v_i(t)}{\mathrm d t} &= - \bD(t)^\top \bM(t)\bm v_i(t) + \bF(t)^\top \frac{\mathrm d \bv_i(t)}{\mathrm d t} \\&= - \lambda_i(t) \bD(t)^\top\bm v_i(t) +\bF(t)^\top \frac{\mathrm d \bv_i(t)}{\mathrm d t}\tag{D1}\label{dynamics of f}
\end{align*}

With \eqref{dynamics of f} and $\bF(t)^\top \frac{\mathrm d \bv_i(t)}{\mathrm d t} \leq \lambda_i(t)\bD(t)^\top\bm v_i(t)$, we know
\begin{align*}
    \frac{\mathrm d\bF(t)^\top\bm v_i(t)}{\mathrm d t} &= - \lambda_i(t) \bD(t)^\top\bm v_i(t) +\bF(t)^\top \frac{\mathrm d \bv_i(t)}{\mathrm d t} < 0.
\end{align*}

In this way, we know $\bF(t)^\top\bm v_i(t)$ always decreases for $t<t_1$. Recall that $\bF(0) =\bD(0)+Y = 0$, which makes $\bF(0)^\top\bv_i(0) = 0$. So $\bF(t)^\top\bv_i(t)<0$ according to the dynamics.
Geometrically, this implies $\bD(t)$ does not cross the facets of $\bH(t)$ incident on the north pole $N$ 
\footnote{
The hypercube $\bH(t)$ has $2n$ facets, $n$ of them incident on the south pole $O$
and the others the north pole $N$.
}
(otherwise $\bF(t)^\top\bv_i(t)$ needs to change sign). See Figure~\ref{Figure: PS Dynamics pics}).
On the other hand, $\bD(t)^\top\bv_i(t)>0$ for $t<t_1$,
by the definition of $t_1$, 
which is equivalent to say that $\bD(t)^\top\bv_i(t)$ does not change sign or $\bD(t)$ does not cross the facets of $\bH(t)$ incident on the south pole $O$. 

Therefore we have $\bD(t)^\top\bv_i(t)\bF(t)^\top\bv_i(t) <0$ and $\bD(t)\in \bH(t)\subset \sphere$ for all $t<t_1$, where $t_1$ is the first time when $\bD(t_1)^\top \bv_i(t_1)\leq 0.$
Summing over all the components $\bD(t)^\top\bv_i(t)\bF(t)^\top\bv_i(t)$ for all $i$,
we prove the lemma.
\end{proof}

%With Lemma~\ref{relaxed ps lemma}, we directly can have the corollary that if in all directions the condition holds, the norm grows and progressive sharpening occurs.

%\begin{Corollary}
% Suppose \eqref{C1 condition} hold for every direction $\bv_i(t)$. 
%\end{Corollary}

%\begin{proof}
%By Lemma~\ref{relaxed ps lemma}, this corollary is proved by adding up all the %components $\bD(t)^\top\bv_i(t)\bF(t)^\top\bv_i(t)$ for all $i$.
%\end{proof}

\subsubsection{Empirical verification of inequality~\eqref{C1 condition}}
\begin{figure}[H]
    \vspace{-0.5cm}
    \centering
    \includegraphics[width=0.48\textwidth]{./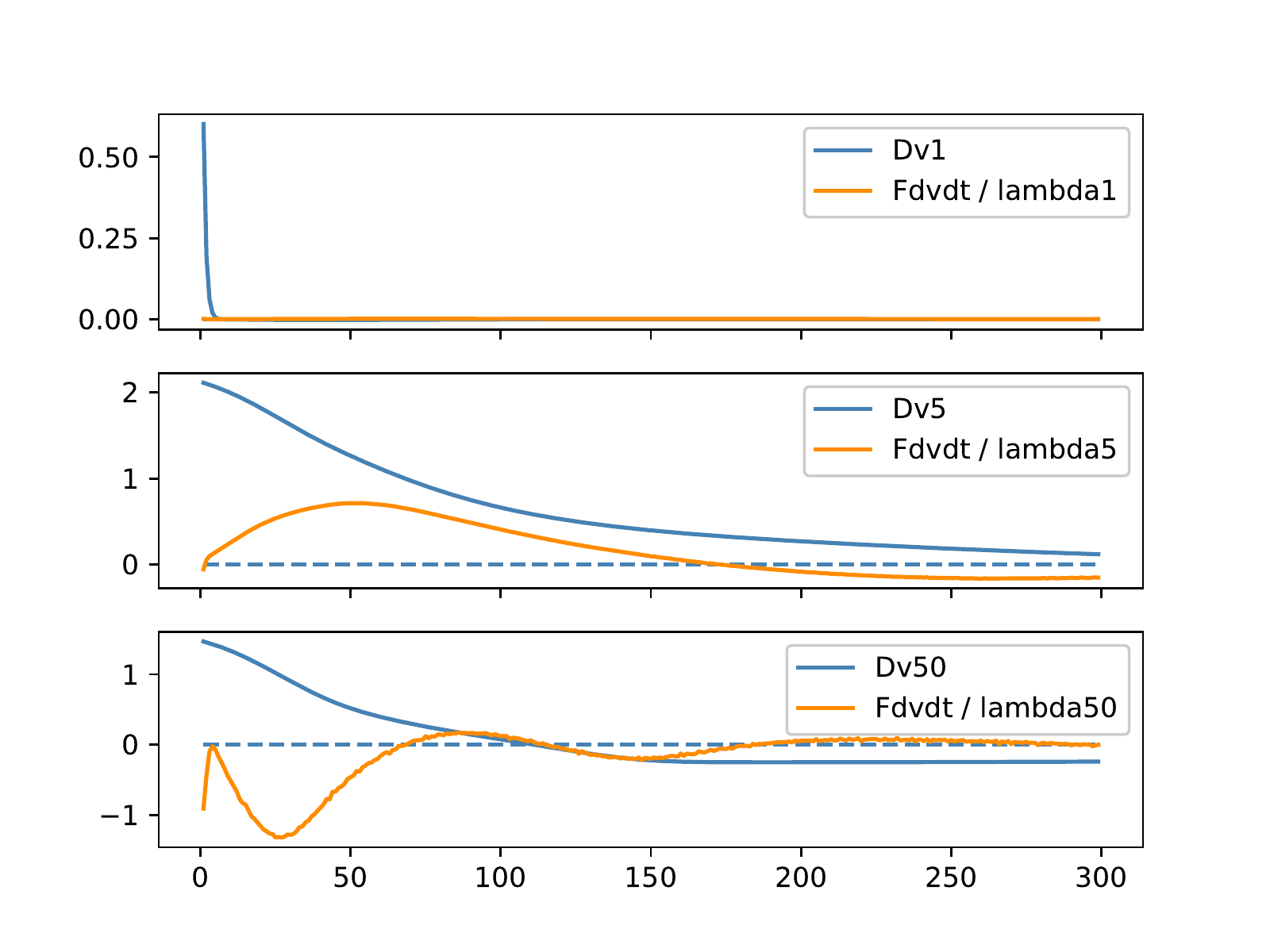}
    \caption{Verification for inequality~\eqref{C1 condition} when $i=1,5,50$.}
    \label{Angular velocity Pics}
\end{figure}

We verify the condition \eqref{C1 condition} in Lemma~\ref{relaxed ps lemma} 
via experiments. See Figure~\ref{Angular velocity Pics}.
We can see that \eqref{C1 condition} holds for $i=1,5,50$ during the first hundred iterations of the training process. 
However, for very large $i$, the eigenvectors corresponding to very small eigenvalues may change quickly
and \eqref{C1 condition} does not hold. The contributions (i.e., 
$\bD(t)^\top\bv_i(t)\bF(t)^\top\bv_i(t)$) from those 
corresponding to very small eigenvalues tend to cancel each other empirically, hence does not affect the sign of the total
sum $\sum_{i=1}^r\bD(t)^\top\bv_i(t)\bF(t)^\top\bv_i(t)$. Hence, it is possible to further relax the condition \eqref{C1 condition} and we leave it as a future direction.

\subsection{Proofs of equations in Appendix~\ref{appen c}}
The proof of some equations in Appendix~\ref{appen c} is omitted, and we list them in this subsection.
\begin{lemma}
    (Lemma~\ref{update rule of d appenc}) The update rule of the residual vector of $\bD(t)$:
   \begin{equation}
        \bD(t+1)^\top-\bD(t)^\top=-\eta\bD(t)^\top \bM(t)+\frac{4\eta^2}{n^2m}\bD(t)^\top(\bF(t)^\top\bD(t))\bX^\top \bX
    \end{equation}
    \label{update rule of d appenE}
\end{lemma}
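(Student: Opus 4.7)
The plan is to derive the update rule by direct algebraic expansion of $\bF(t+1)=\frac{1}{\sqrt m}\bX^\top \bW(t+1)^\top \bA(t+1)$ using the gradient descent updates \eqref{C1} for $\bA$ and $\bW$, and then rewriting the result in terms of $\bM(t)$ from \eqref{C3}. Since $\bD(t+1)-\bD(t)=\bF(t+1)-\bF(t)$ (the label $\bY$ cancels), proving the lemma reduces to computing $\bF(t+1)-\bF(t)$ and taking the transpose.

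First, I would write $\bA(t+1)=\bA(t)+\Delta\bA$ and $\bW(t+1)=\bW(t)+\Delta\bW$ with the explicit expressions
$\Delta\bA=-\tfrac{2\eta}{n\sqrt m}\bW(t)\bX\bD(t)$ and $\Delta\bW=-\tfrac{2\eta}{n\sqrt m}\bA(t)\bD(t)^\top\bX^\top$ from \eqref{C1}, and expand the product $\bW(t+1)^\top \bA(t+1)$ into four pieces: the zeroth order term $\bW(t)^\top \bA(t)$, the two first-order cross terms $\bW(t)^\top\Delta\bA$ and $\Delta\bW^\top\bA(t)$, and the second-order term $\Delta\bW^\top\Delta\bA$. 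Multiplying by $\tfrac{1}{\sqrt m}\bX^\top$, the zeroth order term recovers $\bF(t)$ and cancels.

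Next, I would simplify the first-order contributions. The term $\tfrac{1}{\sqrt m}\bX^\top\bW(t)^\top\Delta\bA$ becomes $-\tfrac{2\eta}{nm}\bX^\top\bW(t)^\top\bW(t)\bX\bD(t)$, and the term $\tfrac{1}{\sqrt m}\bX^\top\Delta\bW^\top\bA(t)$ becomes $-\tfrac{2\eta}{nm}\|\bA(t)\|^2\bX^\top\bX\bD(t)$ after using $\bA(t)^\top\bA(t)=\|\bA(t)\|^2$. Adding these and invoking the definition $\bM(t)=\tfrac{2}{mn}(\|\bA(t)\|^2\bX^\top\bX+\bX^\top\bW(t)^\top\bW(t)\bX)$ from \eqref{C3} collapses the first-order part to exactly $-\eta\bM(t)\bD(t)$.

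Finally, I would handle the second-order term, which is the only slightly delicate step. Substituting both updates yields $\tfrac{4\eta^2}{n^2 m\sqrt m}\bX^\top\bX\bD(t)\,\bigl(\bA(t)^\top\bW(t)\bX\bD(t)\bigr)$. The key observation is that the parenthesized factor is a scalar, and $\bA(t)^\top\bW(t)\bX=\sqrt m\,\bF(t)^\top$ by definition of $\bF$, so this factor equals $\sqrt m\,\bF(t)^\top\bD(t)$, exactly cancelling the extra $\sqrt m$ in the denominator. This produces $\tfrac{4\eta^2}{n^2 m}(\bF(t)^\top\bD(t))\bX^\top\bX\bD(t)$. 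Transposing the column-vector identity and using symmetry of $\bM(t)$ and $\bX^\top\bX$ together with the scalarity of $\bF(t)^\top\bD(t)$ yields the stated row-vector formula. There is no real conceptual obstacle; the main thing to be careful about is the bookkeeping of the scalar $\bA(t)^\top\bW(t)\bX\bD(t)=\sqrt m\,\bF(t)^\top\bD(t)$ in the second-order term, which is where the prefactor $\tfrac{4\eta^2}{n^2 m}$ (rather than $\tfrac{4\eta^2}{n^2 m^{3/2}}$ or similar) comes from.
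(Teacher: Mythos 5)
Your proposal is correct and follows essentially the same route as the paper's proof: a direct expansion of the GD updates \eqref{C1} into zeroth-, first-, and second-order terms, collapsing the first-order part to $-\eta\bM(t)\bD(t)$ via \eqref{C3} and identifying the scalar $\bA(t)^\top\bW(t)\bX\bD(t)=\sqrt m\,\bF(t)^\top\bD(t)$ in the second-order term. The only cosmetic difference is that you work with column vectors and transpose at the end, while the paper carries out the same computation directly in row-vector form.
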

 
 \begin{proof}
        \begin{align*}
        \bD(t+1)^\top-\bD(t)^\top={}&\frac{1}{\sqrt m}(\bA(t+1)^\top\bW(t+1)-\bA(t)^\top\bW(t))\bX\\
        ={}&\frac{1}{\sqrt m}\left((\bA(t+1)^\top-\bA(t)^\top)\bW(t+1)+\bA(t)^\top(\bW(t+1)-\bW(t))\right)\bX\\
        ={}&\frac{1}{\sqrt m}\left((\bA(t+1)^\top-\bA(t)^\top)\bW(t)+\bA(t)^\top(\bW(t+1)-\bW(t))\right)\bX\\
        &+\frac{1}{\sqrt m}(\bA(t+1)^\top-\bA(t)^\top)(\bW(t+1)-\bW(t))\bX\\
        ={}&\frac{2\eta}{n m}\left((-\bm W (t)\bX \bD(t))^\top\bW(t)-\bA(t)^\top\bA(t) \bD(t)^\top\bX^\top\right)\bX\\
        &+\frac{1}{\sqrt m}(\frac{2\eta}{n\sqrt m}\bm W (t)\bX \bD(t))^\top\frac{2\eta}{n\sqrt m}\bA(t) \bD(t)^\top\bX^\top\bX\\
        ={}&-\bD(t)^\top \frac{2\eta}{nm}(\bX^\top\bW(t)^\top \bW(t)\bX+\|\bA(t)\|^2\bX^\top \bX)\\
         &+\frac{4\eta^2}{n^2m}\bD(t)^\top(\bF(t)^\top\bD(t))\bX^\top \bX\\
        ={}&-\eta\bD(t)^\top \bM(t) + \frac{4\eta^2}{n^2m}\bD(t)^\top(\bF(t)^\top\bD(t))\bX^\top \bX
        \end{align*}
 \end{proof}
        
\begin{lemma} (Lemma~\ref{update rule of M}) The update rule of the Gram matrix $\bM(t)$ is,
    $$\begin{aligned}
  \bM(t+1) - \bM(t)  
=&-\frac{4\eta}{n^{2}m} \left( 2(\bF(t)^{\top} \bD(t)) \bX^\top \bX+\bF(t) \bD(t)^{\top}\bX^\top \bX +\bX^\top \bX\bD(t)\bF(t)^\top \right) \\
&+\frac{8\eta^2}{n^3m^2} (\bD(t)^{\top} \bX^{\top} \bm W(t)^{\top}\bm W(t) \bX \bD(t)) \bX^\top \bX\\ &+\frac{8\eta^2}{n^3m^2}\|\bA(t)\|^{2}\ \bX^\top\bX \bD(t)\bD(t)^\top\bX^\top\bX
\end{aligned}$$
\label{update rule of M appenE}
    \end{lemma}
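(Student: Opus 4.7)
The statement is a direct algebraic identity obtained by substituting the GD update rules for $\bA$ and $\bW$ into the definition of the Gram matrix. Recalling $\bM(t) = \frac{2}{mn}\bigl(\|\bA(t)\|^2 \bX^\top\bX + \bX^\top\bW(t)^\top\bW(t)\bX\bigr)$, I would split the desired difference as
\[
\bM(t+1) - \bM(t) = \frac{2}{mn}\bigl(\|\bA(t+1)\|^2 - \|\bA(t)\|^2\bigr)\bX^\top\bX + \frac{2}{mn}\bX^\top\bigl(\bW(t+1)^\top\bW(t+1) - \bW(t)^\top\bW(t)\bigr)\bX,
\]
and handle the two summands separately.

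For the first summand, I would invoke equation~\eqref{C2}, which already gives
$
\|\bA(t+1)\|^2 - \|\bA(t)\|^2 = -\tfrac{4\eta}{n}\bF(t)^\top\bD(t) + \eta^2\|\partial\mathcal{L}/\partial\bA\|^2.
$
Since $\partial\mathcal{L}/\partial\bA = \tfrac{2}{n\sqrt m}\bW(t)\bX\bD(t)$, the second-order piece equals $\tfrac{4\eta^2}{n^2 m}\bD(t)^\top\bX^\top\bW(t)^\top\bW(t)\bX\bD(t)$. Multiplying by $\tfrac{2}{mn}\bX^\top\bX$ produces precisely the scalar-coefficient terms $-\tfrac{8\eta}{n^2 m}(\bF^\top\bD)\bX^\top\bX$ and $\tfrac{8\eta^2}{n^3 m^2}(\bD^\top\bX^\top\bW^\top\bW\bX\bD)\bX^\top\bX$, i.e.\ the first half of the ``$2(\bF^\top\bD)\bX^\top\bX$'' coefficient and the middle term of the claimed formula.

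For the second summand I would write $\Delta\bW := \bW(t+1) - \bW(t) = -\tfrac{2\eta}{n\sqrt m}\bA(t)\bD(t)^\top\bX^\top$ from~\eqref{C1} and expand
\[
\bW(t+1)^\top\bW(t+1) - \bW(t)^\top\bW(t) = \Delta\bW^\top\bW(t) + \bW(t)^\top\Delta\bW + \Delta\bW^\top\Delta\bW.
\]
Conjugating by $\bX^\top(\cdot)\bX$ and substituting the explicit form of $\Delta\bW$, the key simplification is the identity $\bA(t)^\top\bW(t)\bX = \sqrt m\, \bF(t)^\top$ (which follows from the model definition $f(\bx) = \tfrac{1}{\sqrt m}\bA^\top\bW\bx$). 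This converts the two cross terms into $-\tfrac{2\eta}{n}\bigl(\bX^\top\bX\bD\bF^\top + \bF\bD^\top\bX^\top\bX\bigr)$ and the quadratic term $\Delta\bW^\top\Delta\bW$ into $\tfrac{4\eta^2}{n^2 m}\|\bA\|^2\,\bX^\top\bX\bD\bD^\top\bX^\top\bX$. Multiplying by $\tfrac{2}{mn}$ yields $-\tfrac{4\eta}{n^2 m}(\bX^\top\bX\bD\bF^\top + \bF\bD^\top\bX^\top\bX)$ and $\tfrac{8\eta^2}{n^3 m^2}\|\bA\|^2\bX^\top\bX\bD\bD^\top\bX^\top\bX$, which are exactly the remaining terms on the right-hand side of the claim.

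Combining the two summands recovers the stated formula term by term, so there is essentially no conceptual obstacle; the proof is pure bookkeeping. The only step that requires a bit of care is the cross-term simplification via $\bA^\top\bW\bX = \sqrt m\,\bF^\top$ and making sure the factors of $\tfrac{1}{\sqrt m}$, $\tfrac{1}{n}$, and $\eta$ are tracked consistently when passing from $\Delta\bA,\Delta\bW$ to the final expressions in $\bD$ and $\bF$. I would organize the write-up as two short displays (one per summand) followed by a one-line addition, mirroring the computation already carried out in Lemma~\ref{update rule of d appenc}.
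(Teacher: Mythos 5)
Your proposal is correct and follows essentially the same route as the paper's proof: split $\bM(t+1)-\bM(t)$ into the $\|\bA\|^2$-change piece handled via equation~\eqref{C2} and the $\bW^\top\bW$-change piece expanded as $\Delta\bW^\top\bW+\bW^\top\Delta\bW+\Delta\bW^\top\Delta\bW$ via equation~\eqref{C1}, then simplify the cross terms using $\bA^\top\bW\bX=\sqrt m\,\bF^\top$. The bookkeeping of the $\tfrac{1}{\sqrt m}$, $\tfrac{1}{n}$, and $\eta$ factors in your sketch checks out and reproduces the stated formula term by term.
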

    
    \begin{proof} By the update rule of $\bA(t)$ and $\bW(t)$ in equation~(\ref{C1}) and (\ref{C2}), we have
    \begin{align*}
        &\bM(t+1)-\bM(t)\\ ={}&\frac{2}{nm}\left((\|\bA(t+1)\|^2-\|\bA(t)\|^2)\bX^\top \bX+\ \bX^\top (\bm W(t+1)^{\top}\bm W(t+1)-\bm W(t)^{\top}\bm W(t))
        \bX\right)\tag{\text{Use equation~(\ref{C2})}}\\
        ={}&\frac{2}{nm}(-\frac{4\eta}{n}\bD(t)^\top\bF(t) +\eta^2 \|\frac{\partial \mathcal{L}}{\partial \bA}\|^2)\bX^\top \bX \\
        &{}+ \frac{2}{nm}\left(\bX^\top ((\bm W(t+1)^{\top}-\bm W(t)^{\top} )\bm W(t) +\bW(t)^\top (\bm W(t+1)-\bm W(t)))
        \bX\right.\\
        &\left.+\ \bX^\top (\bm W(t+1)^{\top}-\bm W(t)^{\top} )(\bm W(t+1)-\bm W(t))\bX\right) \tag{\text{Use equation~(\ref{C1})}}\\
        ={}&-\frac{8\eta}{n^2m}\bD(t)^\top\bF(t)\bX^\top\bX+\frac{8}{n^3m^2}\eta^2(\bD(t)^{\top} \bX^{\top} \bm W(t)^{\top}\bm W(t) \bX \bD(t))\bX^\top\bX\\
        &-\frac{4\eta}{n^2m}\left(\bF(t)\bD(t)^\top\bX^\top\bX +\bX^\top\bX\bD(t)\bF(t)^\top \right)\\
        &+\frac{8\eta^2}{n^3m^2}\|\bA(t)\|^2\bX^\top\bX\bD(t)\bD(t)^\top\bX^\top\bX
    \end{align*}
    Rearrange the terms and we complete the proof.
    \end{proof}

\begin{lemma}\label{colspace}
   For all iteration $t$, if any vector $\bm u\in \mathbb{R}^n$ satisfies $\bX^\top \bX\bm u =0$, then $$\bY^\top\bm u=\bD(t)^\top\bm u=0$$
   \end{lemma}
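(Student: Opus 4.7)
The plan is to reduce everything to the observation that $\bX^\top \bX \bm u = 0$ is equivalent to $\bX \bm u = 0$, and then exploit the fact that both $\bY$ and $\bF(t)$ live in the column space of $\bX^\top$.

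First I would note that $\bX^\top \bX \bm u = 0$ implies $\|\bX \bm u\|^2 = \bm u^\top \bX^\top \bX \bm u = 0$, so $\bX \bm u = \bm 0$. This is the only nontrivial algebraic manipulation in the proof; everything else is a direct substitution.

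Next, for the statement $\bY^\top \bm u = 0$, I would invoke the assumption stated in the model/dataset setup, namely that there exist parameters $\bA^*, \bW^*$ realising $\bY^\top = \bA^{*\top} \bm W^* \bX$. Substituting $\bX \bm u = \bm 0$ gives $\bY^\top \bm u = \bA^{*\top} \bm W^* (\bX \bm u) = 0$. For the residual, I would write out the output vector as $\bF(t) = \frac{1}{\sqrt m} \bX^\top \bW(t)^\top \bA(t)$, which follows entry-wise from the definition $f(\bm\theta,\bx_i) = \frac{1}{\sqrt m}\bA^\top \bW \bx_i$. Then $\bF(t)^\top \bm u = \frac{1}{\sqrt m} \bA(t)^\top \bW(t)\,(\bX \bm u) = 0$ for every $t$, so $\bD(t)^\top \bm u = \bF(t)^\top \bm u - \bY^\top \bm u = 0$.

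The main (and only) subtlety is the logical dependence: the conclusion $\bD(t)^\top \bm u = 0$ holds uniformly in $t$ simply because the identity $\bF(t) \in \operatorname{col}(\bX^\top)$ is preserved by the parametric form of $f$ — no induction on the GD trajectory is required, and the result is in fact stronger than the commonly-stated fact that GD iterates stay in the column space of $\bX^\top \bX$. I would mention this explicitly so the reader sees that the lemma is really a property of the architecture together with the existence of $(\bA^*, \bW^*)$, not of the optimisation dynamics.
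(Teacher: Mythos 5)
Your proof is correct, and it takes a genuinely different (and shorter) route than the paper's. Both arguments share the opening step $\bX^\top\bX\bm u=0\Rightarrow\|\bX\bm u\|^2=\bm u^\top\bX^\top\bX\bm u=0\Rightarrow\bX\bm u=0$ and both handle $\bY^\top\bm u=0$ via the realizability assumption $\bY^\top=\bA^{*\top}\bm W^*\bX$. The difference is in how the residual is treated: the paper argues by induction along the GD trajectory, showing $\bM^*(t)\bm u=0$ (since both $\bM(t)\bm u$ and the second-order correction annihilate $\bm u$ once $\bX\bm u=0$) and then propagating $\bD(t+1)^\top\bm u=\bD(t)^\top(\bm I-\eta\bM^*(t))\bm u=\bD(t)^\top\bm u=\cdots=\bD(0)^\top\bm u=0$ from the symmetric initialization $\bD(0)=-\bY$. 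You instead observe that $\bF(t)=\frac{1}{\sqrt m}\bX^\top\bW(t)^\top\bA(t)$ lies in $\operatorname{col}(\bX^\top)$ for \emph{any} weights, so $\bF(t)^\top\bm u=\frac{1}{\sqrt m}\bA(t)^\top\bW(t)(\bX\bm u)=0$ and hence $\bD(t)^\top\bm u=\bF(t)^\top\bm u-\bY^\top\bm u=0$, with no induction and no reference to the update rule. Your version is dynamics-agnostic and proves the slightly stronger fact that the orthogonality holds for arbitrary parameter sequences, not just GD iterates; the paper's version, while longer, explicitly exhibits the invariance through the recursion and verifies $\bM^*(t)\bm u=0$, which is the form of the statement ("GD only travels in the column space of $\bX^\top\bX$") that the rest of their analysis appeals to. Either proof suffices for the lemma as stated.
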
  
    \begin{proof}
    If $\bX^\top \bX\bm u =0$, then 
    
    $$ \bm u^\top \bX^\top \bX\bm u =0 \Rightarrow \bX \bm u = 0.$$
    
    So we have 
    $$
    \bY^\top\bm u = -\bD(0)^\top\bm u = \bA^{*\top}\bW^*\bX\bm u = 0
    $$
    
    With the dynamics of $\bD(t)$, we have
    $$
    \bD(t+1)^\top\bm u = \bD(t)^\top(\bm I_n-\eta \bM^*(t))\bm u
    $$
    While 
    \begin{align*}
       \bM^*(t)\bm u &= \bM(t) \bm u - \frac{4\eta}{n^2m}(\bD(t)^\top\bF (t))\bX^\top \bX \bm u = \bM(t)\bm u \\
        &= \frac{2}{mn}(\|\bA(t)\|^2\bX^\top \bX +\bX^\top \bW^\top (t) \bW(t)\bX)\bm u = 0
    \end{align*}
    
    Thus we have $$\bD(t+1)^\top\bm u = \bD(t)^\top(\bm I_n-\eta \bM^*(t))\bm u  =\bD(t)^\top \bm u =...= \bD(0)^\top \bm u = 0$$
    \end{proof} 

\begin{lemma}
\label{calculations for equations}

(Lemma~\ref{Key equation lemma}) Here we give the proof of the equation~\eqref{Key equation in EOS}:
$$
\begin{aligned}
\sha^*(t+1)-&\sha^*(t) 
=- \frac{8\eta \lambda_{1}}{mn^2} \left(\bF(t)^{\top} \bD(t)+(\bF(t)^\top \bv_1)(\bD(t)^\top \bv_1)-\frac{\eta}{2}(\bD(t)^\top \bv_1)^{2}  \sha^*(t)  \right.\\
&\left.-{}\frac{\eta}{2}\bR(t)^\top \bm\Gamma(t) \bR(t)  -\ \eta \bR(t)^\top \bm\Gamma(t)\left(\bv_1 \bv_1^\top\bD(t)\right)-\frac{\eta}{m n}\bR(t)^{\top}\left(\frac{m}{d} \bX^{\top} \bX\right) \bR(t)  \right).
\end{aligned}
$$
\end{lemma}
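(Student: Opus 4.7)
The plan is to start from the definition $\sha^*(t+1)-\sha^*(t) = \bv_1^\top(\bM(t+1)-\bM(t))\bv_1$, substitute the explicit update rule for $\bM$ given in Lemma~\ref{update rule of M} (equivalently Lemma~\ref{update rule of M appenE}), and simplify by projecting onto $\bv_1$. Since $\bv_1$ is the top eigenvector of $\bX^\top\bX$ with eigenvalue $\lambda_1$, the first-order terms (those with coefficient $\eta/(n^2m)$) immediately collapse: $\bv_1^\top\bX^\top\bX\bv_1=\lambda_1$, and $\bv_1^\top(\bF\bD^\top\bX^\top\bX)\bv_1 = \lambda_1(\bv_1^\top\bF)(\bv_1^\top\bD)$, etc. Combining them produces exactly $-\tfrac{8\eta\lambda_1}{mn^2}\bigl(\bF^\top\bD + (\bv_1^\top\bF)(\bv_1^\top\bD)\bigr)$, which accounts for the first two terms of the target formula.

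For the second-order terms (those with coefficient $\eta^2/(n^3m^2)$), I will decompose $\bD = \bR + (\bv_1^\top\bD)\bv_1$ with $\bR\perp\bv_1$, and rewrite $\bX^\top\bm W^\top\bm W\bX$ in terms of $\bm\Gamma(t)$ via the definition in~\eqref{C4}, namely $\tfrac{1}{mn}\bX^\top\bm W^\top\bm W\bX = \tfrac12\bm\Gamma(t)+\tfrac{1}{nd}\bX^\top\bX$. Expanding $\bD^\top\bX^\top\bm W^\top\bm W\bX\bD$ under this decomposition yields four groups of terms: a pure $\bR$-quadratic in $\bm\Gamma$, a cross term $\bR^\top\bm\Gamma(\bv_1\bv_1^\top\bD)$ (with a factor of two from symmetry), a $(\bv_1^\top\bD)^2$ piece containing $\bv_1^\top\bm\Gamma\bv_1$, and analogous pieces with $\bX^\top\bX$ in place of $\bm\Gamma$. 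Crucially, the cross terms between $\bR$ and $\bv_1$ in $\bR^\top\bX^\top\bX(\bv_1\bv_1^\top\bD)$ vanish because $\bX^\top\bX\bv_1=\lambda_1\bv_1$ and $\bR^\top\bv_1=0$.

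The main (though purely algebraic) step will be to show that the $(\bv_1^\top\bD)^2$ coefficient reassembles into $\tfrac12\sha^*(t)$. Here I use
\[
\tfrac12\bv_1^\top\bm\Gamma\bv_1+\tfrac{1}{nd}\lambda_1+\tfrac{1}{mn}\|\bA\|^2\lambda_1 \;=\; \tfrac{1}{mn}\bv_1^\top\bX^\top\bm W^\top\bm W\bX\bv_1+\tfrac{1}{mn}\|\bA\|^2\lambda_1 \;=\; \tfrac12\sha^*(t),
\]
where the first equality uses $\bv_1^\top\bm\Gamma\bv_1=\tfrac{2}{mn}\bv_1^\top\bX^\top\bm W^\top\bm W\bX\bv_1-\tfrac{2}{nd}\lambda_1$ and the second is just the definition $\sha^*(t)=\bv_1^\top\bM(t)\bv_1$. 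The remaining $\bR$-quadratic pieces collect into $\tfrac12\bR^\top\bm\Gamma\bR$, $\bR^\top\bm\Gamma(\bv_1\bv_1^\top\bD)$, and $\tfrac{1}{mn}\bR^\top(\tfrac{m}{d}\bX^\top\bX)\bR$, exactly matching the remaining three terms of the target identity once the overall prefactor $-\tfrac{8\eta\lambda_1}{mn^2}$ is pulled out.

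The main obstacle is not conceptual but bookkeeping: keeping track of the symmetry factors (the cross terms appear twice in the expansion of $\bD^\top\bm\Gamma\bD$) and the rescaling $\tfrac{1}{n^3m^2}\to\tfrac{1}{mn^2}$ after factoring the common $-\tfrac{8\eta\lambda_1}{mn^2}$, which leaves a relative factor $\tfrac{\eta}{mn}$ that must be absorbed correctly into each of the second-order terms. I expect no auxiliary assumption beyond the definitions already in place; the identity is an exact algebraic rewriting of the update rule.
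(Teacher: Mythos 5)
Your proposal is correct and follows essentially the same route as the paper's own proof: project the update rule of $\bM(t)$ (Lemma~\ref{update rule of M}) onto $\bv_1$, rewrite $\bX^\top\bm W^\top\bm W\bX$ via $\bm\Gamma(t)$ and the initialization, decompose $\bD(t)=\bR(t)+(\bv_1^\top\bD(t))\bv_1$, and recognize the $(\bD(t)^\top\bv_1)^2$ coefficient as $\tfrac{\eta}{2}\sha^*(t)$ through the identity $\lambda_1\bigl(\tfrac{m}{d}+\|\bA\|^2\bigr)=\tfrac{mn}{2}\bigl(\sha^*(t)-\bv_1^\top\bm\Gamma(t)\bv_1\bigr)$. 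The paper merely performs the regrouping in a slightly different order (keeping $\bD^\top\bm\Gamma\bD$ intact and then using $\bv_1^\top\bm\Gamma\bv_1(\bD^\top\bv_1)^2-\bD^\top\bm\Gamma\bD=-\bR^\top\bm\Gamma\bR-2\bR^\top\bm\Gamma(\bv_1\bv_1^\top\bD)$), which is the same bookkeeping you describe.
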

\begin{proof}

First, since $\bm W^\top \bm W$ is initialized as $\frac{m}{d}\bX^\top\bX$, by definition, we have

$\bX^\top \bm W^\top \bm W\bX = \frac{mn}{2}\bm\Gamma(t) + \frac{m}{d} \bX^\top \bX$.

Then 
$$
\begin{aligned}
&\sha^*(t+1) - \sha^*(t)\\
=& \bv_1^{\top} \bM(t+1) \bv_1-\bv_1^{\top} \bM(t) \bv_1 \\
=& \bv_1^{\top}(\bM(t+1)-\bM(t)) \bv_1 \\
=&-\frac{8\eta}{n^{2}m} \left(\lambda_{1} \bF(t)^{\top} \bD(t)+\lambda_{1}(\bF(t)^{\top} \bv_1)(\bD(t)^{\top} \bv_1)\right) \\
&+\frac{8\eta^{2}}{n^{3}m^2} \left(\lambda_{1} \bD(t)^{\top} \bX^{\top} \bm W^{\top}\bm W \bX \bD(t)+\|\bA\|^{2} \lambda_{1}^{2}(\bD(t)^{\top} \bv_1)^{2}\right)
\end{aligned}
$$

Because 
$$
\begin{aligned}
&\bD(t) ^{\top}\bX^{\top} \bm W^{\top} \bm W \bX \bD(t) \\={}&\bD(t)^{\top}\left(\frac{m}{d} \bX^{\top} \bX+\frac{m n}{2} \bm\Gamma(t)\right) \bD(t) \\
={}&\frac{m}{d} \bD(t)^{\top}\left(\bv_1 \bv_1^{\top}+\bm I-\bv_1 \bv_1^{\top}\right) \bX^{\top} \bX\left(\bD(t)^{\top}\left(\bv_1 \bv_1^{\top}+\bm I-\bv_1 \bv_1^{\top}\right)\right)^{\top}\\&+\frac{m n}{2}\bD(t)^{\top} \bm\Gamma(t) \bD(t)   \\
={}&\frac{m}{d} \lambda_{1}(\bD(t) ^{\top}\bv_1)^{2}+ \frac{m}{d} \bR(t)^{\top} \bX^{\top} \bX \bR(t)+\frac{m n}{2}\bD(t)^{\top} \bm\Gamma(t) \bD(t) 
\end{aligned}
$$
and $$
\lambda_{1}\left(\frac{m}{d}+\|\bA\|^{2}\right)=\frac{m n}{2}\bv_1^{\top}(\bM(t)-\bm\Gamma(t)) \bv_1  =\frac{m n}{2}  \sha^*(t)-\frac{m n}{2} \bv_1^{\top}\bm\Gamma(t) \bv_1
$$
we can see that 
$$
\begin{aligned}
& \sha^*(t+1) - \sha^*(t) \\
=&- \frac{8\eta \lambda_{1}}{n^{2}m}\left(\bF(t)^{\top} \bD(t)+(\bF(t)^{\top} \bv_1)(\bD(t)^{\top} \bv_1)-\frac{\eta}{m n}(\bD(t)^{\top} \bv_1)^{2} \left(\frac{m}{d}+\|\bA\|^{2}\right)\lambda_{1} \cdot   \right.\\
&\left.-\frac{\eta}{m n}\cdot\frac{m}{d} \bR(t)^{\top} \bX^{\top} \bX \bR(t)  -\frac{\eta}{2}\bD(t)^{\top} \bm\Gamma(t) \bD(t)  \right) \\
=&-\frac{8\eta \lambda_{1}}{mn^2}\left(\bF(t)^{\top} \bD(t)+(\bF(t)^{\top} \bv_1)(\bD(t)^{\top} \bv_1)-\frac{\eta}{2}(\bD(t)^{\top} \bv_1)^{2}  \sha^*(t)  \right.\\
&+\frac{\eta}{2}\left(\bv_1^{\top} \bm\Gamma(t) \bv_1 \cdot(\bD(t) ^{\top}\bv_1)^{2}-\bD(t)^{\top} \bm\Gamma(t) \bD(t)\right)  -\left.\bR(t)^{\top}\left(\frac{m}{d} \bX^{\top} \bX\right) \bR(t)\right)
\end{aligned}
$$

Note that 
$$
\begin{aligned}
& \bv_1 \bm\Gamma(t) \bv_1^{\top}(\bD(t)^{\top} \bv_1)^{2}-\bD(t)^{\top} \bm\Gamma(t) \bD(t) \\
=& \bD(t)^{\top} \bv_1 \bv_1^{\top} \bm\Gamma(t) \bv_1 \bv_1^{\top} \bD(t)-\left(\bR(t)^{\top}+\bD(t)^{\top} \bv_1 \bv_1^{\top}\right) \bm\Gamma(t)\left(\bR(t)+\bv_1 \bv_1^{\top} \bD(t)\right) \\
=&-\bR(t)^{\top} \bm\Gamma(t) \bR(t)-2 \bR(t)^{\top} \bm\Gamma(t)\left(\bD(t)^{\top} \bv_1 \bv_1^{\top}\right)^{\top}
\end{aligned}
$$

Now we finish the proof.
\end{proof}

\begin{lemma}(Equation~\ref{RT update rile})
\label{RT update rule appendix E}
$\bR(t+1) = \left(I-\eta \bM^{*}(t)\right)\bR(t)+\eta\left(\bv_1 \bv_1^{\top}\bm\Gamma(t) - \bm\Gamma(t)\bv_1 \bv_1^{\top}\right) \bD(t)$
\end{lemma}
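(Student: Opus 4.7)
The plan is to unfold $\bR(t+1)=(\bm I-\bv_1\bv_1^\top)\bD(t+1)$ using the update rule $\bD(t+1)=(\bm I-\eta\bM^*(t))\bD(t)$ from \eqref{C5}, then measure the discrepancy between this expression and $(\bm I-\eta\bM^*(t))\bR(t)$. Explicitly, I would write
\[
\bR(t+1)-(\bm I-\eta\bM^*(t))\bR(t) \;=\; \bigl[(\bm I-\bv_1\bv_1^\top)(\bm I-\eta\bM^*(t)) - (\bm I-\eta\bM^*(t))(\bm I-\bv_1\bv_1^\top)\bigr]\bD(t),
\]
and observe that after cancellation of the $\bm I$, $-\bv_1\bv_1^\top$ and $-\eta\bM^*(t)$ terms, the bracket reduces to the scaled commutator $\eta\bigl(\bv_1\bv_1^\top\bM^*(t) - \bM^*(t)\bv_1\bv_1^\top\bigr)$.

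Next I would invoke the decomposition $\bM^*(t) = \widetilde\bM(t) + \bm\Gamma(t)$ recorded in \eqref{C4} together with the identity \eqref{widetilde m}, which expresses $\widetilde\bM(t)$ as a scalar multiple of $\bX^\top\bX$. Since $\bv_1$ is by construction the top eigenvector of $\bX^\top\bX$, it is also an eigenvector of $\widetilde\bM(t)$, so $\widetilde\bM(t)\bv_1\bv_1^\top = \bv_1\bv_1^\top\widetilde\bM(t)$ and the commutator $[\bv_1\bv_1^\top,\widetilde\bM(t)]$ vanishes. Substituting back yields
\[
\bv_1\bv_1^\top\bM^*(t) - \bM^*(t)\bv_1\bv_1^\top \;=\; \bv_1\bv_1^\top\bm\Gamma(t) - \bm\Gamma(t)\bv_1\bv_1^\top,
\]
which when plugged into the previous display gives the claimed update rule.

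There is no real obstacle: the lemma is a direct algebraic manipulation. The only substantive point is the observation that the $\widetilde\bM(t)$-component of $\bM^*(t)$ commutes with the rank-one projector $\bv_1\bv_1^\top$ (and hence the entire noncommutativity between $\bM^*(t)$ and $\bv_1\bv_1^\top$ is carried by the small perturbation $\bm\Gamma(t)$). The main care required is bookkeeping around the order of multiplication, since $\bv_1\bv_1^\top$ and $\bM^*(t)$ do not commute in general precisely because of $\bm\Gamma(t)$.
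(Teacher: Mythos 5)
Your proposal is correct and is essentially the paper's own argument in slightly different packaging: the paper splits $\bD(t)$ into $\bv_1\bv_1^\top\bD(t)$ and $\bR(t)$ and cancels the $\widetilde\bM(t)$-part term by term, while you compute the single commutator $\eta\bigl(\bv_1\bv_1^\top\bM^*(t)-\bM^*(t)\bv_1\bv_1^\top\bigr)$ directly; both hinge on the identical key fact that $\bM^*(t)-\bm\Gamma(t)=\widetilde\bM(t)$ is a scalar multiple of $\bX^\top\bX$ and hence commutes with the projector $\bv_1\bv_1^\top$.
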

\begin{proof}
\begin{align*}
    \bR(t+1)&= \left(\bm I-\bv_1 \bv_1^{\top}\right)\bD(t+1) \\
=& \left(\bm I-\bv_1 \bv_1^{\top}\right)\left(\bm I-\eta \bM^{*}(t)\right) \bD(t)\\
=& \left(\bm I-\bv_1 \bv_1^{\top}\right)\left(\bm I-\eta \bM^{*}(t)\right)\bv_1 \bv_1^{\top}\bD(t)  \\
&+\left(\bm I-\bv_1 \bv_1^{\top}\right)\left(\bm I-\eta \bM^{*}(t)\right)\left(\bm I-\bv_1 \bv_1^{\top}\right) \bD(t)\\
=& \left(\bm I-\eta \bM^{*}(t)\right)\bR(t)- \bv_1 \bv_1^{\top}\left(\bm I-\eta \bM^{*}(t)\right)\left(\bm I-\bv_1 \bv_1^{\top}\right)  \bD(t)\\
&+ \left(\bm I-\bv_1 \bv_1^{\top}\right)\left(\bm I-\eta \bM^{*}(t)\right) \bv_1 \bv_1^{\top}\bD(t)\\
=& \left(\bm I-\eta \bM^{*}(t)\right)\bR(t)-  \bv_1 \bv_1^{\top}(-\eta \bm\Gamma(t))\left(\bm I-\bv_1 \bv_1^{\top}\right)\bD(t) \\
&+\left(\bm I-\bv_1 \bv_1^{\top}\right)  (-\eta \bm\Gamma(t)) \bv_1 \bv_1^{\top}\bD(t)\\
=& \left(I-\eta \bM^{*}(t)\right)\bR(t)+\eta\left(\bv_1 \bv_1^{\top}\bm\Gamma(t) - \bm\Gamma(t)\bv_1 \bv_1^{\top}\right) \bD(t)
\end{align*}
\end{proof}

\section{Missing Preliminaries}
\label{appendix: prelim}

\noindent\textbf{Hessian, Fisher information matrix and NTK:}
The Hessian of the objective $\mathcal L(f(\bm\theta))$ is:
$$
    \nabla_{\bm\theta}^2\mathcal L(f(\bm\theta))=  \frac{1}{n}\sum_{i=1}^n\nabla_{\bm\theta} ^2\ell(f(\bm\theta, \mathbf{x}_i))
$$
We need the relationship between the Hessian matrix, the Fisher information matrix (FIM) and the neural tangent kernel (NTK).
As shown in \citet{papyan2019measurements, martens2016second, bottou2018optimization}, the Hessian can be
decomposed into two components:
$$\nabla_{\bm\theta}^2\mathcal L(f(\bm\theta))=\frac{1}{n}\sum_{i=1}^n\left(\frac{\partial f(\mathbf{x}_i, \bm\theta)}{\partial \bm\theta}^\top \left. \frac{\partial^2 \ell(z,y_i)}{\partial z^2}\right|_{ f(\mathbf{x}_i)}\frac{\partial f(\mathbf{x}_i, \bm\theta)}{\partial \bm\theta}+\left.\frac{\partial \ell(z,y_i)}{\partial z}\right|_{f(\mathbf{x}_i)}\nabla_{\bm\theta}^2f(\mathbf{x}_i,\bm\theta)\right)
$$
Here $\frac{\partial f(\mathbf{\bx_i, {\bm\theta}})}{\partial {\bm\theta}}\in \mathbb R^{1\times p}$ is the gradient of $f$. 
\citet{papyan2019measurements} also demonstrate empirically that the first term, known as ``Gauss-Newton matrix'', G-term or Fisher information matrix (FIM), dominates the second term in terms of the largest eigenvalue.
%\jian{need reference here or our experiment here.}
Thus when considering the sharpness, we can analyze the largest eigenvalue of FIM which is a close proxy of the largest eigenvalue of Hessian. In 
the binary MSE loss case, $\left. \frac{\partial^2 \ell(z,y_i)}{\partial z^2}\right|_{f(\mathbf{x}_i)}=2$, which implies that
$$    
\left\|\nabla_{\bm\theta}^2\mathcal L(f(\bm\theta))\right\|_2\approx  \bm \left\|G\right\|_2:= \frac{2}{n}\left\|\sum_{i=1}^n\frac{\partial f(\mathbf{x}_i, \bm\theta)}{\partial {\bm\theta}}^\top\frac{\partial f(\mathbf{x}_i, \bm\theta)}{\partial {\bm\theta}} \right\|_2= \frac{2}{n}\left\|\frac{\partial \bF(\bm\theta)}{\partial {\bm\theta}}^\top \frac{\partial \bF(\bm\theta)}{\partial {\bm\theta}}\right\|_2 
$$
where $\frac{\partial \bF(\bm\theta)}{\partial {\bm\theta}} := \left(\frac{\partial f(\mathbf{\bx_1,{\bm\theta}})}{\partial {\bm\theta}}^\top,...,\frac{\partial f(\mathbf{\bx_n,{\bm\theta}})}{\partial {\bm\theta}}^\top\right)^\top\in \mathbb R^{n\times p}$. 
%\jian{i am not comfortable with the above approximation.
%I don't think the approximation is on the matrix level. 
%only the largest eigenvalue? the hessian must have some negative eigenvalue.
%otherwise it is convex...}\textcolor{blue}{Has changed the approximation to the largest eigenvalue level.}
Meanwhile, \citet{karakida2019pathological} pointed out the duality between the FIM and a Gram matrix $\bM$, defined as  
\begin{equation}
\label{eq:M appendix}
    \bm{M} = \frac{2}{n}\frac{\partial \bF(\bm\theta)}{\partial {\bm\theta}} \frac{\partial \bF(\bm\theta)}{\partial {\bm\theta}}^\top
\end{equation}
It also known as the neural tangent kernel NTK
 (\citet{karakida2019pathological,karakida2019normalization}), which has been studied extensively in recent years (see e.g., \cite{jacot2018neural},\cite{du2018gradient},\cite{arora2019fine},\cite{chizat2019lazy}).
Note that in this paper, we do not assume the training is in NTK regime,
in which the Hessian does not change much during training.
It is not hard to see that $\bM$ and FIM share the same non-zero eigenvalues: if $\bm G \bm u = \lambda \bm u$ for some eigenvector $\bm u\in \mathbb R^p$, $$\bM\frac{\partial \bF(\bm\theta)}{\partial {\bm\theta}} \bm u = \frac{\partial \bF(\bm\theta)}{\partial {\bm\theta}}\bm G  \bm u = \lambda \frac{\partial \bF(\bm\theta)}{\partial {\bm\theta}}\bm u,$$
in other words, $\lambda$ is also an eigenvalue of $\bM$.

\noindent\textbf{Sharpness:}
There are various definitions of sharpness in the literature (\cite{li2021happens,wu2018sgd, foret2020sharpness}).
In particular, a popular definition of the sharpness is the largest eigenvalue of the Hessian (\cite{wu2018sgd}).
Based on the above discussion, in this paper, we adopt the largest eigenvalue of $\bM$, $\sha(\bm\theta)=\lambda_{\max}(\bM)$ as the definition of the sharpness, which is a close
approximation of the largest eigenvalue of the Hessian empirically. 

\noindent\textbf{Gradient Descent:}
In this paper, we study the trajectory of gradient descent
and gradient flow \footnote{
Gradient flow is a good approximation of gradient descent in the beginning of the training. See \citet{cohen2021gradient} for more experiments and \citet{ahn2022understanding} for theoretical justification. 
We assume this fact during the progressive sharpening phase. See Assumption~\ref{3.3}. 
}.
We use $\bm\theta(t)$ to denote the parameter at iteration $t$ (or time $t$) and the sharpness at time $t$ as $\sha(t)=\sha(\bm\theta(t))$.
We similarly define $\bM(t), \bF(t), \bD(t),\mathcal{L}(t)$.

Along GD trajectory, the weight vector $\bm\theta(t)$ is updated in the following way:
$$
\bm\theta(t+1) = \bm\theta(t) -\eta\frac{\partial \mathcal L(\bm\theta(t))}{\partial\bm\theta(t)}
$$

When the learning rate $\eta$ is infinitesimal, the GD trajectory above is equivalent to gradient flow trajectory. Here we show the gradient flow dynamics of the residual vector $\bm D(t)$:
\begin{equation}
    \frac{\mathrm d\bm{D}(t)}{\mathrm dt} = \frac{\partial \bD(t)}{\partial {\bm\theta}}\frac{\mathrm d\bm{\theta}(t)}{\mathrm d t}  = - \frac{\partial \bF(t)}{\partial {\bm\theta}}\frac{\partial \mathcal L(t)}{\partial{{\bm\theta}}} = -\frac{2}{n}  \frac{\partial \bF(t)}{\partial {\bm\theta}} \frac{\partial \bF(t)}{\partial {\bm\theta}}^\top\bm{D}(t) = - \bm{M}(t)\bm{D}(t)
    \label{appendix residual dynamics}
\end{equation}

In light of \eqref{appendix residual dynamics}, we have the following approximate update rule of $\bD(t)$ under gradient descent:

$$\bD(t+1)-\bD(t)\approx-\eta\bM(t)\bD(t)$$

\end{document}